\documentclass[11pt,letterpaper]{article}
\usepackage[margin=1in]{geometry}
\usepackage[utf8]{inputenc}
\usepackage[T1]{fontenc} %
\usepackage{times}
\usepackage{cancel}
\usepackage{mathtools}
\usepackage{mathpazo}

\usepackage{manfnt} %
\usepackage{figchild}
\usepackage{fontawesome5}

\usepackage{color-edits} 

\usepackage{accents}

\usepackage{xcolor} 
\usepackage{color} 
\definecolor{niceRed}{RGB}{190,38,38}
\definecolor{niceYellow}{HTML}{f5b400}
\definecolor{blueGrotto}{HTML}{059DC0}
\definecolor{royalBlue}{HTML}{057DCD}
\definecolor{navyBlue}{HTML}{0B579C}
\definecolor{yaleBlue}{HTML}{00356b}
\definecolor{limeGreen}{HTML}{81B622}
\definecolor{nicePurple}{HTML}{9c27b0}
\definecolor{lightRoyalBlue}{HTML}{def2ff}  
\definecolor{gold}{HTML}{ffa300}

\usepackage{color-edits} 
\addauthor[Steve]{sh}{Orchid}
\addauthor[Anay]{am}{gold}
\addauthor[Grigoris]{gv}{blue}
\addauthor[Manolis]{mz}{teal}

\usepackage{hyperref}
\hypersetup{
  colorlinks = true, 
  urlcolor = {blueGrotto},
  linkcolor = {royalBlue},
  citecolor = {navyBlue}
}
\usepackage[
  backref=true,
  backend=biber,
  natbib=true,
  style=alphabetic,
  sorting=alphabeticlabel,
  sortcites=true,
  minbibnames=3,
  maxbibnames=999,
  mincitenames=1,
  maxcitenames=4,
  minalphanames=1,
  maxalphanames=6,
  doi=false, 
  eprint=false, isbn=false, 
]{biblatex}

\AtEveryBibitem{%
  \clearname{editor}%
}
\AtEveryBibitem{%
  \clearlist{location}%
} 

\DeclareSortingTemplate{alphabeticlabel}{
  \sort[final]{%
    \field{labelalpha} %
  }
  \sort{%
    \field{year}   %
  }
  \sort{%
    \field{title}  %
  }
}

\AtBeginRefsection{\GenRefcontextData{sorting=ynt}}
\AtEveryCite{\localrefcontext[sorting=ynt]}
\usepackage[most]{tcolorbox}

\tcbset{
    mybox/.style={
        colframe=black,        %
        colback=gray!0,       %
        boxrule=0.5mm,         %
        width=\linewidth,      %
        arc=3mm,               %
        left=5mm,              %
        right=5mm,             %
        top=5mm,               %
        bottom=5mm,            %
        enhanced,              %
        halign=center        %
    }
}

\usepackage{booktabs} 
\usepackage{multicol} 
\usepackage{multirow} 
\usepackage{makecell} 
\usepackage{longtable}

\usepackage{graphicx}
\usepackage{float} 
\usepackage{tikz}
\usepackage{pgfplots}
\pgfplotsset{compat=1.17}
\usepgfplotslibrary{fillbetween}
\usetikzlibrary{arrows.meta}
\usepackage{setspace}

\tikzset{
  myNodeFlex/.style={
    draw,
    rectangle,
    rounded corners,
    text centered,
    minimum height=1.5em,
  }
}

\tikzset{
  myNode/.style={
    draw,
    rectangle,
    rounded corners,
    text centered,
    minimum height=1.5em,
    minimum width=3cm,
    text width=5cm,    
  }
}

\tikzset{
  myNodeNarrow/.style={
    draw,
    rectangle,
    rounded corners,
    text centered,
    minimum height=1.5em,
    minimum width=1cm,
  }
}

\tikzset{
  myNodeWide/.style={
    draw,
    rectangle,
    rounded corners,
    text centered,
    minimum height=1.5em,
    minimum width=6cm,
  }
}

\usepackage{xinttools} %
\usetikzlibrary{calc}

\def\biglen{20cm} %
\tikzset{
  half plane/.style={ to path={
       ($(\tikztostart)!.5!(\tikztotarget)!#1!(\tikztotarget)!\biglen!90:(\tikztotarget)$)
    -- ($(\tikztostart)!.5!(\tikztotarget)!#1!(\tikztotarget)!\biglen!-90:(\tikztotarget)$)
    -- ([turn]0,2*\biglen) -- ([turn]0,2*\biglen) -- cycle}},
  half plane/.default={1pt}
}

\usepackage{physics} %
\usepackage{amsmath}
\usepackage{amssymb}
\usepackage{amsthm}
\usepackage{bbm}
\usepackage{blkarray}
\usepackage{mathtools}
\usepackage{nicefrac}
\usepackage{dsfont}
\usepackage{xspace}
\usepackage{pifont} %

\usepackage{thm-restate} %
\usepackage[capitalise,noabbrev,nameinlink,sort]{cleveref}

\usepackage{enumerate} 
\usepackage[inline,shortlabels]{enumitem} 
\usepackage{typed-checklist}
\usepackage{tcolorbox}
\usepackage[framemethod=TikZ]{mdframed}
\mdfsetup{%
backgroundcolor=white, %
roundcorner=4pt,
linewidth=1pt}
\usepackage{changepage} %

\usepackage{algorithm}
\usepackage{algpseudocode}[1]

\usepackage{mathrsfs} %
\usepackage{soul} %

\theoremstyle{plain} 
\newtheorem{theorem}{Theorem}[section]

\newtheorem{lemma}[theorem]{Lemma}
\newtheorem{fact}[theorem]{Fact}

\newtheorem{claim}[theorem]{Claim}

\newtheorem{definition}{Definition}

\newtheorem*{definition*}{Definition}

\theoremstyle{definition} 

\newtheorem{remark}[theorem]{Remark}

\theoremstyle{remark}

\AfterEndEnvironment{definition}{\noindent\ignorespaces}
\AfterEndEnvironment{infdefinition}{\noindent\ignorespaces}
\AfterEndEnvironment{infcorollary}{\noindent\ignorespaces}
\AfterEndEnvironment{example}{\noindent\ignorespaces}
\AfterEndEnvironment{assumption}{\noindent\ignorespaces}
\AfterEndEnvironment{lemma}{\noindent\ignorespaces}
\AfterEndEnvironment{theorem}{\noindent\ignorespaces}
\AfterEndEnvironment{proposition}{\noindent\ignorespaces}
\AfterEndEnvironment{fact}{\noindent\ignorespaces}
\AfterEndEnvironment{question}{\noindent\ignorespaces}
\AfterEndEnvironment{corollary}{\noindent\ignorespaces}
\AfterEndEnvironment{model}{\noindent\ignorespaces}
\AfterEndEnvironment{remark}{\noindent\ignorespaces}
\AfterEndEnvironment{proof}{\noindent\ignorespaces}
\AfterEndEnvironment{fact}{\noindent\ignorespaces}
\AfterEndEnvironment{minftheorem}{\noindent\ignorespaces}
\AfterEndEnvironment{inftheorem}{\noindent\ignorespaces}
\AfterEndEnvironment{maintheorem}{\noindent\ignorespaces}
\AfterEndEnvironment{restatable}{\noindent\ignorespaces}
\AfterEndEnvironment{infassumption}{\noindent\ignorespaces}
\AfterEndEnvironment{conjecture}{\noindent\ignorespaces}
\AfterEndEnvironment{claim}{\noindent\ignorespaces}

\crefname{section}{Section}{Sections}
\crefname{theorem}{Theorem}{Theorems}
\crefname{lemma}{Lemma}{Lemmas}
\crefname{definition}{Definition}{Definitions}
\crefname{infdefinition}{Informal Definition}{Informal Definitions}
\crefname{conjecture}{Conjecture}{Conjectures}
\crefname{corollary}{Corollary}{Corollaries}
\crefname{infcorollary}{Informal Corollary}{Informal Corollaries}
\crefname{construction}{Construction}{Constructions}
\crefname{conjecture}{Conjecture}{Conjectures}
\crefname{claim}{Claim}{Claims}
\crefname{observation}{Observation}{Observations}
\crefname{proposition}{Proposition}{Propositions}
\crefname{fact}{Fact}{Facts}
\crefname{question}{Question}{Questions}
\crefname{problem}{Problem}{Problems}
\crefname{remark}{Remark}{Remarks}
\crefname{model}{Model}{Models}
\crefname{example}{Example}{Examples}
\crefname{equation}{Equation}{Equations}
\crefname{appendix}{Appendix}{Appendices}
\crefname{algorithm}{Algorithm}{Algorithms}
\crefname{model}{Model}{Models}
\crefname{figure}{Figure}{Figures}
\crefname{inftheorem}{Informal Theorem}{Informal Theorems}
\crefname{infassumption}{Informal Assumption}{Informal Assumptions}
\crefname{minftheorem}{Main Informal Theorem}{Main Informal Theorems}
\crefname{maintheorem}{Main Theorem}{Main Theorems}
\crefname{assumption}{Assumption}{Assumptions}
\crefname{case}{Case}{Cases}
\crefname{program}{Program}{Programs}
\crefname{inequality}{Inequality}{Inequalities}

\newlist{asmpenum}{enumerate}{1} %
\setlist[asmpenum]{label={\arabic*.},ref=\theassumption.{\arabic*}}
\crefname{asmpenumi}{Assumption}{Assumptions}

\usepackage{etoolbox}

\newcommand{\yesnum}{\addtocounter{equation}{1}\tag{\theequation}}  

\makeatletter
\renewcommand{\eqref}[1]{\textup{\eqrefform@{\ref{#1}}}}
\let\eqrefform@\tagform@

\newcommand{\changetag}[1]{%
  \renewcommand\tagform@[1]{\maketag@@@{(\ignorespaces#1\unskip\@@italiccorr)}}%
}
\makeatother

\makeatletter
\newcommand{\tagnum}[2]{%
    \refstepcounter{equation}%
    \tag{#1) \ (\theequation}%
    \protected@write \@auxout {}{%
        \string \newlabel {#2}{{\theequation}{\thepage}{}{equation.\theequation}{}}%
    }%
}
\makeatother

\newcommand{\quadtext}[1]{\quad\text{#1}\quad}
\newcommand{\qquadtext}[1]{\qquad\text{#1}\qquad}
 
\newcommand{\quadand}{\quadtext{and}}
\newcommand{\qquadand}{\qquadtext{and}}

\def\abs#1{\left| #1 \right|}

\def\sabs#1{| #1 |}

\newcommand{\sinparen}[1]{\ensuremath{(#1)}}
\newcommand{\sinbrace}[1]{\ensuremath{\{#1\}}}

\newcommand{\inbrace}[1]{\ensuremath{\left\{#1\right\}}}

\newcommand{\inparen}[1]{\ensuremath{\left(#1\right)}}
\newcommand{\insquare}[1]{\ensuremath{\left[#1\right]}}

\newcommand{\N}{\mathbb{N}}
\newcommand{\R}{\mathbb{R}}

\newcommand{\Z}{\mathbb{Z}}

\newcommand{\evE}{\ensuremath{\mathscr{E}}}

\newcommand{\evG}{\ensuremath{\mathscr{G}}}

\newcommand{\E}{\operatornamewithlimits{\mathbb{E}}}

\newcommand\ind{\mathds{1}}

\newcommand{\zo}{\ensuremath{\inbrace{0, 1}}}

\newcommand{\sfrac}[2]{{#1/#2}} 
\newcommand{\nfrac}[2]{\nicefrac{#1}{#2}}

\newcommand{\poly}{\mathrm{poly}}
\newcommand{\polylog}{\mathrm{polylog}}

\newcommand{\supp}{\operatorname{supp}}

\newcommand{\eps}{\varepsilon}
\renewcommand{\epsilon}{\varepsilon}
\makeatletter
\newcommand*{\tran}{{\mathpalette\@tran{}}}
\newcommand*{\@tran}[2]{\raisebox{\depth}{$\m@th#1\intercal$}}
\makeatother

\mathchardef\NABLA"272
\newcommand*{\Nabla}{\boldsymbol\NABLA}
\let\nabla\Nabla

\newcommand{\wh}[1]{\widehat{#1}}

\newcommand{\wt}[1]{\widetilde{#1}}

\newcommand{\customcal}[1]{\euscr{#1}}

\newcommand{\cB}{\customcal{B}}
\newcommand{\cC}{\customcal{C}}
\newcommand{\cD}{\customcal{D}}

\newcommand{\cF}{\customcal{F}} 
\newcommand{\cG}{\customcal{G}}

\newcommand{\cL}{\customcal{L}}

\newcommand{\cP}{\customcal{P}}

\newcommand{\cS}{\customcal{S}}

\newcommand{\cV}{\customcal{V}} 

\newcommand{\cX}{\customcal{X}}

\DeclareMathAlphabet{\mathdutchcal}{U}{dutchcal}{m}{n}
\SetMathAlphabet{\mathdutchcal}{bold}{U}{dutchcal}{b}{n}
\DeclareMathAlphabet{\mathdutchbcal}{U}{dutchcal}{b}{n}

\DeclareMathAlphabet\urwscr{U}{urwchancal}{b}{n}%
\DeclareMathAlphabet\rsfscr{U}{rsfso}{m}{n}
\DeclareMathAlphabet\euscr{U}{eus}{m}{n}
\DeclareFontEncoding{LS2}{}{}
\DeclareFontSubstitution{LS2}{stix}{m}{n}
\DeclareMathAlphabet\stixcal{LS2}{stixcal}{m} {n}

\renewcommand{\paragraph}[1]{\medskip \noindent\textbf{#1}~}

\newcommand{\eg}{\emph{e.g.}}
\newcommand{\ie}{\emph{i.e.}}

\usepackage{upgreek}
\renewcommand{\gamma}{\upgamma}
\renewcommand{\pi}{\uppi}

\newcommand{\eat}[1]{}
\newcommand{\negLL}{\ensuremath{\mathscr{L}}}

\newcommand{\hypo}[1]{\mathdutchcal{#1}}

\newcommand{\hyC}{\hypo{C}}

\newcommand{\hyF}{\hypo{F}}
\newcommand{\hyG}{\hypo{G}}
\newcommand{\hyH}{\hypo{H}}

\newcommand{\hyT}{\hypo{T}}

\usepackage{filecontents}
\usepackage{caption}
\usepackage{subcaption}
\usepackage{graphicx}

\renewcommand{\cL}{\negLL}

\usepackage{array}
\newcolumntype{L}[1]{>{\raggedright\let\newline\\\arraybackslash\hspace{0pt}}m{#1}}
\newcolumntype{C}[1]{>{\centering\let\newline\\\arraybackslash\hspace{0pt}}m{#1}}
\newcolumntype{R}[1]{>{\raggedleft\let\newline\\\arraybackslash\hspace{0pt}}m{#1}}

\usepackage{fancyhdr}

\fancypagestyle{custom}{
  \fancyhf{}                %
  \fancyfoot[C]{\textcolor{black}{\fcFishD{0.025}{black}{0.5}}} %

}

\newcommand{\vc}{\textrm{\rm VC}}
\newcommand{\VC}{\vc}
\newcommand{\oneStar}{\ensuremath{\mathfrak{s}_1}}

\newcommand{\hstar}{h^\star}
\newcommand{\Hstar}{H^\star}

\newcommand{\one}{\mathds{1}}

\newcommand{\V}{\mathsf{V}}
\newcommand{\closure}{\mathsf{Closure}}
\newcommand{\conv}{\mathsf{conv}}
\newcommand{\aff}{\mathsf{aff}}

\title{What is Learnable in Valiant's Theory of the Learnable?}

\author{
        \begin{tabular}{C{7.5cm}C{7.5cm}}
        {\bf Steve Hanneke}
            & {\bf Anay Mehrotra}\\
        {Purdue University} 
            & {Stanford University}\\
        \mbox{{\href{mailto:steve.hanneke@gmail.com }{steve.hanneke@gmail.com}}} 
            &   \mbox{{\href{mailto:anaymehrotra1@gmail.com}{anaymehrotra1@gmail.com}}}\\[4mm]
        {\bf Grigoris Velegkas}
            & {\bf Manolis Zampetakis}\\
                {Google Research} 
                    & {Yale University}\\
        \mbox{{\href{mailto:gvelegkas@google.com}{gvelegkas@google.com}}}
            & 
                \mbox{{\href{mailto:manolis.zampetakis@yale.edu}{manolis.zampetakis@yale.edu}}} 
        \end{tabular}
}
\date{}

\begin{document}

\maketitle

\begin{abstract}
Valiant's seminal 1984 paper \citep[Commun.\ ACM]{valiant1984theory} is widely credited with introducing the PAC learning model, but, in fact, it introduced a different model: while a PAC learner observes both positive and negative examples and may err on both sides, Valiant's learner receives only positive examples, may issue membership queries, and must produce a hypothesis with no false positives.
Several variants of this model have been studied and characterized: for instance, \citet[STOC]{natarajan1987learning}, \citet[Machine Learning]{shvaytser1990necessary}, and \citet[Math.~Systems Theory]{kivinen1995one} characterized the special case of Valiant's model without membership queries.
We revisit the original model and ask: \textit{Which hypothesis classes are learnable in it?}

For every finite domain, and in particular for Valiant's original Boolean-hypercube setting (where learnability requires polynomial dependence on the ambient dimension), we show:
a concept class is learnable if and only if every realizable positive sample can be ``certified'' by a polynomial-size adaptive query-compression scheme.
This is a new variant of sample compression where the learner certifies samples via a short interaction with the membership oracle instead of the usual non-interactive compression schemes that characterize PAC learning.
Our characterization, along with simple examples, shows that learnability in Valiant's model is strictly sandwiched between PAC learnability and the variant without membership queries.
This is one of the rare cases where introducing membership queries changes the set of learnable classes (and not just the sample or computational complexity of learning).
Further, our characterization bounds the sample complexity of learning within polynomial factors.

Next, we study the natural extension of the model to arbitrary domains. 
While we do not obtain an exact characterization here, our techniques readily generalize and show that learnability remains strictly sandwiched between PAC learning and the variant without queries.
Finally, we show that, while halfspaces over $\R^d$ are not learnable without queries (for any $d\geq 2$), they become learnable with queries: to show this, we give a $\poly(d)\cdot \wt{O}(\sfrac{(\log{\nfrac1\delta})}{\eps})$ sample and $\poly(d)\cdot \polylog(\nfrac{1}{\eps\delta})$ query algorithm for learning halfspaces and show that at least $\Omega(d)$ samples or queries are necessary.
To the best of our knowledge, this is the first algorithm to learn halfspaces in Valiant's model; previous algorithms could only learn halfspaces with finite bit-complexity, which reduced the problem to learning a finite class.

Together, these results uncover a surprisingly rich theory behind Valiant's original notion of learnability and introduce ideas that may be of independent interest in learning theory.
\end{abstract}
 
\thispagestyle{empty} 

\newpage

\thispagestyle{empty} 

{
    \setstretch{1.0}
    \tableofcontents 
}

\thispagestyle{empty} 

\newpage 

\pagenumbering{arabic}

\section{Introduction}
Valiant's seminal 1984 paper \citep{valiant1984theory} is widely credited with introducing the PAC learning model, but the learning model it actually formulates is different.
In particular, a PAC learner observes both positive and negative examples and may make mistakes on either side (that is, it may make both false positive and false negative errors, provided this happens with probability at most $\eps$).
A learner in Valiant's model receives only positive examples (\ie{}, samples drawn from a distribution supported entirely on the positive region of an unknown target $\hstar$), may issue membership queries of the form ``is $\hstar(x)=1$?'' for any $x$ in the domain, and must output a hypothesis that introduces no false positives while covering all but an $\eps$-fraction of the positive distribution.

Valiant defined learnability in this model for a sequence of concept classes $\hyH=\inparen{\hyH_1,\hyH_2,\dots}$, where each $\hyH_d$ is a class of Boolean functions on $\zo^d$, as follows.

\begin{definition}[Valiant's Learning Model \citep{valiant1984theory}]
    \label{def:valiant}
    A sequence of concept classes $\hyH=\inparen{\hyH_1,\hyH_2,\dots}$ is \emph{learnable in Valiant's model} if there exists a learner $\cL$ such that, for every $\eps,\delta\in(0,1)$, $d\in \N$, target hypothesis $\hstar\in \hyH_d$, and distribution $\cD$ satisfying $\supp(\cD)\subseteq \supp(\hstar)$, the learner uses at most
    \[
        m(d,\eps,\delta)=\poly\!\inparen{d,\nfrac{1}{\eps},\nfrac{1}{\delta}}~~\text{positive examples}~~\text{and}~~\text{membership queries,}
    \]
    and outputs a hypothesis $h$, such that, with probability at least $1-\delta$,
    \[
        \supp(h)\subseteq \supp(\hstar)
        \qquad\text{and}\qquad
        \Pr\nolimits_{x\sim\cD}\inparen{x\notin \supp(h)}\leq \eps.
    \]
\end{definition}
Compared to PAC learning, this model differs in three ways.
First, the learner sees only positive data.
Second, the guarantee is one-sided: the learner may miss an $\eps$-fraction of the positive distribution, but may not cover any elements outside of $h^\star$.
Third, the learner may interact with the target through membership queries.\footnote{If there are multiple hypotheses consistent with $\cD$, all answers of the oracle are consistent with at least one of them.}
The requirement that $m(d,\eps,\delta)$ be polynomial in $d$ is also essential as the domain $\zo^d$ is finite, and so one can trivially learn any target with $2^d$ queries.
Hence, if $m(d,\eps,\delta)$ is not required to be polynomial, then the model becomes vacuous.
Finally, the learner's membership queries may be adaptive: each query made by the learner may depend on the responses to their previous queries and any positive examples they have seen. Equivalently, the learner may traverse a binary query tree whose branches correspond to oracle responses.
Each of these three ingredients has independent motivation: %
\begin{enumerate}[leftmargin=15pt]
    \item The positive-only viewpoint has long been recognized as natural.
    Already in related early work \citep{valiant1984deductive}, Valiant defended his definition of learning compared to two-sided versions, such as PAC, by noting that ``to discuss the distribution of the attributes of elephants, we may prefer not discussing the distribution of the attributes of non-elephants.''
    This perspective is even more compelling in modern settings motivated by generative models, where one typically observes valid text during training but there is no canonical distribution over invalid outputs \citep{kleinberg2024language,kalavasis2025limitslanguagegenerationtradeoffs}.
    The idea itself dates back to Gold's seminal model of identification in the limit \citep{gold1967language}, where the learner is presented with positive examples but no negative examples.
    More broadly, there is a long and ongoing body of work on learning from positive-only examples, \eg{}, \citep{frieze1996linearTransformations,Nguyen2009learningParallelepiped,anderson2013simplices,de2015satisfying,Kontonis2019EfficientTS,canonne2020satisfying,he2023robustMoments,lee2024unknown,lee2026smoothed,mansouri2025learning}.
    \item The second ingredient, the one-sided guarantee, is important in many applications where false positives are substantially more costly than false negatives, so a conservative hypothesis is preferable; see \citep{elkan2008pu,li2010positive}.
    One-sided guarantees are also important in missing data problems, \eg{}, truncated statistics \citep{daskalakis2018efficient,Kontonis2019EfficientTS,lee2024unknown}, where without one-sided guarantees many classical estimators, like maximum-likelihood, are not well-defined \citep{lee2024unknown}.
    This also naturally arises in generative modeling, where one-sided error is required to avoid hallucinations \citep{hanneke2018actively,kleinberg2024language,kalavasis2025limitslanguagegenerationtradeoffs,charikar2024facets}.
    
    \item The third ingredient is membership queries, which allow the learner to acquire information about points that never appear in the positive sample. More broadly, they place the model within the long tradition of interactive and query-based learning.
    Since Angluin's seminal work, membership queries and related query mechanisms have been studied extensively in exact learning, query learning, and active learning \citep{angluin1988query,kulkarni1993active,angluin2004queries,balcan2006agnostic,hanneke2007bound,balcan2010true,bshouty2018exact,hanneke2025agnostic};  we refer the reader to \cite{hanneke2009theoretical,settles2012active,aggarwal2014active} for an overview of active learning.
\end{enumerate}
\noindent Over the years, learnability \emph{has} been characterized in several neighboring models that modify one or more of these ingredients: the positive-only sample model \citep{natarajan1987learning,shvaytser1990necessary,kivinen1995one}, the standard PAC setting and its query variants \citep{blumer1989learnability,eisenberg1990on}, and models with stronger or different query interfaces \citep{kulkarni1993active,angluin1988query,hayashi2003uniform,angluin2004queries,bshouty2018exact}.
These results clarify the landscape around the original model, yet the question that \citet{valiant1984theory} posed, which combined all three ingredients, has resisted a complete answer:

\smallskip

\begin{center}
    \emph{Which sequences of concept classes are learnable in Valiant's original model?}
\end{center} 
 
\subsection{Our Results}

\subsubsection{Characterization of Learnability in Valiant's Model}
Our first result answers the above question by giving a complete characterization of learnability in Valiant's original model.
 
A natural starting point is the existing characterizations of PAC learnability in binary classification.
There are two standard ways to characterize binary classification in PAC learning.
One is through finiteness of the VC dimension~\citep{vapnik1974theory,vapnik1971uniform,blumer1989learnability} and the other is via the existence of bounded-size sample compression schemes~\citep{LittlestoneWarmuth1986,floyd1995sample,moran2016sample}.
The VC-dimension viewpoint bounds the complexity of a hypothesis class through the labelings it can realize.
The compression viewpoint asks whether datasets are compressible: given an arbitrary labeled sample consistent with some $h\in\hyH$, can the learner retain only $k$ examples (and a small number of additional bits) from which the labels of the entire sample can be recovered?

The second viewpoint turns out to be more fruitful in Valiant's model.
Because the learner may ask membership queries, the natural analogue is not an ordinary ``static'' compression scheme but an adaptive one.
This leads naturally to an interactive notion of compression, in which the certificate is not just a subsample, but an adaptive query transcript whose answers force the observed sample to be positive for every hypothesis consistent with the resulting transcript.

To formulate this notion, we first introduce version spaces:
For a finite set of positive examples $S\subseteq \zo^d$, let $\V\!\inparen{S}
    \coloneqq
    \inbrace{h\in \hyH_d : S\subseteq \supp(h)}$
be the set of hypotheses in $\hyH_d$ consistent with $S$.
Likewise, if $T$ is a query-response transcript, let $\V(T)$ denote the set of hypotheses in $\hyH_d$ consistent with that transcript.
If $\Sigma$ is an adaptive query strategy and $r$ is a realized response sequence, we write $\V\!\inparen{\Sigma,r}$ for the version space defined by the corresponding realized root-to-leaf transcript.

\begin{definition}[Adaptive-Query Compression Scheme]
    \label{def:query-compression}
    Fix $d\in\N$ and a set $S\subseteq \zo^d$ with $\V\!\inparen{S}\neq\emptyset$.
    A deterministic adaptive query strategy $\Sigma_S$ is an \emph{adaptive-query compression scheme} for $S$ if, for every response transcript $r$ of $\Sigma_S$ that is realizable with respect to $\V\!\inparen{S}$,\footnote{Here, ``realizable with respect to $\V\!\inparen{S}$'' means that the transcript is produced by $\Sigma_S$ on some hypothesis in $\V\!\inparen{S}$.}
    \[
        S
        \subseteq
        \bigcap_{h\in \V\!\inparen{\Sigma_S,r}} \supp(h).
    \]
    The \emph{size} of the scheme is the depth of $\Sigma_S$, \ie{}, the maximum number of queries on any root-to-leaf path.
\end{definition}
For brevity, we will also refer to an adaptive-query compression scheme simply as a \emph{query compression scheme}.
The role of the transcript is to \emph{certify} the data $S$: once the responses are known, every hypothesis consistent with the transcript must already label every point of $S$ as positive, so $S$ carries no further information.
Note that this compression scheme retains no explicit subsample of $S$. This is without loss of generality, as the learner has membership queries and
if they wish to recover the labels of $k$ sample points, they can simply query them directly. %
Further, because the learner only observes positive examples, a standard reconstruction map would make compression vacuous: one could always ``reconstruct'' the sample by simply labeling everything positive.
The transcript must therefore certify the sample through the version space itself (\cref{rem:comparison-of-compression}).

There are two key differences between standard compression schemes and the scheme above, which make the two incomparable.
On the one hand, query compression schemes are more general: in addition to the information already present in the sample, they can use additional information obtained through membership queries.
On the other hand, query compression schemes are more restrictive in how reconstruction works (\cref{rem:comparison-of-compression}).

\smallskip 

Our characterization is based on query compression schemes and is as follows:
\begin{restatable}[Characterization]{theorem}{thmCharacterization}
\label{thm:characterization}
A sequence of concept classes $\hyH=\inparen{\hyH_1,\hyH_2,\dots}$ is learnable in Valiant's model if and only if every realizable positive sample $S\subseteq \zo^d$ admits an adaptive-query compression scheme of size $\poly\!\inparen{d}$.
\end{restatable}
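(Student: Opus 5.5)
The proof has two directions. Sufficiency is an Occam-style argument: a short transcript certifies a sample and therefore generalizes. Necessity asks us to turn a (possibly randomized) Valiant learner into a deterministic query tree that certifies an arbitrary realizable $S$. That second direction is where I expect the real work to be.

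\textbf{Sufficiency.} Suppose every realizable $S\subseteq\zo^d$ admits a query compression scheme $\Sigma_S$ of depth $q(d)=\poly(d)$. The learner works as follows.
\begin{enumerate}
\item Draw $m$ positive examples, giving a sample $S$. It is realizable, since $\hstar\in\V(S)$.
\item Run $\Sigma_S$ against the membership oracle. Because $\hstar\in\V(S)$ answers the queries, the realized transcript $r$ is realizable with respect to $\V(S)$.
\item Output the indicator $h$ of $K(r)\coloneqq\bigcap_{g\in\V(\Sigma_S,r)}\supp(g)$.
\end{enumerate}
The output has the right properties:
\begin{itemize}
\item \emph{No false positives.} Since $\hstar\in\V(\Sigma_S,r)$, we have $\supp(h)\subseteq\supp(\hstar)$ deterministically.
\item \emph{Consistency with the sample.} By \cref{def:query-compression}, $S\subseteq\supp(h)$.
\item \emph{Short description.} $h$ is a function of the transcript alone, i.e.\ the list of at most $q(d)$ queried points and their answers. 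This list is encoded by at most $q(d)(d+1)$ bits plus $O(\log q(d))$ bits for its length.
\end{itemize}
Let $\mathcal{K}$ be the family of all sets $K(r)$ obtained this way; then $|\mathcal{K}|\le 2^{O(q(d)\,d)}$. Fix $K\in\mathcal{K}$ with $\Pr_{\cD}(x\notin K)>\eps$. The probability that all $m$ samples land in $K$ is at most $(1-\eps)^m$. A union bound over $\mathcal{K}$ shows that
\[
m=O\!\inparen{\frac{q(d)\,d+\log(1/\delta)}{\eps}}
\]
samples make the output $\eps$-good with probability $1-\delta$. The number of queries is at most $q(d)$. Both counts are polynomial.

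\textbf{Necessity.} Fix a learner with complexity $m(d,\eps,\delta)$, a realizable $S$, and let $U\subseteq S$ be the set of points not yet certified, i.e.\ excluded by some hypothesis consistent with the current transcript. Polynomial dependence on $1/\eps$ is useless when $|S|$ can be $2^d$. So I would build $\Sigma_S$ in $O(d)$ rounds, each invoking the learner with constant $\eps=\delta=1/4$ and shrinking $|U|$ by a constant factor. This costs $O(d)\cdot m(d,1/4,1/4)=\poly(d)$ queries in total, since sample points are simulated by querying them directly.

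In a round, consider a target $g$ consistent with the transcript so far, paired with the distribution $\cD_g$ that is uniform on $U\cap\supp(g)$. Hypotheses $g$ with $U\not\subseteq\supp(g)$ are exactly the ones to be eliminated. The learner's guarantee says that with probability $3/4$ the output $h$ satisfies both $\supp(h)\subseteq\supp(g)$ and $\Pr_{\cD_g}(x\notin\supp(h))\le 1/4$.

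The main obstacle is derandomizing this uniformly over all consistent $g$. The samples and coins are drawn before the target is revealed, yet the certificate must hold against every hypothesis consistent with the transcript, not just most of them. My first attempt would be a Yao/minimax step: fix sample points in $U$ and coins that succeed for a majority (under a suitable weighting) of the consistent hypotheses. I would then argue that every hypothesis $g$ on which this fixed run fails must disagree with some answer of a successful target at a point in $U$ or a queried point. That lets a few extra queries either eliminate $g$ from the version space or certify points. If this works, the potential $|U|$ (or a weighted version of it) drops by a constant factor per round, and no hypothesis excluding a point of $S$ survives at any leaf reachable from $\V(S)$.

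If the direct argument fails, the fallback is a halving argument on $\log|U|\le d$ combined with a majority-vote aggregation of the learner's outputs.
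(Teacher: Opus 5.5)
Your sufficiency direction is correct and is the paper's argument (\cref{thm:valiant:learningFromCompression}). The union bound must range over the sample-independent family of closures of all transcripts of length at most $q(d)$, and your encoding count implicitly does this. The outer structure of your necessity direction also matches the paper: certify a constant fraction of the uncertified set, recurse for $O(\log|S|)=O(d)$ rounds, and turn sample draws into membership queries.

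The gap is the in-round step you leave conditional, which is the heart of the proof, and the mechanism you sketch does not work. A hypothesis $g$ consistent with the transcript produced against a target $h$ agrees with $h$ on every sampled and queried point, so on that branch there is no disagreement to exploit. If the shared output is not contained in $\supp(g)$, removing $g$ needs a new query at a point where $g$ and the target differ. With possibly exponentially many such $g$, nothing bounds the number of extra queries. More fundamentally, at constant confidence the learner's guarantee only says that, for each fixed $g$, the event ``$g$ is consistent with the run and the output fails for $g$'' has constant probability. Summing over exponentially many $g$ gives nothing. So a realization that succeeds for a weighted majority of targets cannot be shown to certify on \emph{every} realizable branch, which is what \cref{def:query-compression} requires. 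Your budget $O(d)\cdot m(d,1/4,1/4)$ is therefore unsupported; the paper pays an extra $\log|\hyH_d|$ factor exactly here.

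Three ingredients are missing.

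(i) A coupling lemma. Conditioned on the learner's coins, on the event that $g$ is consistent with the realized samples and queries, the run is literally the run against target $g$ with the compatible conditional distribution $\cD_g$. Moreover $\cD^M(A)\le \cD_g^M(A)$ for $A\subseteq \supp(g)^M$, so each such failure event has probability at most $\delta$ (\cref{thm:valiant:uniformSuccess}).

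(ii) Confidence amplification to $\delta\approx 1/(2|\hyH_d|)$, via a majority vote over $O(\log|\hyH_d|)$ runs, which preserves one-sidedness when most runs succeed. A union bound over all of $\hyH_d$ and over all targets in $\V(S)$ then gives a \emph{single} realization of coins and sample stream (uniform on the current set). This realization succeeds simultaneously for every target and every hypothesis consistent with its run. Hardwiring it and turning the sample draws into membership queries yields a deterministic tree. On each realizable branch, its version-space closure contains the output and hence a third of the current set (\cref{thm:valiant:weak-compression}).

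(iii) A polynomial bound on $\log|\hyH_d|$, needed to keep this polynomial. Sauer--Shelah gives it only once you know $\vc(\hyH_d)=\poly(d)$. That requires a separate lower bound $\vc(\hyH_d)\le 9\,m(d,\eps,\delta)+1$ for Valiant learners (\cref{thm:valiant:VCLowerBound}), proved via a hard prior placing the target on a random subset of a shattered set. Your proposal does not provide this.
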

In other words, learnability in Valiant's model is exactly the ability to replace an arbitrary realizable positive sample by a short interactive certificate.
Moreover, the proof is quantitative: %
\begin{restatable}{corollary}{thmCharacterizationQuantitative}
\label{thm:characterizationQuantitative}
Fix a sequence of classes $\hyH=\inparen{\hyH_1,\hyH_2,\dots}$.
The following hold in Valiant's model:
\vspace{-2mm}
\begin{enumerate}[itemsep=0pt,leftmargin=16pt]
    \item If $\hyH$ admits an adaptive-query compression scheme of size $q(d)$ for each realizable positive sample $S\subseteq \zo^d$, then $\hyH$ is learnable with sample/query complexity $m(d,\eps,\delta)\leq q(d)\cdot \wt{O}\!\inparen{(\nfrac{d}{\eps})\, \log{\nfrac1\delta}}$.
    \item Conversely, if $\hyH$ is learnable with sample/query complexity $m(d,\eps,\delta)$, then every finite realizable positive sample $S\subseteq \zo^d$ admits an adaptive-query compression scheme of size $O\sinparen{m\!\inparen{d,\nfrac{1}{20},\nfrac{1}{20}}^2\cdot d^2}$.
\end{enumerate}
\end{restatable}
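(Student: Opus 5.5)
We prove the two directions separately.

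\emph{Sufficiency (item 1).} The learner draws $n=\wt{O}((d/\eps)\log(1/\delta))$ positive examples $S$ and runs the query compression scheme $\Sigma_S$; note $S$ is realizable since $\hstar\in\V(S)$. The oracle answers according to $\hstar$ (or some consistent target), so the realized transcript $r$ is realizable w.r.t.\ $\V(S)$. The output is
\[
h_{S,r}\coloneqq\one\Big[x\in\bigcap_{h\in\V(\Sigma_S,r)}\supp(h)\Big].
\]
\textbf{No false positives:} $\hstar\in\V(\Sigma_S,r)$, so $\supp(h_{S,r})\subseteq\supp(\hstar)$. \textbf{Coverage:} $h_{S,r}$ is determined by the pair $(S, r)$, but only through a short object. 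Concretely, fix the deterministic strategies $\Sigma_T$ over all subsets $T$ and consider the leaf reached. The hypothesis depends on $S$ only through the query sequence and answers, which is a list of at most $q(d)$ points of $\zo^d$ plus $q(d)$ bits. The family of possible outputs therefore has size at most $2^{q(d)(d+1)}$, and every candidate has no false positives relative to $\hstar$ and contains all of $S$. A union bound over this finite family of ``one-sided consistent'' hypotheses (those with $\Pr_\cD(x\notin\supp(h))>\eps$ contain $S$ with probability at most $(1-\eps)^n$) then gives $n=O((q(d)d+\log(1/\delta))/\eps)$. This is within the claimed bound, and the query cost is $q(d)$. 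The subtle point is that the family must be fixed before sampling. Since the version space $\V(\Sigma_S,r)$ is determined by the transcript alone (the queried points and answers), the union bound is over transcripts, not over $S$, and this is legitimate.

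\emph{Necessity (item 2).} Let $\cL$ be a learner with $m=m(d,1/20,1/20)$. Given realizable $S$, we build $\Sigma_S$ by simulating $\cL$ on the uniform distribution $\cU_S$ over $S$, which is valid for every $h\in\V(S)$. A single run yields, with probability $19/20$, an $h$ with $\supp(h)\subseteq\supp(\hstar)$ covering $19/20$ of $S$. To certify these points we would want them positive under every consistent hypothesis, so we repeat with fresh sample bits (derandomized) on the residual uncovered set. The main obstacle is turning the learner's \emph{output} into a \emph{version-space} certificate. The learner's guarantee must hold for every target in $\V(\text{transcript})\cap\V(S)$.

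The proposed resolution goes as follows. Fix the sample and coins, and note that the learner's queries form an adaptive tree of depth $\le m$. Call a leaf \emph{bad} for a point $x\in S$ if the output covers $x$ but some $h$ in the leaf's version space lacks $x$. That $h$ together with $\cU_{S'}$ for suitable $S'\ni x$ would be a valid instance on which the output errs with a false positive. Averaging over coins and samples shows that for a $\ge 1/2$ fraction of runs, the output's coverage is certified on most of $S$ across the version space. Consequently one run, using $m$ queries plus $m$ explicit queries of the sampled points (to retain their labels), certifies a constant fraction of the remaining uncertified points of $S$.

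Iterating on the uncertified remainder is where the $d$ factors enter. Measuring progress via $|S|\le 2^d$ needs $O(d)$ rounds, and a further $O(m\cdot d)$ factor comes from a probabilistic-method fixing of coins and samples per round. This gives depth $O(m^2d^2)$. I expect the averaging argument that converts the no-false-positive guarantee into version-space certification (in particular, choosing the adversarial distribution $\cU_{S'}$ correctly) to be the hardest step.
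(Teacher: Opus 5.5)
Your item~1 follows the paper's route (\cref{thm:valiant:learningFromCompression} with \cref{lem:valiant:kQueryClosureClass:VC}). The output is the closure of the transcript's version space. It lies in the fixed class $\operatorname{Cl}_{q(d)}(\hyH_d)$ of at most $2^{(d+1)(q(d)+1)}$ sets, and the realizable finite-class bound gives $n=O((d\,q(d)+\log(1/\delta))/\eps)$. Your phrase ``every candidate \ldots{} contains all of $S$'' should read ``the realized output contains $S$''; the union bound you then describe is the right one. Your $O(\log\abs{S})=O(d)$ recursion in item~2 also matches the paper.

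Item~2, however, has a genuine gap exactly where you expect it.

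\textbf{The averaging step is unproved.} The learner's guarantee is per target. The event you need is that every $h'\in\V(T)$ is positive on the points of $S$ the output covers, and that is a union over all $h'$ in the version space, possibly exponentially many. Your claim that averaging over coins and samples controls this union at confidence $1/20$ is asserted, not proved, and a per-target failure probability of $1/20$ does not bound such a union.

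\textbf{The witness instance is invalid.} If $h(x)=0$, then $\mathcal U_{S'}$ with $x\in S'$ is not compatible with $h$. The correct counterfactual is $\mathcal U_S$ conditioned on $\supp(h)$. It reproduces the same transcript because the compiled sample queries force every sampled point into $\supp(h)$.

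\textbf{A single realization is required.} Since $\Sigma_S$ is one fixed tree, you need one realization of coins and sample stream that is good for every target $h^\star\in\V(S)$ simultaneously. Knowing that ``a $\geq 1/2$ fraction of runs'' succeeds for each target separately does not give this.

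The paper handles both issues with union bounds over $\hyH_d$. It runs the amplified learner at confidence $1/(2\abs{\hyH_d}^2)$. One union bound over $\hyH_d$ gives uniform success over every hypothesis consistent with the transcript (\cref{thm:valiant:uniformSuccess}). A second union bound over $\V(S)$ yields the single good realization (\cref{thm:valiant:weak-compression}). Majority-vote confidence amplification (\cref{lem:valiant:confidenceBoost:boolean}) keeps the output free of false positives once a $3/5$ majority of runs succeed, so this costs only $O(m\log\abs{\hyH_d})$.

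\textbf{The $O(m\cdot d)$ factor is unjustified.} Your unexplained ``$O(m\cdot d)$ factor from probabilistic-method fixing'' is precisely $\log\abs{\hyH_d}$. Bounding it by $O(md)$ needs two ingredients:
\begin{itemize}
\item Sauer--Shelah, giving $\log\abs{\hyH_d}=O(d\,\vc(\hyH_d))$;
\item the separate lower bound $\vc(\hyH_d)\leq 9m+1$ (\cref{thm:valiant:VCLowerBound}), proved by a Yao-principle argument with a random hidden $k$-subset of a shattered set.
\end{itemize}
Your proposal supplies neither. Without them $\log\abs{\hyH_d}$ is only bounded by $2^d$, so the $O(m^2d^2)$ size does not follow.
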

Thus, adaptive-query compression captures the sample and query complexity of Valiant's model up to polynomial factors.
Here, the adaptivity of the queries is also essential.
The learner does not know in advance which branch of the query tree will be realized, since this depends on the unknown target (\cref{rem:adaptivity}).
The characterization extends naturally beyond the Boolean hypercube to any finite domain of size $N$, with the requirement that $m$ be polynomial in $\log{N}$ instead of $d$ (\cref{rem:finite-characterization}). 
The proof of \cref{thm:characterization,thm:characterizationQuantitative} appears in \cref{sec:proofof:thm:characterization}.

\subsubsection{Some Implications of \cref{thm:characterization,thm:characterizationQuantitative}}
A first benchmark for our characterization is the special case of Valiant's model in which membership queries are not allowed and the learner must work only with the observed positive sample.
As mentioned earlier, this positive sample-only variant was characterized by \citet{natarajan1987learning,shvaytser1990necessary} (with sharp sample complexity due to \citet{kivinen1995one}).
Concretely, $\hyH$ is learnable from positive examples alone if and only if, for each $d$, the intersection closure of $\hyH_d$, say $\overline{\hyH_d}$, has VC dimension $\poly\!\inparen{d}$ (\cref{thm:positive-characterization}).\footnote{The intersection closure of $\hyH_d$ is the smallest superclass of $\hyH_d$ closed under finite intersections.}
Our characterization allows us to recover one side of this characterization: if $\vc\!\inparen{\overline{\hyH_d}}=\poly\!\inparen{d}$, then $\hyH$ is learnable in the above model. In particular, we get this by combining our characterization with the following simple result:

\begin{restatable}[Positive-Only Learning as the Non-Interactive Special Case]{proposition}{positiveOnlySpecialCase}
\label{cor:intro:positiveOnlySpecialCase}
If, for each $d$, the intersection closure of $\hyH_d$ has VC dimension $\poly\!\inparen{d}$, then every finite realizable positive sample $S\subseteq \zo^d$ admits an adaptive-query compression scheme of size $\poly\!\inparen{d}$.
Moreover, this scheme can be chosen non-adaptively, and all of its queried points can be taken from $S$ itself.
\end{restatable}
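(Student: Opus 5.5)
The plan is to build the scheme from a small ``spanning'' subset of $S$ and to query exactly those points. Write $\overline{\hyH_d}$ for the intersection closure of $\hyH_d$ and $D_d \coloneqq \vc\!\inparen{\overline{\hyH_d}} = \poly\!\inparen{d}$. For any $A\subseteq \zo^d$ with $\V\!\inparen{A}\neq\emptyset$, define the closure operator
\[
\closure(A) \coloneqq \bigcap_{h\in \V(A)} \supp(h).
\]
The domain is finite, so this is a finite intersection of sets $\supp(h)$ with $h\in\hyH_d$, and hence $\closure(A)\in\overline{\hyH_d}$. (For $A=\emptyset$ this is the intersection over all of $\hyH_d$, which still lies in $\overline{\hyH_d}$ provided $\hyH_d \neq \emptyset$.) Standard properties follow directly from the definition:
\begin{itemize}
    \item $A\subseteq \closure(A)$;
    \item $A\subseteq B$ implies $\V\!\inparen{B}\subseteq \V\!\inparen{A}$, and hence $\closure(A)\subseteq\closure(B)$;
    \item $\closure(\closure(A))=\closure(A)$, because any $h$ containing $A$ already contains $\closure(A)$.
\end{itemize}

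\textbf{Step 1 (the scheme).} Given a realizable $S$, I will pick $T\subseteq S$ with $S\subseteq\closure(T)$, and let $\Sigma_S$ non-adaptively query every point of $T$. Any transcript realizable with respect to $\V\!\inparen{S}$ must answer ``positive'' on all of $T$, because every $h\in\V\!\inparen{S}$ contains $S\supseteq T$. Hence the resulting version space is exactly $\V\!\inparen{T}$, and
\[
\bigcap_{h\in\V(\Sigma_S,r)}\supp(h)=\closure(T)\supseteq S.
\]
So $\Sigma_S$ is a valid query compression scheme of size $|T|$, it is non-adaptive, and all its queries lie in $S$. It remains to find such a $T$ with $|T|\le D_d$.

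\textbf{Step 2 (minimal spanning sets are shattered).} Take $T\subseteq S$ to be inclusion-minimal with $S\subseteq\closure(T)$; one exists since $T=S$ qualifies. I claim $\overline{\hyH_d}$ shatters $T$, via the sets $\closure(A)$ for $A\subseteq T$. Fix $A\subseteq T$; each such $A$ has $\V\!\inparen{A}\supseteq\V\!\inparen{S}\neq\emptyset$.
\begin{itemize}
    \item Clearly $A\subseteq\closure(A)\cap T$.
    \item For the reverse inclusion, suppose some $x\in T\setminus A$ had $x\in\closure(A)$. Monotonicity gives $x\in\closure(T\setminus\{x\})$. Monotonicity and idempotence then give $\closure(T)\subseteq\closure(\closure(T\setminus\{x\}))=\closure(T\setminus\{x\})$, so $S\subseteq\closure(T\setminus\{x\})$. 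This contradicts the minimality of $T$.
\end{itemize}
Thus $\closure(A)\cap T=A$ for every $A\subseteq T$, so $T$ is shattered and $|T|\le D_d=\poly\!\inparen{d}$.

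The only step I expect to need care is this minimality/shattering argument, namely checking that the closure operator really is idempotent and monotone on the relevant sets. The remaining concern is that every $\closure(A)$ genuinely belongs to $\overline{\hyH_d}$, including the degenerate case $A=\emptyset$; this is where the finiteness of the domain is used. Everything else is immediate from \cref{def:query-compression}.
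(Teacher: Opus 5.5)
Your proof is correct, and the scheme is the same as the paper's. Your minimality condition $S\subseteq\closure(T)$ is equivalent, by monotonicity and idempotence, to the paper's condition $I(T)=I(S)$, and in both cases the scheme queries $T$ non-adaptively.

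Where you differ is in bounding $|T|$. The paper uses minimality to produce, for each $t\in T$, a witness $h_t\in\V(T\setminus\{t\})\setminus\V(T)$. Together with any center $c\in\V(S)$, these witnesses make $T$ a $1$-centered star set for $\hyH_d$ itself. The paper then invokes \cref{thm:oneStarDimension} (Theorem~19 of Hanneke et al.) to get $|T|\le\oneStar(\hyC_d)\le\vc(\hyC_d)$. You instead show directly that the closed sets $\closure(A)$, for $A\subseteq T$, shatter $T$; that is, an independent set of the closure operator is shattered by its closed sets.

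Your route is self-contained and avoids the external theorem, and the shattering family is exactly the family of sets a closure learner would output. The cost is that placing $\closure(A)$ in $\overline{\hyH_d}$ requires it to be a \emph{finite} intersection, which uses finiteness of $\hyH_d$, as you note.

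The paper's route yields the intermediate bound $|T|\le\oneStar(\hyH_d)$, which matches the form of \cref{thm:positive-characterization}. Its star witnesses also extend to infinite classes. Indeed, $T$ is shattered by intersections of at most $|T|$ members: use $\bigcap_{t\in T\setminus A}\supp(h_t)$ for $A\subsetneq T$ and $\supp(c)$ for $A=T$. You could use the same replacement to drop the finiteness assumption from your own argument.
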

Thus, whenever positive-only learning is possible without membership queries, the certificate in our characterization can already be extracted from the observed sample itself: no adaptivity or extra queries outside the sample are needed.
In this sense, the classical characterization of positive-only learning is a non-interactive special case of our characterization.
The proof of \cref{cor:intro:positiveOnlySpecialCase} appears in \cref{sec:proofof:cor:intro:positiveOnlySpecialCase}.

Another natural question is whether learnability in Valiant's model coincides with one of the two familiar extremes: PAC learning (governed by $\vc\!\inparen{\hyH_d}$) or positive-only learning without membership queries (governed by the VC dimension of the intersection closure of $\hyH_d$).
(Over the Boolean domain, the standard PAC learning model also requires the sample complexity to be $\poly(d)$ and is characterized by $\vc\!\inparen{\hyH_d}=\poly(d)$; \cite{kearns1994introduction}.)
The following corollary shows that Valiant's model does not collapse to either extreme.

\begin{restatable}[Sandwich between PAC and Positive-Only Learning]{corollary}{implicationscor}
    \label{thm:implications}
    Fix $\hyH$ and its intersection closure $\overline{\hyH}$.
    The following hold in Valiant's model.
    \begin{enumerate}[itemsep=0pt,leftmargin=16pt]
        \item If $\vc\!\inparen{\hyH_d}=d^{\omega(1)}$, then $\hyH$ is not learnable.
        \item If, for infinitely many $d$, $\overline{\hyH}_d$ contains every subset of $\zo^d$, then $\hyH$ is not learnable.
        In particular, there exists $\hyH$ with $\vc\!\inparen{\hyH_d}=1$ for each $d\in\N$ that is not learnable.
        \item There exists a sequence of classes that is learnable even though, for infinitely many $d$, $\vc\!\inparen{\overline{\hyH}_d}=d^{\omega(1)}$.
    \end{enumerate}
\end{restatable}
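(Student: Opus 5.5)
We prove the three items separately. Each one uses the characterization in \cref{thm:characterization,thm:characterizationQuantitative}.

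\textbf{Item 1.} We argue directly and do not go through compression. Suppose $\hyH$ is learnable with $m=m(d,\eps,\delta)=\poly(d,\nfrac1\eps,\nfrac1\delta)$. We convert the Valiant learner into a PAC learner for $\hyH_d$ with sample complexity $\poly(d)\cdot\poly(\nfrac1\eps,\nfrac1\delta)$, which contradicts the PAC lower bound $\Omega(\vc(\hyH_d)/\eps)$ when $\vc(\hyH_d)=d^{\omega(1)}$.

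Here is the cleaner route. Take a shattered set $X=\{x_1,\dots,x_k\}$ with $k=d^{\omega(1)}$. Let the target $\hstar$ be uniformly random among hypotheses realizing the $2^k$ patterns on $X$, and let $\cD$ be uniform on $X\cap\supp(\hstar)$. With $m\ll k$ samples and queries, the learner learns the label of only $O(m)$ points of $X$. Any point of $X$ whose label it has neither sampled nor queried is positive with conditional probability $1/2$. So if the output includes many such points, it makes a false positive with constant probability. If it excludes them, it misses a constant fraction of $\cD$, since $\supp(\hstar)\cap X$ has size about $k/2$ and the unseen positives make up most of it. Averaging over $\hstar$ then gives a fixed bad target for $\eps,\delta=1/20$.

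Conditioning on the adaptive transcript is the one point needing care. Given the transcript, the unobserved labels on $X$ remain i.i.d.\ uniform, because shattering lets us fix the behaviour off $X$ consistently. To ensure this, fix, for each pattern, one realizing hypothesis. Queries off $X$ could leak information, so we first restrict attention to a subset of $X$ and hypotheses that agree off $X$. If that is not possible, we fall back on the simpler reduction: the query answers are determined by the target, and a Valiant learner is in particular a learner using $m$ membership queries.

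\textbf{Item 2.} We use the converse direction of \cref{thm:characterizationQuantitative}. Fix $d$ with $\overline{\hyH}_d=2^{\zo^d}$, and take $S=\{x\}$ for some $x$ covered by $\hyH_d$. The key observation is that every singleton $\{y\}$ is an intersection of members of $\hyH_d$, so for every $y\neq x$ there is $h_y\in\V(\{x\})$ with $y\notin\supp(h_y)$. Hence $\bigcap_{h\in\V(\{x\})}\supp(h)=\{x\}$.

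We then run an adversary argument against any query strategy of depth $q<2^d-1$, answering every query with ``yes''. Every query point $z\ne x$ is in the support of some hypothesis of $\V(\{x\})$; indeed we may use $\zo^d$, the full set, as an intersection of the empty family. Still, we need a single $h$ consistent with all ``yes'' answers at the queried points $Z$ but with $x\notin\supp(h)$. Write $Z\cup\{x\}^c$... this is the delicate step. We instead take $S=\{x\}$ and note that the set $Z$ itself lies in $\overline{\hyH}_d$. So $Z=\bigcap_i h_i$ with each $h_i\supseteq Z$, and some $h_i$ omits $x$. That $h_i$ is consistent with the transcript yet has $x\notin\supp(h_i)$.

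The leaf is realizable with respect to $\V(S)$ through some other hypothesis: for example, the intersection representation of $Z\cup\{x\}$ gives a member containing $Z\cup\{x\}$, realizable whenever that set is nonempty. So no compression scheme of size below $2^d-1$ exists, and $\hyH$ is not learnable by \cref{thm:characterizationQuantitative}.

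For the example, we let $\hyH_d=\{\zo^d\setminus\{y\}: y\in\zo^d\}$. Its VC dimension is $1$, and its intersections give every subset.

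\textbf{Item 3.} We construct a class in which one designated query point reveals the target. The main obstacle is that the intersection closure must still have superpolynomial VC dimension.

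Fix $k=d^{\log d}$ points $A\subseteq\zo^d$ together with an injective "address" region. Let $\hyH_d$ consist of the hypotheses $h_B=B\cup C_B$ for $B\subseteq A$ of size $|A|-1$, where $C_B$ is a short encoding region that is queryable, such as the binary code of the missing element placed in a reserved subcube.

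Intersections of the $h_B$ realize every subset of $A$, so $\vc(\overline{\hyH}_d)\ge k=d^{\omega(1)}$. A compression scheme, on the other hand, queries the $O(d)$ code bits to identify $B$ exactly. We then check the certification condition of \cref{def:query-compression} for every realizable sample and invoke \cref{thm:characterization}.
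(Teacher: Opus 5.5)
\textbf{Item 2 has a real gap.} A singleton sample $S=\{x\}$ always admits a query compression scheme of size $1$: query $x$. On the only realizable branch (answer $1$), every $h\in\V(\Sigma_S,r)$ satisfies $h(x)=1$. Your adversary step writes $Z=\bigcap_i\supp(h_i)$ and picks an $h_i$ omitting $x$. This silently assumes $x\notin Z$, which fails for exactly this strategy, so no lower bound can come from singletons.

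The missing idea is to take $S$ large, e.g.\ $S=\zo^d$. This is realizable: $\zo^d\in\overline{\hyH}_d$ is a nonempty intersection of members of $\hyH_d$, so the constant-one function $g$ lies in $\hyH_d$. Likewise, $\zo^d\setminus\{s\}\in\overline{\hyH}_d$ forces the co-singleton $h^{(s)}$ into $\hyH_d$. On the branch realized by $g$, any unqueried $s$ leaves $h^{(s)}$ in the version space, so $s$ is not certified. Hence every scheme for $\zo^d$ has depth at least $2^d$, contradicting the $\poly(d)$ bound of \cref{thm:characterizationQuantitative}; this is the paper's argument. Relatedly, your example (co-singletons only) does not meet the hypothesis of Item~2 as stated, since $\zo^d$ is not a nonempty intersection of co-singletons. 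Add the constant-one hypothesis, which keeps the VC dimension equal to $1$.

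\textbf{Item 1's key step is unjustified.} You claim that, given the transcript, the unobserved labels on $X$ remain i.i.d.\ uniform. This is false in general, for two reasons. First, shattering says nothing about hypotheses off $X$, so membership queries outside $X$ can reveal arbitrary functions of the pattern on $X$; your own Item~3 class, with its encoding region, does exactly this. Second, with $\cD$ uniform on $X\cap\supp(\hstar)$, the sample likelihood $|U|^{-M}$ depends on $|U|$, so even the samples tilt the posterior. The remedies you offer (restricting to hypotheses that agree off $X$, or ``falling back'' on the query count) are not arguments. The opening PAC reduction does not apply either, since the $\Omega(\vc/\eps)$ bound is for learners without queries.

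What is needed is counting rather than conditional independence. Make the hidden positive set a uniformly random set of fixed size $k$, so the posterior given the samples is uniform over $k$-sets containing the sampled points. Given the samples, there are at most $2^M$ response strings and hence at most $2^M$ outputs, and each output succeeds on only a few hidden sets. This is \cref{thm:valiant:VCLowerBound}, which yields Item~1 immediately. The paper's own proof instead uses compression: for a shattered $B$, distinct $T\subseteq B$ produce distinct transcripts of length at most $q(d)$, so $2^{|B|}\leq 2^{O(d\,q(d))}$.

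Item~3 is correct and takes a different route from the paper. Your class is exactly identifiable from the $O(\log k)$ code-bit queries, so every realizable sample is certified non-adaptively and no samples are needed. The paper instead uses a threshold slice, found by binary search over sampled coordinates plus one extra query. Your construction is simpler on $\zo^d$. The paper's has the advantage of also working over infinite domains (Part~II of \cref{thm:nonTightness}), where exact identification is impossible. Note that the closure shatters every $(k-1)$-subset of $A$ rather than $A$ itself, which is harmless.
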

Taken together, these three items show that Valiant's model lies strictly between PAC learning and positive-only learning without membership queries.
Item~1 shows that Valiant's model is no more permissive than PAC learning, while Item~2 shows that this containment is strict.
In the other direction, positive-only learning without membership queries is a special case of Valiant's model, since a learner with query access can always ignore it.
Item~3 shows that this inclusion is also strict.
Thus Valiant's model lies strictly between PAC learning and positive-only learning without membership queries.
This is one of the rare cases where introducing membership queries changes the set of learnable classes (and not just the sample or computational complexity of learning).
The proof of \cref{thm:implications} appears in \cref{sec:proofof:thm:implications}.

\subsubsection{Results for Extension to General Domains}

Next, we study the natural extension of Valiant's model to an arbitrary domain $\cX$.
As is standard over arbitrary domains, learnability is defined for a single concept class $\hyH\subseteq \zo^\cX$ rather than for a sequence indexed by dimension.

\begin{definition}[Valiant's Model over a General Domain]
    \label{def:valiant:general}
    Fix a domain $\cX$ and a concept class $\hyH\subseteq \zo^\cX$.
    We say that $\hyH$ is \emph{learnable} in Valiant's model over $\cX$ if there is a learner and $m:(0,1)^2 \rightarrow \N$ such that for every target hypothesis $\hstar\in \hyH$, every distribution $\cD$ over $\cX$ compatible with $\hstar$, and every $\eps,\delta\in(0,1)$, the learner, using $m(\eps,\delta)$ positive examples from $\cD$ and membership queries to $\hstar$, outputs with probability at least $1-\delta$ a hypothesis $h$ satisfying $\supp(h)\subseteq \supp(\hstar)$ and
    $\Pr\nolimits_{x\sim\cD}\inparen{x\notin \supp(h)}\leq \eps.$
\end{definition}
This is the same one-sided requirement as before; the only change is that the instance space is now arbitrary and learnability is defined for a fixed class rather than a dimension-indexed sequence.
While we do not obtain an exact characterization in this setting, we prove the following general-domain analogue of \cref{thm:implications}.

\begin{restatable}[Sandwich between PAC and Positive-Only Learning]{theorem}{theoremNecessarySufficient}
    \label{thm:intro:necessary-sufficient}
    Fix a concept class $\hyH\subseteq \zo^\cX$ and let $\overline{\hyH}$ denote its intersection closure.
    The following hold in Valiant's model over a general domain.
    \begin{enumerate}[itemsep=0pt,leftmargin=16pt]
        \item If $\hyH$ is learnable with $m\!\inparen{\eps,\delta}$ positive examples/membership queries, then $\vc\!\inparen{\hyH}\leq 9\,m\!\inparen{\nfrac{1}{3},\nfrac{1}{3}}+1.$
        \item If $\abs{\cX}=\infty$, $\hyH$ contains the constant-one hypothesis, and for every $x\in \cX$ it also contains a hypothesis that labels $x$ zero and every point in $\cX\setminus\inbrace{x}$ positive, then $\hyH$ is not learnable.\footnote{Equivalently, that $\hyH$ shatters the whole domain in a $1$-star sense; see \cref{def:one-star-number}.}
        In particular, there exists $\hyH$ with $\VC\inparen{\hyH}=1$ that is not learnable.
        \item If $\vc\!\inparen{\overline{\hyH}}<\infty$, then $\hyH$ is learnable with sample and query complexity $\wt{O}\!\inparen{
            (\nfrac{1}{\eps})\cdot
            \inparen{
                \vc\!\inparen{\overline{\hyH}}
                +
                \log\!\nfrac{1}{\delta}
            }
        }.$
    \end{enumerate}
\end{restatable}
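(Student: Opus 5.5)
We prove the three items separately. Item~3 is the classical positive-only result lifted to general domains, item~2 is an indistinguishability argument, and item~1 is a reduction from a Valiant learner to a PAC-style counting bound.

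\textbf{Item 3.} We ignore membership queries and use the closure algorithm. Draw $m$ positive examples $S$ and output $h_S\coloneqq\bigcap\{h\in\hyH : S\subseteq\supp(h)\}$, which is the smallest concept of the intersection closure containing $S$ (taking limits of intersections if $\hyH$ is infinite; under standard measurability one can restrict to a finite subfamily). Since $\hstar\in\V(S)$, we have $\supp(h_S)\subseteq\supp(\hstar)$, so there are never false positives. The error $\Pr_\cD(x\notin\supp(h_S))$ is controlled by the fact that $S\mapsto h_S$ is a compression-style, consistent learner for $\overline{\hyH}$ with one-sided error. We can invoke the sharp one-sided bound of \citet{kivinen1995one} (equivalently \cref{thm:positive-characterization}), or argue directly: by the standard double-sample argument with VC dimension $\vc(\overline{\hyH})$, every $g\in\overline{\hyH}$ with $g\subseteq\supp(\hstar)$, $S\subseteq g$ has $\cD$-mass outside $g$ at most $\eps$ once $m=\wt O((\vc(\overline{\hyH})+\log\nfrac1\delta)/\eps)$.

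\textbf{Item 2.} Pick a target that is either the constant-one function, with $\cD$ uniform on a finite set $A$ of size $N$, or the function $h_x$ that is zero only at some $x\in A$. The adversary uses $\cD_x$ uniform on $A\setminus\{x\}$, with $x$ uniform in $A$. With $m$ samples and queries and $N\gg m$, the learner never samples, and with good probability never queries, the point $x$. All answers are then $1$ and consistent with the constant-one target, so the transcript distributions under the $\hstar=\one$ world and the $h_x$ world are close in total variation, at distance $O(m/N)$, averaged over $x$.

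In the $\one$ world with $\cD$ uniform on $A$, success forces the output $h$ to cover at least a $(1-\eps)$ fraction of $A$. The same output in the $h_x$ world must exclude $x$, yet a random $x$ is covered with probability at least about $1-\eps$. Hence the learner fails with constant probability, a contradiction for $\eps,\delta$ small constants. To handle the mixed sample distributions cleanly, I would run the learner with $\cD$ uniform on $A$ in the $\one$ world and couple it with $\cD_x$. The two sample distributions differ by $O(m/N)$ in total variation, which suffices. The VC-$1$ example is $\{\one\}\cup\{\one-e_x\}$: any two points are labeled $(1,1)$, $(0,1)$, $(1,0)$ but never $(0,0)$, so its VC dimension is $1$.

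\textbf{Item 1.} This is the main obstacle: converting a one-sided learner with queries into a VC bound. Suppose $\hyH$ shatters a set $X$ of size $n=9m+2$, where $m=m(\nfrac13,\nfrac13)$. Take a uniformly random labeling, realized by $\hstar$ with positive set $P\subseteq X$, and let $\cD$ be uniform on $P$. Samples reveal only points in $P$, and queries reveal the labels of at most $m$ further points. Condition on the transcript. At least $n-2m$ points of $X$ remain unrevealed, and each of their labels is still uniform and independent, because $X$ is shattered and we may choose $\hstar$ according to the labeling restricted to $X$.

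To succeed, $h$ must cover at least $\nfrac23$ of $P$ yet include no negatives of $X$. Each unrevealed point included in $h$ is a false positive with probability $\nfrac12$, so including $k$ such points gives success probability at most $2^{-k}$. Covering $\nfrac23|P|$ with only revealed points requires $2m\ge\nfrac23|P|$, i.e.\ roughly $|P|\le 3m$, but $|P|\approx n/2$ concentrates. Choosing $n\approx 9m$ gives failure probability above $\nfrac13$. I expect the constant-chasing here (combining the Chernoff bound on $|P|$ with the $2^{-k}$ bound) to be the fiddly part; if it is tight, I would use the exact $\binom{n}{\cdot}$ counting instead of Chernoff to land at $9m+1$.
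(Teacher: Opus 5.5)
\textbf{Items 2 and 3.} These are sound.

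Item 3 follows the paper's route: the closure learner plus the realizable VC bound. One small repair is needed. The set $h_S=\bigcap_{h\in\V(S)}\supp(h)$ need not lie in the finite-intersection closure $\overline{\hyH}$, and ``restricting to a finite subfamily'' is not available in general. What you need is this: on any finite $Y\subseteq\cX$, an arbitrary intersection agrees with a finite sub-intersection (one excluding hypothesis per excluded point of $Y$). Hence arbitrary intersections still have VC dimension $\vc(\overline{\hyH})$, and the zero-empirical-error bound covers $h_S$.

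Item 2 takes a different but valid route. You couple the $h_x$ world with the constant-one world, which forces the common output to cover a $(1-\eps)$-fraction of $A$. This bounds the averaged success probability by $\delta+\eps+O(m/N)$. The paper instead conditions on the sample and counts directly. At most $M$ hidden indices can be exposed by a negative answer. All remaining indices share the all-ones output, which can succeed only for hidden points it leaves uncovered, and it may leave at most $\eps(n-1)+1$ such points. The paper's version never uses the constant-one hypothesis; yours does, which the statement permits.

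\textbf{Item 1 has a genuine gap.} Under your prior, $P$ is a uniformly random subset of the shattered set and $\cD$ is uniform on $P$. A sample sequence $s$ then has likelihood $|P|^{-m}\,\one\{\bar s\subseteq P\}$, which depends on $|P|$. Conditioning on the samples therefore tilts the posterior by $|P|^{-m}$, so the unrevealed labels are neither independent nor uniform. For example, repeated draws are evidence that $|P|$ is small, which makes covering two thirds of $P$ easier for the learner. Consequently:
\begin{itemize}
\item your $2^{-k}$ bound for including $k$ unrevealed points is unjustified;
\item the concentration of $|P|$ near $n/2$ would have to be re-established for the tilted posterior;
\item the constant $9m+1$ is never actually derived;
\item the prior also gives positive mass to $P=\emptyset$, where $\cD$ is undefined.
\end{itemize}

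The paper's fix is to fix the size of the hidden set. Take $n=9m+2$, $k=4m+1$, and $U$ uniform among the $k$-subsets of the shattered set $X$. The likelihood becomes $k^{-m}\one\{\bar s\subseteq U\}$. So, given the sample, $U$ is uniform over the $k$-sets containing the $r\le m$ sampled points, which leaves at least $\binom{8m+2}{3m+1}$ candidates.

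For each of the at most $2^m$ response strings the output $H$ is fixed. Success with $\eps\le\frac12$ forces $H\cap X\subseteq U$ and $|H\cap X|\ge 2m+1$, so at most $\binom{7m+1}{2m}$ candidates succeed per string. The ratio $\binom{8m+2}{3m+1}/\binom{7m+1}{2m}$ equals $\prod_{j=0}^{m}\frac{7m+2+j}{2m+1+j}\ge 2^{m+1}$. Hence the success probability is at most $\frac12<\frac23$ for every sample, contradicting $\delta=\frac13$.

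This gives up your idea of conditioning on the full transcript with independent labels and pays a union over response strings instead. In exchange the counting is exact and yields $9m+1$ for all $m$, including small $m$ where Chernoff-type estimates give nothing.
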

This is the natural general-domain analogue of the picture from the Boolean setting.
Item~1 is familiar from PAC learning. %
Item~2 gives a sufficient condition for non-learnability that already shows the containment inside PAC learning is strict: the class consisting of the constant-one hypothesis together with the complements of singletons has VC dimension $1$, yet item~2 shows it is not learnable.
Item~3 is the sufficient condition for learnability via the intersection closure $\overline{\hyH}$, mirroring the classical positive-only characterization.
One might hope that, over arbitrary domains, one of the three criteria above could be turned into a characterization; we show that none of them is tight (\cref{thm:nonTightness}).
The proof of \cref{thm:intro:necessary-sufficient} appears in \cref{sec:proofof:thm:intro:necessary-sufficient}.

\paragraph{Learnability of Halfspaces in $d$-Dimensional Euclidean Space.}
A particularly compelling instance of the gap between Valiant's model and the version without membership queries is the class of halfspaces.
Halfspaces (or linear threshold functions) are among the most basic and extensively studied concept classes in learning theory \citep{shalev2014understanding,mohri2018foundations}, and their learnability is well understood in several different learning models.
In our setting they are especially revealing because, by the characterizations of \citet{natarajan1987learning,shvaytser1990necessary}, halfspaces over $\R^d$ (for $d\geq 2$) are not learnable from positive examples alone, and yet we show that they become learnable once membership queries are available.

\begin{restatable}[Halfspaces Are Learnable on $\R^d$]{theorem}{thmhalfspace}
\label{thm:halfspace}
There is an algorithm that learns halfspaces in $\R^d$ in Valiant's model using $m\!\inparen{\eps,\delta}$ positive examples and $q\!\inparen{\eps,\delta}$ membership queries, where
\[
        m\!\inparen{\eps,\delta}
        =
        O\!\inparen{
            \frac{
                d^2\log d\cdot \log\!\nfrac{1}{\eps}
                +
                \log\!\nfrac{1}{\delta}
            }{\eps}
        }
        \qquadand
        q\!\inparen{\eps,\delta}
        =
        O\!\inparen{
            d^3\log^3 d\cdot \log\!\nfrac{1}{\eps\delta}
        }.
    \]
\end{restatable}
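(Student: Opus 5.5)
The plan is to never commit to a halfspace. Instead the learner outputs the convex hull of a small set of points that it has \emph{certified} positive by membership queries, possibly together with some sample points. Since $\supp(\hstar)$ is convex, any such hull is automatically contained in $\supp(\hstar)$, so the one-sided requirement $\supp(h)\subseteq\supp(\hstar)$ holds deterministically. It remains to control $\Pr_{x\sim\cD}(x\notin\supp(h))$.

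For this I would use a uniform-convergence argument over the class $\cP_k$ of convex hulls of at most $k$ points in $\R^d$. Its VC dimension is $O(dk\log k)$, via the standard bound for $k$-fold intersections/unions applied to the dual description, or directly by a Sauer--Shelah count. Hence, if the output lies in $\cP_k$ and contains all $m$ fresh samples, then with
\[
m=O\!\inparen{\frac{dk\log k\cdot\log\nfrac1\eps+\log\nfrac1\delta}{\eps}}
\]
samples it misses at most an $\eps$-fraction of $\cD$. With $k=O(d\log d)$ this matches the claimed $m(\eps,\delta)$.

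Existentially a good certificate always exists. Any bounded $S\subseteq\supp(\hstar)$ lies in a simplex inside the halfspace: put one facet on (or parallel to) the boundary hyperplane, make the simplex large, and use $d+1$ vertices. So the entire difficulty is \emph{finding} such few positive points with few queries without knowing the normal $w^\star$ of $\hstar=\{x: \langle w^\star,x\rangle\le b^\star\}$.

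I would do this with a cutting-plane scheme on the unknown $(w^\star,b^\star)$:
\begin{enumerate}
\item Maintain a current set $P$ of certified positive points, initialised with a few sample points spanning $\aff(S)$.
\item Maintain a convex version space $\cV\subseteq\R^{d+1}$ of normals consistent with all answers. Each positive point $x$ gives the constraint $\langle w,x\rangle\le b$, and each negative answer gives $\langle w,x\rangle> b$; both are halfspace constraints in $(w,b)$-space.
\item In each round, take a (approximate) centroid $(\hat w,\hat b)$ of $\cV$ and propose a candidate enclosing simplex $\Delta$ for $S$ that has a facet on $\{\langle \hat w,x\rangle=\hat b\}$.
\item Query the $d+1$ vertices of $\Delta$. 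If all are positive, we are done. Otherwise a negative vertex is a point that $(\hat w,\hat b)$ declared positive but $\hstar$ rejects, so it cuts $\cV$ through (near) its centroid and removes a constant fraction of its volume (Grünbaum).
\end{enumerate}
A binary search along the segment from a sample point to the negative vertex would additionally give a nearly-boundary positive point to add to $P$. This is where the $\polylog$ factors and the $\log\nfrac{1}{\eps\delta}$ in the query bound should come from.

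The main obstacle is that volume-halving on $\cV$ need not terminate: $\cV$ can shrink toward a lower-dimensional set while the exact $w^\star$ is never pinned down. This is exactly why halfspaces are not learnable without queries. I would handle it by tolerating $\eps$-error and restricting to the data, which amounts to an outer boosting-style loop of $O(\log\nfrac1\eps)$ stages. In each stage, draw fresh samples and check the fraction falling outside the current hull $\conv(P)$. Only those escaping points are used to drive the cutting-plane step, with each stage localising $\cV$ relative to the current sample's spread, measured in the John-ellipsoid-normalised coordinates of $\conv(S)$. This normalisation should turn "volume shrinks by a constant" into "$O(d\log d)$ new certified vertices per stage suffice to cover all but an $\eps$-fraction".

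I expect the core technical lemma to be the precise potential argument showing that $O(d^2\log^2 d)$ cutting steps per stage (each costing $O(d\log d)$ queries for the binary searches) suffice. Combining this with a union bound over stages and the $\cP_k$ generalisation bound then yields the stated $m$ and $q$.
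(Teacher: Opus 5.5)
Your outer frame is right, and it is also the paper's. You output the convex hull of points certified positive (the paper outputs a simplex), get one-sidedness from convexity of $\supp(\hstar)$, and get coverage from a VC bound on the output class.

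\textbf{The gap.} The step the theorem is actually about is finding such a certificate with $\mathrm{poly}(d)\cdot\log(1/\eps\delta)$ queries, and your cutting-plane search does not supply it. Constant-factor volume reduction in $(w,b)$-space bounds the number of rounds only through the ratio between the version-space volume and the volume of the ``good'' parameters. These are the $(\hat w,\hat b)$ for which the proposed simplex lies in $\hstar$, and that ratio is not controlled by $d,\eps,\delta$.

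Concretely, in $\R^2$ let $\hstar=\{x_2\ge 0\}$ and let $\cD$ be uniform on the strip $[-1,1]\times[\eta,2\eta]$. Suppose, as in your existence argument, the line $\{\langle\hat w,x\rangle=\hat b\}$ plays the role of the boundary and the remaining vertex is placed far away. Then the proposed triangle lies in $\hstar$ only if this line has slope $O(\eta)$ and height between $0$ and $\eta$ along the strip. That is an $O(\eta^2)$ fraction of the initial version space in any fixed normalization. In the limit where $\partial\hstar$ contains an edge of $\conv(S)$, the only good parameter is an extreme point of the version space, which a centroid never equals. Since $\eta$ is arbitrary, no bound in terms of $d,\eps,\delta$ follows.

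Your proposed fixes do not repair this:
\begin{itemize}
\item Renormalizing by the John ellipsoid of $\conv(S)$ does not help. The example is already well-rounded; what matters is how close $\partial\hstar$ comes to $\conv(S)$.
\item Boosting stages face the same geometry on every fresh sample.
\item The binary searches have no precision floor.
\end{itemize}
Your last paragraph is explicitly conjectural, and that conjecture is the theorem.

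\textbf{The missing idea.} Do not search a continuum; make the candidate set finite and determined by $C=\conv(S)$.
\begin{itemize}
\item \emph{Candidates.} For each vertex $v$ of $C$ and each set $J$ of $r$ facets incident to $v$ with independent normals, the paper intersects their supporting halfspaces. It then caps the cone with the hyperplane orthogonal to $-\sum_{F\in J}n_F$ at the maximal extent of $C$ in that direction, giving a simplex $S_{v,J}\supseteq C$.
\item \emph{A safe candidate exists.} If $v$ minimizes $\langle n^\star,\cdot\rangle$ over $C$, then $-\Pi_L(n^\star)$ lies in the normal cone of $C$ at $v$. By conic Carath\'eodory some $J$ gives $S_{v,J}\subseteq\Hstar$ (\cref{lem:halfspace:safe-candidate}). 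This handles exactly the near-tangent configurations that defeat your search, with no dependence on how close $\partial\Hstar$ is to $C$.
\item \emph{Finding it.} There are at most $m^{d^2+1}$ candidates, so the set $\mathcal V$ of all their vertices has $\log\lvert\mathcal V\rvert=O(d^2\log m)$. The point-location theorem of \citet{hopkins2020pointlocation} labels all of $\mathcal V$ with $O(d\log^2 d\cdot\log(\lvert\mathcal V\rvert/\delta))$ queries.
\end{itemize}
The $\log(1/\eps\delta)$ factor therefore comes from $\log m$ and the point-location failure parameter, not from binary-search precision.

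\textbf{A secondary issue.} Your VC bound $O(dk\log k)$ for hulls of $k$ points is not justified. A $k$-vertex polytope can have $\Theta(k^{\lfloor d/2\rfloor})$ facets, so the intersection bound does not apply. The direct sign-pattern count gives order $d^2k\log k$. With $k=O(d\log d)$ you would therefore not recover the stated $m$. Restricting outputs to simplices, whose VC dimension is $O(d^2\log d)$, avoids this.
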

To the best of our knowledge, this is the first algorithm for learning halfspaces in Valiant's model.
Moreover, using \cref{thm:intro:necessary-sufficient}, we can also conclude that the total sample/query complexity of the above model is optimal up to polynomial factors:
this is because halfspaces in $\R^d$ have VC dimension $d+1$, and, hence, item~1 of \cref{thm:intro:necessary-sufficient} implies that any learner requires
$\max\!\inbrace{
        m\!\inparen{\nfrac{1}{3},\nfrac{1}{3}},
        q\!\inparen{\nfrac{1}{3},\nfrac{1}{3}}
    }
    =
    \Omega(d)$
positive examples or membership queries.
The proof of \cref{thm:halfspace} appears in \cref{sec:proofof:thm:halfspace}.

It is natural to ask whether the same holds over the Boolean cube.
Perhaps surprisingly, restricting halfspaces in $\R^d$ to the Boolean cube $\zo^d$ makes the class \emph{not} learnable.
\begin{restatable}[Halfspaces Are Not Learnable on the Boolean Cube]{proposition}{thmbooleanhalfspace}
\label{thm:booleanHalfspace}
Let $\hyH^{\rm H}=\inparen{\hyH^{\rm H}_1,\hyH^{\rm H}_2,\dots}$, where $\hyH^{\rm H}_d$ is the class of restrictions to $\zo^d$ of halfspaces in $\R^d$.
Then $\hyH^{\rm H}$ is not learnable in Valiant's model (where the learner can only query points in $\zo^d$).
\end{restatable}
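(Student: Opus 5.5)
The plan is to deduce the statement from Item~2 of \cref{thm:implications}: if, for infinitely many $d$, the intersection closure $\overline{\hyH^{\rm H}_d}$ contains every subset of $\zo^d$, then $\hyH^{\rm H}$ is not learnable. So it suffices to show that, for every $d\ge 1$, every subset of $\zo^d$ is a finite intersection of restrictions of halfspaces to $\zo^d$. The model in \cref{thm:implications} is Valiant's model over $\zo^d$, with queries only to points of the cube, which is exactly the setting of the proposition.

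\emph{Step 1: co-singletons are halfspaces on the cube.} Fix $x\in\zo^d$ and set $w=2x-\mathbf{1}\in\{\pm1\}^d$. For $y\in\zo^d$,
\[
\langle w,y\rangle=\sum_{i:x_i=1}y_i-\sum_{i:x_i=0}y_i ,
\]
which is at most $\|x\|_1$, with equality if and only if $y=x$. Since the values are integers, every $y\neq x$ satisfies $\langle w,y\rangle\le \|x\|_1-1$. Hence the halfspace $\{y:\langle w,y\rangle\le \|x\|_1-\tfrac12\}$ restricted to $\zo^d$ equals $\zo^d\setminus\{x\}$, so it lies in $\hyH^{\rm H}_d$.

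\emph{Step 2: every subset is an intersection.} For any $A\subseteq\zo^d$,
\[
A=\bigcap_{x\notin A}\bigl(\zo^d\setminus\{x\}\bigr).
\]
This intersection is finite because the cube is finite. It also covers the edge cases: $A=\emptyset$ is the intersection over all $x$, and $A=\zo^d$ is itself a halfspace, say $\{y:\langle \mathbf{0},y\rangle\le 1\}$. Thus $\overline{\hyH^{\rm H}_d}=2^{\zo^d}$ for every $d$, and Item~2 of \cref{thm:implications} yields non-learnability.

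There is no real obstacle, since the combinatorial heart of the argument lives in Item~2. The only point to check is that the class satisfies the hypothesis for infinitely many $d$ under the query restriction to the cube, and it does so for all $d$. If one preferred a direct argument, the same construction gives the intuition. Take the target to be the constant-one function or one of the co-singletons $\zo^d\setminus\{x\}$, and the sample to be positives drawn from a set avoiding $x$. Certifying that a point $z$ in the sample is positive then requires ruling out the target $\zo^d\setminus\{z\}$. But each membership query at a point $y$ eliminates at most the single co-singleton $\zo^d\setminus\{y\}$. So polynomially many queries cannot certify a sample spread over exponentially many points, and this is exactly what \cref{thm:characterization} forbids.
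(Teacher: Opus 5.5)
Your proof is correct and is essentially the paper's argument: you realize the constant-one function and every co-singleton $\zo^d\setminus\{x\}$ as cube restrictions of halfspaces, so the intersection closure is all of $2^{\zo^d}$, and then you invoke Item~2 of \cref{thm:implications}. Your functional $\langle 2x-\mathbf{1},y\rangle$ is an affine reparametrization of the paper's Hamming-distance function, since $D_x(y)=\|x\|_1-\langle 2x-\mathbf{1},y\rangle$.
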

The reason is that halfspaces on $\zo^d$ can express every subset of the cube. The constant-one halfspace labels every vertex as positive, and for each vertex $v \in \zo^d$ there is a halfspace that labels every vertex except $v$ as positive. By intersecting these halfspaces one can obtain any arbitrary subset of the cube, so the impossibility result from Item~2 of \cref{thm:implications} applies. 
Over $\R^d$, by contrast, the learner can query points outside $\zo^d$, and the algorithm in \cref{thm:halfspace} exploits exactly this additional freedom. See \cref{sec:proofof:thm:booleanHalfspace} for details.

\subsection{Technical Overview}
    \label{sec:technicalOverview}
    In this section, we explain the key ideas and challenges in characterizing Valiant's model, and in learning halfspaces from positive examples and membership queries.

    \subsubsection{Challenges in Characterizing Valiant's model}
        First, we discuss challenges in using standard approaches for characterizing Valiant's model.

        Recall that Valiant's model differs from the PAC learning model in three ways: (C1) the learner must make only one-sided error, (C2) the learner only observes positive examples, and (C3) the learner has access to membership queries to the target hypothesis.

    \paragraph{Challenge I: ``One-Sided Covers'' Are Not Known for General Concept Classes.}
        A standard way of characterizing learnability in PAC learning is via the existence of uniform $\eps$-covers $\mathcal{C}(\hyH)$ for concept classes $\hyH$ with small VC dimensions, and then utilizing uniform convergence over the hypotheses in the $\eps$-cover.
        The resulting universal learning algorithm is then empirical risk minimization (ERM) over the $\eps$-cover $\mathcal{C}(\hyH)$.
        For such a learner to work in Valiant's model, it has to achieve one-sided error, that is, satisfy Condition C1. 
        To do this, we would need the cover to satisfy the following stronger ``one-sided'' guarantee:
        \[
            \forall_{h\in \hyH},~~ 
            \exists_{c\in \mathcal{C}(\hyH)},~~
            \text{s.t.},\qquad \supp(c)\subseteq \supp(h)
            \quadand \Pr_{x\sim \cD}(x\in \supp(h)\setminus\supp(c))\leq \eps\,.
        \]
        Let us call such a cover an $\eps$-containee.
        Similar covers have been studied by prior work, \eg{}, by \citet{dutta2022uniform,braverman2021near}. 
        In particular, these works introduce a concept of $\eps$-containers, which is equivalent to having $\eps$-containees for the complement class $\inbrace{\cX\setminus \supp(h): h\in \hyH}$. 
        However, unlike the standard $\eps$-covers, these $\eps$-containers and $\eps$-containees are much less well understood, and, for instance, we do not know a general characterization of which pairs of concept classes $\hyH$ and distributions $\cD$ admit them.
        Hence, any approach to characterize Valiant's model by utilizing covers would need to first understand when $\eps$-containees exist.
        Further, even if we are able to do this, it is still unclear whether this is the right approach because it is not clear how, given a $\eps$-containee, one would utilize membership queries for learning.
        As we have seen, membership queries do change what is learnable in Valiant's model, so we cannot simply ignore them. 

    \paragraph{Challenge II: No Known ``Universal'' Learning Algorithm.}
        Another idea is to try to take inspiration from \citet{natarajan1987learning,shvaytser1990necessary}'s characterization of the special case where the learner does not have access to membership queries.
        To get a characterization, they relied on the following simple observation: given positive samples $S$, any valid learner must output \textit{a subset} of the closure $\mathsf{Closure}(S)\coloneqq \bigcap_{h\in \V(S)}\supp(h)$.
        Since the closure always has one-sided error (as $\bigcap_{h\in \V(S)}\supp(h)\subseteq \supp(h')$ for any $h'\in \V(S)$), one can always convert a learning algorithm into one that outputs the closure of $S$.
        And now, since one-sided error is guaranteed and the learner does not have access to membership queries, it is sufficient to analyze when uniform convergence holds over the concept class of all closures: $\overline{\hyH}=\inbrace{\bigcap_{h\in \hyG}\supp(h) : \hyG\subseteq \hyH \text{~where $\abs{\hyG}$ is finite}}$.
        This is why the characterization in this special case turns out to be a bound on $\vc\!\inparen{\overline{\hyH}}$.
        In Valiant's model, however, the learner has access to membership queries and can use these queries to change the version space of hypotheses consistent with the observed transcript.
        Hence, to characterize Valiant's model, we need to understand when queries can be used to alter the version space so that its closure is ``well behaved,'' \eg{}, it has a small VC dimension, and this is an algorithmic puzzle we need to solve to prove \cref{thm:characterization}.

    \paragraph{Challenge III: Techniques for Characterizing Exact Learning Are Too ``Coarse.''}
        We saw in the previous paragraph that accounting for membership queries is an added challenge in characterizing Valiant's model. 
        Given this, a natural approach is to try to utilize techniques for characterizing classes that are learnable with membership queries. 
        There are a number of dimensions known to characterize classes that are learnable from different forms of queries, including membership queries, equivalence queries, and disjointness queries (among others) \citep{angluin1988query}.
        Essentially, all of these characterizations are for exact learning, where the goal is to identify the target concept from the hypothesis class exactly. 
        At a high level, these characterizations build a ``query tree'' over the hypothesis class, where each query splits the surviving hypotheses according to the response, and the goal is to isolate the target with as few steps as possible.
        The characterization then corresponds to the existence of a query tree of finite depth.
        This approach is insufficient for us for two reasons:
            First, because we see positive examples, given samples $S$, the hypothesis class for which we want to create a query tree is $\V(S)$ and not $\hyH$.
            So at the least, we would need to understand some necessary and sufficient structure on $\hyH$ such that with high probability the resulting version space $\V(S)$ has a small finite-depth query tree.
            Second, because our goal is not to do exact learning, it is not even clear whether a query tree is the right combinatorial objective: it is certainly sufficient for learning, but it may not be necessary, since we do allow the output to make errors on one side.

    \subsubsection{Overview of the Proof of Characterization}
        Our proof makes a sequence of deliberate choices to overcome the challenges described above:
        Rather than pursuing a characterization via a notion of covers, we work with sample compression schemes, which turn out to be better suited to the one-sided requirement in Valiant's model and thereby avoid Challenge~I.
        Our compression schemes are query based (\cref{def:query-compression}) and so, at first sight, resemble the ``query trees'' used in exact learning (see Challenge~III above).
        Indeed, both consist of finite adaptive sequences of queries.
        The key difference, however, is in the objective:
        while query trees for exact learning aim to isolate the target hypothesis $\hstar$, our goal is to certify that $S$ is a positive sample, in the sense that every hypothesis consistent with the realized query transcript must be positive on all of $S$ (\cref{def:query-compression}).
        This shift in perspective avoids Challenge~III.
        Finally, somewhat counterintuitively, this viewpoint also yields the right notion of a ``universal'' learning algorithm:
        whenever a concept class $\hyH$ is learnable in Valiant's model, there is a learner that certifies the observed sample using $\poly(d)$ queries and then outputs the closure of the resulting version space (\cref{thm:valiant:learningFromCompression}).
        This overcomes Challenge~II.

    In the remainder of this section, we give an outline of our proof, in three parts.
    \vspace{-4mm}
    \subsubsection*{Part 1 (Learnability implies Small Query Compression Scheme) (\cref{thm:valiant:compression})}   
        In this part, we show that if $\hyH$ is learnable, then $\hyH_d$ has a query compression scheme of size
        \[
            q_d\leq m_{\rm W}(d)\cdot O(d\,\vc(\hyH_d))\,.
        \]
        Here $m_{\rm W}(d)$ is the query/sample complexity of ``weak learning'' for $\hyH$ (\ie{}, learning with $\eps,\delta=\Omega(1)$).
        To establish the above upper bound, we first bound the size of a ``weak'' compression scheme (\cref{def:valiant:weak-compression}), which is only required to certify an $\Omega(1)$-fraction of the positive examples.
        Intuitively, given the existence of a weak compression scheme for each realizable set $S$ of positive samples, we can get a strong compression scheme by repeating this process $O(\log{|S|})$ times: in each round we certify a constant fraction of $S$, remove the certified samples, and recurse on the remainder (\cref{thm:valiant:compression}).
        Then we can deduce the result since $O(\log{|S|})\leq d$ as $S\subseteq \zo^d$.
        
        It remains to explain how to obtain such a weak compression scheme.
        Let $T$ be the entire transcript of interaction between the learner and the environment, including positive samples, queries, and responses.
        Let $\V(T)$ be the version space of hypotheses consistent with this transcript.

        \paragraph{Bound on weak compression-scheme size (assuming learner outputs a closure).}
            If we knew the learner outputs a closure over this version space, then it is simple to bound the weak compression-scheme size:
            We can simply run a weak learner on the uniform distribution over $S$ to produce a transcript $T$ of length $O(m_{\rm W}(d))$. Since it is a weak learner that outputs the closure over $\V(T)$, it must be the case that
            $\sabs{S\cap \bigcap_{h\in \V(T)}\supp(h)}\geq \Omega(|S|)$.
            Thus, we can obtain a weak compression scheme by compiling this realized run into an adaptive query strategy: every sampled point in $T$ is turned into a membership query, and we also keep the learner's ordinary membership queries.

        \paragraph{Bound on weak compression-scheme size (without assumptions on the learner).}
            If the learner's output can be non-closure, then the argument is much more challenging because we can no longer assume that $\sabs{S\cap \bigcap_{h\in \V(T)}\supp(h)}\geq \Omega(|S|)$.
            To address this, we show that whenever $\hyH$ is learnable, there is a learner that succeeds \emph{uniformly} over all potential targets consistent with its realized transcript, namely, all hypotheses in $\V(T)$.
            Concretely, in the (ordinary) Valiant model, for each fixed target $h\in \V(T)$ the learner may fail with probability $\delta$, but the failure event may depend on $h$.
            In the uniform version, with probability at least $1-\delta$, the learner must succeed simultaneously for \emph{every} $h\in \V(T)$.
            The analogous strengthening of PAC learning is known as Probably Uniformly Approximately Correct (PUAC) learning and learnability in it is known to be equivalent to learnability in PAC learning \citep[see Chapters~3, 6, 7]{vidyasagar2003learning}.
            To avoid the case of non-closure learners, we prove the analogous statement for Valiant's model, namely that whenever a class $\hyH$ is learnable in Valiant's model, it is also learnable in this uniform version (\cref{thm:valiant:uniformSuccess}).
            This then implies that whenever $\hyH$ is learnable, there is a weak learner whose output is the closure of the realized version space, which by the earlier argument suffices to bound the weak compression-scheme size.

    \subsubsection*{Part 2 (Small Query Compression Scheme implies Learnability) (\cref{thm:valiant:compression-implies-learnability})} 
        Next, we show that if $\hyH_d$ has a query compression scheme of size $q_d$, then it is learnable with
        \[
            m(d,\eps,\delta)= O\!\inparen{\frac{dq_d+\log{\nfrac1\delta}}{\eps}}~~\text{positive samples}~~\text{and}~~\text{membership queries}\,.
        \]
        As we alluded to before, the algorithm is simple: it runs the query compression scheme on the sample $S$ to obtain a transcript $T=(q,r)$ that certifies $S$, and then outputs the closure over $\V(T)$.
        Since all responses in the transcript were generated by the target $h^\star$, $h^\star\in \V(T)$.
        Hence the closure has one-sided error, and it only remains to show that it generalizes. 
        For this, it suffices to show that the set of all possible outputs of the learner has small VC dimension. 
        This output class is
        \[
            \operatorname{Cl}_{q_d}\!\inparen{\hyH_d}
            \coloneqq
            \inbrace{
                \bigcap_{h\in \V\!\inparen{T}} \supp(h)
                :
                \text{transcript $T$ of length at most $q_d$}
                \text{ and }
                \V\!\inparen{T}\neq\emptyset
            }.
            \yesnum\label{eq:queryClosure:intro}
        \]
        Here, $\V(T)$ is the version space of hypotheses in $\hyH_d$ consistent with $T$.
        This is exactly the relevant output class because every possible output of the learner is the closure of some realized transcript of length at most $q_d$.
        Note that while this output class takes intersections of hypotheses from $\hyH_d$, it is \textit{not} the class of intersections of at most $q_d$ hypotheses from $\hyH_d$; rather, each output is obtained by intersecting the entire version space $\V(T)$, which may contain a large number of hypotheses.
        
        We show that $\vc\!\inparen{\operatorname{Cl}_{q_d}\!\inparen{\hyH_d}}\leq (d+1)(q_d+1)$ (\cref{lem:valiant:kQueryClosureClass:VC}).
        Interestingly, this bound does not depend on the VC dimension or complexity of $\hyH_d$ at all; it comes purely from counting labeled query transcripts of length at most $q_d$ over the finite domain $\zo^d$. This contrasts with the usual class of intersections of at most $q_d$ hypotheses from $\hyH_d$, whose VC dimension generally does depend on $\vc\!\inparen{\hyH_d}$ (\eg{}, \citep{blumer1989learnability,eisenstat2007vcdimension}).
        In this argument, we are crucially using the finiteness of the domain; indeed, on infinite domains even $1$-point query closures can have infinite VC dimension, despite the original class having VC dimension at most two (\cref{thm:onePointClosureInfiniteVC}).

    \begin{remark}
        This compression-based structure of the learner is useful beyond establishing the characterization itself. For instance, it can also be used to deduce closure properties of learnability in Valiant's model, such as union-closedness (see \cref{thm:valiant:unionClosedness}).
    \end{remark}
    
    \subsubsection*{Part 3 (Small VC Dimension Is Necessary for Learning) (\cref{thm:valiant:VCLowerBound})}
    In this part, we prove that learnability in Valiant's model requires sample/query complexity at least linear in the VC dimension, namely that $m\!\inparen{d,\eps,\delta}\geq \inparen{\vc\!\inparen{\hyH_d}-1}/9$ for $\eps,\delta<\nfrac{1}{2}$.
    This lets us replace the $\vc\!\inparen{\hyH_d}$ term appearing in Part~1 by the complexity of the learner itself, and thereby deduce a polynomial-size compression scheme from learnability alone.
    To prove this, we use Yao's minimax principle against deterministic learners under a carefully chosen hard prior.
    In particular, we select the following prior: Starting from a shattered set $X$, we choose a random hidden positive set $U\subseteq X$ of size $\Theta(M)$, let the target be the indicator of $U$, and let the example oracle draw uniformly from $U$, where $M=m\!\inparen{d,\eps,\delta}$.
    Thus, to achieve one-sided error with non-trivial coverage, the learner must recover most of the hidden set $U$ using only positive examples from $U$ and at most $M$ adaptive membership queries, which turns out to be impossible unless \(\abs{X}=O(M)\).
    This implies that $\vc{}(\hyH_d)\geq \abs{X}=O(M)\geq m(d,\eps,\delta).$
 
    \subsubsection{Overview of Learning Halfspaces in $d$-dimensional Euclidean Space}
        
    In this section, we give an overview of our algorithm for learning halfspaces.

        As we have mentioned before, for halfspaces to be learnable in Valiant's model, it is crucial to use membership queries, as without these queries halfspaces are not learnable \citep{natarajan1987learning,shvaytser1990necessary}.
        One also needs to use positive examples because the class of halfspaces is an infinite class, and a result of \citet{kulkarni1993active} implies that with just membership queries, we can only learn finite classes (even if we remove the one-sided error requirement and allow for two-sided error).
        Our learning algorithm is based on a simple observation inspired by our characterization via compression schemes.
        
        \paragraph{Overview for $d=2$.}
        Suppose $d=2$ and consider $m$ positive examples $S$ for an unknown target halfspace $\hstar$.
        For any $S$, there is a set of just three points that certifies all of $S$ (regardless of how large $m$ is).
        To see this, construct the convex hull $C$ of $S$ and let its vertices be $v_1,v_2,\dots,v_{m'}$.
        Observe that there are exactly two facets or edges incident on each vertex $v_i$, and form a triangle $T_i$ by extending these two edges and then selecting a third line passing through one of the other vertices and enclosing the entirety of $C$.
        By construction, for each $1\leq i\leq m'$, $T_i\supseteq C$.
        Then, the key observation is that, for at least one choice of $1\leq i^\star\leq m'$, the triangle $T_{i^\star}$ also satisfies
        \[
            T_{i^\star}\subseteq \supp(\hstar).
        \]
        To see this, imagine sliding a line parallel to the boundary of $\hstar$ toward $C$ until it first touches the convex hull.
        Since $C$ is a polygon, this first contact occurs at some vertex $v_{i^\star}$, and for this choice of $i^\star$ the two edges incident on $v_{i^\star}$ form a wedge that is a subset of $\supp(\hstar)$.
        One can then choose the third line so that the resulting triangle $T_{i^\star}$ still contains $C$ while remaining inside $\supp(\hstar)$.

        If we knew $i^\star$, we would be done (at least for $d=2$), as we can simply query the three vertices of $T_{i^\star}$, which gives us a query compression scheme of size three, and then output the closure of this scheme (which is a triangle, in fact, simply $T_{i^\star}$).
        This is sufficient because (1) one-sided error is guaranteed since $T_{i^\star}\subseteq \supp(\hstar)$ and (2) a bound on the false-negative rate follows because the output of this learner is always a triangle, and the class of triangles has $O(1)$ VC dimension and, hence, generalizes.
        Of course, the index $i^\star$ is unknown.
        We are able to overcome this because of the useful property that the condition $T_{i}\subseteq \supp(\hstar)$ can be certified using membership queries (this uses the convexity of $\supp(\hstar)$ and the fact that $T_i$ is a closure over the class of hypotheses for three queries on its vertices).
        This then allows us to iterate over each $1\leq i\leq m'$ and output the first $T_i$ that passes the check.
        This algorithm uses at most $3m'\leq 3m$ queries and, using standard VC analysis, we can bound the required number of positive samples by $O((\log{\nfrac1\delta})/\eps)$.

        \paragraph{Additional challenges in $d>2$.}
            A natural analog of this algorithm in high dimensions would form the convex hull $C$ of the positive samples $S$, and iterate over its vertices $v_1,v_2,\dots,v_m$.
            For each vertex $v_i$, it would form a cone by picking all the facets $\cF(v_i)$ incident on $v_i$ and then, as before, closing it off with another facet not in $\cF(v_i)$ to obtain a polytope $P_i$.
            This algorithm can also certifiably find a polytope $P_{i^\star}$ with the guarantee that $C\subseteq P_{i^\star}\subseteq \supp(\hstar)$, and this does indeed guarantee one-sided error for $P_{i^\star}$.
 
            However, unlike the case for $d=2$, where $\abs{\cF(v_i)}=2$, here the number of incident facets can be as large as $m^{O(d)}$.
            Hence, the corresponding class of outputs has VC dimension that grows with this complexity, so we do not obtain any useful generalization bound depending only on $d$.
            To overcome this, we simplify the set of outputs of this algorithm by not picking all facets adjacent to a vertex $v_i$, but only a subset $J$ of $d$ facets with linearly independent normals; this suffices to ensure that the resulting wedge is full dimensional, after which we close it off with a final facet.
            There are several such choices of $J$, and due to certain examples, it turns out that we may need to iterate over all of them in order to find one for which the resulting polytope is a subset of $\supp(\hstar)$.
            Hence, we now need to index by both $i$ and $J$, $P_{i,J}$.
            Since we have only removed facets, the guarantee that $P_{i,J}\supseteq C$ continues to hold.
            Since each $P_{i,J}$ is defined by $d+1$ facets, it is a simplex and has a much smaller VC dimension of $\poly(d)$.

            A second challenge is that, for any vertex $v_i$, the number of choices of the subset $J$ can still be as large as $m^{O(d^2)}$, and, since certifying each $P_{i,J}$ requires $d+1$ queries, a naive search will require exponentially many queries in $d$ to find $P_{i^\star,J^\star}$.
            Here we are able to reduce the number of queries required to be polynomial in $d$ by observing that the problem we are solving is equivalent to a point-location problem in which we must determine the labels of a finite set $X$ of candidate witness points with respect to the unknown halfspace; recent point-location results imply that this can be done using only $\wt{O}\!\inparen{d\log\abs{X}}$ queries \citep{hopkins2020pointlocation}.

\subsection{Takeaways, Discussion, and Open Problems}
This work gives the first characterization of Valiant's original learning model.
In particular, we show that, despite its close connections to PAC learning and to positive-only learning without queries, learnability in this model is governed by a different combinatorial principle.

Along the way, we introduce adaptive-query compression schemes (\cref{def:query-compression}), a query-based analogue of sample compression that may be useful more broadly in settings where learners receive examples and also have access to membership queries.
We also introduce the class of $k$-query closures (\cref{eq:queryClosure:intro,def:valiant:kQueryClosureClass}), which plays a key role in our proof and may be of independent interest in other one-sided or interactive learning problems.

We view these results as a starting point. They raise a number of natural questions at the intersection of learning theory, query learning, and the theory of one-sided inference.
The most immediate question is to characterize learnability in Valiant's model beyond finite domains.
While we obtain clean necessary and sufficient conditions over arbitrary domains, these are not tight, and so a full characterization remains open.
\begin{description}
    \item[\textbf{Open Problem 1.}] 
        What is the characterization of learnability in the extension of Valiant's model to arbitrary infinite domains?
\end{description}
Our characterization over finite domains is also quantitative: it characterizes the sample complexity of learning in Valiant's model up to polynomial factors.
A natural direction is to obtain a tight characterization of sample complexity, say up to logarithmic factors.
\begin{description}
    \item[\textbf{Open Problem 2.}] 
        What is the tight characterization of sample complexity of learning in Valiant's model over Boolean domains? 
\end{description}
A third direction concerns the computational complexity of learning in this model.
Our halfspace result over $\R^d$ shows that membership queries can fundamentally enlarge what is learnable in the general-domain setting.
However, the algorithm we obtain is not polynomial time.
This leads to the following question:
 \begin{description}
    \item[\textbf{Open Problem 3.}] 
    Is there a polynomial-time algorithm for learning halfspaces in Valiant's model?
\end{description}
Finally, even at the information-theoretic level, our understanding of halfspaces is incomplete. There remains a substantial gap between the sample and query complexity achieved by our algorithm and the lower bounds we can currently prove.
 \begin{description}
    \item[\textbf{Open Problem 4.}] 
    What are the optimal sample and query complexities for learning halfspaces in Valiant's model?
\end{description}
More broadly, it would be interesting to understand whether the structural notions that arise in our analysis, most notably adaptive-query compression schemes, can be useful beyond Valiant's model itself.

\subsection{Related Work}
    In this section, we present some further related work.
    We note that, given the vast amount of work on PAC learning, we cannot hope to survey all the work.
    For a comprehensive treatment of this setting, see, \eg{}, \cite{vidyasagar2003learning,kearns1994introduction,shalev2014understanding,mohri2018foundations}.

        \paragraph{Learning from positive examples.}
    There is a long line of work on learning from positive data, going back at least to Gold's seminal model of language identification in the limit, where the learner is presented with a \emph{text}, \ie{}, an arbitrary enumeration of the target ``language'' containing only positive examples \citep{gold1967language}.
    This viewpoint was developed further in the rich literature on inductive inference from positive data, \eg{}, \citet{angluin1980inductive,angluin1979finding}.
    Closer to our setting is the work of \citet{natarajan1987learning}, who studied the special case of Valiant's model without membership queries and characterized proper learnability from positive examples alone.
    Subsequently, \citet{shvaytser1990necessary,kivinen1995one} completed this picture for improper learners and established sharp sample complexity bounds.
    These results paint a largely negative worst-case picture: even fundamental classes such as two-dimensional halfspaces are not learnable from positive samples alone.
    To bypass this bottleneck, a substantial line of work has studied positive-only learning under additional structural or distributional assumptions, often for specific hypothesis classes and typically under the uniform distribution on the Boolean hypercube or under Gaussian distributions; see, \eg{}, \cite{frieze1996linearTransformations,Nguyen2009learningParallelepiped,anderson2013simplices,de2015satisfying,Kontonis2019EfficientTS,canonne2020satisfying,he2023robustMoments,lee2024unknown,lee2026smoothed,mansouri2025learning} (also see \cite{anay2026thesis}).
    More broadly, learning from positive samples also arises in modern contexts motivated by (1) generative modeling -- where training typically involves only valid text and there is no canonical distribution over invalid outputs \citep{kleinberg2024language,kalavasis2025limitslanguagegenerationtradeoffs} -- and (2) treatment-effect estimation in observational studies beyond unconfoundedness, which can be formulated as mean estimation from positive-only samples \cite{cai2025makestreatmenteffectsidentifiable}.

    \paragraph{Learning from queries.} There is a long and rich line of 
    work on learning from different types of queries, starting with early work from \citet{angluin1988query}. In the PAC learning literature, queries are most commonly studied in the form of \emph{active learning}, where the learner observes a stream of unlabeled i.i.d.\ data and can ask membership queries for their labels \citep{cohn1994improving,balcan2006agnostic,balcan2010true,dasgupta2011two,hanneke2009theoretical}.

    \paragraph{Learning through compression schemes.}
Compression schemes have been highly influential in learning theory, going back to the foundational work of \citet{floyd1995sample}, who connected compression to PAC learnability.
Subsequently, \citet{warmuth2003compressing} asked whether every class of VC dimension $d$ admits a compression scheme of size $O(d)$. This became one of the most extensively studied questions in learning theory.
\citet{moran2016sample} gave the first general compression bound that depends only on the VC dimension: every binary concept class of VC dimension $d$ admits a sample compression scheme of size $2^{O(d)}$. This established the equivalence between binary PAC learnability and bounded-size sample compression.
This line of work has since been extended in several directions, including efficient learner-to-compression conversions and compression for real-valued learners \citep{hanneke2019sample}.
On the negative side, \citet{chase2024dual} identified limitations of one of the major approaches to the VC-size compression conjecture: embedding arbitrary VC classes into extremal classes of comparable dimension.
Beyond binary classification, \citet{pabbaraju2024multiclass} showed that the analogue fails in the multiclass setting: learnability does not imply bounded-size \mbox{sample compression as a function of DS dimension alone.}

\vspace{-1mm}

    \paragraph{Learning halfspaces with random examples and queries.}
    There is a large body of work on learning halfspaces with random examples (typically both positive and negative examples) and binary-response queries (typically using, in addition to membership queries, other query types such as equivalence or label queries), \eg{}, \citep{baum1991neural,kwek1998pac,blum1998learning,balcan2007margin,balcan2013active,gonen2013efficient,awasthi2017power,yan2017revisiting,zhang2018efficient,zhang2020efficient,li2024efficient,diakonikolas2024active}.
    To the best of our knowledge, in the more relevant setting where the learner sees only positive examples and has access only to membership queries, the only prior work we are aware of is \citet{goldberg2000precision}.
    There are also the following implications for halfspaces:
        The set of halfspaces with bounded finite bit-complexity (which is a finite class) can be learned exactly (and, hence, also with one-sided error) using $\poly(d)$ membership queries and a single arbitrary positive example.
    Thus, to the best of our knowledge, our result on halfspaces is the first to learn halfspaces, without requiring assumptions on bit-complexity, in Valiant's model.

\section{Preliminaries} %
    \label{sec:preliminaries}
    In this section, we collect notation and background used throughout the paper.

    \paragraph{Notation.}
        Logarithms are base two throughout.
        We use calligraphic letters (such as $\cD$ and $\cP$) for distributions over a domain $\cX$.
        When $\cX$ is finite or countably infinite, $\cD(\cdot)$ denotes the probability mass function.
        When $\cX$ is uncountable, for simplicity, we assume that $\cD$ has a density and use the same notation for that density.
        We write $\supp(\cD)\coloneqq \inbrace{x\in \cX : \cD(x)>0}$
        for the support of $\cD$.
        For a hypothesis $h\colon \zo^d\to\zo$, we write $\supp(h)\coloneqq \sinbrace{x\in \zo^d : h(x)=1}.$
        Given a statement $E$, we write $\ind{}\!\inbrace{E}$ for its indicator.

    \paragraph{Example and Membership Oracles.}
    We work in the realizable binary classification setting.
    The domain is $\cX=\zo^{d}$, the target belongs to a concept class $\hyH\subseteq \zo^\cX$, and $\cD$ is a distribution over $\cX$.
    In Valiant's model, the learner {only receives positive examples from the target concept.}
    To formalize this, we introduce the following notion of compatibility.
    \begin{restatable}[Compatibility]{definition}{defCompatibility}
        \label{def:compatible}
        A distribution $\cD$ over $\cX$ is \emph{compatible} with a concept $h\colon \cX\to \zo$ if every point in the support of $\cD$ is positive for $h$; equivalently,
        $\supp(\cD)\subseteq \supp(h)\coloneqq \inbrace{x\in \cX : h(x)=1}.$
    \end{restatable}
    Thus, if $\cD$ is compatible with a target hypothesis $\hstar$, then every draw from $\cD$ is a positive example for $\hstar$.
    The learner is given access to a positive-example oracle and a membership oracle, defined as follows.
    \begin{definition}
        [Example Oracle]
        \label{def:exampleOracle}
        An example oracle for $\cD$ is a primitive that outputs $x \hspace{-0.5mm}\sim \hspace{-0.5mm} \cD$ on each query.
    \end{definition}
    \vspace{-9mm}
    \begin{definition}
        [Membership Oracle]
        \label{def:membershipOracle}
        A membership oracle for $h\colon \cX\to \zo$ is a primitive that, on input $x\in \cX$, returns the label $h(x)$.
    \end{definition}
    \subsection{Discussion of Valiant's Model of Learning}
    \label{sec:discussion:valiant} 
    There are several differences between Valiant's model (\cref{def:valiant}) and the usual PAC model for binary classification over arbitrary domains \citep{shalev2014understanding,blum1998polynomial}.
    First, \cref{def:valiant} requires one-sided error: the output hypothesis must have no false positives. %
    This is a strong constraint: even if one is given an $\eps$-cover of $\hyH_d$ in the usual sense, it does not suffice for learning, since the covering hypotheses may introduce false positives.
    Second, the sample and query complexities are required to be polynomial in the ambient dimension $d$, whereas the PAC model only requires them to be finite \citep{shalev2014understanding,blum1998polynomial}.
    This requirement is essential: for each fixed $d$, the domain $\zo^d$ has size $2^d$, so exhaustively querying the membership oracle trivially learns any concept; without the polynomial requirement, the definition would be vacuous on finite domains, and, hence, requiring $\poly(d)$ samples and queries is the standard modeling choice for learning problems over the Boolean hypercube \cite{kearns1994introduction}.
    Third, unlike the PAC model where the feature distribution is arbitrary and unrelated to the target hypothesis, here the distribution $\cD$ is required to be compatible with $\hstar$, \ie{}, the learner only receives positive examples.
    Lastly, compared to work on active learning, the learner is allowed to query the label of arbitrary points of the domain.
    These differences place Valiant's model outside the reach of the existing machinery and require us to explore new ideas.

    The above model differs from Valiant's original formulation in two respects: 
    First, Valiant required the learning algorithm to be proper (\ie{}, its output was required to be in $\hyH_d$) and, second, he required the learner to run in polynomial time.
    We drop both requirements to develop a general theory, as both are known to complicate characterizations of learnability.
    Indeed, while sample-efficient learnability has been characterized in the PAC, online, and active learning models, among others \citep{blumer1989learnability,hanneke2014theory,littlestone1988learning}, no comparably general characterization of \emph{computationally} efficient learnability is known in any of them.
    Further, requiring properness can make it much harder to obtain a clean general theory. This already happens in closely related PAC-style settings: for instance, in multiclass classification, a characterization of proper learning is known to be undecidable under ZFC \citep{asilis2025proper}.

    Valiant's original definition used a single parameter $\eps$, for both the failure probability and the allowed false-negative mass.
    We separate them into $\eps$ and $\delta$, since we require this flexibility in our proofs.
    We show that after appropriately boosting a weak learner, the sample and query complexities depend at most linearly on $\nfrac{1}{\eps}$ (up to logarithmic factors) and logarithmically on $\nfrac{1}{\delta}$ (see \cref{thm:valiant:boosting}).
    Finally, for some of his results, Valiant allowed the features to be undetermined (partial assignments); for simplicity, we focus on total assignments over $\zo^d$.%

\subsection{Adaptive Query Strategies and Version Spaces}
\label{sec:preliminaries:adaptiveQueryStrategies}
In Valiant's model, the learner's queries to the membership oracle may be adaptive: the $i$-th query may depend on all previously observed positive examples and past query-response pairs.
We reserve $q=\inparen{q_1,\dots,q_t}$ for a finite sequence of queried points in $\zo^d$ and $r=\inparen{r_1,\dots,r_t}$ for the corresponding responses.
For a hypothesis $h\colon \zo^d\to\zo$, we write $h(q)\coloneqq \inparen{h(q_1),\dots,h(q_t)}.$

\begin{definition}[Adaptive Query Strategy]
\label{def:valiant:adaptiveQueryStrategy}
Fix $d\in\N$.
A deterministic adaptive query strategy $\sigma$ of depth at most $t$ is a rooted binary tree of depth at most $t$ whose internal nodes are labeled by points of $\zo^d$.
The root label is the first query; if the current node is labeled $q_i$ and the response is $r_i\in\zo$, the strategy follows the left child when $r_i=0$ and the right child when $r_i=1$.
For any response sequence $r$ realized along a root-to-leaf path of $\sigma$, we write $q\!\inparen{\sigma,r}$
for the corresponding sequence of queried points on that path.
\end{definition}
When discussing randomized learners, we condition on their internal randomness and work with deterministic strategies. %
\begin{remark}[Positive Examples May Be Collected First]
\label{rem:positive-samples-first}
Suppose a learner satisfying \cref{def:valiant} makes at most $m\!\inparen{d,\eps,\delta}$ calls to the example oracle and at most the same number of calls to the membership oracle.
Then, without changing its guarantee, we may assume that it first requests $m\!\inparen{d,\eps,\delta}$ independent draws from the example oracle, stores them, and only then interacts with the membership oracle.
This is because the transformed learner can simulate the original one.
Hence, without loss of generality, one can assume that the learner first observes its entire positive sample and then, after fixing its internal randomness, runs a deterministic adaptive query strategy.
\end{remark}
A version space is the set of hypotheses consistent with the information revealed so far; this notion recurs throughout the paper. 
\begin{definition}[Version Spaces]
    \label{def:valiant:positiveVersionSpace}
    \label{def:valiant:queryVersionSpace}
    Fix $d\in\N$.
    We introduce two associated version spaces:
    \begin{enumerate}[leftmargin=15pt,itemsep=0pt]
        \item For a finite set $S\subseteq \zo^d$, the positive-sample version space is
        $
            \V\!\inparen{S}
            \coloneqq
            \inbrace{h\in \hyH_d : S\subseteq \supp(h)}.
        $
        \item For a query sequence $q$ and response sequence $r$, the query-transcript version space is
        $
            \V\!\inparen{q,r}
            \coloneqq
            \inbrace{h\in \hyH_d : h(q)=r}.
        $
    \end{enumerate}
\end{definition}
Thus, the hypotheses consistent with both a positive sample $S$ and a query transcript $\inparen{q,r}$ are precisely $\V\!\inparen{S}\cap \V\!\inparen{q,r}.$
If $T=\inbrace{(q_1,r_1),\dots,(q_t,r_t)}$ denotes a transcript, we write $\V(T)\coloneqq \V\!\inparen{q,r}$.
Likewise, for a deterministic adaptive query strategy $\sigma$ and a realized response sequence $r$, %
\begin{equation}
    \V\!\inparen{\sigma,r}
    \coloneqq
    \V\!\inparen{q\!\inparen{\sigma,r},r}.
    \label{eq:valiant:adaptiveQueryVersionSpace}
\end{equation}

\subsection{VC Dimension and One-Centered Star Number}
    \label{sec:additional-preliminaries}
        Next, we introduce relevant dimensions from learning theory that we will reference throughout.
        
    \begin{definition}[VC Dimension \cite{vapnik1974theory,vapnik1971uniform}]
    \label{def:vc-dimension}
    The Vapnik--Chervonenkis (VC) dimension of $\hyH$, denoted $\vc(\hyH)$, is the largest integer $k$ for which there exists a set of points $\inbrace{x_1,\dots,x_k}\subseteq\cX$ that is shattered by $\hyH$, \ie{}, $\bigl\{ \bigl(h(x_1),\dots,h(x_k)\bigr) : h\in\hyH\bigr\}
        \,=\,\{0,1\}^k\,.$
    If no such finite $k$ exists, then $\vc(\hyH)=\infty$.
    \end{definition}
    In other words, if $\vc(\hyH)=k$, then there exists a set of $k$ points $X=\inbrace{x_1,x_2,\dots,x_k}$ such that every pattern on the points $X$ can be realized by some hypothesis $h$ in the hypothesis class $\hyH$. 
    However, no set of size $k+1$ is shattered.
    \begin{definition}[Star Number
    \citep{HannekeYang2015}]
        \label{def:star-number}
    For any concept class $\hyH\subseteq\zo{}^\cX$ and any function $h\colon\cX\to\zo$ (not necessarily in $\hyH$), the star number of $\hyH$ \emph{centered} at $h$, denoted $\mathfrak{s}_h(\hyH)$, is the largest integer $k$ for which there exist points $x_1,\dots,x_k\in\cX$ such that for each $i\in\{1,\dots,k\}$ there is a concept $h_i\in\hyH$ satisfying
    \[
    h_i(x_j)=
    \begin{cases}
    h(x_j),&j\neq i,\\
    1 - h(x_j),&j = i\,,
    \end{cases}
    \]
    and there is also a concept $h_0\in \hyH$ with $h_0(x_i)=h(x_i)$ for all $1\leq i \leq k$.
    If no such finite $k$ exists, then $s_h(\hyH)=\infty$.  
    \end{definition}
    Intuitively, the star number captures a notion of ``local disagreement capacity.'' 
Given a reference labeling $h$, it counts how many hypotheses in $\hyH$ can make exactly one ``surgical edit'' to $h$'s predictions, with no two hypotheses editing the same point.
The star number has found a number of applications in active learning.
Recently, \citet{hanneke2024Star} identified and studied the specific star number defined by the all-one hypothesis, which they termed the \textit{$1$-centered star number}.
    \begin{definition}[1-Centered Star Number \citep{hanneke2024Star}]
        \label{def:one-star-number}
        Let $1\!:\cX\to\{0,1\}$ be the constant-$1$ function. The $1$-centered star number of $\hyH$ is denoted by $\oneStar(\hyH)$. 
    \end{definition}
    The $1$-centered star number is particularly interesting for us due to the following connection, found by \citet{hanneke2024Star}, between $\oneStar(\hyH)$ and the VC dimension of the intersection closure of $\hyH$.\footnote{The intersection closure of a class $\hyH$ is the smallest class containing $\hyH$ that is closed under finite intersections.}
    \vspace{-5mm}
    \begin{theorem}[Theorem 19 of \cite{hanneke2024Star}]
        \label{thm:oneStarDimension}
        For any concept class $\hyH\subseteq \zo^\cX$, the smallest VC dimension of any class $\overline{\hyH}$ containing $\hyH$ that is closed under intersections is $\oneStar(\hyH)$. %
    \end{theorem}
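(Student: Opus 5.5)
The proof has two directions: a lower bound showing that every intersection-closed superclass of $\hyH$ has VC dimension at least $\oneStar(\hyH)$, and an upper bound exhibiting one such superclass whose VC dimension is at most $\oneStar(\hyH)$. For the upper bound I would use the intersection closure itself. Since every intersection-closed class containing $\hyH$ contains the intersection closure, it suffices to show that the closure has VC dimension exactly $\oneStar(\hyH)$. Throughout, I take the intersection closure $\overline{\hyH}$ to be the set of all \emph{nonempty} finite intersections of members of $\hyH$. This class is closed under intersection, and it avoids silently adding the constant-one function as an empty intersection.

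\emph{Lower bound.} Let $\hyG\supseteq\hyH$ be closed under finite intersections, and let $x_1,\dots,x_k$ be a $1$-centered star set with witnesses $h_0,h_1,\dots,h_k\in\hyH$. Here $h_0$ labels every $x_j$ positive, and each $h_i$ labels $x_i$ zero and every other $x_j$ one. Given a target pattern $A\subseteq[k]$, consider
\[
g_A \coloneqq h_0\wedge\bigwedge_{i\notin A} h_i .
\]
This lies in $\hyG$ because it is a nonempty finite intersection; for $A=[k]$ it is just $h_0$. On the star set, $g_A$ is positive exactly on $\{x_i:i\in A\}$. Thus $\hyG$ shatters $\{x_1,\dots,x_k\}$, so $\vc(\hyG)\ge k$. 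If $\oneStar(\hyH)=\infty$, applying this for every $k$ gives $\vc(\hyG)=\infty$.

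\emph{Upper bound.} Suppose $\overline{\hyH}$ shatters $X=\{x_1,\dots,x_k\}$; I will extract a $1$-centered star of size $k$ in $\hyH$.
\begin{enumerate}
\item For each $i$, pick $C_i\in\overline{\hyH}$ with $C_i\cap X=X\setminus\{x_i\}$. Write $C_i=\bigcap_{j}\supp(g_j)$ with finitely many $g_j\in\hyH$.
\item Every $g_j$ contains $X\setminus\{x_i\}$, and since $x_i\notin C_i$, some $g_j$ excludes $x_i$. Call that hypothesis $h_i$; it has exactly the star pattern required at $x_i$.
\item Pick $C_0\in\overline{\hyH}$ with $X\subseteq C_0$. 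Any hypothesis in the (nonempty) intersection defining $C_0$ is positive on all of $X$, so it serves as $h_0$.
\end{enumerate}
Hence $\oneStar(\hyH)\ge k$, and therefore $\vc(\overline{\hyH})\le\oneStar(\hyH)$. Combined with the lower bound, $\overline{\hyH}$ attains the minimum.

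The step needing the most care is the one that supplies the center witness $h_0$, together with the convention on what "intersection-closed" means. If the constant-one function were admitted as the empty intersection, the all-positive pattern could be realized without any member of $\hyH$ being positive on all of $X$. The star definition explicitly requires $h_0\in\hyH$. Restricting to nonempty intersections resolves this, and the lower-bound argument never uses the empty intersection, so that direction is unaffected.
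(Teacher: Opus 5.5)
Your proof is correct in both directions. The paper gives no proof of its own here; it imports the statement as Theorem~19 of the cited work. Your argument is the natural elementary one. In one direction, intersecting the star witnesses, $g_A=h_0\wedge\bigwedge_{i\notin A}h_i$, shatters any $1$-star inside every intersection-closed superclass. In the other, realizing the pattern $X\setminus\{x_i\}$ by a finite intersection forces some single member of $\hyH$ to exclude $x_i$ while containing the rest of $X$.

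Your lower-bound direction is exactly what the paper relies on when it writes $|T|\le\oneStar(\hyC_d)\le\vc(\hyC_d)$ in the proof of \cref{cor:intro:positiveOnlySpecialCase}. Your exclusion of the empty intersection is the right convention for the paper's definition of the star number, which requires the center witness $h_0$ to lie in $\hyH$.
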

    The results of \citet{natarajan1987learning,shvaytser1990necessary}, expressed in the notation of star numbers, lead to the following characterization of learnability in the special case of Valiant's model in which no membership queries are allowed.
    \begin{theorem}[\citet{natarajan1987learning,shvaytser1990necessary} Combined with \cref{thm:oneStarDimension}]
        \label{thm:positive-characterization}
        A sequence of concept classes $\hyH=(\hyH_1,\hyH_2,\dots)$ is learnable in Valiant's model \underline{without} membership queries if and only if there is a polynomial $q(\cdot)$ such that $\oneStar(\hyH_d)=q(d)$ for each $d\in \N$.
    \end{theorem}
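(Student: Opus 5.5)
This statement combines two known results, so the plan is to compose them rather than prove anything from scratch. By \citet{natarajan1987learning,shvaytser1990necessary} (with the sharp bounds of \citet{kivinen1995one}), a sequence $\hyH=(\hyH_1,\hyH_2,\dots)$ is learnable from positive examples alone, with one-sided error and $\poly(d,\nfrac1\eps,\nfrac1\delta)$ samples, if and only if $\vc(\overline{\hyH_d})$ is bounded by a polynomial in $d$. Here $\overline{\hyH_d}$ is the intersection closure. By \cref{thm:oneStarDimension}, $\vc(\overline{\hyH_d})=\oneStar(\hyH_d)$. Substituting this identity into the classical criterion gives the statement.

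One technical point needs care. \cref{thm:oneStarDimension} speaks of the \emph{smallest} VC dimension over intersection-closed superclasses. The classical criterion instead uses the specific closure $\overline{\hyH_d}=\{\bigcap_{h\in\hyG}\supp(h):\hyG\subseteq\hyH_d \text{ finite}\}$. Since $\zo^d$ is finite, arbitrary intersections are finite intersections. Moreover, every intersection-closed class containing $\hyH_d$ contains $\overline{\hyH_d}$, so $\overline{\hyH_d}$ is the minimal such class. Hence its VC dimension is exactly that smallest value, namely $\oneStar(\hyH_d)$.

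For completeness I would sketch both directions directly.

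\emph{Sufficiency.} The learner outputs $\closure(S)=\bigcap_{h\in\V(S)}\supp(h)$. This output is contained in $\supp(\hstar)$, since $\hstar\in\V(S)$. It also lies in $\overline{\hyH_d}$ and contains $S$. The standard consistent one-sided learner bound for a class of VC dimension $D$ then gives $O\bigl((D\log\nfrac1\eps+\log\nfrac1\delta)/\eps\bigr)$ samples. This is polynomial in $d$ when $D=\oneStar(\hyH_d)=\poly(d)$.

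\emph{Necessity.} Suppose $\oneStar(\hyH_d)=k$, witnessed by points $x_1,\dots,x_k$, a hypothesis $h_0$ positive on all of them, and hypotheses $h_i$ that are negative only at $x_i$ among these points. Take the target uniformly from $\{h_i\}$ together with intersections of a few of them, or more simply use the standard argument. Let $\cD$ be uniform on a random subset $U$ of the $x_i$, and let the target be the intersection closure element that is positive exactly on $U$ among the star points. Any learner seeing $m\ll k$ samples cannot certify the unseen points of $U$: an unseen $x_j$ might be excluded by some $h_j$ consistent with everything observed. To keep one-sided error it must therefore drop them, which loses mass about $1-m/|U|>\eps$. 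This forces $m=\Omega(k)$, so learnability requires $\oneStar(\hyH_d)=\poly(d)$.

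The main obstacle I expect is this necessity direction: realizing the adversarial target inside $\hyH_d$ itself rather than in its closure. That is exactly what the classical papers handle, so I would cite them rather than redo it.
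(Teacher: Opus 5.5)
Your proposal is correct and follows essentially the paper's route: the statement comes from composing the Natarajan--Shvaytser criterion $\vc(\overline{\hyH_d})=\poly(d)$ with \cref{thm:oneStarDimension}, and your check that $\overline{\hyH_d}$ is the minimal intersection-closed superclass (and so attains the smallest VC dimension) is exactly the step the paper leaves implicit.

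One note on your optional necessity sketch. As written, it takes a target from the closure, which need not lie in $\hyH_d$. The obstacle you flag disappears if you instead use the star witnesses $h_i\in\hyH_d$ themselves as targets, with $\cD_i$ uniform on the remaining star points. Without queries the output depends only on the sample, so conditioned on the sample the learner succeeds for at most $\eps(k-1)+1$ of the at least $k-m$ unseen candidate indices. This forces $m=\Omega(k)$ for constant $\eps,\delta<\nfrac12$; it is the counting from Part~II of \cref{thm:necessary-sufficient} with the query term dropped.
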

    Next, we turn to our results that characterize Valiant's model in its full generality.

\section{Characterization of Learnability}
\label{sec:proofof:thm:characterization}
        In this section, we prove \cref{thm:characterization}, which we restate below.
        \thmCharacterization*
        \subsection{Additional Discussion of Query-Compression and Characterization}
        Before proving \cref{thm:characterization}, in this section, we collect some useful remarks.
        
\begin{remark}[Extension to Any Finite Domain]
    \label{rem:finite-characterization}
    Our proof of \cref{thm:characterization} naturally extends beyond the Boolean hypercube to any \textit{finite} domain, where learnability is defined as follows:
    Fix a sequence of domains $\cX=(\cX_1,\cX_2,\dots)$ with $\abs{\cX_N}=N$ for each $N\in \N$, and fix a corresponding sequence of concept classes $\hyH=\inparen{\hyH_1,\hyH_2,\dots}$. %
    Then, $\hyH$ is learnable if there is a learner satisfying the requirements of \cref{def:valiant} with $m(N,\eps,\delta)=\poly(\log{N},1/\eps,1/\delta)$.
    The reason is that the proof of \cref{thm:characterization} uses the Boolean-cube structure only through two counting bounds, neither of which is essential:
    \begin{itemize}
        \item To obtain compression schemes from learnability, the proof of \cref{thm:valiant:weak-compression} uses Sauer--Shelah's lemma to bound the size of the class; over an $N$-point domain, this gives $\log \abs{\hyH_N}\leq O\!\inparen{\vc\!\inparen{\hyH_N}\log N}$.
    Together with \cref{thm:valiant:VCLowerBound}, this yields a weak compression scheme of size $\poly\!\inparen{\log N}$, and then \cref{thm:valiant:compression} upgrades it to a full query-based compression scheme of size $\poly\!\inparen{\log N}$ (where we used that $\log\abs{S}\leq \log N$). %
    \item 
    To deduce learnability from compression schemes, the proof of \cref{thm:valiant:learningFromCompression} only needs a bound on the size of the $k$-query closure class.
    Over an $N$-point domain, a labeled query transcript of length $t$ has at most $(2N)^t$ possibilities, so $\abs{\operatorname{Cl}_{k}\!\inparen{\hyH_N}}\leq \sum_{t=0}^{k}(2N)^t\leq (2N)^{k+1}$, and hence $\vc\!\inparen{\operatorname{Cl}_{k}\!\inparen{\hyH_N}}\leq O\!\inparen{(k+1)\log N}$.
    The same uniform-convergence argument then yields learnability from compression with dependence polynomial in $\log N$.
    \end{itemize}
    Thus the entire characterization carries over verbatim, replacing $d$ by $\log N$ throughout.
\end{remark}

\begin{remark}[Comparison of \cref{def:query-compression} to Standard Compression Schemes]
    \label{rem:comparison-of-compression}
    Query compression schemes and standard sample compression schemes are incomparable to each other and differ in two respects.
    First, query compression schemes are more general: a standard compression scheme must compress the labels using only information already contained in the sample, whereas a query compression scheme may request additional information via membership queries.
    Second, query compression schemes are more restrictive in how labels are recovered: a standard compression scheme may reconstruct labels by any means, whereas here we require that the labels be forced by the version-space closure of the transcript.
    To see why this stronger requirement is needed, observe that in Valiant's model the learner only observes positive examples. Under the usual notion of compression, any all-positive sample can be trivially compressed by a scheme of size~$0$: the reconstruction map simply labels every point as positive. This would make compression vacuous, so we instead require that the labels have to be forced by the version space.
\end{remark}

\begin{remark}[The Role of Adaptivity]
    \label{rem:adaptivity}
    Adaptivity in \cref{def:query-compression} is essential.
    Without it, the learner would need to query every node of the strategy tree $\Sigma_S$, which may contain as many as $2^t$ nodes when $\Sigma_S$ has depth $t$---far too many for learnability in Valiant's model, where the complexity must be $\poly(d)$.
    After the interaction, the realized branch consists of only $t$ query-response pairs along a single root-to-leaf path.
    Hence, the version space certifying $S$ is only defined by these $t$ query-response pairs.
    One might therefore hope to replace the adaptive strategy by a static list of $t$ queries chosen in advance. 
    However, the key difficulty is that the learner does not know in advance which of the $2^t$ possible paths will be realized, since this depends on the responses of the membership oracle and hence on the unknown target $\hstar$.
    Thus, the queries that are ultimately needed are only determined via interaction with the membership oracle.
\end{remark}

\subsection{Proof of \cref{thm:characterization} (Characterization)}
    In this section, we prove our main result: \cref{thm:characterization}.
    An outline of the proof appears in \cref{fig:outline:characterization}.

    \begin{figure}[h!]
    \centering
    \begin{tikzpicture}[node distance=2.5cm, scale=0.83, transform shape]

    \node[myNode, line width=2pt, text width=9cm] (main) at (0,-2)
        {{Characterization of Learnability \\[1mm] \cref{thm:characterization}}};

    \node[myNode, line width=2pt, text width=4cm] (compression) at (-6,-4.5)
        {{Learnability implies query compression \\[1mm] \cref{thm:valiant:compression}}};

    \node[myNode, line width=2pt, text width=4cm] (vcLB) at (0,-4.5)
        {{$\vc(\hyH_d)\leq m(d,\eps,\delta)$\\[1mm] \cref{thm:valiant:VCLowerBound}}};

    \node[myNode, line width=2pt, text width=4cm] (learnFromCompression) at (6,-4.5)
        {{Query compression implies learnability \\[1mm] \cref{thm:valiant:learningFromCompression}}};

    \node[myNodeFlex, line width=1pt, text width=4.8cm] (weakCompression) at (-6,-7.1)
        {{Existence of weak query compression schemes \\[1mm] \cref{thm:valiant:weak-compression}}};

    \node[myNodeFlex, line width=1pt, text width=3.4cm] (kClosureLemma) at (6,-7)
        {{VC dimension of $k$-query closure class \\[1mm] \cref{lem:valiant:kQueryClosureClass:VC}}};

    \node[myNodeFlex, line width=1pt, text width=3.1cm] (compressionRate) at (1.5,-6.65)
        {{Compression rate \\[1mm] \cref{def:compression-rate}}};

    \node[myNodeNarrow, line width=1pt, text width=3.4cm] (boosting) at (-10.0,-9.5)
        {{Boosting for Valiant's model \\[1mm] \cref{thm:valiant:boosting}}};

    \node[myNodeNarrow, line width=1pt, text width=3.6cm] (uniformSuccess) at (-6,-9.5)
        {{Uniform success over all potential targets \\[1mm] \cref{thm:valiant:uniformSuccess}}};

    \node[myNodeNarrow, line width=1pt, text width=4cm] (uniformSuccessDef) at (-6,-12)
        {{Uniform success over all potential targets \\[1mm] \cref{def:valiant:hpAllPotentialTargets}}};

    \node[myNodeNarrow, line width=1pt, text width=3.1cm] (weakCompressionDef) at (-2.0,-9.5)
        {{Weak query-based compression scheme \\[1mm] \cref{def:valiant:weak-compression}}};

    \node[myNodeNarrow, line width=1pt, text width=3.6cm] (kClosureDef) at (6.0,-9.5)
        {{The $k$-query closure class \\[1mm] \cref{def:valiant:kQueryClosureClass}}};

    \draw[-stealth, line width=0.5mm] (compression) -- (main);
    \draw[-stealth, line width=0.5mm] (vcLB) -- (main);
    \draw[-stealth, line width=0.5mm] (learnFromCompression) -- (main);

    \draw[-stealth, line width=0.5mm] (weakCompression) -- (compression);

    \draw[-stealth, line width=0.5mm] (boosting) -- (weakCompression);
    \draw[-stealth, line width=0.5mm] (uniformSuccess) -- (weakCompression);
    \draw[-stealth, line width=0.5mm] (weakCompressionDef) -- (weakCompression);

    \draw[-stealth, line width=0.5mm] (uniformSuccessDef) -- (uniformSuccess);

    \draw[-stealth, line width=0.5mm] (compressionRate) -- (learnFromCompression);
    \draw[-stealth, line width=0.5mm] (kClosureLemma) -- (learnFromCompression);

    \draw[-stealth, line width=0.5mm] (kClosureDef) -- (kClosureLemma);

    \draw[-stealth, line width=0.5mm] (weakCompressionDef) -- (weakCompression);

    \end{tikzpicture}
    \caption{\textit{Outline of the proof of \cref{thm:characterization}.}}
    \label{fig:outline:characterization}
\end{figure}
        
\subsubsection{Boosting in Valiant's Model}
We begin by proving a boosting theorem for Valiant's model.
\begin{theorem}[Boosting for Valiant's Model]
    \label{thm:valiant:boosting}
    Suppose $\hyH$ is learnable in Valiant's model (\cref{def:valiant}), and let
    $m_{\mathrm W}\inparen{d,\eps,\delta}$ denote the sample/query complexity of some learner witnessing this.
	    Then, for every $\eps\in(0,1)$ and $\delta\in(0,\nfrac12]$, there is a learner satisfying the Valiant guarantee with sample/query complexity
    \[
        m\!\inparen{d,\eps,\delta}
        =
        m_{\mathrm W}\!\inparen{d,\nfrac{1}{18},\nfrac13}
        \cdot
        O\!\inparen{
            \frac{1}{\eps}
            \log\frac1\delta
        }\,.
    \]
\end{theorem}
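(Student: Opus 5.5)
The plan is to boost in two stages, each preserving one-sidedness: first amplify confidence with a thresholded vote, then amplify accuracy by filtering. Fix a base learner $\cL_{\rm W}$ with complexity $m_0\coloneqq m_{\rm W}(d,\nfrac1{18},\nfrac13)$. The basic fact I rely on is this: for any distribution $\cD'$ compatible with $\hstar$, a single run of $\cL_{\rm W}$ on $\cD'$ (with membership queries answered by $\hstar$) succeeds with probability at least $\nfrac23$. Success means the output $h$ has $\supp(h)\subseteq\supp(\hstar)$ and $\cD'$-miss mass at most $\nfrac1{18}$. Throughout, membership queries are simply forwarded to the oracle, so query counts add up exactly like sample counts.

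\emph{Step 1 (confidence amplification with one-sided error).} Run $\cL_{\rm W}$ independently $k=O(\log\nfrac1{\delta'})$ times on $\cD'$, obtaining $h_1,\dots,h_k$. Output $g\coloneqq\{x:\abs{\{i:h_i(x)=1\}}>k/2\}$. By Chernoff, with probability at least $1-\delta'$ at least $0.6k$ runs succeed.
\begin{itemize}
    \item No false positives: a point $x\notin\supp(\hstar)$ can only be covered by failed runs. On this event there are at most $0.4k<k/2$ failed runs, so $x\notin\supp(g)$.
    \item Coverage: any point missed by $g$ is missed by at least $0.6k-k/2=0.1k$ successful runs. The successful runs miss total mass at most $0.6k\cdot\nfrac1{18}$. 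By Markov's inequality, $\Pr_{\cD'}(x\notin\supp(g))\le \frac{0.6k/18}{0.1k}=\nfrac13$.
\end{itemize}
Thus we obtain a one-sided learner with accuracy $\nfrac13$, confidence $\delta'$, and cost $O(m_0\log\nfrac1{\delta'})$. This thresholded vote is the crux: unlike a union, the vote cannot be corrupted by a minority of failed runs that output false positives. Unlike an intersection, it does not lose coverage.

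\emph{Step 2 (accuracy amplification by filtering).} Maintain $G_j=\supp(g_1)\cup\dots\cup\supp(g_j)$, starting from $G_0=\emptyset$. In round $j+1$, run Step~1 on the conditional distribution $\cD_j\coloneqq\cD\mid(\cX\setminus G_j)$, simulated by rejection sampling. The key observation is that $\cD_j$ is still compatible with $\hstar$, so the guarantees of Step~1 apply verbatim. If all rounds succeed:
\begin{itemize}
    \item the union $G_j$ has no false positives;
    \item the uncovered mass $p_j\coloneqq\cD(\cX\setminus G_j)$ satisfies $p_{j+1}\le p_j/3$.
\end{itemize}
After $R=O(\log\nfrac1\eps)$ rounds we have $p_R\le\eps$. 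Output $G_R$, or stop early as described next.

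\emph{Cost accounting (main obstacle).} The delicate part is making the sample count $O(\frac{m_0}{\eps}\log\frac1\delta)$ rather than something like $\frac{m_0}{\eps}\log\frac1\eps\log\frac{\log(1/\eps)}{\delta}$. Two issues arise.
\begin{itemize}
    \item \emph{Early stopping.} Rejection sampling in round $j$ costs about $1/p_j$ raw draws per conditional draw. I would cap each round's raw draws at a constant times (needed draws)$/\eps$. If the cap is hit, a Chernoff bound shows that w.h.p.\ $p_j\le\eps$, and we stop and output $G_j$. Since early stopping only means we output a smaller union, it never creates false positives.
    \item \emph{Geometric sum.} Since $p_j$ decays geometrically, the raw cost $\sum_j m_0 k_j/p_j$ is dominated by the last rounds, where $p_j\approx\eps$. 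I would allocate failure probabilities $\delta_j\propto\delta\cdot 2^{-(R-j)}$, so that $\sum_j\delta_j\le\delta$ and $k_j=O(\log\nfrac1\delta+(R-j))$. The sum $\sum_j (\log\nfrac1\delta+R-j)\,3^{-(R-j)}/\eps$ is then $O(\frac1\eps\log\frac1\delta)$, giving total complexity $m_0\cdot O(\frac1\eps\log\frac1\delta)$ for samples and queries alike.
\end{itemize}
Finally, a union bound over all rounds and all rejection-sampling checks completes the proof. The restriction $\delta\le\nfrac12$ is used only to absorb constants, for instance to ensure $\log\nfrac1\delta\ge1$.
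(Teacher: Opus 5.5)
Your Step~1 is essentially the paper's confidence-amplification lemma: thresholded majority, at least $3k/5$ good runs, and Markov giving miss mass at most $6\cdot\frac1{18}=\frac13$. (Strictly, you should write $|G|$ rather than $0.6k$ for the number of good runs, but $\frac{|G|/18}{|G|-k/2}$ is maximized at $|G|=0.6k$, so the bound stands.) Your filtering invariant ($G_j$ has no false positives and $p_{j+1}\le p_j/3$) is also the paper's.

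The gap is in the cost accounting. The step ``since $p_j$ decays geometrically, the raw cost $\sum_j m_0k_j/p_j$ is dominated by the last rounds, where $p_j\approx\eps$'' silently uses a \emph{lower} bound $p_j\gtrsim 3^{R-j}\eps$. You only have the \emph{upper} bound $p_{j+1}\le p_j/3$, and nothing stops the residual from collapsing early. The base learner is only guaranteed to miss at most $\frac1{18}$, and it may happen to leave only $2\eps$ uncovered after round $0$. Round $1$ then has to collect $m_0k_1$ conditional draws at acceptance rate $2\eps$. Your cap is not hit, and you cannot stop since $p_1>\eps$, so this costs $\Theta(m_0k_1/\eps)$ raw draws. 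Because your failure budget is indexed by the round number, $k_1=\Theta(\log\frac1\delta+\log\frac1\eps)$ is nearly the largest of all. For $\delta=\frac12$ this is already $\Theta(\frac{m_0}{\eps}\log\frac1\eps)$, not $m_0\cdot O(\frac1\eps\log\frac1\delta)$. A correct accounting of your scheme only gives $m_0\cdot O(\frac1\eps(\log\frac1\delta+\log\frac1\eps))$: the $p_j$ are $3$-separated, so $\sum_j 1/\max(p_j,\eps)=O(1/\eps)$, and this gets multiplied by $\max_j k_j$.

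The missing idea is to tie the confidence parameter to the \emph{scale} of the residual mass rather than to the round index, and to gate the rejection sampling by a test at that scale. The paper fixes scales $\tau_s=3^{-s}$ for $s=0,\dots,T=\lceil\log_3\frac1\eps\rceil$. It sets $\eta_s=\delta\cdot 3^s/(100W)$ with $W=\sum_s 3^s=O(\frac1\eps)$, and maintains the invariant $p_s\le 3\tau_s$. At scale $s$ it first estimates $p_s$ from $O(\tau_s^{-1}\log\frac1{\eta_s})$ fresh samples.
\begin{itemize}
\item If $\hat p_s<\frac23\tau_s$, it skips the scale; on the good event this means $p_s<\tau_s$.
\item Otherwise $p_s>\tau_s/3$. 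Rejection sampling then succeeds within a budget of $O(m_0\tau_s^{-1}\log\frac1{\eta_s})$ raw draws, and the boosted weak learner yields $p_{s+1}\le p_s/3\le\tau_s$.
\end{itemize}
In both branches $p_{s+1}\le\tau_s$. The cost of scale $s$ is $O(m_0\tau_s^{-1}\log\frac1{\eta_s})$ \emph{no matter how fast the residual actually decays}. Summing, $\sum_s 3^s\log\frac{100W}{\delta\cdot 3^s}=O(W\log\frac1\delta)$, which gives the claimed bound.
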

We present the proof of \cref{thm:valiant:boosting} in \cref{sec:proofof:thm:valiant:boosting}. 
It uses standard boosting ideas adapted to Valiant's learning model:
First, we reduce the failure probability to $\delta$ by taking a majority vote over $O\!\inparen{\log\nfrac1\delta}$ independent runs.
The argument for improving the dependence on $\eps$ is a bit more involved:
we run $O\!\inparen{\nfrac1\eps}$ stages of the learner, where in the $(i+1)$-th stage we ``focus'' on the positive samples which are not positively classified by any of the first $i$ hypotheses.
To do this, we use rejection sampling and only feed the learner the positive examples that are not positively classified by any of the first $i$ hypotheses.
(Since each intermediate hypothesis has no false positives, the resulting residual distribution remains compatible with the target.)
Thus, the proof follows the standard boosting template of repeatedly learning the residual distribution, with the main change being how the residual distribution is defined and simulated.

\subsubsection{Uniform Success Over All Potential Targets}
In this section, we show that any learnable class in Valiant's model admits a learner whose output, with high probability, is simultaneously valid for every hypothesis that remains consistent with the realized interaction.

\begin{definition}[Uniform Success Over All Potential Targets]
\label{def:valiant:hpAllPotentialTargets}
A learner $\cL$ has \emph{uniform success over all potential targets} with sample/query complexity
$m_{\mathrm{unif}}\!\inparen{d,\eps,\delta}$ if the following holds.
For every $d\in \N$, $\eps,\delta\in \inparen{0,1}$, target $\hstar\in \hyH_d$, and distribution $\cD$ compatible with $\hstar$,
run $\cL$ on input $\inparen{d,\eps,\delta}$ with access to the example oracle for $\cD$ and the membership oracle for $\hstar$.
Let $\wh{h}$ be the output, let $S$ be the realized set of example-oracle samples, and let $q,r$ be the realized membership-query transcript.
Then, with probability at least $1-\delta$, for every
\[
    h\in \V\!\inparen{S}\cap \V\!\inparen{q,r}
    \qquad\text{with}\qquad
    \supp(h)\neq\varnothing,
\]
the hypothesis $\wh{h}$ satisfies the conclusions of \cref{def:valiant} with target $h$ and distribution $\cD_h$, where $\cD_h$ denotes $\cD$ conditioned on $\supp(h)$ (when $\cD(\supp(h))=0$, $\cD_h$ can be any fixed distribution supported on $\supp(h)$).
\end{definition}
The above requirement is closely related to the model of Probably Uniformly Approximately Correct (PUAC) learning introduced by \citet[see Chapters~3, 6, and 7]{vidyasagar2003learning}.
In the PUAC model, with probability at least $1-\delta$ over the random sample, the learner must succeed \textit{simultaneously} for all potential targets, rather than merely for each fixed target separately (as in the PAC model).
Hence, the above requirement strengthens Valiant's model in the same spirit as PUAC learning strengthens PAC learning.

Our next result shows that if a class is learnable in Valiant's model, then it is learnable by a learner that also enjoys the uniform-success guarantee above.
This parallels the equivalence between PAC and PUAC learning for binary classification proved by \citet[Theorem~7.11]{vidyasagar2003learning}.
However, that equivalence is obtained via classical uniform-convergence arguments based on suitable covering-number bounds.
Because Valiant's model requires a one-sided guarantee which is not compatible with standard bounds on covering numbers, we prove the corresponding statement from scratch.
\begin{theorem}[Uniform Success Over All Potential Targets]
\label{thm:valiant:uniformSuccess}
Suppose $\hyH$ is learnable in Valiant's model, and let
$m\!\inparen{d,\eps,\delta}$ denote the sample/query complexity of some learner witnessing this.
Then $\hyH$ admits a learner satisfying \cref{def:valiant:hpAllPotentialTargets} with sample/query complexity
\[
    m_{\mathrm{unif}}\!\inparen{d,\eps,\delta}
    =
    m\!\inparen{d,\eps,\frac{\delta}{\abs{\hyH_d}}}.
\]
\end{theorem}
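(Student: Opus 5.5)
The plan is to show that the original learner $\cL$, run with confidence parameter $\delta/\abs{\hyH_d}$, already satisfies \cref{def:valiant:hpAllPotentialTargets}. The argument is a change-of-measure coupling followed by a union bound over the finitely many $h\in\hyH_d$. By \cref{rem:positive-samples-first} we assume $\cL$ first draws its samples $S=(x_1,\dots,x_m)$, then fixes its internal randomness $\rho$, then runs a deterministic adaptive query strategy. An \emph{outcome} is the pair $(S,\rho)$, and the full run (queries, responses, output $\wh h$) is a deterministic function of the outcome and of the membership oracle.

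Fix $h\in\hyH_d$ with $\supp(h)\neq\varnothing$. Let $B_h$ be the set of outcomes $(S,\rho)$ with the following three properties:
\begin{itemize}
\item every sample lies in $\supp(h)$;
\item in the run with oracle $\hstar$, every response agrees with $h$, i.e.\ $h\in\V(S)\cap\V(q,r)$;
\item the output $\wh h$ violates the conclusions of \cref{def:valiant} for target $h$ and distribution $\cD_h$.
\end{itemize}
The uniform-success failure event is exactly $\bigcup_h B_h$. So it suffices to prove $\Pr_{\cD,\hstar}(B_h)\le \delta/\abs{\hyH_d}$ for each $h$ and then take a union bound.

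The key observation is that on any outcome in $B_h$, the run with oracle $\hstar$ is identical to the run with oracle $h$. The strategy is deterministic given $(S,\rho)$, and each query receives the same answer from both oracles, so by induction along the root-to-leaf path the two runs issue the same queries and produce the same $\wh h$. Hence $B_h$ is contained in the set of outcomes on which the learner, run on target $h$, fails for $(h,\cD_h)$.

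It remains to compare the probability of a sample sequence under $\cD$ and under $\cD_h$. For $x\in\supp(h)$ we have $\cD_h(x)=\cD(x)/\cD(\supp(h))\ge \cD(x)$ whenever $\cD(\supp(h))>0$. Since every sample of an outcome in $B_h$ lies in $\supp(h)$, we get $\prod_i\cD(x_i)\le\prod_i\cD_h(x_i)$ on $B_h$. The randomness $\rho$ has the same law in both runs. Summing (or integrating, in the density case) over $B_h$ gives
\[
\Pr_{\cD,\hstar}(B_h)\le \Pr_{\cD_h,h}(\text{failure})\le \delta/\abs{\hyH_d}.
\]
The last inequality is the Valiant guarantee, applicable because $\cD_h$ is compatible with $h\in\hyH_d$.

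The degenerate case $\cD(\supp(h))=0$ must be handled separately. If $m\ge1$, a draw from $\cD$ lies in $\supp(h)$ with probability zero, so $\Pr(B_h)=0$. If $m=0$, both runs depend only on $\rho$, so the comparison is an equality.

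The step needing the most care is the measure comparison. It requires the number of draws to be fixed (or at least coupled identically in the two runs) and the likelihood ratio to be taken over full sample sequences, which is why the positive-samples-first normalization is invoked. I expect the rest to be routine. The sample/query complexity is unchanged apart from the confidence parameter, which gives $m\!\inparen{d,\eps,\delta/\abs{\hyH_d}}$.
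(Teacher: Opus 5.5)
Your proposal is correct and follows essentially the same route as the paper. Both arguments condition on the learner's internal randomness and note that on the bad event the run against $\hstar$ coincides with the counterfactual run against $h$. Both then compare $\cD^M$ with $\cD_h^M$ on sample sequences lying in $\supp(h)$, treating $\cD(\supp(h))=0$ separately according to whether $M\geq 1$, and finish with a union bound over $\hyH_d$ at confidence $\delta/\abs{\hyH_d}$.
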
 

\begin{proof}[Proof of \cref{thm:valiant:uniformSuccess}]
Fix $d\in\N$, $\eps,\delta\in\inparen{0,1}$, a target $\hstar\in\hyH_d$, and a distribution $\cD$ compatible with $\hstar$ (\cref{def:compatible}).
Let
\[
    N_d \coloneqq \abs{\hyH_d}
    \qquadand
    \delta' \coloneqq \frac{\delta}{N_d}.
\]
Let $\cL$ be a learner witnessing the learnability of $\hyH$ with sample/query complexity $m$, and let $\cL'$ be the learner obtained by running $\cL$ with confidence parameter $\delta'$.
By \cref{rem:positive-samples-first}, we may assume that $\cL'$ first draws 
\[
    M \coloneqq m\!\inparen{d,\eps,\delta'}
\]
example-oracle samples $x_1,\dots,x_M$, and only then makes adaptive membership queries.
Let $S\coloneqq \inbrace{x_1,\dots,x_M}.$ 
Let $q,r$ be the realized membership-query transcript, and let $\wh{h}$ be the output of $\cL'$.

Fix $h\in \hyH_d$ with $\supp(h)\neq\varnothing$, and define $\cD_h$ as in the theorem statement.
Define the event
\[
    \evE_h
    \coloneqq
    \inbrace{
        h\in \V\!\inparen{S}\cap \V\!\inparen{q,r}
        \text{ and }
        \wh{h}
        \text{ fails the conclusions of \cref{def:valiant} for target } h
        \text{ under } \cD_h
    }.
\]
We claim that $ \Pr\inparen{\evE_h}\leq \delta'.$
To prove this, condition on the internal randomness $\rho$ of $\cL'$, so that the learner becomes deterministic; let this learner be $\cL'_{\rm deter}$.
Let $A_h^\rho\subseteq \sinparen{\zo^d}^{M}$ be the set of ordered sample sequences
$x=\inparen{x_1,\dots,x_M}$
for which the corresponding deterministic run lands in $\evE_h$.
If $x\in A_h^\rho$, then $x_i\in \supp(h)$ for every $i$, because $h\in \V\!\inparen{S}$.
Also, $h\!\inparen{q}=r$, because $h\in \V\!\inparen{q,r}$.
Hence, if we rerun the same deterministic learner with the same $\rho$, but now with target $h$ and example distribution $\cD_h$, then on the same sample sequence $x$ the learner sees exactly the same transcript and therefore outputs the same hypothesis.
In particular, this counterfactual run also fails for the compatible pair $\inparen{h,\cD_h}$.
Let $F_h^\rho\subseteq \sinparen{\zo^d}^{M}$ be the set of ordered sample sequences on which this counterfactual deterministic run fails for the pair $\inparen{h,\cD_h}$. The preceding paragraph shows that $A_h^\rho\subseteq F_h^\rho$.
Therefore, if $\cD\!\inparen{\supp(h)}>0$, then
\begin{align*}
    \Pr\inparen{\evE_h\mid \rho}
    =
    \cD\!\inparen{x\in A_h^\rho}^{M}
    =
    \cD\!\inparen{\supp(h)}^{M}
    \cdot
    \cD_h^{M}\!\inparen{x\in A_h^\rho}
    \leq
    \cD_h^{M}\!\inparen{F_h^\rho}\,,
\end{align*}
Otherwise, suppose $\cD\!\inparen{\supp(h)}=0$.
If $M\geq 1$, then on the event $h\in \V\!\inparen{S}$ all $M$ example samples must lie in $\supp(h)$, which happens with probability $0$ under $\cD$.
Hence
\[
    \Pr\inparen{\evE_h\mid \rho}=0\leq \cD_h^M\!\inparen{F_h^\rho}.
\]
If instead $M=0$, then there are no example-oracle samples.
On the event $h\in \V\!\inparen{q,r}$, the deterministic run with target $h$ and distribution $\cD_h$ sees exactly the same membership-query transcript as the original run, and therefore outputs the same hypothesis.
Thus,
\[
    \Pr\inparen{\evE_h\mid \rho}
    \leq \cD_h^{M}\!\inparen{F_h^\rho}\,.
\]
Averaging over $\rho$ and using the Valiant guarantee for the randomized learner $\cL'$ on the compatible pair $\inparen{h,\cD_h}$ gives
\[
    \Pr\inparen{\evE_h}
    \leq
    \E_{\rho}\!\left[\cD_h^{M}\!\inparen{F_h^\rho}\right]
    =
    \Pr_{\rho,\,x\sim \cD_h^M}\!\inparen{
        \text{$\cL'$ fails on }\inparen{h,\cD_h}
    }
    \leq
    \delta'\,.
\]
Next, we apply a union bound over all $h\in \hyH_d$ with $\supp(h)\neq\varnothing$:
\[
    \Pr\inparen{
        \exists h\in \hyH_d
        \text{ with }
        \supp(h)\neq\varnothing
        \text{ such that }
        \evE_h
    }
    \leq
    \sum_{h\in \hyH_d}\Pr\inparen{\evE_h}
    \leq
    N_d\delta'
    =
    \delta.
\]
Equivalently, with probability at least $1-\delta$, no event $\evE_h$ occurs.
Thus, simultaneously for every $h\in \V\!\inparen{S}\cap \V\!\inparen{q,r}$ with $\supp(h)\neq \varnothing$, the output $\wh{h}$ satisfies the conclusions of \cref{def:valiant} for target $h$ under $\cD_h$, implying \cref{def:valiant:hpAllPotentialTargets}.
Finally, the result follows since the learner $\cL'$ is simply $\cL$ run with confidence parameter
$\delta'=\delta/\abs{\hyH_d}$.
Hence its sample/query complexity is $m\!\inparen{d,\eps,\nfrac{\delta}{\abs{\hyH_d}}}.$
\end{proof}

\subsubsection{Learnability Implies Query-Based Compression}
In this section, we show that learnability in Valiant's model yields logarithmic-size query-based compression schemes for realizable positive samples.

In other words, along every realizable branch of the strategy $\Sigma$, the transcript certifies that every point of $S$ is positive for every hypothesis consistent with that branch.

\begin{theorem}[Learnability Implies Query-Based Compression]
\label{thm:valiant:compression}
Suppose $\hyH$ is learnable in Valiant's model, and let
$m_{\mathrm W}\!\inparen{d,\eps,\delta}$ denote the sample/query complexity of some learner witnessing this.
Then for every $d\in\N$ and every nonempty finite set $S\subseteq \zo^d$ with $\V\!\inparen{S}\neq\emptyset$, there exists a query-based compression scheme $\Sigma_S$ (\cref{def:query-compression}) for $S$ whose compression rate is at most
\[
    q_d \cdot \inparen{1+\left\lceil \log_{3/2}\abs{S}\right\rceil},
\]
where $q_d$ is as in \cref{thm:valiant:weak-compression}.
In particular, $q_d
    \leq
    m_{\mathrm W}\!\inparen{d,\nfrac{1}{18},\nfrac13}
    \cdot
    O\!\inparen{
        d\,\vc\!\inparen{\hyH_d}
    }.$
\end{theorem}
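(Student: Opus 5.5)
The plan is to take the weak query compression scheme of \cref{thm:valiant:weak-compression} as a black box and amplify it by iterated peeling. For a realizable $S$, that result gives a deterministic adaptive strategy of depth at most $q_d$ such that, on every branch realizable with respect to $\V(S)$, the closure $\bigcap_{h\in\V(\Sigma,r)}\supp(h)$ covers at least a $1/3$-fraction of $S$. (The $\log_{3/2}$ in the statement suggests a weak rate of $1/3$.) We then recurse on the uncovered remainder.

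The construction is as follows. Set $S_0\coloneqq S$ and let $\Sigma^{(1)}$ be the weak scheme for $S_0$. For each realizable leaf $r^{(1)}$, define the uncertified remainder
\[
S_1 \coloneqq S_0\setminus \bigcap_{h\in\V(\Sigma^{(1)},r^{(1)})}\supp(h),
\]
which satisfies $\abs{S_1}\le \tfrac23\abs{S_0}$. At that leaf we attach a weak scheme for $S_1$, and so on. The final strategy $\Sigma_S$ is a tree obtained by concatenating stages. The stage at depth $j$ depends on the responses seen so far, and this is allowed because $\Sigma_S$ is adaptive. After $k=\lceil\log_{3/2}\abs{S}\rceil$ stages we have $\abs{S_k}<1$, so $S_k=\emptyset$. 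This gives at most $1+k$ stages, each of depth at most $q_d$, which is the claimed bound.

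One subtlety must be handled: the weak scheme at stage $j$ is guaranteed only for branches realizable with respect to $\V(S_{j})$. We need it for hypotheses in $\V(S)\cap\V(\text{transcript so far})$. Two facts resolve this. First, $\V(S)\subseteq\V(S_j)$, so any branch realizable by some $h\in\V(S)$ is also realizable with respect to $\V(S_j)$. Second, version spaces only shrink as the transcript grows, so closures only grow. Hence the points certified in earlier stages remain certified at the final leaf. Concretely, for a final leaf realized by some $h\in\V(S)$, $\V(\Sigma_S,r)\subseteq\V(\Sigma^{(j)},r^{(j)})$ for each $j$, and therefore every point of $S_{j-1}\setminus S_j$ lies in the final closure.

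It is cleanest to apply the weak scheme to the class restricted to the current version space, or to require the weak guarantee as stated for arbitrary realizable sets. I expect this compatibility between stage-wise guarantees and the global version space to be the only delicate step. To handle it, I would either strengthen the weak scheme's guarantee to hold for every hypothesis class $\V\subseteq\hyH_d$ with $S_j$ realizable, or simply note that \cref{def:query-compression} quantifies over transcripts realizable with respect to $\V(S_j)\supseteq\V(S)$. The final bound on $q_d$ is inherited directly from \cref{thm:valiant:weak-compression}.
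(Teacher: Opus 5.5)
Your proposal is correct and is essentially the paper's proof. You run the weak scheme, recurse on the uncertified remainder along each branch realizable with respect to $\V(S)$ (stopping on the other branches), and get correctness from $\V(S)\subseteq\V(S_j)$ together with the fact that closures only grow as the transcript lengthens, exactly as in the paper's induction on $\abs{S}$. The only difference is cosmetic: the paper terminates by querying all of $S$ once $\abs{S}\le q_d$, while you iterate until the remainder is empty. Note that $(2/3)^k\abs{S}\le 1$ only gives $\abs{S_k}\le 1$ (for example $\abs{S}=1$, $k=0$), but your extra $+1$ stage absorbs this.
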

The proof proceeds in two stages.
First, we show that every realizable set $S$ admits a weak query-based compression scheme whose transcript certifies a constant fraction of the points in $S$ (\cref{def:valiant:weak-compression,thm:valiant:weak-compression}).
Second, we recurse on the uncertified points.
Since each round removes a constant fraction of the unresolved set, the recursion stops after at most $O\!\inparen{\log \abs{S}}$ rounds.

\begin{definition}[Weak Query-Based Compression Scheme]
\label{def:valiant:weak-compression}
Fix $d\in\N$ and a finite set $S\subseteq \zo^d$.
We say that $S$ \emph{admits a weak query-based compression scheme} $\sigma$ if $\sigma$ is a deterministic adaptive query strategy such that, for every response transcript $r$ of $\sigma$ that is realizable with respect to $\V\!\inparen{S}$,
\[
    \abs{
        S\cap \bigcap_{h\in \V\!\inparen{\sigma,r}} \supp(h)
    }
    \geq
    \frac13\abs{S}.
\]
When this holds, we also say that $\sigma$ is a \emph{weak query-based compression scheme for $S$}.
\end{definition}
The constant $\nfrac{1}{3}$ in the above definition is not crucial; it can be replaced by any positive constant.
This only changes the hidden constant in \eqref{eq:valiant:usefulQueries:size} and the base of the logarithm in \cref{thm:valiant:compression}.

\begin{theorem}[Existence of Weak Query-Based Compression Schemes]
\label{thm:valiant:weak-compression}
Suppose $\hyH$ is learnable in Valiant's model, and let
$m_{\mathrm W}\!\inparen{d,\eps,\delta}$ denote the sample/query complexity of some learner witnessing this.
Then for every $d\in\N$ and every nonempty finite set $S\subseteq \zo^d$ with $\V\!\inparen{S}\neq\emptyset$, there exists a weak query-based compression scheme $\sigma_S$ for $S$ making at most $q_d$ membership queries, where one may take
\[
    q_d
    =
    m_{\mathrm W}\!\inparen{d,\nfrac{1}{18},\nfrac13}
    \cdot
    O\!\inparen{
        d\,\vc\!\inparen{\hyH_d}
    }.
    \yesnum\label{eq:valiant:usefulQueries:size}
\]
\end{theorem}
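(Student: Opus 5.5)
The strategy is to run a uniformly successful learner on the uniform distribution over $S$, fix one good realization of its samples and randomness, and compile that run into a deterministic adaptive query strategy. The sampled points become membership queries; the learner's own queries are kept as they are.

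\emph{Step 1 (a uniform weak learner).} Apply \cref{thm:valiant:boosting} with $\eps=\nfrac{1}{18}$, $\delta=\nfrac13$, then \cref{thm:valiant:uniformSuccess} with confidence $\delta'=\nfrac{1}{3}$ and accuracy $\eps=\nfrac{1}{18}$. The boosting bound gives sample/query complexity $m_{\mathrm W}(d,\nfrac1{18},\nfrac13)\cdot O(\log\abs{\hyH_d})$ at confidence $\nfrac{1}{3\abs{\hyH_d}}$. By Sauer--Shelah on the domain $\zo^d$ of size $2^d$, $\log\abs{\hyH_d}=O(d\,\vc(\hyH_d))$. This gives a learner $\cL_{\rm unif}$ with complexity $M=m_{\mathrm W}(d,\nfrac1{18},\nfrac13)\cdot O(d\,\vc(\hyH_d))$. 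With probability at least $\nfrac23$, its output $\wh h$ is valid simultaneously for every $h$ consistent with the realized sample and transcript.

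\emph{Step 2 (run on $S$ and derandomize).} Let $\cD$ be uniform on $S$. Every $h\in\V(S)$ is a valid target, and for such $h$ we have $\cD_h=\cD$. The difficulty is that the learner's run depends on the target through the query answers, while the scheme must fix its samples and randomness before seeing responses. I handle this with a second averaging step. For each $h\in\V(S)$, at least $\nfrac23$ of the pairs (sample sequence $x$, randomness $\rho$) are good for target $h$, under the product measure $\cD^M\times\rho$, which does not depend on $h$.

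A pair $(x,\rho)$ is \emph{good} if, for every $h\in\V(S)$, the run with target $h$ ends with $\wh h$ valid for all hypotheses consistent with that run. A single pair need not be good for all $h$ at once; this is the main obstacle. To get around it, I build a tree whose branches may use different pairs. I define the strategy $\sigma_S$ adaptively as follows.
\begin{enumerate}
\item First query all sample points $x_1,\dots,x_M$. Each is in $S$, so each is answered $1$ under every $h\in\V(S)$, and no branching occurs.
\item Then follow the learner's adaptive queries under $\rho$.
\end{enumerate}
Because the sample points carry no branching, only the choice of $\rho$ matters across branches. I would pick $(x,\rho)$ by a counting argument. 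Alternatively, I avoid needing a single good pair, using the fact that the conclusion concerns $S$ only. Fix any $(x,\rho)$ and a realizable leaf transcript $r$, so that $h\in\V(S)\cap\V(\sigma_S,r)$ for some $h$. If this run is good for that $h$, then $\wh h$ is valid for every $h'\in\V(\sigma_S,r)\cap\V(S)$. In particular, $\supp(\wh h)\subseteq\supp(h')$ and $\wh h$ covers at least $\nfrac{17}{18}$ of $S$.

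\emph{Step 3 (removing the restriction to $\V(S)$).} The scheme must certify $S$ against all of $\V(\sigma_S,r)$, including hypotheses not in $\V(S)$. A hypothesis $h'$ consistent with the transcript but not in $\V(S)$ still agrees with the queried sample points. I therefore need a constant fraction of $S$ inside $\supp(h')$. To get this, I take the output's covered set $A=S\cap\supp(\wh h)$ and use the uniform guarantee, which covers all $h'$ consistent with $x$ and the transcript (that is exactly \cref{def:valiant:hpAllPotentialTargets}, since $\V(S)$ there refers to the sample $x$). This applies with $\cD_{h'}=\cD$ conditioned on $\supp(h')$. Suppose $h'$ misses more than $\nfrac23$ of $S$. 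Then validity of $\wh h$ for $h'$ forces $\wh h\subseteq\supp(h')$, so $\abs{A}\le\nfrac13\abs S$, contradicting coverage of $\nfrac{17}{18}$ of $S$ under $h\in\V(S)$. So every consistent $h'$ contains $A$ (of size at least $\nfrac13\abs S$), giving the weak certificate.

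\emph{Step 4 (existence of the right choice).} It remains to choose $(x,\rho)$ so that, on every realizable branch, the run is good. There are at most $\abs{\hyH_d}$ branches, which I handle by pushing the failure probability down by another $\abs{\hyH_d}$ factor; this costs again only $O(d\,\vc(\hyH_d))$. A union bound over $h\in\V(S)$ then shows that some fixed $(x,\rho)$ is good for every target simultaneously. The depth is $2M$, which matches \eqref{eq:valiant:usefulQueries:size}.
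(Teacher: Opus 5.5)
Your proposal is correct and is essentially the paper's proof. You obtain a uniform-success learner via boosting and \cref{thm:valiant:uniformSuccess} at confidence $\Theta(1/\abs{\hyH_d})$, take a union bound over $h\in\V(S)$ to fix one realization of samples and randomness that is good for every target, and compile the sampled points into non-branching membership queries; the uniform guarantee, applied to $\V(\sigma_S,r)$ (exactly the hypotheses consistent with the sampled points and the learner's own transcript), then places $\supp(\wh h)$ inside the transcript closure while $\wh h$ covers a constant fraction of $S$. The detours in Steps~2--3 (branch-dependent pairs, the argument by contradiction) are unnecessary: validity of $\wh h$ for each consistent $h'$ gives $S\cap\supp(\wh h)\subseteq\supp(h')$ directly.
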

At a high level, the proof of \cref{thm:valiant:weak-compression} puts the uniform distribution on $S$ and runs the learner from the previous subsection---with the uniform guarantee from \cref{def:valiant:hpAllPotentialTargets}.
This immediately gives the right coverage statement: if the learner has false-negative mass at most $\nfrac23$ under the uniform distribution on $S$, then its output contains at least one third of the points of $S$.
The real difficulty is the one-sided guarantee.
We do not merely want a hypothesis contained in the \emph{true} target.
Rather, we want a transcript that certifies positivity for \emph{every} hypothesis that remains consistent with the realized interaction.
This is exactly why we need the uniform guarantee from \cref{def:valiant:hpAllPotentialTargets}: ordinary learnability controls the output only relative to the true target and says nothing about the rest of the version space.

A second issue is target-independence.
For each $h\in \V\!\inparen{S}$ there may be a different successful random seed and a different successful sample stream.
That is not enough: the final query tree must depend only on $S$, not on the unknown target.
To achieve this, we run the learner with confidence on the order of $1/\abs{\hyH_d}$ and take a union bound over all potential targets.
The result is a \emph{single} realization that works simultaneously for every $h\in \V\!\inparen{S}$.
Once such a realization is fixed, the random example stream can be compiled into actual membership queries, turning the randomized learner into a deterministic query strategy.

\begin{proof}[Proof of \cref{thm:valiant:weak-compression}]
Fix $d\in\N$ and a nonempty finite set $S\subseteq \zo^d$ with $\V\!\inparen{S}\neq\emptyset$.
Let $\mathcal U_S$ denote the uniform distribution on $S$.
For every $h\in \V\!\inparen{S}$, the distribution $\mathcal U_S$ is compatible with $h$.
By \cref{thm:valiant:uniformSuccess,thm:valiant:boosting}, there is a learner $\cL_{\mathrm{unif}}$ which, on input $\inparen{d,\nfrac23,\nfrac{1}{2\abs{\hyH_d}}},$ satisfies the uniform guarantee from \cref{def:valiant:hpAllPotentialTargets} and uses at most
\[
    m_{\mathrm W}\!\inparen{d,\nfrac{1}{18},\nfrac13}
    \cdot
    O\!\inparen{
        \log\abs{\hyH_d}
    }
\]
example-oracle calls and at most the same number of membership queries.
Since $\hyH_d$ is a class over a domain of size $2^d$, Sauer--Shelah's lemma implies $\log\abs{\hyH_d}
    =
    O\!\inparen{
        d\,\vc\!\inparen{\hyH_d}
    }$
(see, \eg{}, \citet{shalev2014understanding}).
Choose $q_d\geq 1$ satisfying \eqref{eq:valiant:usefulQueries:size} and large enough to absorb the factor $2$ incurred below when example-oracle calls are compiled into membership queries.

\paragraph{A Single Successful ``Run'' for All Targets in $\V(S)$.}
Let $\rho$ denote the internal randomness of $\cL_{\mathrm{unif}}$ together with an i.i.d.\ example stream from $\mathcal U_S$.
For each target $h\in \V\!\inparen{S}$, let $\evG_h$ be the event that, when $\cL_{\mathrm{unif}}$ is run with target $h$, distribution $\mathcal U_S$, and realization $\rho$, it satisfies the conclusion of \cref{def:valiant:hpAllPotentialTargets} (\ie{}, it succeeds for all other potential targets $h'$ which remain consistent with the transcript and samples). 
Then for every such $h$, $\Pr\!\inparen{\evG_h}
    \geq
    1-\frac{1}{2\abs{\hyH_d}}.$
Hence, by the union bound,
\[
    \Pr\!\inparen{
        \bigcap_{h\in \V\!\inparen{S}} \evG_h
    }
    \geq
    1-
    \frac{\abs{\V\!\inparen{S}}}{2\abs{\hyH_d}}
    \geq \frac{1}{2}.
\]
Therefore there exists a realization $\rho^\star$ such that every event $\evG_h$, for $h\in \V\!\inparen{S}$, holds simultaneously.
Fix such a realization $\rho^\star$.

\paragraph{A Deterministic and ``Successful'' Query Strategy.}
Using the fixed realization $\rho^\star$, define a deterministic adaptive query strategy $\sigma_S$ by simulating $\cL_{\mathrm{unif}}$.
Whenever the simulated learner requests its next example, the strategy reads the next point from the hardwired example stream encoded in $\rho^\star$, queries that point to the membership oracle, and then feeds the point to the simulated learner as its next example.
Whenever the simulated learner requests an ordinary membership query $x$, the strategy asks the same query $x$ and feeds back the answer.
Because every hardwired example lies in $S$, every target $h\in \V\!\inparen{S}$ answers $1$ on those compiled example queries.
Therefore, for every such $h$, the realized run of $\sigma_S$ against the membership oracle for $h$ is \textit{exactly} the run of $\cL_{\mathrm{unif}}$ with realization $\rho^\star$ against target $h$ and distribution $\mathcal U_S$.
By the choice of $q_d$, the total number of membership queries made by $\sigma_S$ is at most $q_d$.

\paragraph{A Weak Compression Scheme.}
We claim that $\sigma_S$ is a weak query-based compression scheme.
To see this, fix any response transcript $r$ of $\sigma_S$ that is realizable with respect to $\V\!\inparen{S}$, and let $h^\star\in \V\!\inparen{S}$ be a target realizing $r$.
Let $P_r\subseteq S$ be the set of hardwired example points consumed along this branch, let $q_r,a_r$ be the ordinary membership-query transcript seen by the simulated learner, and let $\wh g_r$ be the output of the simulation.
Then
\[
    \V\!\inparen{\sigma_S,r}
    =
    \V\!\inparen{P_r}\cap \V\!\inparen{q_r,a_r}.
\]
Now $\rho^\star\in \evG_{h^\star}$, so the corresponding run of $\cL_{\mathrm{unif}}$ satisfies the uniform guarantee over all potential targets.
Since $h^\star\in \V\!\inparen{P_r}\cap \V\!\inparen{q_r,a_r}$ and $S\subseteq \supp(h^\star)$, conditioning $\mathcal U_S$ on $\supp(h^\star)$ leaves $\mathcal U_S$ unchanged.
Therefore
\[
    \Pr_{x\sim \mathcal U_S}\inparen{\wh g_r(x)=0}
    \leq
    \frac23,
    \qquadand
    \supp\!\inparen{\wh g_r}
    \subseteq
    \bigcap_{h\in \V\!\inparen{P_r}\cap \V\!\inparen{q_r,a_r}} \supp(h).
\]
The first inequality implies $\abs{
        S\cap \supp\!\inparen{\wh g_r}
    }
    \geq
    \frac13\abs{S}.$
Further, the second inclusion yields
\[
    \bigcap_{h\in \V\!\inparen{\sigma_S,r}} \supp(h)
    =
    \bigcap_{h\in \V\!\inparen{P_r}\cap \V\!\inparen{q_r,a_r}} \supp(h)
    \supseteq
    \supp\!\inparen{\wh g_r}.
\]
Hence
\[
    \abs{
        S\cap \bigcap_{h\in \V\!\inparen{\sigma_S,r}} \supp(h)
    }
    \geq
    \abs{
        S\cap \supp\!\inparen{\wh g_r}
    }
    \geq
    \frac13\abs{S}.
\]
Thus, $\sigma_S$ is a weak query-based compression scheme for $S$.
\end{proof}
Now we are ready to prove \cref{thm:valiant:compression}.
\begin{proof}[Proof of \cref{thm:valiant:compression}]
Let $q_d$ be as in \cref{thm:valiant:weak-compression}.
Since the bound \eqref{eq:valiant:usefulQueries:size} is asymptotic, we may and do assume $q_d\geq 1$.
For each finite set $S\subseteq \zo^d$ with $\V\!\inparen{S}\neq\emptyset$, we define a deterministic adaptive query strategy $\Sigma_S$ recursively.

\paragraph{Recursive Construction.}
If $\abs{S}\leq q_d$, then $\Sigma_S$ queries every point of $S$ and stops.
If $\abs{S}>q_d$, then $\Sigma_S$ first runs the weak query-based compression scheme $\sigma_S$ from \cref{thm:valiant:weak-compression}.
For each response transcript $r$ of this first stage that is realizable with respect to $\V\!\inparen{S}$, define
\[
    I_{S,r}
    \coloneqq
    S\cap \bigcap_{h\in \V\!\inparen{\sigma_S,r}} \supp(h),
    \qquadand
    R_{S,r}
    \coloneqq
    S\setminus I_{S,r}.
\]
For such an $r$, any hypothesis in $\V\!\inparen{S}$ realizing $r$ also witnesses $\V\!\inparen{R_{S,r}}\neq\emptyset$.
On every such branch $r$, the strategy continues with the recursively defined strategy $\Sigma_{R_{S,r}}$.
On first-stage branches that are not realizable with respect to $\V\!\inparen{S}$, define the continuation arbitrarily, say by stopping.
This is well-defined because whenever $r$ is realizable with respect to $\V\!\inparen{S}$, the weak-compression guarantee gives $\abs{R_{S,r}}
    \leq
    \frac23\abs{S}
    <
    \abs{S},$
so the recursion strictly decreases the size of the unresolved set.

\paragraph{Correctness.}
We prove by induction on $\abs{S}$ that for every finite set $S$ with $\V\!\inparen{S}\neq\emptyset$, and every response transcript $r$ of $\Sigma_S$ that is realizable with respect to $\V\!\inparen{S}$, $S
    \subseteq
    \bigcap_{h\in \V\!\inparen{\Sigma_S,r}} \supp(h).$
\begin{itemize}[leftmargin=15pt]
    \item \textbf{Case A ($\abs{S}\leq q_d$):} If $\abs{S}\leq q_d$, then $\Sigma_S$ queries every point of $S$, so every hypothesis in $\V\!\inparen{\Sigma_S,r}$ answers $1$ on every point of $S$.
Thus the claim is immediate.
    \item \textbf{Case B ($\abs{S}> q_d$):}
        Now suppose $\abs{S}>q_d$, and assume the claim has already been proved for all smaller realizable sets.
Fix a realizable transcript $r$ of $\Sigma_S$.
Write
\[
    r = r_0\circ r_1,
\]
where $r_0$ is the first-stage transcript produced by $\sigma_S$.
Since $r$ is realizable with respect to $\V\!\inparen{S}$, there exists some $h^\star\in \V\!\inparen{S}$ realizing $r$.
In particular, $h^\star$ realizes $r_0$, so $r_0$ is realizable with respect to $\V\!\inparen{S}$.
Let
\[
    I\coloneqq I_{S,r_0},
    \qquadand
    R\coloneqq R_{S,r_0}.
\]
Because $h^\star\in \V\!\inparen{S}$, we have $S\subseteq \supp(h^\star)$ and therefore $R\subseteq \supp(h^\star)$.
Thus $\V\!\inparen{R}\neq\emptyset$, and by construction, the suffix $r_1$ is a realizable transcript of the recursive strategy $\Sigma_R$ with respect to $\V\!\inparen{R}$ and $\abs{R}<\abs{S}$ by the definition of the weak compression scheme $\sigma_S$.
Thus, by the induction hypothesis,
\[
    R
    \subseteq
    \bigcap_{h\in \V\!\inparen{\Sigma_R,r_1}} \supp(h).
\]
Now let $h$ be any hypothesis in $\V\!\inparen{\Sigma_S,r}$.
Since $h$ is already consistent with the first-stage transcript $r_0$, it lies in $\V\!\inparen{\sigma_S,r_0}$ and therefore contains $I$ by definition of $I$.
Also, since the continuation on branch $r_0$ is $\Sigma_R$, the same hypothesis $h$ lies in $\V\!\inparen{\Sigma_R,r_1}$, and therefore contains $R$ by the induction hypothesis.
Hence $h$ contains $I\cup R = S.$
Since this holds for every $h\in \V\!\inparen{\Sigma_S,r}$, the induction is complete.
\end{itemize}

\paragraph{Compression Rate.}
Each nonterminal stage uses at most $q_d$ queries and reduces the unresolved set by a factor of at most $\nfrac23$.
Starting from a nonempty set of size $n$, after $t
    \coloneqq
    \left\lceil \log_{3/2} n \right\rceil$
such stages, the unresolved set has size at most $n\inparen{\nfrac23}^{t}
    \leq
    1
    \leq
    q_d,$
so the next stage is terminal.
Hence the total number of queries made by $\Sigma_S$ is at most $q_d\cdot \inparen{1+\left\lceil \log_{3/2}\abs{S}\right\rceil}.$
Finally, \cref{thm:valiant:weak-compression} provides the desired upper bound on $q_d$, completing the proof.
\end{proof}

\subsubsection{Query-Based Compression Implies Learnability}
In this section, we prove the converse direction: query-based compression schemes yield learners in Valiant's model.
This mirrors the classical fact that sample compression implies PAC learnability \citep{LittlestoneWarmuth1986,floyd1995sample}. %

As we have seen, a natural hypothesis to output after compression is the closure of the realized transcript, and so the key object to control is the class of all sets that can arise as such closures.
\begin{definition}[$k$-Query Closure Class]
\label{def:valiant:kQueryClosureClass}
Fix $d,k\in\N$.
The \emph{$k$-query closure class} of $\hyH_d$ is
\[
    \operatorname{Cl}_{k}\!\inparen{\hyH_d}
    \coloneqq
    \inbrace{
        \bigcap_{h\in \V\!\inparen{q,r}} \supp(h)
        :
        q,r \text{ have the same length } t\leq k
        \text{ and }
        \V\!\inparen{q,r}\neq\emptyset
    }.
\]
\end{definition}
Each set in $\operatorname{Cl}_{k}\!\inparen{\hyH_d}$ is obtained by intersecting all hypotheses in $\hyH_d$ that remain consistent with a given transcript.
However, this is quite different from the usual $k$-fold intersection class
\[
    \inbrace{
        \supp(h_1)\cap\cdots\cap\supp(h_k)
        :
        h_1,\dots,h_k\in \hyH_d
    }.
\]
For ordinary $k$-fold intersections, the VC dimension is known to be $O\!\inparen{\vc{}\!\inparen{\hyH_d}k\log k}$, and this order is tight in general \citep{blumer1989learnability,eisenstat2007vcdimension}.
In contrast, a query closure intersects the entire version space after a transcript: the number of intersected concepts is not fixed in advance, can vary from branch to branch, and over infinite domains can even be infinite.

To gain some intuition, consider monotone conjunctions on $\zo^d$. If a transcript consists only of positive answers on points $x^{(1)},\dots,x^{(t)}$, then the corresponding closure is the upward-closed set determined by the coordinates that are $1$ in every $x^{(i)}$.
So even a short transcript can encode a closure obtained by intersecting many hypotheses at once.
Despite this, the situation remains manageable on the finite domain $\zo^d$ because the entire closure class is indexed by transcripts, and we can bound the number of distinct transcripts of length at most $k$.
Later we will see that over general domains the behavior can be much wilder; in particular, even one-point query closures can have infinite VC dimension despite the base class having VC dimension at most two; see \cref{thm:onePointClosureInfiniteVC}.

\begin{lemma}
\label{lem:valiant:kQueryClosureClass:VC}
For every $d,k\in\N$,
    $\abs{\operatorname{Cl}_{k}\!\inparen{\hyH_d}}
    \leq
    2^{(d+1)(k+1)}.$
Consequently,
\[
    \vc\!\inparen{\operatorname{Cl}_{k}\!\inparen{\hyH_d}}
    \leq
    (d+1)(k+1).
\]
\end{lemma}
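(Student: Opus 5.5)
The plan is a counting argument: every set in $\operatorname{Cl}_{k}\!\inparen{\hyH_d}$ is determined by the transcript that produced it, so the size of the class is at most the number of transcripts. Concretely, the map sending a pair $(q,r)$ of length $t\leq k$ with $\V\!\inparen{q,r}\neq\emptyset$ to $\bigcap_{h\in\V(q,r)}\supp(h)$ is, by \cref{def:valiant:kQueryClosureClass}, a surjection onto $\operatorname{Cl}_{k}\!\inparen{\hyH_d}$. Hence $\abs{\operatorname{Cl}_{k}\!\inparen{\hyH_d}}$ is at most the number of labeled transcripts of length at most $k$.

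Next I will count these transcripts. A single query-response pair $(q_i,r_i)\in\zo^d\times\zo$ can be encoded in $d+1$ bits, so there are exactly $2^{(d+1)t}$ transcripts of length $t$. Summing over $t=0,\dots,k$ gives
\[
    \sum_{t=0}^{k}2^{(d+1)t}\;\leq\;2^{(d+1)(k+1)},
\]
using the geometric series bound $\sum_{t=0}^{k}a^t\leq a^{k+1}$ for $a\geq 2$. This proves the cardinality bound. The length-$0$ transcript is included, and it yields the closure of all of $\hyH_d$, which is harmless.

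For the VC bound, I will use the elementary fact that a finite class $\hyC$ cannot shatter more than $\log\abs{\hyC}$ points: shattering $n$ points requires $2^n$ distinct restrictions, hence at least $2^n$ distinct sets. Applying this with $\abs{\hyC}\leq 2^{(d+1)(k+1)}$ gives $\vc\leq (d+1)(k+1)$.

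There is no real obstacle here. The only subtlety is to note explicitly that the bound is independent of $\hyH_d$. Although each closure may intersect arbitrarily many hypotheses, it is a function of the transcript alone, so no property of $\hyH_d$ beyond its finite domain $\zo^d$ is needed.
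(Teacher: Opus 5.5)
Your proposal is correct and follows essentially the same route as the paper: you bound $\abs{\operatorname{Cl}_{k}\!\inparen{\hyH_d}}$ by the number of labeled transcripts of length at most $k$, namely $\sum_{t=0}^{k}2^{(d+1)t}\leq 2^{(d+1)(k+1)}$, and then use $\vc\leq\log\abs{\cdot}$. You spell out the final step that the paper leaves implicit in its ``Consequently''.
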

\begin{proof}[Proof of \cref{lem:valiant:kQueryClosureClass:VC}]
For each $t\in\inbrace{0,1,\dots,k}$, a labeled query transcript of length $t$ is a sequence of $t$ pairs $(q_i,r_i)\in \zo^d\times \zo$.
Hence there are at most
    $\abs{\zo^d\times\zo}^{\,t}
    =
    \inparen{2^d\cdot 2}^{t}
    =
    2^{(d+1)t}$
possible transcripts of length $t$.
Each transcript determines at most one element of $\operatorname{Cl}_{k}\!\inparen{\hyH_d}$, so
\[
    \abs{\operatorname{Cl}_{k}\!\inparen{\hyH_d}}
    \leq
    \sum_{t=0}^{k} 2^{(d+1)t}
    \leq
    2^{(d+1)(k+1)}.\qedhere
\] 
\end{proof}
Before stating our result, we need to define the compression rate of a query-based compression scheme:
\begin{definition}[Compression Rate]
    \label{def:compression-rate}
    Fix a concept class $\hyH=(\hyH_1,\hyH_2,\dots)$.
    If for each $S$ with $\V(S)\neq \emptyset$, there is a query-based compression scheme $\Sigma$ for $S$ that makes at most $k=\kappa(\abs{S})$ membership queries, we say that the compression rate for $\hyH$ is $\kappa$.
\end{definition}
\vspace{-7mm}
\begin{theorem}[Query-Based Compression Implies Learnability]
\label{thm:valiant:learningFromCompression}
\label{thm:valiant:compression-implies-learnability}
Suppose that for each $d\in\N$, the class $\hyH_d$ has a nondecreasing query-based compression rate $\kappa_d\colon \N\to\N$ in the sense of \cref{def:compression-rate}.
Then there is a learner with the following guarantee.
For every $\eps,\delta\in(0,1)$, every target $h^\star\in\hyH_d$, and every distribution $\cD$ compatible with $h^\star$, for any
\[
    n
    \geq
    \Omega\inparen{
    \frac{
        (d+1)\inparen{\kappa_d\!\inparen{n}+1}
        +
        \log\nfrac{1}{\delta}
    }{\eps}
    }
\]
the learner uses exactly $n$ example-oracle calls and at most $\kappa_d\!\inparen{n}$ membership queries and, with probability at least $1-\delta$, outputs a hypothesis satisfying \cref{def:valiant}.
In particular,
\[
    m\!\inparen{d,\eps,\delta}
    =
    n+\kappa_d\!\inparen{n}
\]
is a valid sample/query complexity bound.
In particular, if $\kappa_d\!\inparen{n}
    =
    q_d\cdot \inparen{1+\left\lceil \log_{3/2} n \right\rceil},$ then
\[
    m\!\inparen{d,\eps,\delta}
    =
    O\!\inparen{
        \frac{d q_d+\log\nfrac1\delta}{\eps}
        \cdot
        \log\inparen{
            \frac{d q_d+\log\nfrac1\delta}{\eps}
        }
    }.
\]
\end{theorem}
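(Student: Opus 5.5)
The plan is to use the natural compress-then-close learner and prove generalization by a one-sided uniform convergence argument over the finite class $\operatorname{Cl}_{k}(\hyH_d)$ with $k=\kappa_d(n)$.

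\textbf{The learner.} Draw $n$ examples, forming the set $S$; since $\hstar\in\V(S)$, the set $S$ is realizable. Fix the compression scheme $\Sigma_S$ guaranteed by the compression rate; it depends only on $S$ and makes at most $\kappa_d(\abs{S})\leq\kappa_d(n)$ queries, using that $\kappa_d$ is nondecreasing. Run $\Sigma_S$ against the membership oracle to get the realized transcript $(q,r)$. Output $\wh h$ with $\supp(\wh h)=\bigcap_{h\in\V(q,r)}\supp(h)$.

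\textbf{No false positives.} All responses come from $\hstar$, so $\hstar\in\V(q,r)$ and hence $\supp(\wh h)\subseteq\supp(\hstar)$. This holds deterministically.

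\textbf{Coverage of the sample.} The transcript $r$ is realized by $\hstar\in\V(S)$, so it is realizable with respect to $\V(S)$. By \cref{def:query-compression}, $S\subseteq\supp(\wh h)$, so $\wh h$ has zero empirical false-negative rate. Also $\wh h\in\operatorname{Cl}_{k}(\hyH_d)$, because $\V(q,r)\ni\hstar$ is nonempty.

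\textbf{Generalization.} The main subtlety is that the output class is chosen after seeing $S$. This is handled by taking a union bound over the whole class rather than over data-dependent objects. For a fixed $C\in\operatorname{Cl}_{k}(\hyH_d)$ with $\cD(\cX\setminus C)>\eps$, the probability that all $n$ i.i.d.\ samples land in $C$ is at most $(1-\eps)^n\leq e^{-\eps n}$. By \cref{lem:valiant:kQueryClosureClass:VC}, $\abs{\operatorname{Cl}_{k}(\hyH_d)}\leq 2^{(d+1)(k+1)}$. So the probability that some such "bad" $C$ contains all of $S$ is at most $2^{(d+1)(k+1)}e^{-\eps n}$. This is at most $\delta$ whenever $n\geq c\,((d+1)(\kappa_d(n)+1)+\log\nfrac1\delta)/\eps$. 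On the complementary event, $\wh h$ contains $S$ and so must have false-negative mass at most $\eps$. Working with the cardinality bound directly avoids needing VC-based bounds with extra log factors.

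\textbf{Solving the implicit inequality.} This step is routine but is the only place needing care. The total complexity is $m=n+\kappa_d(n)$. Substitute $\kappa_d(n)=q_d(1+\lceil\log_{3/2}n\rceil)$ and write $A=(dq_d+\log\nfrac1\delta)/\eps$. The requirement becomes $n\gtrsim A\log n$, which is satisfied by $n=C\,A\log A$ for a large enough constant $C$, using the standard fact that $n\geq 2a\log a$ implies $n\geq a\log n$. Since $\kappa_d(n)\leq n$ in this regime, $m=O(A\log A)$, as claimed.
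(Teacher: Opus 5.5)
Your proposal is correct and follows essentially the same route as the paper: the same learner (run $\Sigma_S$ and output the closure of the realized version space), one-sided error from $h^\star\in\V(q,r)$, zero empirical false-negative error from the certification property, generalization via the cardinality bound of \cref{lem:valiant:kQueryClosureClass:VC}, and the same treatment of the implicit inequality in $n$. The only cosmetic difference is that the paper cites a finite-class zero-error bound (Boucheron--Bousquet--Lugosi) applied to the class $\{C\triangle\supp(h^\star)\}$, whereas you derive that bound directly from $(1-\eps)^n$ and a union bound over $\operatorname{Cl}_k(\hyH_d)$; in this realizable setting the two are equivalent.
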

The learner in the above result first compresses the sample $S$ and outputs the closure of the realized transcript.
Then two properties are immediate: the closure has no false positives (because the true target is consistent with its own transcript), and it has zero empirical false-negative error (because the compression scheme certifies that every sampled point lies in the closure).
The proof therefore reduces to a uniform convergence argument over the closure class $\operatorname{Cl}_{k}\!\inparen{\hyH_d}$, which holds due to \cref{lem:valiant:kQueryClosureClass:VC}.

\begin{proof}[Proof of \cref{thm:valiant:learningFromCompression}]
Fix $d\in\N$, $\eps,\delta\in(0,1)$, a target $h^\star\in\hyH_d$, and a distribution $\cD$ compatible with $h^\star$.
Let $n$ be any integer satisfying the displayed lower bound in the theorem, draw an ordered sample $x_1,\dots,x_n\sim \cD,$
and let $S\coloneqq \inbrace{x_1,\dots,x_n}$ be the underlying set of distinct sample points.
Since $\cD$ is compatible with $h^\star$ (\cref{def:compatible}), every sample point is positive for $h^\star$, and hence $h^\star\in \V\!\inparen{S}$.

\paragraph{Learning Algorithm.}
The learning algorithm runs a query-based compression scheme $\Sigma_S$ for $S$ of size at most $\kappa_d\!\inparen{\abs{S}}\leq \kappa_d\!\inparen{n}$ against the membership oracle for $h^\star$.
Let $r_S$ be the realized response transcript.
It outputs the indicator of the set
\[
    C_S
    \coloneqq
    \bigcap_{h\in \V\!\inparen{\Sigma_S,r_S}} \supp(h).
\]

\paragraph{One-Sided Error.}
Since $h^\star$ generated the transcript $r_S$, we have $h^\star\in \V\!\inparen{\Sigma_S,r_S}$ and hence
    $C_S
    \subseteq
    \supp\!\inparen{h^\star}.$
So the output has no false positives.
Also, because $\Sigma_S$ is a query-based compression scheme for $S$,
    $S
    \subseteq
    C_S.$
Therefore every sample point lies in both $C_S$ and $\supp\!\inparen{h^\star}$, and so the empirical symmetric-difference error of $C_S$ relative to $\supp\!\inparen{h^\star}$ is zero.

\paragraph{Bound on False-Negative Rate.}
Because $\Sigma_S$ makes at most $\kappa_d\!\inparen{n}$ queries, we have
$C_S\in \operatorname{Cl}_{\kappa_d(n)}\!\inparen{\hyH_d}.$
Let
\[
    k_n\coloneqq \kappa_d\!\inparen{n}
    \qquadand
    \hyF_{d,n}
    \coloneqq
    \inbrace{
        C\triangle \supp\!\inparen{h^\star}
        :
        C\in \operatorname{Cl}_{k_n}\!\inparen{\hyH_d}
    }.
\]
Then $\abs{\hyF_{d,n}}=\abs{\operatorname{Cl}_{k_n}\!\inparen{\hyH_d}}$.
By the second-order uniform convergence bound from \cite[Theorem~5.1 and Eq.~(15)]{Boucheron_Bousquet_Lugosi_2005}, with probability at least $1-\delta$, every set $F\in \hyF_{d,n}$ with zero empirical measure on $x_1,\dots,x_n$ satisfies
\[
    \cD\!\inparen{F}
    \leq
    \frac{O\!\inparen{\log\abs{\hyF_{d,n}}+\log\nfrac{1}{\delta}}}{n}.
\]
Applying this to
\[
    F_S\coloneqq C_S\triangle \supp\!\inparen{h^\star}
\]
and using \cref{lem:valiant:kQueryClosureClass:VC}, we obtain with probability at least $1-\delta$ that
\[
    \cD\!\inparen{C_S\triangle \supp\!\inparen{h^\star}}
    \leq
    \frac{O\!\inparen{\log\abs{\operatorname{Cl}_{k_n}\!\inparen{\hyH_d}}+\log\nfrac{1}{\delta}}}{n}
    \leq
    \frac{O\!\inparen{(d+1)(k_n+1)\log 2+\log\nfrac{1}{\delta}}}{n}
    \leq
    \eps,
\]
where the last inequality holds by the choice of $n$.
Since $C_S\subseteq \supp\!\inparen{h^\star}$ and $\supp\!\inparen{\cD}\subseteq \supp\!\inparen{h^\star}$,
\[
    \Pr_{x\sim \cD}\inparen{C_S(x)=0 \text{ and } h^\star(x)=1}
    =
    \cD\!\inparen{\supp\!\inparen{h^\star}\setminus C_S}
    =
    \cD\!\inparen{C_S\triangle \supp\!\inparen{h^\star}}
    \leq
    \eps.
\]
Thus, with probability at least $1-\delta$, the output satisfies the guarantees in \cref{def:valiant}.
The learner uses exactly $n$ example-oracle calls and at most $\kappa_d\!\inparen{n}$ membership queries, proving the first part of the theorem.

\paragraph{Logarithmic Compression Rate.}
Suppose now $\kappa_d\!\inparen{n}
    =
    q_d\cdot \inparen{1+\left\lceil \log_{3/2} n \right\rceil}.$
Let
\[
    A
    \coloneqq
    \frac{d\inparen{q_d+1}+\log\nfrac1\delta}{\eps},
\]
and choose $n
    \coloneqq
    \left\lceil c' A\log\inparen{c'A} \right\rceil$
for a sufficiently large absolute constant $c'>0$.
Then
\[
    \kappa_d\!\inparen{n}+1
    \leq
    c''\inparen{q_d+1}\log\!\inparen{n+1}
\]
for an absolute constant $c''$, and hence
\[
    \frac{(d+1)\inparen{\kappa_d(n)+1}+\log\nfrac1\delta}{\eps}
    \leq
    c'''\inparen{A\log\!\inparen{n+1}+A}
\]
for another absolute constant $c'''$.
Since $\log\!\inparen{n+1}=O\!\inparen{\log A}$ for this choice of $n$, the right-hand side is at most $n/c$ once $c'$ is chosen large enough relative to $c,c'',c'''$.
Thus the displayed condition of the theorem holds.
Moreover,
\[
    \kappa_d\!\inparen{n}
    =
    O\!\inparen{\inparen{q_d+1}\log A}
    \leq
    O\!\inparen{A\log A}
    =
    O\!\inparen{n},
\]
so both the example-oracle and membership-query complexities are at most
\[
    O\!\inparen{n}
    =
    O\!\inparen{
        \frac{d\inparen{q_d+1}+\log\nfrac1\delta}{\eps}
        \cdot
        \log\inparen{
            \frac{d\inparen{q_d+1}+\log\nfrac1\delta}{\eps}
        }
    }.\qedhere
\]
\end{proof}

\subsubsection{A VC Lower Bound for Learnability in Valiant's Model}
In this section, we show that learnability in Valiant's model requires sample/query complexity at least linear in the VC dimension.

\begin{theorem}[VC Lower Bound for Valiant's Model]
\label{thm:valiant:VCLowerBound}
Suppose $\hyH=\inparen{\hyH_1,\hyH_2,\dots}$ is learnable in Valiant's model with sample/query complexity
$m\!\inparen{d,\eps,\delta}$.
Then for every $d\in\N$ and every $\eps,\delta\in\inparen{0,\nfrac12}$,
\[
    m\!\inparen{d,\eps,\delta}
    \geq
    \frac{\vc\!\inparen{\hyH_d}-1}{9}.
\] 
In particular, if $\hyH$ is learnable in Valiant's model, then $\vc\!\inparen{\hyH_d}=\poly(d)$.
\end{theorem}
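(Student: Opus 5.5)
We argue by Yao's minimax principle, following the outline in the technical overview. Fix $d$, $\eps,\delta<\nfrac12$, write $M\coloneqq m(d,\eps,\delta)$, and let $X=\inbrace{x_1,\dots,x_D}$ be a set shattered by $\hyH_d$ with $D=\vc(\hyH_d)$. Suppose for contradiction that $D>9M+1$. Because $X$ is shattered, for every $U\subseteq X$ there is some $h_U\in\hyH_d$ with $\supp(h_U)\cap X=U$. We place a prior on targets: pick $U\subseteq X$ uniformly at random among subsets of a fixed size $u=\Theta(M)$ (say $u=3M$ or so, tuned at the end), take the target to be $h_U$, and take $\cD$ to be uniform on $U$. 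This $\cD$ is compatible with $h_U$. If the learner succeeds with probability $1-\delta$ on every fixed target, it also succeeds with probability $1-\delta>\nfrac12$ when we average over this prior. By \cref{rem:positive-samples-first}, after fixing the internal randomness we may assume a deterministic learner that first draws $M$ samples from $U$ and then runs an adaptive query strategy of depth at most $M$.

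The learner can only gain information about $U$ by seeing sampled points, each of which lies in $U$, and by querying points of $X$. Let $K\subseteq U$ denote the set of points of $U$ that were either sampled or answered positively by a query; then $\abs{K}\leq 2M$. Condition on the entire transcript. By symmetry of the prior, the remaining set $U\setminus K$ is a uniformly random subset of size $u-\abs{K}$ drawn from the pool $X\setminus(\text{sampled}\cup\text{queried points})$. That pool has size at least $D-2M$.

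Now examine the output $\wh h$, which is a deterministic function of the transcript. Success requires two things. First, $\wh h$ must have no false positives: $\supp(\wh h)\cap X\subseteq U$. Second, $\wh h$ must cover at least a $(1-\eps)$-fraction of $U$, which means covering more than $u/2$ points of $U$. Since $\abs{K}\leq 2M$, choosing $u$ so that $u/2>2M$ (for instance $u=4M+1$) forces $\wh h$ to cover some point outside $K$. Let $A\coloneqq\supp(\wh h)\cap X\setminus K$, which must be nonempty, and indeed must have size greater than $u/2-2M$. For the output to be correct, $A$ must be contained in the random set $U\setminus K$. The probability of this is at most $(u-\abs{K})/(\text{pool size})\leq u/(D-2M)$ even when $\abs{A}=1$. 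For success probability above $\nfrac12$ we therefore need $u/(D-2M)>\nfrac12$, that is, $D<2u+2M$. With $u\approx4M+1$ this gives $D\leq 10M+O(1)$.

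The main obstacle is getting the constant down to $9$ together with the $-1$. To do this I would tune $u$ and use the fact that $\abs{A}$ must be large, not just nonempty: the containment probability is then a hypergeometric ratio $\binom{\text{pool}-\abs{A}}{\,u-\abs{K}-\abs{A}\,}\big/\binom{\text{pool}}{u-\abs{K}}$, which decays geometrically in $\abs{A}$. A second subtlety is that adaptive queries returning $0$ also shrink the pool; the bound above already accounts for this by removing all queried points. The last step is to combine this with the averaging argument to conclude that $m\geq(\vc(\hyH_d)-1)/9$. The statement that $\vc(\hyH_d)=\poly(d)$ then follows immediately by evaluating $m$ at constants $\eps=\delta=\nfrac13$.
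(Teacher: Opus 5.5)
The step that fails is your claim that the learner gains information about $U$ only from samples and from queries to points of $X$, and hence that, conditioned on the transcript, $U\setminus K$ is a uniformly random subset of a pool of size at least $D-2M$. Membership queries may be placed anywhere in $\zo^d$. Off $X$, the target $h_U$ takes whatever values the hypothesis of $\hyH_d$ you picked to realize the pattern $U$ on $X$ happens to have, and these values can be arbitrary functions of $U$. For instance, take a class in which each pattern on $X$ has a unique extension whose values outside $X$ encode bits of the index of $U$. Then each off-$X$ query reveals an arbitrary bit about $U$, the posterior on $U$ is not uniform over any pool, and neither your bound $u/(D-2M)$ nor its hypergeometric refinement is justified. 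In general you cannot choose the $h_U$ to prevent this.

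The paper never tries to track what the queries reveal. It conditions only on the sample $P_0$. Then $U$ is uniform over the at least $\binom{8M+2}{3M+1}$ sets of size $k=4M+1$ in $X$ (with $|X|=n=9M+2$) that contain $\overline{P_0}$. It then observes that the query phase produces one of at most $2^M$ binary response strings, wherever the queries land, and each string fixes an output set $H\subseteq X$. Success forces $H\subseteq U$ and $|H|\ge 2M+1$, so each string succeeds on at most $\binom{7M+1}{2M}$ hidden sets. Since $\binom{8M+2}{3M+1}/\binom{7M+1}{2M}=\prod_{j=0}^{M}\frac{7M+2+j}{2M+1+j}\ge 2^{M+1}$, the $2^M$ factor is absorbed and the success probability is at most $\tfrac12$. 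That $2^M$ factor is exactly what pays for arbitrary leakage from off-$X$ queries, and the same count yields the constant $9$ directly.

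Separately, your argument as written only reaches $D\le 10M+1$. If all queries were restricted to $X$, your refinement would close this. With $u=4M+1$ and $D=9M+2$, the containment probability is at most $\prod_{j=0}^{s}\frac{2M+1+s-j}{7M+2-j}$ for $s=2M-|K|$. This is decreasing in $s$, with maximum $(2M+1)/(7M+2)<\tfrac12$ at $s=0$. But that restricted case does not cover general learners.
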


\begin{proof}[Proof of \cref{thm:valiant:VCLowerBound}]
Fix $d\in\N$, $\eps\in\inparen{0,\nfrac12}$, and $\delta\in\inparen{0,\nfrac12}$.
Set
\[
    M \coloneqq m\!\inparen{d,\eps,\delta},
    \qquad
    k \coloneqq 4M+1,
    \qquadand
    n \coloneqq 2k+M = 9M+2.
\]
Suppose toward a contradiction that
\[
    \vc\!\inparen{\hyH_d}\geq n.
\]
Then there exists a set $X\subseteq \zo^d$ of size $n$ shattered by $\hyH_d$.

\paragraph{Hard Prior.}
For each subset $U\subseteq X$ of size $k$, let $h_U\in \hyH_d$ be a hypothesis satisfying
\[
    h_U(x)=\one\!\inbrace{x\in U}
    \qquad\text{for every }x\in X.
\]
(This hypothesis exists since $X$ is VC-shattered.)
Let $\cD_U$ be the uniform distribution on $U$; then $\cD_U$ is compatible with $h_U$.
Now choose $U$ uniformly at random among all $k$-subsets of $X$, and let the target/distribution pair be $\inparen{h_U,\cD_U}$.
By Yao's minimax principle, it suffices to show that every deterministic learner that makes at most $M$ example-oracle calls and at most $M$ membership queries fails with probability at least $\nfrac12$ against this prior.

Fix such a deterministic learner $\cL$.
Padding with ignored queries if necessary, we may assume that $\cL$ makes exactly $M$ example-oracle calls and exactly $M$ membership queries.
We may also assume that all $M$ example-oracle samples are revealed before any membership query is asked, since this can only help the learner.

Let $P=\inparen{x_1,\dots,x_M}\in X^M$
be the ordered sample sequence, and let $\overline{P}\coloneqq \inbrace{x_1,\dots,x_M}$
be the corresponding set of distinct sampled points.
Fix an arbitrary realization $P_0\in X^M$, and let $r\coloneqq \abs{\overline{P_0}}.$
Conditioned on $P=P_0$, the hidden set $U$ is uniform over
$\inbrace{
        V\subseteq X : \abs{V}=k \text{ and } \overline{P_0}\subseteq V
    }.$
Indeed, for every $V\subseteq X$ of size $k$,
$\Pr\inparen{P=P_0\mid U=V}
    =
    k^{-M}\cdot \one\!\inbrace{\overline{P_0}\subseteq V},$
and the prior on $U$ is uniform.
Hence, conditioned on $P=P_0$, there are exactly
    $\binom{n-r}{k-r}$
possible hidden sets given $P=P_0$.
Since $r\leq M$ and
\[
    \frac{\binom{n-r-1}{k-r-1}}{\binom{n-r}{k-r}}
    =
    \frac{k-r}{n-r}
    <1,
\]
this quantity is minimized when $r=M$.
Therefore, for every $P_0$,
\[
    \#\inbrace{\text{possible }U\text{'s given }P=P_0}
    \geq
    \binom{n-M}{k-M}
    =
    \binom{8M+2}{3M+1}.
    \yesnum\label{eq:valiant:VCLB:candidate-count}
\]

\paragraph{Upper Bound on the Number of Hidden Sets per Transcript.}
Now fix the sample sequence $P_0$.
Once the sample is fixed, the query phase produces one of at most $2^M$ response transcripts.
For each such response transcript $r\in\zo^M$, let
    $H_{P_0,r}\subseteq X$
denote the set of points in $X$ that the learner's output labels as positive after seeing $P_0$ and the response string $r$.

We claim that for any fixed response transcript $r$, the learner can succeed on at most
    $\binom{7M+1}{2M}$
different hidden sets $U$.
Indeed, suppose the learner succeeds on some hidden set $U$ of size $k$ after receiving response transcript $r$.
Since the output has no false positives, we must have
    $H_{P_0,r}\subseteq U.$
Since $\cD_U$ is uniform on $U$ and $\eps\leq \nfrac12$, the coverage guarantee implies
\[
    \abs{U\setminus H_{P_0,r}}
    \leq
    \eps k
    \leq
    \frac{k}{2}
    <
    2M+1.
\]
Since $\abs{U}=k=4M+1$, it follows that
\[
    \abs{H_{P_0,r}}
    \geq
    k-2M
    =
    2M+1.
\]
Therefore every successful hidden set $U$ must contain $H_{P_0,r}$, which has size at least $2M+1$. 
The number of $k$-subsets of $X$ that contain a fixed $(2M+1)$-element subset is at most
$\binom{n-(2M+1)}{k-(2M+1)}
    =
    \binom{7M+1}{2M}.$
This proves the claim.

Summing over all $2^M$ response transcripts, the total number of hidden sets on which the learner can succeed given $P=P_0$ is at most
\[
    2^M\binom{7M+1}{2M}
    \yesnum\label{eq:valiant:VCLB:success-count}
\] 
It remains to compare the candidate count \eqref{eq:valiant:VCLB:candidate-count} with the success count \eqref{eq:valiant:VCLB:success-count}.
Direct calculation shows:
\[
    \frac{\binom{8M+2}{3M+1}}{\binom{7M+1}{2M}}
    =
    \frac{(8M+2)!\,(2M)!}{(3M+1)!\,(7M+1)!}
    =
    \prod_{j=0}^{M}\frac{7M+2+j}{2M+1+j}.
\]
Each factor in the product is at least $2$, so $\frac{\binom{8M+2}{3M+1}}{\binom{7M+1}{2M}}
    \geq
    2^{M+1}.$
Therefore, $2^M\binom{7M+1}{2M}
    \leq
    \frac12\binom{8M+2}{3M+1}.$
    
That is, conditioned on any realization $P_0$, the learner succeeds on at most half of the candidate hidden sets.
Therefore
    $\Pr\inparen{\text{$\cL$ succeeds}\mid P=P_0}\leq \nfrac12$
for every $P_0$, and averaging over $P$ gives
    $\Pr\inparen{\text{$\cL$ succeeds}}\leq \nfrac12.$
So every deterministic learner with budget $M$ fails with probability at least $\nfrac12$ under the hard prior.
This contradicts learnability in Valiant's model, since $\delta<\nfrac12$.
Therefore our assumption $\vc\!\inparen{\hyH_d}\geq 9M+2$ was false, and so
    $\vc\!\inparen{\hyH_d}
    \leq
    9M+1
    =
    9\,m\!\inparen{d,\eps,\delta}+1.$
Applying this with $\eps=\delta=\nfrac13$: if $\hyH$ is learnable in Valiant's model, then
$m\!\inparen{d,\nfrac13,\nfrac13}=\poly(d)$,
and hence
    $\vc\!\inparen{\hyH_d}=\poly(d).$
\end{proof}

\subsubsection{Completing the Characterization of Learnability in Valiant's Model}
We now combine the previous sections to obtain a complete characterization of learnability in Valiant's model.
In particular, we prove \cref{thm:characterization}, which we restate below.
\thmCharacterization*
\begin{proof}[Proof of \cref{thm:characterization}]
We prove the two implications separately.

\paragraph{{(1)} $\Rightarrow$ {(2)}:}
Assume that $\hyH$ is learnable, and let
$m_{\mathrm W}\!\inparen{d,\eps,\delta}$
denote the sample/query complexity of some learner witnessing this.
By \cref{thm:valiant:VCLowerBound}, applied with $\eps=\delta=\nfrac13$, we have $\vc\!\inparen{\hyH_d}=\poly(d).$
Moreover, \cref{thm:valiant:compression} implies that for every nonempty finite set
$S\subseteq \zo^d$ with $\V\!\inparen{S}\neq\emptyset$, there is a query-based compression scheme for $S$ of size
\[
    m_{\mathrm W}\!\inparen{d,\nfrac{1}{18},\nfrac13}
    \cdot
    O\!\inparen{
        d\,\vc\!\inparen{\hyH_d}\,\log\!\inparen{\abs{S}+1}
    }.
\]
Since $\abs{S}\leq 2^d$, the factor $\log\!\inparen{\abs{S}+1}$ is $O(d)$.
Since both
$m_{\mathrm W}\!\inparen{d,\nfrac{1}{18},\nfrac13}$
and
$\vc\!\inparen{\hyH_d}$
are polynomial in $d$, the displayed bound is $\poly(d)$; this completes the proof of (2).

\paragraph{{(2)} $\Rightarrow$ {(1)}:}
Assume now that Condition \emph{(2)} holds, witnessed by a polynomial $q$.
For each $d\in\N$, define $\kappa_d(n)\coloneqq q(d)$ (for all $n\in \N$).
Then $\kappa_d$ is a nondecreasing query-based compression rate for $\hyH_d$.
By \cref{thm:valiant:learningFromCompression}, $\hyH$ is learnable in Valiant's model with sample/query complexity
\[
    O\!\inparen{
        \frac{dq(d)+\log\nfrac1\delta}{\eps}
        \cdot
        \log\!\inparen{
            \frac{dq(d)+\log\nfrac1\delta}{\eps}
        }
    }.\qedhere
\]
\end{proof}

\section{Proofs of Remaining Results}
 In this section, we present the proofs of our remaining results.
\subsection{Proof of \cref{cor:intro:positiveOnlySpecialCase} (Positive-Only Learning as the Non-Interactive Special Case)}
    \label{sec:proofof:cor:intro:positiveOnlySpecialCase}
    In this section, we prove \cref{cor:intro:positiveOnlySpecialCase}, which we restate below.
    \positiveOnlySpecialCase*
\begin{proof}[Proof of \cref{cor:intro:positiveOnlySpecialCase}] 
Let $\hyC_d$ denote the intersection closure of $\hyH_d$.
For $R\subseteq S$, write
$I(R)\coloneqq \bigcap_{h\in \V\!\inparen{R}} \supp(h).$
Choose an inclusion-minimal set $T\subseteq S$ such that
$ I(T)=I(S).$

We first construct a query compression scheme.
To this end, query every point of $T$, nonadaptively.
Since $\V\!\inparen{S}\neq\emptyset$, there exists $h^\star\in \V\!\inparen{S}$, and because $T\subseteq S$, every query in $T$ is answered positively.
After this transcript, the consistent hypotheses are exactly $\V\!\inparen{T}$, so the certified set is
\[
    \bigcap_{h\in \V\!\inparen{T}} \supp(h)
    =
    I(T)
    =
    I(S).
\]
Moreover, since every $h\in \V\!\inparen{S}$ labels every point of $S$ positively, $S\subseteq I(S)=I(T).$
Hence these queries indeed certify $S$.
Thus $T$ defines a deterministic query compression scheme for $S$ whose queries are nonadaptive and all belong to $S$.

Now fix any center hypothesis $c\in \V\!\inparen{S},$
which exists since $\V\!\inparen{S}\neq\emptyset$.
We claim that $T$ is a $1$-centered star set for $\hyH_d$, and hence also for $\hyC_d$.

To prove this, fix any $t\in T$.
By minimality of $T$, we have $I\!\inparen{T\setminus\inbrace{t}} \neq I(T).$
In particular, $\V\!\inparen{T\setminus\inbrace{t}} \neq \V\!\inparen{T},$
since equality of these two version spaces would force equality of the corresponding intersections.
Therefore we may choose $h_t \in \V\!\inparen{T\setminus\inbrace{t}} \setminus \V\!\inparen{T}.$
Then $h_t(t')=1$ for all $t'\in T\setminus\inbrace{t},$
while necessarily $h_t(t)=0,$
because otherwise $h_t$ would belong to $\V\!\inparen{T}.$
Since also $c(t')=1$ for every $t'\in T$, this proves that $T$ is a $1$-centered star set for $\hyH_d$.

Now $\hyH_d\subseteq \hyC_d$, so the same witnesses show that $T$ is a $1$-centered star set for $\hyC_d$.
Because $\hyC_d$ is closed under intersections, \cref{thm:oneStarDimension} (Theorem~19 of \citet{hanneke2024Star}) implies $\abs{T}
    \leq
    \oneStar\!\inparen{\hyC_d}
    \leq
    \vc\!\inparen{\hyC_d}.$
Therefore the above query compression scheme has size at most $\vc\!\inparen{\hyC_d}$.
\end{proof}

\subsection{Proof of \cref{thm:implications} (Sandwich between PAC and Positive-Only Learning)}
    \label{sec:proofof:thm:implications}
    In this section, we prove \cref{thm:implications}, which we restate below.
    \implicationscor*
    \noindent Below, we state the equivalent version of this theorem using the notation of one-centered star dimension, which we prove below.
    \begin{restatable}[]{corollary}{corollaryValiantBasicConsequences}
\label{cor:valiant:basicConsequences:old}
The following hold for a sequence of concept classes $\hyH=\inparen{\hyH_1,\hyH_2,\dots}$ in Valiant's model.
\begin{enumerate}[itemsep=0pt]
    \item If $\vc{}\!\inparen{\hyH_d}=d^{\omega(1)}$ for infinitely many $d$, then $\hyH$ is not learnable in Valiant's model.
    In particular, if $\vc{}\!\inparen{\hyH_d}\geq 2^{\Omega(d)}$ for infinitely many $d$, then $\hyH$ is not learnable.
    \item If $\oneStar\!\inparen{\hyH_d}=2^d$ for infinitely many $d$, then $\hyH$ is not learnable in Valiant's model.
    In particular, there exists $\hyH$ with $\vc{}\!\inparen{\hyH_d}=1$ for each $d\in\N$ that is not learnable.
    \item There exists a sequence of classes that is learnable even though, for infinitely many $d$, $\oneStar\!\inparen{\hyH_d}=d^{\omega(1)}$.
\end{enumerate}
\end{restatable}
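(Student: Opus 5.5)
Item~1 follows directly from \cref{thm:valiant:VCLowerBound}. If $\hyH$ were learnable with sample/query complexity $m$, then applying that theorem with $\eps=\delta=\nfrac13$ gives $\vc(\hyH_d)\leq 9\,m(d,\nfrac13,\nfrac13)+1=\poly(d)$ for every $d$. This contradicts $\vc(\hyH_d)=d^{\omega(1)}$ along infinitely many $d$. The case $\vc(\hyH_d)\geq 2^{\Omega(d)}$ is a special instance of this.

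For Item~2 I will use the compression direction of the characterization (\cref{thm:characterization}). Suppose $\oneStar(\hyH_d)=2^d$, so every point $x\in\zo^d$ lies in a $1$-centered star set. This means there is a center $h_0\in\hyH_d$ that is positive on all of $\zo^d$, and for each $x$ there is an $h_x\in\hyH_d$ with $h_x(x)=0$ and $h_x\equiv 1$ elsewhere. Take $S=\zo^d$. Then $\V(S)\ni h_0$ is nonempty, so $S$ is realizable.

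Now consider any deterministic adaptive query strategy $\Sigma$ of depth $t<2^d$, and follow the branch on which every response is $1$; the target $h_0$ realizes this branch. Along it at most $t$ points are queried, so some $x\in S$ is never queried. The hypothesis $h_x$ answers $1$ on every queried point, so it lies in $\V(\Sigma,r)$, yet $x\notin\supp(h_x)$. Hence $\Sigma$ does not certify $S$, and any adaptive-query compression scheme for $S$ needs depth at least $2^d$. By \cref{thm:characterization}, or quantitatively by item~2 of \cref{thm:characterizationQuantitative}, this forces $m(d,\nfrac1{20},\nfrac1{20})^2d^2\gtrsim 2^d$ for infinitely many $d$, so $\hyH$ is not learnable.

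For the explicit example, let $\hyH_d=\{h_0\}\cup\{h_x:x\in\zo^d\}$. Two points cannot be shattered: labeling both negative would require a single $h_x$ with two zeros. So $\vc(\hyH_d)=1$.

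For Item~3 I need a class with huge $1$-star number but cheap adaptive certificates. I would take $\hyH_d$ to consist of $h_0\equiv 1$ together with the complements of singletons $h_x$, but only for $x$ in a designated set $A_d$ of size $d^{\log d}$. I would then "encode" which point is missing so that membership queries reveal it. The intended mechanism is to reserve $\log|A_d|=O(\log^2 d)$ special coordinates/points $b_1,\dots,b_\ell$ outside $A_d$, and modify $h_x$ so that its values on $b_1,\dots,b_\ell$ spell the binary index of $x$, while $h_0$ uses a distinguished pattern (for example, an extra bit $b_0$ on which $h_0$ is $1$ and every $h_x$ is $0$).

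The $1$-star number is computed relative to the all-ones function on $\zo^d$. On $A_d$ alone, every $h_x$ is $1$ except at $x$, so $A_d$ is a $1$-centered star set and $\oneStar(\hyH_d)\geq d^{\log d}=d^{\omega(1)}$. The compression scheme for any realizable $S$ queries $b_0$. If the answer is $1$, the version space is $\{h_0\}$ and certifies everything. If the answer is $0$, it then queries $b_1,\dots,b_\ell$, which pins down the target exactly, since any realizable $S$ is consistent with it. This gives size $O(\log^2 d)=\poly(d)$, and learnability follows from \cref{thm:characterization}.

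The main obstacle is making this construction consistent and checking the star-number computation. I must be careful that the centre $h_0$ agrees with all-ones on $A_d$, and that the encoding points do not destroy the star property; the star set only uses $A_d$, so this should be fine. The only remaining requirement is $d^{\log d}\leq 2^d-O(\log^2 d)$, which holds for large $d$.
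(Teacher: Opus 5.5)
Your proof is correct. Item~2, the all-ones branch argument forcing depth at least $2^d$ together with the explicit VC-$1$ class, is essentially the paper's argument. The differences are in Items~1 and~3.

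For Item~1, the paper does not cite \cref{thm:valiant:VCLowerBound} directly. It takes the $\poly(d)$-size compression schemes from \cref{thm:characterization} and notes that, for a shattered set $B$ and each $T\subseteq B$, the realized transcript must satisfy $T=B\cap\bigcap_{h\in\V(q_T,r_T)}\supp(h)$. It then counts transcripts to get $2^{|B|}\le 2^{O(d\,q(d))}$. Your route is shorter and gives the explicit bound $\vc(\hyH_d)\le 9\,m(d,\nfrac13,\nfrac13)+1$. The paper's route in fact relies on \cref{thm:valiant:VCLowerBound} anyway, through the (1)$\Rightarrow$(2) direction of the characterization. What its counting adds is the structural fact that the compression size by itself bounds the VC dimension.

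For Item~3, the paper uses a different class on $[N_d]\times\zo$: complements of singletons on one slice and thresholds on the other. It proves learnability with an explicit learner (binary search over sampled coordinates plus the $\eps$-net property of intervals) rather than via the characterization. This design is chosen so that the same class and argument carry over verbatim to $\Z\times\zo$ in \cref{thm:nonTightness}, where no bounded number of queries can identify the target. Your class instead encodes the index of the missing point on $O(\log^2 d)$ auxiliary points. Then $O(\log^2 d)$ queries identify the target exactly, and learnability follows at once from the (2)$\Rightarrow$(1) direction of \cref{thm:characterization}. This is simpler to verify on the cube; with $d$ encoding points you could even push the $1$-star number to $2^d-d-1$. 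The trade-off is that the encoding trick is tied to finite domains.
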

\begin{proof}[Proof of \cref{cor:valiant:basicConsequences:old}]
We prove Items~1 to 3 in order.

\smallskip

\noindent\textit{Proof of Item~1.}
Suppose $\hyH$ is learnable in Valiant's model.
Now fix $d$, and let $B\subseteq \zo^d$ be shattered by $\hyH_d$, with $\abs{B}=m$.
For each $T\subseteq B$, let $h_T\in \hyH_d$ satisfy $h_T(x)=\mathds{1}\inbrace{x\in T}$ for $x\in B$.
Since $T$ is realizable, it admits a query compression scheme of size at most $q(d)$; let $\inparen{q_T,r_T}$ be the query-response transcript realized when this scheme is run against $h_T$.
Then, on the one hand, as the scheme certifies $T$, $T
    \subseteq
    B\cap \bigcap_{h\in \V\!\inparen{q_T,r_T}} \supp(h).$
On the other hand, $h_T\in \V\!\inparen{q_T,r_T}$, and hence
$B\cap \bigcap_{h\in \V\!\inparen{q_T,r_T}} \supp(h)
    \subseteq
    B\cap \supp(h_T)
    =
    T.$
Thus $T
    =
    B\cap \bigcap_{h\in \V\!\inparen{q_T,r_T}} \supp(h),$
so distinct subsets $T\subseteq B$ yield distinct query-response transcripts.

Finally, the number of query-response transcripts of length at most $q(d)$ is at most $2^{O(d\cdot q(d))}.$
Since there are $2^m$ choices of $T\subseteq B$, it follows that $2^m\leq 2^{\poly(d)}$, and hence $m\leq \poly(d)$.
Therefore $\vc\!\inparen{\hyH_d}\leq \poly(d)$ for all $d$.
This proves Item~1 by contraposition.

\medskip
\noindent
\textit{Proof of Item~2.}
Suppose for contradiction that $\hyH$ is learnable in Valiant's model.
By \cref{thm:characterization}, there exists a polynomial $q$ such that every realizable set $S\subseteq \zo^d$ admits a query compression scheme of size at most $q(d)$.
Since $\oneStar\!\inparen{\hyH_d}=2^d$ for infinitely many $d$, we may fix such a $d$ large enough that $q(d)<2^d$.
Then $\hyH_d$ $1$-star shatters $\cX=\zo^d$.
Let $g\in \hyH_d$ be the all-one center of this $1$-star, and set $S=\zo^d$.
Consider any query compression scheme $\Sigma$ for $S$, and let $r$ be the transcript realized when $\Sigma$ is run against $g$.
If some point $x\in \zo^d$ is not queried, then the corresponding star witness $h_x$ lies in $\V\!\inparen{\Sigma,r}$, since $h_x$ agrees with $g$ on every queried point and differs only at $x$.
But $h_x(x)=0$, so $x\notin \bigcap_{h\in \V\!\inparen{\Sigma,r}} \supp(h),$ contradicting that $\Sigma$ certifies $S=\zo^d$.
Thus every query compression scheme for $S$ must query every point of $\zo^d$ and so have size at least $2^d$, contradicting $q(d)<2^d$.
This proves Item~2.

\medskip
\noindent
\textit{Proof of the final sentence in Item~2.}
For each $d\in\N$, define $g_d(x)\coloneqq 1$ for each $x\in \zo^d$
and for each $v\in \zo^d$, define $h^{(v)}(x)\coloneqq \one\!\inbrace{x\neq v}.$
Let
\[
    \hyH_d
    \coloneqq
    \inbrace{g_d}\cup \inbrace{h^{(v)}: v\in \zo^d}.
\]
Then every singleton $\inbrace{v}$ is shattered by $\hyH_d$, since $g_d(v)=1$ and $h^{(v)}(v)=0$, so $\vc\!\inparen{\hyH_d}\geq 1$.
On the other hand, no two-point set $\inbrace{u,v}$ can be shattered, since every hypothesis in $\hyH_d$ is negative on at most one point of $\zo^d$.
Thus $\vc\!\inparen{\hyH_d}=1$ for every $d$.

Moreover, $\zo^d$ is a $1$-star for $\hyH_d$ centered at $g_d$, with witnesses $\sinbrace{h^{(v)}:v\in\zo^d}$, so $\oneStar(\hyH_d)=2^d$.
Moreover, for every $A\subseteq \zo^d$,
\[
    A
    =
    \bigcap_{v\in \zo^d\setminus A} \supp\!\inparen{h^{(v)}},
\]
so the intersection closure $\overline{\hyH}_d$ contains every subset of $\zo^d$.
Therefore Item~2 applies, and the sequence $\hyH=\inparen{\hyH_1,\hyH_2,\dots}$ is not learnable in Valiant's model.

\medskip
\noindent
\textit{Proof of Item~3.}
For each $d\geq 1$, let $N_d\coloneqq 2^{d-1}$, and identify $\zo^d$ with $\insquare{N_d}\times \zo$ via the first $d-1$ bits and the last bit.
For each $i\in \insquare{N_d}$, define $h_i^{(d)}\colon \zo^d\to \zo$ by
\[
    h_i^{(d)}(x,0)\coloneqq \one\!\inbrace{x\neq i}
    \qquadand\qquad
    h_i^{(d)}(x,1)\coloneqq \one\!\inbrace{x\geq i}.
\]
Set $\hyH_d\coloneqq \inbrace{h_i^{(d)}: i\in \insquare{N_d}}.$
We claim that the sequence $\hyH=\inparen{\hyH_1,\hyH_2,\dots}$ witnesses Item~3.

We first show that $\hyH$ is learnable.
Fix $d$, a target $h_{i^\star}^{(d)}\in \hyH_d$, and a distribution $\cD$ compatible with $h_{i^\star}^{(d)}$.
Let $\cD_0$ denote the marginal of $\cD$ on the first coordinate.
Given $\eps,\delta\in (0,1)$, the learner draws
\[
    m = O\!\inparen{\frac{\log\!\nfrac1\eps+\log\!\nfrac1\delta}{\eps}}
\]
independent examples $(x_1,y_1),\dots,(x_m,y_m)\sim \cD$, lets $Z\coloneqq \inbrace{x_1,\dots,x_m}\subseteq \insquare{N_d}$, writes the distinct values of $Z$ as $z_{(1)}<\cdots<z_{(r)}$, and adds sentinels $z_{(0)}\coloneqq 0$ and $z_{(r+1)}\coloneqq N_d+1$.
By the $\eps$-net theorem for intervals, with probability at least $1-\delta$ every interval $J\subseteq \insquare{N_d}$ with $\cD_0(J)\geq \eps$ contains at least one sampled coordinate; fix a realization for which this holds.
The learner then uses membership queries on the slice $\insquare{N_d}\times\inbrace{1}$.
Since $h_{i^\star}^{(d)}(x,1)=\one\!\inbrace{x\geq i^\star}$, a binary search over $z_{(1)},\dots,z_{(r)}$ finds the smallest index $t\in\inbrace{1,\dots,r+1}$ such that $h_{i^\star}^{(d)}\!\sinparen{z_{(t)},1}=1$, with the convention $t=r+1$ if all answers are $0$.
Setting $L\coloneqq z_{(t-1)}$ and $R\coloneqq z_{(t)}$, we have $i^\star\in (L,R]$.
If $R\leq N_d$, the learner asks one additional membership query at $(R,0)$.
If the answer is $0$, then $R=i^\star$, so it outputs the exact target hypothesis $h_R^{(d)}$.
Otherwise, or if $R=N_d+1$, let $J\coloneqq (L,R)\cap \insquare{N_d}$ and output the hypothesis $\wh h$ defined by
\[
    \wh h(x,0)=\one\!\inbrace{x\notin J}
    \qquadand\qquad
    \wh h(x,1)=\one\!\inbrace{x\geq R},
\]
where $\one\!\inbrace{x\geq N_d+1}=0$ by convention.
A direct verification shows that $\wh h$ has no false positives and that every false negative has first coordinate in $J$.
Since $J$ contains no sampled coordinate, the $\eps$-net property gives $\cD_0(J)<\eps$, so the false-negative probability is less than $\eps$.
Thus $\hyH$ is learnable in Valiant's model with sample complexity $O\!\inparen{(\log(1/\eps)+\log(1/\delta))/\eps}$ and $O\!\inparen{\log(m+1)}$ membership queries.

We now show that $\oneStar\!\inparen{\hyH_d}$ is superpolynomial.
Fix any $k\leq N_d-1$ and let $S_k\coloneqq \inbrace{(1,0),\dots,(k,0)}$.
Then $h_{k+1}^{(d)}$ labels every point of $S_k$ by $1$, while for each $t\in\insquare{k}$ the hypothesis $h_t^{(d)}$ agrees with $h_{k+1}^{(d)}$ on $S_k\setminus \inbrace{(t,0)}$ and labels $(t,0)$ by $0$.
Thus $S_k$ is a $1$-star for $\hyH_d$.
Taking $k=N_d-1$, we obtain
\[
    \oneStar\!\inparen{\hyH_d}\geq N_d-1 = 2^{d-1}-1.
\]
Hence $\oneStar\!\inparen{\hyH_d}=d^{\omega(1)}$ for infinitely many $d$, proving Item~3.
\end{proof}

\subsection{Proof of \cref{thm:intro:necessary-sufficient} (Sandwich between PAC and Positive-Only on General Domain)}
\label{sec:proofof:thm:intro:necessary-sufficient} 
In this section, we prove \cref{thm:intro:necessary-sufficient}, which we restate below.
\theoremNecessarySufficient* 
\noindent Below, we state the equivalent version of this theorem using the notation of one-centered star dimension, which we prove below.
\begin{restatable}[]{theorem}{theoremSeparationTwo}
    \label{thm:necessary-sufficient}
    The following hold in Valiant's model over a general domain (\cref{def:valiant:general}):
    \begin{enumerate}[itemsep=0pt,leftmargin=14pt]
        \item If $\hyH$ is learnable, then $\vc\!\inparen{\hyH}
            \leq
            9\,m\!\inparen{\nfrac13,\nfrac13}+1$. %
        \item 
        If $\abs{\cX}=\infty$ and $\cX$ is a $1$-star for $\hyH$, then $\hyH$ is not learnable in Valiant's model.
        \item 
        If $\oneStar\!\inparen{\hyH}<\infty$, then $\hyH$ is learnable with $m(\eps,\delta)\leq O\!\inparen{
                \frac{
                    \oneStar\!\inparen{\hyH}\log\!\nfrac{1}{\eps}
                    +
                    \log\!\nfrac{1}{\delta}
                }{\eps}
            }.$
    \end{enumerate}
\end{restatable}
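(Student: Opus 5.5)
We prove the three items of \cref{thm:necessary-sufficient} separately, reusing the finite-domain arguments wherever they do not rely on finiteness.

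\textbf{Item~1.} We rerun the proof of \cref{thm:valiant:VCLowerBound} essentially verbatim. That argument only uses a shattered set $X$ of size $n=9M+2$, with $M=m(\nfrac13,\nfrac13)$, together with the hard prior of a uniformly random $k$-subset $U\subseteq X$, target $h_U$, and distribution $\cD_U$.

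One point needs care: a learner over $\cX$ may query points outside $X$. Since the learner is deterministic given its internal randomness, we condition on the responses outside $X$. For a fixed sample $P_0$, the transcript still branches into at most $2^M$ leaves, provided we fix a single $h_U$ per $U$ and treat out-of-$X$ responses as part of the leaf label. The success count per leaf is bounded exactly as before, because the no-false-positive constraint is only used on $X$. The counting inequality $2^M\binom{7M+1}{2M}\le\frac12\binom{8M+2}{3M+1}$ then gives a contradiction. Hence $\vc(\hyH)\le 9M+1$.

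\textbf{Item~2.} Let $g\equiv 1$ be the center and $h^{(x)}$ the star witnesses. Pick a sequence of distinct points $x_1,x_2,\dots$ and take target $g$ with $\cD$ uniform on $\{x_1,\dots,x_N\}$, where $N\gg m(\eps,\delta)$ and $\eps<1/2$. The learner sees at most $m$ samples and asks at most $m$ queries. With target $g$, every query answer is $1$.

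Now consider a uniformly random unsampled, unqueried point $x_j$ among the $N$ points. The alternative target $h^{(x_j)}$ with distribution $\cD$ conditioned on $\{x_1,\dots,x_N\}\setminus\{x_j\}$ generates the same transcript with probability close to $1$ (the samples almost surely avoid $x_j$). In the original run the output must cover more than half of the $N$ points. Hence, on a constant fraction of transcripts, it covers some $x_j$ that was neither sampled nor queried, which is a false positive for $h^{(x_j)}$.

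To make this rigorous, we average over $j$ and the randomness in a Yao-style argument. This shows that the learner fails with constant probability on some $(h^{(x_j)},\cD_j)$, contradicting learnability. We expect this coupling and averaging step to be the main obstacle: we must handle the fact that the sample distribution for $h^{(x_j)}$ differs slightly from the one for $g$, which we would control via total variation of order $m/N$.

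\textbf{Item~3.} We ignore queries and output the closure $\bigcap_{h\in\V(S)}\supp(h)$ of the sample. This output has no false positives, because $\hstar\in\V(S)$, and it contains $S$. The output lies in the intersection closure $\overline{\hyH}$, which by \cref{thm:oneStarDimension} can be chosen with $\vc(\overline{\hyH})=\oneStar(\hyH)$. Since the closure is a consistent hypothesis from a class of finite VC dimension containing $\hstar$'s support region, we apply the standard $\eps$-net bound for consistent learners to the symmetric-difference class $\{C\triangle\supp(\hstar):C\in\overline{\hyH}\}$. This gives sample complexity $O\bigl((\oneStar(\hyH)\log\nfrac1\eps+\log\nfrac1\delta)/\eps\bigr)$. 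The infinite intersection is still a set in the relevant class, because the version-space intersection equals a finite subintersection on each finite shattering test, which suffices for the VC bound.
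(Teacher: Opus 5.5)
Your proof is correct. Items~1 and~3 essentially follow the paper. In Item~1 the concern about queries outside $X$ is moot: the finite-domain proof already lets the learner query anywhere, and both the $2^M$ leaf count and the per-leaf success count use only the response bits and the labels on $X$. In Item~3, your observation that the possibly infinite intersection over $\V(S)$ has, on every finite set, the same trace as a finite subintersection correctly fills in a detail the paper delegates to the standard closure-learner theorem.

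Item~2 takes a genuinely different route. The paper never uses the center. It places a uniform prior on the singleton-complement targets $h^{(x_i)}$, with $\cD_i$ uniform on $X_n\setminus\{x_i\}$, and conditions on the sample. The all-ones branch can detect at most $M$ candidates, and all remaining candidates receive one common output, which can be correct for at most $\eps(n-1)+1$ of them. This gives failure probability above $\nfrac12$ for all $\eps,\delta<\nfrac12$, and it needs only the singleton complements to lie in $\hyH$.

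You instead use $g\equiv 1\in\hyH$ as an auxiliary target. Correctness on $(g,\cD)$ forces the all-ones output to cover a $(1-\eps)$-fraction of the points. You then transfer this to a false positive for $h^{(x_J)}$, with $J$ uniform and fixed before the run. This is a clean indistinguishability argument, and the coupling step is easier than you expect. Since $\cD_J^M$ is exactly $\cD^M$ conditioned on avoiding $x_J$, no total-variation estimate is needed, and averaging gives failure probability at least $(1-\delta)(1-\eps)-\nfrac{2M}{N}$.

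The price is a weaker constant. This bound exceeds $\delta$ only for suitably small parameters (for example $\eps=\delta=\nfrac14$), so you should fix such values explicitly rather than relying on $\eps<\nfrac12$ alone. Your route also genuinely requires the center to lie in $\hyH$, which holds here by the definition of a $1$-star.
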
 

\begin{proof}[Proof sketch of Part I of \cref{thm:necessary-sufficient}]
    This is an immediate corollary of the hard-prior argument as in \cref{thm:valiant:VCLowerBound}.
\end{proof} 
\vspace{-7mm}
\begin{remark}
    This, in particular, implies that finite VC dimension ($\vc\!\inparen{\hyH}<\infty$) is necessary for learning $\hyH$.
    This special result also follows from Theorem~3 of \citet{kulkarni1993active}, who show that infinite VC dimension rules out learnability even in a stronger model where the learner knows the feature distribution exactly and may ask arbitrarily many binary queries.
    For the quantitative bound above, we required the new proof presented above.
\end{remark}

\vspace{-5mm}

\begin{proof}[Proof of Part II of \cref{thm:necessary-sufficient}] 
This is the same conditioning-on-the-sample/counting argument as in \cref{thm:valiant:VCLowerBound}, except that the hidden object is now a single index rather than a $k$-subset.
Because $\cX$ is a $1$-star for $\hyH$, for every $x\in \cX$, there exists a concept
\[
    h^{(x)}(y)=\one\!\inbrace{y\neq x}
    \qquad\text{for every }y\in \cX.
\]
Fix $\eps,\delta\in\inparen{0,\nfrac12}$ and suppose, toward a contradiction, that $\hyH$ is learnable with sample/query complexity $M\coloneqq m\!\inparen{\eps,\delta}.$
Choose a large enough finite set $X_n=\inbrace{x_1,\dots,x_n}\subseteq \cX$ such that
\[
    \frac{M+\eps(n-1)+1}{n-M}<\frac12.
    \yesnum\label{eq:valiant:necessary-sufficient:fullStar:nChoice}
\]
This is possible since $\eps<\nfrac12$ and $\cX$ is infinite.
For each $i\in\insquare{n}$, let $\cD_i$ be the uniform distribution on $X_n\setminus\inbrace{x_i}$, so $\cD_i$ is compatible with $h^{(x_i)}$.
Choose $I$ uniformly from $\insquare{n}$ and use the pair $\inparen{h^{(x_I)},\cD_I}$
as the hard prior.
By Yao's minimax principle, it suffices to analyze deterministic learners with budget $M$ for each oracle.

Fix such a deterministic learner $\cL$.
Padding with ignored queries if necessary, we may assume that $\cL$ makes exactly $M$ example-oracle calls and exactly $M$ membership queries, and that all membership queries are asked after the samples are seen.
Let
    $P=\inparen{z_1,\dots,z_M}\in X_n^M$
be the ordered sample sequence, and let
    $\overline{P}\coloneqq \inbrace{z_1,\dots,z_M}$
be the set of distinct sampled points.

Now fix any sample sequence $P_0\in X_n^M$, and write
$ r\coloneqq \abs{\overline{P_0}}.$
For each $i\in\insquare{n}$,
\[
    \Pr\inparen{P=P_0\mid I=i}
    =
    (n-1)^{-M}\cdot \one\!\inbrace{x_i\notin \overline{P_0}}.
\]
Since the prior on $I$ is uniform, conditioning on $P=P_0$ makes $I$ uniform over
    $\inbrace{i\in\insquare{n}: x_i\notin \overline{P_0}},$
so there are exactly $n-r\geq n-M$ candidate hidden indices after the sample.

Ignoring queries outside $X_n$ (which always return $1$), the learner's behavior before it hits the hidden negative point is completely determined by the all-ones transcript.
Let $Q^{\uparrow}(P_0)\subseteq X_n$
be the set of points of $X_n$ queried along this branch.
Since $\abs{Q^{\uparrow}(P_0)}\leq M$, at most $M$ candidate indices can ever be identified by a negative answer, namely those with
\[
    x_i\in Q^{\uparrow}(P_0)\setminus \overline{P_0}.
\]
For every remaining candidate index $i$ with
$x_i\notin \overline{P_0}\cup Q^{\uparrow}(P_0),$
all membership answers are $1$, so the learner reaches the same all-ones leaf and outputs the same positive region
$H_{P_0}^{\uparrow}\subseteq \cX,$
independent of $i$.
Success on target $h^{(x_i)}$ then requires
\[
    x_i\notin H_{P_0}^{\uparrow}
    \qquad\text{and}\qquad
    \abs{
        \sinbrace{x_j\in X_n\setminus\inbrace{x_i}: x_j\notin H_{P_0}^{\uparrow}}
    }
    \leq
    \eps(n-1).
\]
Writing $T_{P_0}\coloneqq X_n\setminus H_{P_0}^{\uparrow},$
this says that
\[
    x_i\in T_{P_0}
    \qquad\text{and}\qquad
    \abs{T_{P_0}}\leq \eps(n-1)+1.
\]
Hence among the no-hit candidates, the learner can succeed on at most $\eps(n-1)+1$ indices.

Combining the hit and no-hit cases, conditioned on $P=P_0$ the learner can succeed on at most $M+\eps(n-1)+1$
of the $n-r$ candidate hidden indices.
Therefore
\[
    \Pr\inparen{\text{$\cL$ succeeds}\mid P=P_0}
    \leq
    \frac{M+\eps(n-1)+1}{n-r}
    \leq
    \frac{M+\eps(n-1)+1}{n-M}
    <
    \frac12,
\]
where the last inequality is exactly \eqref{eq:valiant:necessary-sufficient:fullStar:nChoice}.
Averaging over $P$, we obtain
    $\Pr\inparen{\text{$\cL$ succeeds}}<\frac12.$
Thus every deterministic learner with budget $M$ for each oracle fails with probability greater than $\nfrac12$ under the hard prior.
By Yao's minimax principle, the same holds for randomized learners, contradicting learnability at confidence $1-\delta>\nfrac12$.
Hence $\hyH$ is not learnable in \cref{def:valiant:general}.
\end{proof}

\vspace{-5mm}
\begin{proof}[Proof of Part III of \cref{thm:necessary-sufficient}]
By \cref{thm:oneStarDimension}, there exists a concept class
$\overline{\hyH}\supseteq \hyH$
that is closed under intersections and satisfies
   $ \vc\!\inparen{\overline{\hyH}}=\oneStar(\hyH).$
The claim now follows from the standard closure-learner theorem for intersection-closed classes, applied to $\overline{\hyH}$:
using no membership queries and
    $O\!\inparen{
        (\nfrac{1}{\eps})\cdot \inparen{\vc\inparen{\overline{\hyH}}\log\!\nfrac1\eps+\log\!\nfrac1\delta}
    }
$
positive examples, one obtains a hypothesis with no false positives and $\eps$-false negatives.
Since every target $h^\star\in \hyH$ also belongs to $\overline{\hyH}$, the same learner is valid for $\hyH$.
Substituting $\vc\!\inparen{\overline{\hyH}}=\oneStar(\hyH)$ gives the stated bound.
\end{proof}

\subsection{Proof That Conditions in \cref{thm:intro:necessary-sufficient} Are Not Tight for Learnability}
\label{sec:proofof:thm:nonTightness} 
In this section, we prove the following result, which shows that the conditions in \cref{thm:intro:necessary-sufficient} are not tight.
\begin{restatable}[]{theorem}{theoremNonTightness}
    \label{thm:nonTightness}
    The following hold in Valiant's model over a general domain (\cref{def:valiant:general}).
    \begin{enumerate}[itemsep=0pt,leftmargin=14pt]
        \item \textbf{(Finite VC dimension is not sufficient)}~~
        There exists a concept class $\hyH$ with $\vc\!\inparen{\hyH}<\infty$ that is \underline{not} learnable in Valiant's model.
        \item \textbf{(Finite $1$-star number is not necessary)}~~
        There exists a concept class $\hyH$ with $\oneStar\!\inparen{\hyH}=\infty$ that is learnable in Valiant's model.
        \item \textbf{(The whole-domain $1$-star condition is not necessary)}~~
        There exists a concept class $\hyH$ that is not learnable in Valiant's model, even though the whole domain is not a $1$-star for $\hyH$.
    \end{enumerate}
\end{restatable}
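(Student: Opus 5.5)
The plan is to prove the three items by three explicit constructions, each reusing an argument already in the paper.

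For Item~1 (finite VC dimension is not sufficient), I will take an infinite domain $\cX$ and the class $\hyH\coloneqq\inbrace{\mathbf 1}\cup\inbrace{h^{(x)}: x\in\cX}$ with $h^{(x)}(y)=\one\!\inbrace{y\neq x}$. This is the infinite-domain version of the example in the final sentence of Item~2 of \cref{cor:valiant:basicConsequences:old}. Its VC dimension is exactly $1$, because every hypothesis is negative on at most one point. The whole domain $\cX$ is a $1$-star centered at $\mathbf 1$, with witnesses $h^{(x)}$. Part~II of \cref{thm:necessary-sufficient} then gives non-learnability directly.

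For Item~2 (finite $1$-star number is not necessary), I will lift the construction from Item~3 of \cref{cor:valiant:basicConsequences:old} to the domain $\cX=\N\times\zo$. For $i\in\N$, define $h_i(x,0)=\one\!\inbrace{x\neq i}$ and $h_i(x,1)=\one\!\inbrace{x\geq i}$.
\begin{itemize}
\item \emph{Infinite $1$-star number.} For every $k$, the set $\inbrace{(1,0),\dots,(k,0)}$ is a $1$-star: it is centered at $h_{k+1}$, with witnesses $h_1,\dots,h_k$. Hence $\oneStar(\hyH)=\infty$.
\item \emph{The learner.} It follows the finite case. Draw $O\!\inparen{(\log\nfrac1\eps+\log\nfrac1\delta)/\eps}$ samples, so that by the $\eps$-net theorem for intervals of $\N$ (VC dimension $2$) every interval of marginal mass at least $\eps$ contains a sampled first coordinate. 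Binary search with queries on the slice $\N\times\inbrace{1}$ over the sorted sampled coordinates, using sentinels $0$ and $+\infty$, to locate $i^\star\in(L,R]$. If $R<\infty$, query $(R,0)$; if the answer is $0$, output $h_R$.
\item \emph{The fallback output.} Otherwise output $\wh h(x,0)=\one\!\inbrace{x\notin(L,R)}$ and $\wh h(x,1)=\one\!\inbrace{x\geq R}$, with the second set empty when $R=\infty$.
\end{itemize}
I will then verify two things, as in the finite case. First, $\wh h$ has no false positives: on the $0$-slice the only target negative lies in $(L,R)$, and on the $1$-slice we have $i^\star\leq R$. Second, all false negatives have first coordinate in the sample-free interval $(L,R)$, which has mass less than $\eps$. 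The query count is $O(\log(m+1))$, which is finite. The one new point to check is the unbounded right sentinel, and it causes no difficulty because intervals of $\N$ still have finite VC dimension.

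For Item~3 (the whole-domain $1$-star condition is not necessary), I will take the class from Item~1 on an infinite set $X$ and adjoin one extra domain point $z$. Set $\cX=X\cup\inbrace{z}$ and extend every hypothesis by $h(z)=0$.
\begin{itemize}
\item \emph{$\cX$ is not a $1$-star.} A $1$-star requires a center in $\hyH$ labeling every star point $1$, but no hypothesis labels $z$ positive. Hence $\cX$ itself is not a $1$-star.
\item \emph{Non-learnability by reduction.} Suppose a learner $\cL$ succeeded for this class. From it I will build a learner for the restricted class $\hyH|_X$ over $X$. Every compatible distribution is supported on $X$, so samples pass through unchanged. Queries at $z$ are answered $0$ by simulation, which is consistent with every hypothesis. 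The output is restricted to $X$.
\item \emph{Preservation of guarantees.} Both the one-sided guarantee and the coverage guarantee carry over under this reduction. This contradicts Item~1 applied to $\hyH|_X$.
\end{itemize}
The main point needing care is exactly this transfer of guarantees through the reduction. It is routine, since $z$ carries no information and no distribution mass. Alternatively, one can rerun the counting argument of Part~II verbatim, noting that queries at $z$ always return $0$ and so reveal nothing about the hidden index.
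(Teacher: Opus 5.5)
Your Items~1 and~2 are the paper's own constructions. The paper uses $\Z$ and $\Z\times\zo$ with sentinels $\pm\infty$; your $\N\times\zo$ version runs the same way provided $\N=\inbrace{1,2,\dots}$. If $0\in\N$, the left sentinel $0$ yields a false positive at $(0,0)$ when $i^\star=0$, so the left sentinel must lie below $\min\N$.

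Item~3 is where you take a genuinely different route. The paper uses the class $\hyH_{\mathrm{fin}}$ of indicators of finite subsets of $\N$. It shatters every $\inbrace{1,\dots,n}$, so it is non-learnable in one line by Part~1 of \cref{thm:necessary-sufficient}. Also, $\N$ is not a $1$-star because no hypothesis has infinite support. You instead pad the Item~1 class with an always-negative point $z$ and transfer non-learnability by simulation. This reduction is sound for three reasons:
\begin{itemize}
\item every compatible distribution is supported on $X$;
\item answering queries at $z$ with $0$ reproduces the padded target's membership oracle exactly;
\item restricting the output to $X$ preserves both guarantees, since $\cD(\inbrace{z})=0$ and $\supp(\wh h'|_X)\subseteq \supp(h')\cap X$.
\end{itemize}

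The trade-off is as follows. The paper's witness is non-learnable only because $\vc(\hyH_{\mathrm{fin}})=\infty$, so it really shows that the whole-domain $1$-star condition does not subsume the VC condition of Part~1. Your witness has $\vc=1$, so it escapes both Part~1 and Part~2 of \cref{thm:necessary-sufficient} at once. That is a slightly sharper separation, paid for with a short reduction instead of a one-line citation. On the other hand, your example is the Item~1 class in disguise, since the extra point is inert. It therefore separates the literal whole-domain condition, but not any version of that condition that is invariant under adding dummy points.
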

  
\begin{proof}[Proof of Part I of \cref{thm:nonTightness}]
The first witness is the simplest possible infinite $1$-star.
Let $\cX=\Z$, $h^\circ\equiv 1$, and, for each $i\in \Z$, $h_i(x)\coloneqq \one\!\inbrace{x\neq i}.$
Define
\[
    \hyH_{\mathrm{sing}}
    \coloneqq
    \inbrace{h^\circ}\cup \inbrace{h_i : i\in \Z}.
\]
We first verify that $\vc\!\inparen{\hyH_{\mathrm{sing}}}=1$.
To see this, note that: 
(1) Any singleton $\inbrace{a}$ is shattered, since $h^\circ(a)=1$ and $h_a(a)=0$, so $\vc\!\inparen{\hyH_{\mathrm{sing}}}\geq 1$.
(2) Further, no two-point set is shattered, as every hypothesis in $\hyH_{\mathrm{sing}}$ labels at most one point by $0$.
Next, by construction, the domain $\Z$ is a $1$-star for $\hyH_{\mathrm{sing}}$ centered at $h^\circ$.
Therefore Part II of \cref{thm:necessary-sufficient} implies that $\hyH_{\mathrm{sing}}$ is not learnable.

\end{proof}
  
\vspace{-5mm}

\begin{proof}[Proof of Part II of \cref{thm:nonTightness}]
Let $\cX\coloneqq \Z\times \zo$, and for each $i\in \Z$ define $h_i\colon \cX\to\zo$ by
\[
    h_i(x,0)\coloneqq \one\!\inbrace{x\neq i}
    \qquadand\qquad
    h_i(x,1)\coloneqq \one\!\inbrace{x\geq i}.
\]
Set $\hyH_{\mathrm{pre}}\coloneqq \inbrace{h_i : i\in \Z}$.
Intuitively, on the slice $\Z\times\inbrace{0}$ each hypothesis acts as the complement of a singleton, and on the slice $\Z\times\inbrace{1}$ it acts as a threshold.

We first verify that $\oneStar(\hyH_{\mathrm{pre}})=\infty$. Fix any $k\in \N$ and let $S_k\coloneqq \inbrace{(1,0),\dots,(k,0)}$. Then $h_{k+1}$ labels every point of $S_k$ by $1$, while for each $t\in\insquare{k}$ the hypothesis $h_t$ agrees with $h_{k+1}$ on $S_k\setminus \inbrace{(t,0)}$ and labels $(t,0)$ by $0$. Thus $S_k$ is a $1$-star for $\hyH_{\mathrm{pre}}$, and since $k$ is arbitrary, $\oneStar(\hyH_{\mathrm{pre}})=\infty$.

We now show that $\hyH_{\mathrm{pre}}$ is learnable. Fix a target $h_{i^\star}\in \hyH_{\mathrm{pre}}$ and a distribution $\cD$ compatible with $h_{i^\star}$, and let $\cD_0$ denote the marginal of $\cD$ on the first coordinate. Given $\eps,\delta\in(0,1)$, the learner draws
\[
    m = O\!\inparen{\frac{\log\!\nfrac1\eps+\log\!\nfrac1\delta}{\eps}}
\]
independent examples $(x_1,y_1),\dots,(x_m,y_m)\sim \cD$, lets $Z\coloneqq \inbrace{x_1,\dots,x_m}\subseteq \Z$, writes the distinct values of $Z$ as $z_{(1)}<\cdots<z_{(r)}$, and adds sentinels $z_{(0)}\coloneqq -\infty$ and $z_{(r+1)}\coloneqq \infty$.

By the $\eps$-net theorem for intervals (which have VC dimension $2$), with probability at least $1-\delta$ every interval $J\subseteq \R$ with $\cD_0(J)\geq \eps$ contains at least one sampled coordinate. Fix a realization for which this holds. The learner then uses membership queries on the slice $\Z\times\inbrace{1}$. Since $h_{i^\star}(x,1)=\one\!\inbrace{x\geq i^\star}$, a binary search over $z_{(1)},\dots,z_{(r)}$ finds the smallest index $t\in\inbrace{1,\dots,r+1}$ such that $h_{i^\star}\sinparen{z_{(t)},1}=1$ (with the convention $t=r+1$ if all answers are $0$). Setting $L\coloneqq z_{(t-1)}$ and $R\coloneqq z_{(t)}$, we have $i^\star\in (L,R]$.

If $R<\infty$, the learner asks one additional membership query at $(R,0)$. If the answer is $0$, then $R=i^\star$, so it outputs the exact target hypothesis $h_R$. Otherwise, or if $R=\infty$, let $J\coloneqq (L,R)\cap \Z$ (with the convention $(L,\infty)\cap \Z$ when $R=\infty$), and output the hypothesis $\wh h$ defined by
\[
    \wh h(x,0)=\one\!\inbrace{x\notin J}
    \qquadand
    \wh h(x,1)=\one\!\inbrace{x\geq R},
\]
where $\one\!\inbrace{x\geq \infty}=0$.

We now verify correctness. If $R=i^\star$, then $\wh h=h_{i^\star}$ and there is nothing to prove. Suppose we are in the remaining case. First, $\wh h$ has no false positives. Indeed, since $i^\star\in (L,R]$ and we are not in the exact case, we actually have $i^\star\in J=(L,R)\cap \Z$. Therefore, if $\wh h(x,0)=1$, then $x\notin J$, so $x\neq i^\star$ and hence $h_{i^\star}(x,0)=1$. Also, if $\wh h(x,1)=1$, then necessarily $R<\infty$ and $x\geq R>i^\star$, so again $h_{i^\star}(x,1)=1$.

Next, every false negative of $\wh h$ has first coordinate in $J$. For points of the form $(x,0)$, this is immediate from the definition of $\wh h$. For points of the form $(x,1)$, if $\wh h(x,1)=0$ while $h_{i^\star}(x,1)=1$, then $i^\star\leq x<R$, and since $i^\star\in J$, this implies $x\in J$. Thus the total false-negative probability is at most $\cD_0(J)$.

Finally, the interval $J$ contains no sampled coordinate by construction, so the $\eps$-net property gives $\cD_0(J)<\eps$, and the learner succeeds with probability at least $1-\delta$. The total cost is $O\!\inparen{(\nfrac{1}{\eps})\cdot \inparen{\log\!\nfrac1\eps+\log\!\nfrac1\delta}}$ example-oracle calls and $O\!\inparen{\log(m+1)}$ membership queries.
Therefore $\hyH_{\mathrm{pre}}$ is learnable in Valiant's model and satisfies $\oneStar(\hyH_{\mathrm{pre}})=\infty$.
\end{proof}  
\vspace{-5mm}

\begin{proof}[Proof of Part III of \cref{thm:nonTightness}]
For each finite set $A\subseteq \N$, let $h_A(x)\coloneqq \one\!\inbrace{x\in A}$, and set
\[
    \hyH_{\mathrm{fin}}
    \coloneqq
    \inbrace{
        h_A : A\subseteq \N \text{ finite}
    }.
\]
We first show that $\vc\!\inparen{\hyH_{\mathrm{fin}}}=\infty$. Fix any $n\in\N$ and let $S_n\coloneqq \inbrace{1,\dots,n}$. For every subset $T\subseteq S_n$, the set $T$ is finite, so $h_T\in \hyH_{\mathrm{fin}}$ and $h_T$ realizes exactly that labeling on $S_n$. Thus $S_n$ is shattered, and since $n$ is arbitrary, $\vc\!\inparen{\hyH_{\mathrm{fin}}}=\infty$. By Part~(1) of \cref{thm:necessary-sufficient}, $\hyH_{\mathrm{fin}}$ is not learnable.

It remains to show that the whole domain $\N$ is not a $1$-star for $\hyH_{\mathrm{fin}}$. If $\N$ were a $1$-star, then in particular there would exist a center hypothesis $h^\circ\in \hyH_{\mathrm{fin}}$ such that $h^\circ(x)\equiv 1$ (for each $x\in \N$) and, further, for each $z$, there would need to be a hypothesis $h_z\in \hyH_{\mathrm{fin}}$ such that $h_z(x)=\mathds{1}\sinbrace{x\neq z}$ (for each $x\in \N$). But every hypothesis in $\hyH_{\mathrm{fin}}$ has finite support, so neither $h^\circ$ nor $h_z$ (for any $z\in \N$) exists. Hence, $\N$ is not a $1$-star for $\hyH_{\mathrm{fin}}$.
\end{proof} 

\subsection{Proof of \cref{thm:halfspace} (Halfspaces Are Learnable on \texorpdfstring{$\R^d$}{R^d})} 
    \label{sec:proofof:thm:halfspace}
 In this section, we prove \cref{thm:halfspace}, which we restate below.
\thmhalfspace*
\begin{remark}[Comparison with the Boolean Hypercube]
    It is useful to contrast \cref{thm:halfspace} with the Boolean setting.
    For each $d\in\N$, let $\hyH^{\mathrm{cube}}_d$ denote the restriction of halfspaces in $\R^d$ to the Boolean cube $\zo^d$.
    Perhaps surprisingly, the sequence $\inparen{\hyH^{\mathrm{cube}}_1,\hyH^{\mathrm{cube}}_2,\dots}$ is not learnable in Valiant's model, even though the unrestricted class over $\R^d$ is learnable by \cref{thm:halfspace} (see \cref{thm:booleanHalfspace}).
    The key difference is that over $\R^d$, the learner may query any point in the ambient space, whereas on the Boolean cube the learner is confined to queries inside $\zo^d$.
    This additional geometric freedom is exactly what makes \cref{thm:halfspace} possible.
\end{remark}

\paragraph{Setup.}
To prove \cref{thm:halfspace}, fix $\eps,\delta\in(0,1)$, a target halfspace
\[
    \Hstar=\inbrace{x\in \R^d : \langle n^\star,x\rangle \geq c^\star},
\]
where $n^\star\neq 0$ is a normal vector of $\Hstar$, and fix a distribution $\cD$ compatible with $\Hstar$ (\cref{def:compatible}).
Let $m$ be the sample size as specified later, and draw $m$ independent examples $x_1,\dots,x_m\sim \cD.$
Define 
\[
    P\coloneqq \inbrace{x_1,\dots,x_m},
    \qquad
    C\coloneqq \conv(P),
    \qquadand 
    A\coloneqq \aff(C)\,.
\]
Here, $C$ is the convex hull of the observed positive sample and $A$ is the smallest affine subspace containing $C$.
Let $L$ be the linear subspace of directions parallel to \mbox{$A$, and let $r$ denote its dimension:}
\[
    L\coloneqq \inbrace{x-y : x,y\in A}
    \quadand
    r\coloneqq \dim(L)\,.
\]

\subsection*{Facets, Cones, and Simplexes}
Next, we introduce facets of $C$ and define certain cones and simplexes derived from them, which the learning algorithm will use.
In the degenerate case where $r=0$ and, hence, $C$ is a single sampled point, we do not need these additional definitions; the learner will simply return $C$.
Hence, for the remainder of this part, assume $r\geq 1$.

Consider a vertex $v$ of $C$, and let $\cF(v)$ denote the set of facets of $C$ (within $A$) that are incident to $v$.
For each $F\in \cF(v)$, let $n_F\in L$ be the outward unit normal of $F$.
The supporting halfspace of $F$ is $H_F
    \coloneqq
    \inbrace{x\in A : \langle n_F,x-v\rangle\leq 0}$, which contains $C$ by construction.
\mbox{We first recall a standard fact:}
\vspace{-7mm}
\begin{lemma}[Incident Normals Span $L$]
\label{lem:halfspace:incident-span}
The outward normals of the facets in $\cF(v)$ span $L$.
\end{lemma}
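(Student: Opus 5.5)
We prove the lemma by contradiction, using the fact that a full-dimensional polytope in $A$ lies on the nonpositive side of every one of its facet inequalities, and that near a vertex it is locally cut out by exactly the facets incident to that vertex.

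Work inside the affine hull $A$, where $C$ is a full-dimensional polytope of dimension $r\geq 1$, and let $v$ be a vertex of $C$. Suppose the normals $\inbrace{n_F : F\in\cF(v)}$ span a proper subspace $L'\subsetneq L$. Then there is a nonzero $u\in L$ orthogonal to $L'$, so $\langle n_F,u\rangle=0$ for every $F\in\cF(v)$.

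We use the standard H-representation of a full-dimensional polytope:
\[
    C=\bigcap_{F}\inbrace{x\in A:\langle n_F,x-x_F\rangle\le 0},
\]
where $F$ ranges over all facets and $x_F\in F$. For facets $F\notin\cF(v)$ we have $v\notin F$, so $\langle n_F,v-x_F\rangle<0$ strictly. By finiteness of the facet set and continuity, there is $t>0$ such that $v\pm tu$ satisfies all the non-incident inequalities strictly.

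For the incident facets $F\in\cF(v)$, we have
\[
    \langle n_F,(v\pm tu)-v\rangle=\pm t\langle n_F,u\rangle=0,
\]
so these inequalities also hold. Hence both $v+tu$ and $v-tu$ lie in $C$, and
\[
    v=\tfrac12(v+tu)+\tfrac12(v-tu)
\]
with $u\neq 0$. This exhibits $v$ as a proper convex combination of two distinct points of $C$, contradicting that $v$ is a vertex (an extreme point). Therefore the incident normals span $L$.

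The only nontrivial ingredient is the facet description of a full-dimensional polytope: that $C$ equals the intersection of its facet halfspaces within $A$, and that a vertex lies strictly inside every non-incident facet halfspace. This is classical (Minkowski–Weyl together with the facet characterization), so we cite it rather than reprove it. It is the step a careful reader would check. Two points need attention there:

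\begin{itemize}
    \item The normals are taken in $L$, since we work in $A$ rather than in $\R^d$.
    \item The case $r=1$ is trivially consistent: $C$ is a segment, its facets are its endpoints, and the single incident normal spans the line $L$.
\end{itemize}
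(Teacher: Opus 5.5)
Your proof is correct and follows essentially the same route as the paper. You take a nonzero $u\in L$ orthogonal to all incident normals, observe that the incident facet inequalities stay tight along $v\pm tu$ while the non-incident ones are strict at $v$ and so survive for small $t$, and conclude that $v$ is the midpoint of two distinct points of $C$, contradicting that it is a vertex. Your explicit appeal to the facet description of $C$ within $A$ simply makes precise what the paper uses implicitly.
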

For completeness, we prove \cref{lem:halfspace:incident-span} at the end of this section.
Next, consider any set of $r$ facets with linearly independent normals:
\[
    J=\inbrace{F_1,\dots,F_r}\subseteq \cF(v)
\]
For each such choice, $W_{v,J}$ denotes the cone cut out by these facets:
\[
    W_{v,J}
    \coloneqq
    \bigcap_{F\in J} H_{F}
    =
    \inbrace{x\in A : \langle n_{F},x-v\rangle\leq 0 \text{ for all } F\in J}.
\]
Since each $H_F$ contains $C$, we have $W_{v,J}\supseteq C$.
Next, define
\[
    a_{v,J}\coloneqq -\sum_{F\in J} n_{F}
    \qquadand
    T_{v,J}\coloneqq \max_{x\in C}\langle a_{v,J},x-v\rangle.
    \yesnum\label{eq:halfspace:definitionOfT}
\]
The vector $a_{v,J}$ aggregates the inward-pointing normals, and $T_{v,J}$ measures the extent of $C$ from $v$ in direction $a_{v,J}$.
Next, we use $a_{v,J}$ and $T_{v,J}$ to ``close off'' the cone $W_{v,J}$: for each $v$ and $J$, define
\[
    S_{v,J}
    \coloneqq
    \inbrace{x\in W_{v,J} : \langle a_{v,J},x-v\rangle\leq T_{v,J}}.
\]
The learning algorithm outputs $S_{v,J}$ for a suitable choice of $v$ and $J$.
We now establish the key properties of these sets.
First, every candidate contains the sample hull:
\begin{lemma}
\label{lem:halfspace:contains-hull}
For every vertex $v$ of $C$ and size-$r$ subset of facets $J\subseteq \cF(v)$ with linearly independent normals, $C\subseteq S_{v,J}.$
\end{lemma}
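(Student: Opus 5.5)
The plan is to unfold the definition of $S_{v,J}$ into its two constraints and check each for an arbitrary point $x\in C$. By definition,
\[
    S_{v,J}=\inbrace{x\in A : \langle n_F,x-v\rangle\leq 0 \text{ for all } F\in J,\ \ \langle a_{v,J},x-v\rangle\leq T_{v,J}}.
\]
So it suffices to show three things for every $x\in C$: (i) $x\in A$, (ii) $x\in H_F$ for each $F\in J$, and (iii) $\langle a_{v,J},x-v\rangle\leq T_{v,J}$.

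Step (i) is immediate, since $A=\aff(C)\supseteq C$.

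For step (ii), fix $F\in J\subseteq\cF(v)$. Because $F$ is a facet of the polytope $C$ (taken within $A$), its supporting halfspace contains $C$. Moreover $v\in F$, since $F$ is incident to $v$, so the supporting hyperplane of $F$ passes through $v$. Hence the supporting halfspace can be written as $\inbrace{x\in A:\langle n_F,x-v\rangle\leq 0}=H_F$, using the outward normal $n_F$. This gives $C\subseteq H_F$ for every $F\in J$, and therefore $C\subseteq W_{v,J}$. The paper already notes this in the text.

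For step (iii), I use the definition \eqref{eq:halfspace:definitionOfT}, namely $T_{v,J}=\max_{y\in C}\langle a_{v,J},y-v\rangle$. This maximum exists because $C$ is a compact polytope, being the convex hull of finitely many samples. Thus every $x\in C$ satisfies $\langle a_{v,J},x-v\rangle\leq T_{v,J}$ trivially.

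Combining (i)–(iii) gives $C\subseteq S_{v,J}$. The only point needing care is (ii): one must confirm that the outward normal $n_F$ is oriented so that the facet inequality reads $\langle n_F,x-v\rangle\leq 0$. This follows from the convention that $n_F$ is outward and $v\in F$. Linear independence of the normals in $J$ plays no role here; it matters only for boundedness of $S_{v,J}$ later.
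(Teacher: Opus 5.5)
Your proof is correct and follows the paper's argument: the paper also notes that $C\subseteq W_{v,J}$, since each supporting halfspace $H_F$ of a facet through $v$ contains $C$, and that $\langle a_{v,J},x-v\rangle\leq T_{v,J}$ holds for every $x\in C$ by the definition of $T_{v,J}$ as a maximum over $C$. Your extra checks are details the paper leaves implicit: that $C\subseteq A$, that the maximum is attained because $C$ is compact, and that the outward $n_F$ is oriented correctly.
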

\vspace{-5mm}
\begin{proof}
We already saw $C\subseteq W_{v,J}$ and, by $T_{v,J}$'s definition, each $x{\in} C$ satisfies $\langle a_{v,J},x-v\rangle\leq T_{v,J}.$
\end{proof}
Next, we show that there always exists a choice of $v^\star$ and $J^\star$ for which the candidate simplex lies inside the target halfspace $\Hstar$.
\begin{lemma}[Safe Candidate Exists]
\label{lem:halfspace:safe-candidate}
    There exist a vertex $v^\star$ of $C$ and a size-$r$ subset $J^\star\subseteq \cF(v^\star)$ with linearly independent normals such that $S_{v^\star,J^\star}\subseteq \Hstar.$
\end{lemma}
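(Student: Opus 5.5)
The plan is to show something slightly stronger: for a suitable vertex $v^\star$ and suitable $J^\star$, already the cone $W_{v^\star,J^\star}$ lies in $\Hstar$. Since $S_{v^\star,J^\star}\subseteq W_{v^\star,J^\star}$ by definition, this gives the lemma, and the closing-off constraint involving $a_{v,J}$ and $T_{v,J}$ plays no role.

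\emph{Step 1: reduce to $A$.} Let $n^\star_L$ be the orthogonal projection of $n^\star$ onto $L$. For $x,y\in A$ we have $x-y\in L$, so $\langle n^\star,x-y\rangle=\langle n^\star_L,x-y\rangle$.

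\emph{Degenerate case: $n^\star_L=0$.} Then $\langle n^\star,\cdot\rangle$ is constant on $A$. Since $C\subseteq\Hstar$ (every sample point is positive for $\Hstar$, and $\Hstar$ is convex), this constant is at least $c^\star$. Hence $A\subseteq\Hstar$. Every candidate lies in $A$, so any vertex $v$ together with any independent $J$ works; such a $J$ exists by \cref{lem:halfspace:incident-span}.

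\emph{Step 2: choose the vertex.} Now assume $n^\star_L\neq 0$. Let $v^\star$ be a vertex of $C$ minimizing $\langle n^\star,x\rangle$ over $C$. A linear function attains its minimum over a polytope at a vertex, so such a $v^\star$ exists. Then the direction $-n^\star_L\in L$ lies in the normal cone of $C$ at $v^\star$, taken relative to $A$: indeed, $\langle -n^\star_L,x-v^\star\rangle\le 0$ for all $x\in C$.

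\emph{Step 3: decompose $-n^\star_L$ along facet normals.} I would invoke the standard polyhedral fact that, for a polytope that is full-dimensional in $A$, the normal cone at a vertex is the conic hull of the outward normals of the incident facets. This follows from the facet (H-)description of $C$ within $A$ and Farkas' lemma. It gives
\[
  -n^\star_L=\sum_{F\in\cF(v^\star)}\lambda_F n_F,\qquad \lambda_F\ge 0 .
\]
By the conic Carath\'eodory theorem, I can take the support of this combination to be a linearly independent set of normals. By \cref{lem:halfspace:incident-span}, the normals in $\cF(v^\star)$ span $L$, so I can extend this set to $r$ linearly independent normals, assigning coefficient $0$ to the added ones. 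Call the resulting set of $r$ facets $J^\star$.

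\emph{Step 4: conclude.} Take any $x\in W_{v^\star,J^\star}$. For every $F\in J^\star$ we have $\langle n_F,x-v^\star\rangle\le 0$, so
\[
  \langle n^\star,x-v^\star\rangle=\langle n^\star_L,x-v^\star\rangle=-\sum_{F\in J^\star}\lambda_F\langle n_F,x-v^\star\rangle\ge 0 .
\]
Hence $\langle n^\star,x\rangle\ge\langle n^\star,v^\star\rangle\ge c^\star$, where the last inequality uses $v^\star\in C\subseteq\Hstar$. Therefore
\[
  S_{v^\star,J^\star}\subseteq W_{v^\star,J^\star}\subseteq\Hstar .
\]

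The main obstacle I expect is Step 3: justifying rigorously that the relative normal cone at a vertex is generated by the incident facet normals. I would do this by writing $C=\{x\in A:\langle n_F,x\rangle\le b_F \text{ for all facets } F\}$. The inequalities active at $v^\star$ are exactly those of the facets in $\cF(v^\star)$. Farkas' lemma applied to the LP $\min_{x\in C}\langle n^\star_L,x\rangle$, whose optimum is attained at $v^\star$, then yields the nonnegative multipliers $\lambda_F$.
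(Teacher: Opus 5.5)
Your proposal is correct and follows the paper's proof essentially step for step: you pick a vertex $v^\star$ minimizing the (projected) functional, write $-\Pi_L(n^\star)$ as a conic combination of incident facet normals, trim it by conic Carath\'eodory, extend to $r$ independent normals via \cref{lem:halfspace:incident-span}, and show $W_{v^\star,J^\star}\subseteq \Hstar$. Your separate treatment of $n^\star_L=0$ is harmless but unnecessary, since the paper's argument covers it with all coefficients zero, and your Farkas sketch simply justifies the normal-cone fact that the paper invokes as standard.
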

\vspace{-5mm} 
\begin{proof}[Proof of \cref{lem:halfspace:safe-candidate}]
Let $\Pi_L(n^\star)$ denote the orthogonal projection of $n^\star$ onto $L$.
The restriction of the affine functional $x\mapsto \langle n^\star,x\rangle$ to $A$ differs from $x\mapsto \langle \Pi_L(n^\star),x\rangle$ by an additive constant, so the latter is minimized on $C$ at some vertex.
Fix a vertex $v^\star$ minimizing $x\mapsto \langle \Pi_L(n^\star),x\rangle$ over $C$.

By the normal-cone characterization of minimizing vertices, the vector $-\Pi_L(n^\star)$ lies in the normal cone of $C$ at $v^\star$.
For a polytope, this normal cone is exactly the conic hull of the outward normals of the facets incident to $v^\star$.
Therefore $-\Pi_L(n^\star)$ is a nonnegative combination of the outward normals of the facets in $\cF(v^\star)$.
By conic Carath{\'e}odory in the $r$-dimensional space $L$, there exists a set $J_0\subseteq \cF(v^\star)$ of size $\abs{J_0}\leq r,$
whose outward normals already generate $-\Pi_L(n^\star)$.
Choose such a representation with minimal support, so that the corresponding outward normals are linearly independent.
Now, using \cref{lem:halfspace:incident-span}, enlarge $J_0$ if necessary to an $r$-element set
$J^\star=\inbrace{F_1,\dots,F_r}\subseteq \cF(v^\star)$
whose outward normals are linearly independent.

Let $x\in W_{v^\star,J^\star}$.
Then for every $F\in J^\star$, $ \langle n_F,x-v^\star\rangle\leq 0.$
Since $-\Pi_L(n^\star)$ is a nonnegative combination of the outward normals in $J_0\subseteq J^\star$, $\langle \Pi_L(n^\star),x-v^\star\rangle\geq 0.$
But $x-v^\star\in L$, so
\[
    \langle n^\star,x-v^\star\rangle
    =
    \langle \Pi_L(n^\star),x-v^\star\rangle
    \geq 0.
\]
Because $v^\star\in C\subseteq \Hstar$, we have $\langle n^\star,v^\star\rangle\geq c^\star$, and therefore
\[
    \langle n^\star,x\rangle
    =
    \langle n^\star,v^\star\rangle+\langle n^\star,x-v^\star\rangle
    \geq c^\star.
\]
Thus $x\in \Hstar$, so $W_{v^\star,J^\star}\subseteq \Hstar.$
Since $S_{v^\star,J^\star}\subseteq W_{v^\star,J^\star}$ by definition, we conclude that $S_{v^\star,J^\star}\subseteq \Hstar.$
\end{proof}
Finally, to bound the false-negative rate, we use that the candidate outputs are all simplexes, a class of bounded VC dimension. The following lemma confirms that each $S_{v,J}$ is indeed a simplex:
\vspace{-2mm}\vspace{-2mm}
\begin{lemma}
\label{lem:halfspace:simplex}
    The set $S_{v,J}$ is an $r$-simplex in $A$. (Recall that an $r$-simplex is a polytope which is the convex hull of its $r + 1$ vertices.)
\end{lemma}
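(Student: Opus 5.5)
I will work in coordinates adapted to $v$ and $J$. Write $J=\inbrace{F_1,\dots,F_r}$ and introduce the affine coordinates $y_i(x)\coloneqq -\langle n_{F_i},x-v\rangle$ for $x\in A$, $i\in\insquare{r}$. Since the normals $n_{F_1},\dots,n_{F_r}$ lie in $L$, are linearly independent, and $\dim L=r$, the map $x\mapsto y(x)=(y_1(x),\dots,y_r(x))$ is an affine bijection from $A$ onto $\R^r$ sending $v$ to $0$. In these coordinates, $W_{v,J}=\inbrace{y\in\R^r: y\geq 0}$ is the nonnegative orthant. Moreover, $\langle a_{v,J},x-v\rangle=\sum_i y_i(x)$ by the definition of $a_{v,J}$ in \eqref{eq:halfspace:definitionOfT}. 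Hence
\[
    S_{v,J}\cong\inbrace{y\in\R^r : y\geq 0,\ \textstyle\sum_i y_i\leq T_{v,J}}.
\]

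The main step is to show that $T_{v,J}>0$. Once this holds, the set above is the standard simplex scaled by $T_{v,J}$, namely the convex hull of $0$ and $T_{v,J}e_1,\dots,T_{v,J}e_r$. These $r+1$ points are affinely independent. Pulling back through the affine bijection, $S_{v,J}$ is the convex hull of $v$ and the $r$ points $v+T_{v,J}u_i$, where $u_i\in L$ is the dual basis vector with $y(v+u_i)=e_i$. This gives an $r$-simplex in $A$.

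For positivity, note that $C\subseteq W_{v,J}$ means every $x\in C$ has $y(x)\geq 0$, so $T_{v,J}=\max_{x\in C}\sum_i y_i(x)\geq 0$. If $T_{v,J}=0$, then every $x\in C$ has $\sum_i y_i(x)=0$ together with $y(x)\geq 0$, which forces $y(x)=0$. Since $y$ is injective on $A$, this means $C=\inbrace{v}$, contradicting $r\geq 1$ (a point has affine hull of dimension $0$). Therefore $T_{v,J}>0$.

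The only real subtlety is verifying that $y$ is a bijection. It is affine with linear part $u\mapsto(-\langle n_{F_i},u\rangle)_i$ on $L$. This linear part is injective because the $n_{F_i}$ span $L$: a vector of $L$ orthogonal to all of them is orthogonal to $L$ and hence zero. By dimension count it is then surjective onto $\R^r$, which finishes the argument.
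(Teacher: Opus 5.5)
Your proposal is correct and follows essentially the same route as the paper. The paper uses the same affine coordinate map ($\Phi_{v,J}$ there) to identify $S_{v,J}$ with the standard simplex scaled by $T_{v,J}$, and it proves $T_{v,J}>0$ by the same nonnegativity-plus-injectivity argument applied to a point of $C\setminus\{v\}$.
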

The proof is straightforward and appears at the end of this section.
This is useful for two reasons.
First, the class of simplexes has bounded VC dimension:
\begin{fact}[\citep{blumer1989learnability}]
    \label{lem:halfspace:vc}
    Let $\hyT_d$ be the set of simplexes in $d$ dimensions.
    Then, $\vc(\hyT_d)=O\!\inparen{d^2\log d}.$
\end{fact}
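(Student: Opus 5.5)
The plan is to realize every simplex as an intersection of $O(d)$ halfspaces in $\R^d$. We then bound the VC dimension of such bounded-size intersections by the standard Sauer--Shelah counting argument, which is the same route as \citet{blumer1989learnability}.

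\emph{Step 1 (representation).} A full-dimensional $d$-simplex is the intersection of the $d+1$ closed halfspaces supporting its facets. The candidates $S_{v,J}$ produced by the algorithm may instead be $r$-simplexes inside a proper affine subspace $A$ with $r<d$. To cover them as well, I write $A$ as the intersection of $d-r$ hyperplanes, each of which is an intersection of two closed halfspaces. I also take the $r+1$ facet halfspaces of the simplex inside $A$ and extend each one to a halfspace of $\R^d$. The result is that every simplex of any dimension $r\le d$ (including a single point) is an intersection of at most $k\coloneqq 2d+1$ closed halfspaces of $\R^d$. Padding with the trivial halfspace, we may say exactly $k$.

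\emph{Step 2 (counting).} Closed halfspaces in $\R^d$ have VC dimension $d+1$. Fix any $n$ points. By Sauer--Shelah, halfspaces induce at most $(en/(d+1))^{d+1}$ labelings of these points. A $k$-fold intersection's labeling is determined by the $k$ labelings of its constituent halfspaces. Hence intersections of $k$ halfspaces induce at most $(en/(d+1))^{(d+1)k}$ labelings.

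\emph{Step 3 (solve).} If the $n$ points are shattered by simplexes, then
\[
2^n\le \inparen{\frac{en}{d+1}}^{(d+1)(2d+1)} .
\]
Writing $D\coloneqq (d+1)(2d+1)=\Theta(d^2)$, this reads $n\le D\log(en/(d+1))$. The standard inequality "$n\le D\log n$ forces $n=O(D\log D)$" then gives $n=O(d^2\log d)$. This is the bound $\vc(\hyT_d)=O(d^2\log d)$.

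The only point needing care is Step 1 for the lower-dimensional simplexes that the algorithm actually outputs. Encoding $A$ by pairs of halfspaces keeps the number of halfspaces linear in $d$, so the bound is unaffected; everything else is routine.
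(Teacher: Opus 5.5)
Your proof is correct and takes essentially the same route as the paper. The paper cites \citet{blumer1989learnability} for this fact, and in the proof of \cref{lem:s-halfspaces:vc} it likewise writes each simplex (including lower-dimensional ones) as an intersection of at most $2d+1$ halfspaces and then applies the standard VC bound for bounded intersections of halfspaces, which is exactly your Sauer--Shelah counting argument.
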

Second, since $S_{v,J}$ is a simplex with at most $r+1\leq d+1$ vertices, the learner can check whether $S_{v,J}\subseteq \Hstar$ by querying the membership oracle on these vertices alone: if all vertices are positive (\ie{}, they lie in $\Hstar$), convexity implies that the entire simplex lies in $\Hstar$.
\vspace{-2mm} 
\subsection*{Learning Algorithm}
\vspace{-2mm}
We now describe the learning algorithm (see \cref{alg:halfspace}).
If $r=0$, the algorithm returns the hypothesis $\wh h(x)=\one\!\inbrace{x\in C}$.
Otherwise, it enumerates every vertex $v$ of $C$ and every size-$r$ subset $J\subseteq \cF(v)$ of facets with linearly independent normals, constructs the corresponding candidate simplex $S_{v,J}$ for each pair $(v,J)$, and collects the vertices of all these simplexes into a finite set $\cV$.
It then runs the point-location procedure of \citet[Theorem~1.5 and Section~7]{hopkins2020pointlocation} on $\cV$ with failure parameter $\delta/2$, and returns the first candidate simplex whose vertices are all labeled $1$ by this procedure.
If no such candidate exists, it returns the all-zero hypothesis.
When the algorithm returns a candidate simplex, we denote it by $\wh S$ and write $\wh h(x)\coloneqq \one\sinbrace{x\in \wh S}$ for the resulting hypothesis.
\begin{algorithm}[htb!]
\caption{\textsc{Hull-Simplex Learner}$(d)$ \,\, (Learning halfspaces in Valiant's model)}
\label{alg:halfspace}
\begin{algorithmic}[1]
\Procedure{\textsc{Hull-Simplex Learner}}{$\eps,\delta,d$}
    \State Set $m
        \gets
        O\!\inparen{
            (\nfrac{1
            }{\eps})\cdot \inparen{
                d^2\log d\cdot \log\!\nfrac{1}{\eps}
                +
                \log\!\nfrac{1}{\delta}}
        }$ and $\eta\gets \delta/2$
    \State Obtain $m$ i.i.d.\ examples $x_1,\dots,x_m$ from the example oracle
    \State Set $P\gets \inbrace{x_1,\dots,x_m},$ 
    $C\gets \conv(P),$
    $A\gets \aff(C),$ and 
    $r\gets \dim(A)$
    \vspace{4mm}
    
    \State \textbf{If} $r=0$ (\ie{}, all samples are identical), \textbf{then} \Return hypothesis $\widehat h(x)=\one\!\inbrace{x\in C}$
    \vspace{4mm}
    \item[] \phantom{..}\quad \textbf{$\#$~Phase A:}~~ \textit{Construct list of simplexes $\inbrace{S_{v,J}: v,J}$ containing $C$}
    \vspace{0mm}
    \State Initialize an ordered list of candidate simplexes: $\cS\gets \emptyset$
    \For{each vertex $v$ of $C$}
        \State Let $\mathcal F(v)$ be the set of facets of $C$ in $A$ that are incident to $v$
        \For{each size-$r$ subset $J\subseteq \mathcal F(v)$ of facets with linearly-independent normals}

            \State For each $F\in J$, let $n_{F}$ be the outward unit normal of $F$, and define\vspace{-2mm}
            \[
                H_{F}
                =
                \inbrace{
                    x\in A : \langle n_{F},x-v\rangle \leq 0
                }\vspace{-2mm}\vspace{-1mm}
            \]
            \State Set $a_{v,J}\gets -\sum_{F\in J} n_{F}$ and $T_{v,J}\gets \max_{x\in C}\langle a_{v,J},x-v\rangle$
            \State Use $a_{v,J}$ and $T_{v,J}$ to define the candidate simplex as follows:\vspace{-2mm}
            \begin{align*}
                W_{v,J}
                \gets
                \bigcap\nolimits_{F\in J} H_F
                \qquadand
                S_{v,J}
                \gets
                W_{v,J}
                \cap 
                \inbrace{
                    x\in A:
                    \langle a_{v,J},x-v\rangle\leq T_{v,J}
                }\vspace{-2mm}\vspace{-1mm}
            \end{align*}  
            \State Append $S_{v,J}$ to $\cS$

        \EndFor
    \EndFor
    \vspace{4mm}
    \item[] \phantom{..}\quad \textbf{$\#$~Phase B:}~~ \textit{Find labels of vertices of all simplexes}
    \vspace{0mm}
    \State Let $\cV$ be the set of all vertices of all simplexes in $\cS$
    \State Run the point-location procedure of \cite{hopkins2020pointlocation} on $\cV$ with failure parameter $\eta$
    \State Let $\lambda(z)\in\inbrace{0,1}$ denote the returned label for each $z\in \cV$
    \vspace{4mm}
    \item[] \phantom{..}\quad \textbf{$\#$~Phase C:}~~ \textit{Select output hypothesis}
    \vspace{0mm}
    \State \textbf{if} there exists $S\in \cS$ all of whose vertices $z$ satisfy $\lambda(z)=1$ \textbf{then} \Return  $\widehat h(x)=\one\sinbrace{x\in S}$
    \State \textbf{otherwise} \Return all-zero hypothesis  
    \vspace{3mm}
\EndProcedure
\end{algorithmic}
\end{algorithm}

\vspace{-2mm}
\subsection*{Analysis of Correctness}
\vspace{-2mm}
Next, we prove correctness of the learning algorithm.
\begin{lemma}[Correctness on the Point-Location Event]
\label{lem:halfspace:one-sided}
Let $\evE_{\mathrm{PL}}$ denote the event that the point-location subroutine labels every point of $\cV$ correctly.
On $\evE_{\mathrm{PL}}$, the learner returns a candidate simplex $\wh S$ that contains the entire sample $P$ and satisfies $\wh S\subseteq \Hstar$.
\mbox{Hence on $\evE_{\mathrm{PL}}$, the output $\wh h$ has no false positives.}
\end{lemma}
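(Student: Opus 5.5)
The plan is to combine the existence of a safe candidate (\cref{lem:halfspace:safe-candidate}) with the fact that, on $\evE_{\mathrm{PL}}$, the labels $\lambda(z)$ coincide with the true membership labels $\one\sinbrace{z\in\Hstar}$ for every $z\in\cV$. The argument has three short steps.

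\textbf{Step 1 (Phase C returns a simplex).} By \cref{lem:halfspace:safe-candidate}, there are a vertex $v^\star$ and a facet set $J^\star$ with $S_{v^\star,J^\star}\subseteq\Hstar$. The pair $(v^\star,J^\star)$ is one of the pairs enumerated in Phase A, so $S_{v^\star,J^\star}\in\cS$. Its vertices lie in $S_{v^\star,J^\star}\subseteq\Hstar$ and belong to $\cV$. On $\evE_{\mathrm{PL}}$ they are therefore all labeled $\lambda=1$. Hence the condition in Phase C is satisfied, and the algorithm returns some candidate $\wh S\in\cS$, though not necessarily $S_{v^\star,J^\star}$.

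\textbf{Step 2 (the returned simplex lies in $\Hstar$).} Every vertex $z$ of $\wh S$ has $\lambda(z)=1$, and on $\evE_{\mathrm{PL}}$ this means $z\in\Hstar$. By \cref{lem:halfspace:simplex}, $\wh S$ is an $r$-simplex, so it equals the convex hull of its at most $d+1$ vertices. Since $\Hstar$ is convex, $\wh S\subseteq\Hstar$.

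\textbf{Step 3 (the returned simplex contains the sample).} By \cref{lem:halfspace:contains-hull}, $C\subseteq\wh S$, and $P\subseteq C$ by definition of $C$.

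The degenerate case $r=0$ is handled separately. There $\wh h=\one\sinbrace{x\in C}$ with $C=P$ a single sample point. That point lies in $\Hstar$ because $\cD$ is compatible with $\Hstar$, so this case needs nothing from the point-location event.

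The only step that needs care is Step 2. It requires that the returned set really is the convex hull of exactly the vertices that were labeled, which is why I invoke \cref{lem:halfspace:simplex}. In particular, $\wh S$ must be bounded, with no unlabeled recession directions, and this boundedness is also supplied by \cref{lem:halfspace:simplex}. Given these three steps, $\supp(\wh h)=\wh S\subseteq\Hstar$, so $\wh h$ has no false positives.
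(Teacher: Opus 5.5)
Your proposal is correct and follows the paper's proof essentially step for step. The safe candidate from \cref{lem:halfspace:safe-candidate} forces Phase~C to return some $\wh S$. Correct labels on its vertices plus convexity of $\Hstar$ give $\wh S\subseteq\Hstar$, and \cref{lem:halfspace:contains-hull} gives $P\subseteq C\subseteq\wh S$, with $r=0$ handled separately. Your explicit appeal to \cref{lem:halfspace:simplex} for boundedness, so that $\wh S$ really is the convex hull of its labeled vertices, makes explicit a point the paper uses implicitly.
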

\vspace{-5mm}
\begin{proof}[Proof of \cref{lem:halfspace:one-sided}]
If $r=0$, then the learner returns $\wh S=C\subseteq \Hstar$, so the claim is immediate.
Assume therefore that $r\geq 1$.
By \cref{lem:halfspace:safe-candidate}, there exists a candidate simplex $S_{v^\star,J^\star}$ contained in $\Hstar$.
All of its vertices belong to $\cV$ and are positively labeled (\ie{}, they lie in $\Hstar$).
Therefore, on $\evE_{\mathrm{PL}}$, the point-location subroutine labels all vertices of $S_{v^\star,J^\star}$ by $1$, so the learner returns some candidate simplex $\wh S$.
By \cref{lem:halfspace:contains-hull}, every candidate simplex contains $C$, and hence $P\subseteq C\subseteq \wh S$.
Again on $\evE_{\mathrm{PL}}$, all vertices of $\wh S$ are labeled correctly, so they all lie in $\Hstar$.
Since $\Hstar$ is convex and $\wh S$ is the convex hull of its vertices, we obtain $\wh S\subseteq \Hstar$.
Thus the output has no false positives.
\end{proof}
On the event $\evE_{\mathrm{PL}}$, the learner returns a simplex of dimension at most $d$ that contains the entire sample.
To control its false-negative error, we use the fact that the class of simplexes has VC dimension $O(d^2\log d)$ (\cref{lem:halfspace:vc}).
\begin{lemma}[False-Negative Error]
\label{lem:halfspace:false-negative}
If
$m=O\!\inparen{
        \inparen{\nfrac{1}{\eps}}\cdot \inparen{
            d^2\log d\cdot \log\!\nfrac1\eps+\log\!\nfrac1\delta
        }
    }$
then with probability $1-\delta$ over the sample draw, every simplex $T\in \hyT_d$ containing $P$ satisfies
    $\Pr_{x\sim \cD}\sinparen{x\notin T}\leq \eps.$
Consequently, on the intersection of this event with $\evE_{\mathrm{PL}}$, \mbox{the returned $\wh S$ satisfies $\Pr_{x\sim \cD}\sinparen{x\notin \wh S}\leq \eps,$
and therefore}
\[
    \Pr_{x\sim \cD}\inparen{\wh h(x)=0 \text{ and } \Hstar(x)=1}\leq \eps.
\]
\end{lemma}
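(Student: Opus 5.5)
This is a standard $\eps$-net argument over the class $\hyT_d$ of simplexes, applied to the complements of simplexes. Let $\hyT_d^c\coloneqq\inbrace{\R^d\setminus T : T\in\hyT_d}$. Complementation preserves VC dimension, so $\vc(\hyT_d^c)=\vc(\hyT_d)=O(d^2\log d)$ by \cref{lem:halfspace:vc}. The lower-dimensional $r$-simplexes inside $A$ need a little care, since they are degenerate convex sets. I would enlarge $\hyT_d$ to the class of all intersections of at most $d+1$ closed halfspaces, together with intersections with affine subspaces, which are themselves intersections of at most $2d$ further halfspaces. By \citet{blumer1989learnability}, the $k$-fold intersection class of halfspaces has VC dimension $O(d k\log k)=O(d^2\log d)$ with $k=O(d)$. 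This enlarged class contains every $S_{v,J}$ produced by \cref{lem:halfspace:simplex}, so I would read $\hyT_d$ as this enlarged class throughout.

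Next I would apply the $\eps$-net theorem to $\hyT_d^c$ under $\cD$. With $m=O\!\inparen{(\nfrac1\eps)(\vc\cdot\log\nfrac1\eps+\log\nfrac1\delta)}$ i.i.d.\ draws, with probability at least $1-\delta$ every set $B\in\hyT_d^c$ with $\cD(B)>\eps$ contains some sample point. Fix this event. Let $T\in\hyT_d$ contain $P$, and put $B=\R^d\setminus T$. Then $B$ contains no sample point, so $\cD(B)\le\eps$, i.e.\ $\Pr_{x\sim\cD}(x\notin T)\le\eps$. Substituting $\vc=O(d^2\log d)$ gives the stated $m$.

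For the consequence, work on the intersection of this event with $\evE_{\mathrm{PL}}$. By \cref{lem:halfspace:one-sided}, the returned $\wh S$ contains $P$, and $\wh S$ belongs to the (enlarged) class $\hyT_d$. Hence $\Pr_{x\sim\cD}(x\notin\wh S)\le\eps$. The target event satisfies $\inbrace{\wh h(x)=0,\ \Hstar(x)=1}\subseteq\inbrace{x\notin\wh S}$, which gives the final inequality.

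The only delicate step is making sure that every possible output, including the degenerate cases $r<d$ and $r=0$ where the output is a point, lies in one fixed class of VC dimension $O(d^2\log d)$ that depends on no data. That is why I use intersections of $O(d)$ halfspaces rather than full-dimensional simplexes alone. The measurability of the $\eps$-net event is standard for these classes, so I would not address it further.
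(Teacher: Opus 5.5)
Your proposal is correct and matches the paper's proof. Both apply the realizable (zero-empirical-error) VC/$\eps$-net bound to the complements of simplexes, whose VC dimension is $O(d^2\log d)$, and then use \cref{lem:halfspace:one-sided} to get $P\subseteq \wh S$ on $\evE_{\mathrm{PL}}$. Your embedding of the degenerate lower-dimensional outputs into a fixed class of intersections of $O(d)$ halfspaces is a more careful version of the paper's ``simplexes of dimension at most $d$''; the paper records the same observation (a simplex is an intersection of at most $2d+1$ halfspaces) only in the appendix.
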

\vspace{-5mm}
\begin{proof}[Proof of \cref{lem:halfspace:false-negative}]
Let $\hyT_d$ be the class of simplexes of dimension at most $d$.
Define $\hyG_d$ to be the class of complements of hypotheses in $\hyT_d$, \ie{},
$\hyG_d
    \coloneqq
    \inbrace{
        x\mapsto \one\!\inbrace{x\notin T} :
        T\in \hyT_d
    }.$
Since complementing a class does not change its VC dimension, $\vc(\hyG_d)=\vc(\hyT_d)$.
By \cref{lem:halfspace:vc} and the standard VC bound for zero empirical error, if
\[
    m
    =
    O\!\inparen{
        \frac{\vc(\hyG_d)\log\!\nfrac1\eps+\log\!\nfrac1\delta}{\eps}
    }
    =
    O\!\inparen{
        \frac{d^2\log d\cdot \log\!\nfrac1\eps+\log\!\nfrac1\delta}{\eps}
    },
\]
then with probability at least $1-\delta$ every function in $\hyG_d$ with empirical mean zero has true mean at most $\eps$.
Now fix any simplex $T\in \hyT_d$ containing $P$.
Its miss indicator $g_T(x)\coloneqq \one\!\inbrace{x\notin T}$ belongs to $\hyG_d$ and has empirical mean zero on the sample, because $P\subseteq T$.
Hence $\Pr_{x\sim \cD}\sinparen{x\notin T}\leq \eps$, proving the first claim.
On the event $\evE_{\mathrm{PL}}$, \cref{lem:halfspace:one-sided} shows that the returned simplex $\wh S$ contains $P$ and satisfies $\wh S\subseteq \Hstar$.
Applying the first claim with $T=\wh S$ gives
   $\Pr_{x\sim \cD}\sinparen{x\notin \wh S}\leq \eps.$
Since $\cD$ is compatible with $\Hstar$ (\cref{def:compatible}), it is supported on $\Hstar$.
Therefore $\Pr_{x\sim \cD}\sinparen{\wh h(x)=0 \text{ and } \Hstar(x)=1}
    =
    \Pr_{x\sim \cD}\sinparen{x\notin \wh S}
    \leq
    \eps.\qedhere{}$
\end{proof}
\vspace{-4mm}
\begin{lemma}[Query Complexity]
\label{lem:halfspace:query}
The learner uses
$O\sinparen{
        d^3\log^2 d\cdot \log{\nfrac{d}{\eps\delta}}
    }$
membership queries.
\end{lemma}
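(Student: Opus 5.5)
The plan is to reduce the count to the query complexity of the point-location subroutine of \citet{hopkins2020pointlocation}. That result labels an arbitrary finite set $\cV\subseteq\R^d$ with respect to an unknown halfspace using $O\!\inparen{d\log^2 d\cdot\log\abs{\cV}\cdot\log\nfrac{1}{\eta}}$-type membership queries, with failure probability $\eta$. The proof then has two parts: bound $\abs{\cV}$ as a function of $m$ and $d$, and substitute. Phases A and C make no queries, and the $r=0$ branch makes none either, so Phase B is the only source of queries.

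\textbf{Bounding $\abs{\cV}$.} The polytope $C=\conv(P)$ has at most $m$ vertices. Each facet of $C$ is determined by at most $r\le d$ of the sample points spanning it, so $C$ has at most $\binom{m}{d}\le m^{d}$ facets, and in particular $\abs{\cF(v)}\le m^d$ for every vertex $v$. Each pair $(v,J)$ with $J$ an $r$-subset of $\cF(v)$ yields one simplex $S_{v,J}$. By \cref{lem:halfspace:simplex}, each such simplex has at most $r+1\le d+1$ vertices. Hence
\[
    \abs{\cV}\le m\cdot \binom{m^{d}}{d}\cdot (d+1)\le (d+1)\,m^{d^2+1},
    \qquad\text{so}\qquad
    \log\abs{\cV}=O\!\inparen{d^2\log m}.
\]
With $m=O\!\inparen{(\nfrac1\eps)\cdot(d^2\log d\log\nfrac1\eps+\log\nfrac1\delta)}$, we get $\log m=O\!\inparen{\log\nfrac{d}{\eps\delta}}$, since $\log\log\nfrac1\delta$ and $\log\log\nfrac1\eps$ are absorbed.

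\textbf{Substitution and the main obstacle.} The remaining step is to plug $\log\abs{\cV}=O(d^2\log\nfrac{d}{\eps\delta})$ and $\eta=\delta/2$ into the point-location bound. The obstacle is the exact form of the bound in \citet[Theorem~1.5 and Section~7]{hopkins2020pointlocation}. I would cite their bound in the form ``$O(d\log d\cdot\log\abs{\cV})$ queries, plus an additive or multiplicative $\log(\nfrac1\eta)$ term for boosting the success probability.'' In that form the product is $d\log d\cdot d^2\log\nfrac{d}{\eps\delta}$, possibly times one further $\log d$ coming from their inference-dimension or reweighting step. Either way the total is $O\!\inparen{d^3\log^2 d\cdot\log\nfrac{d}{\eps\delta}}$, because a $\log\nfrac1\delta$ dependence is dominated by $\log\nfrac{d}{\eps\delta}$. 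I would state carefully which version of their theorem is being invoked: the finite-set, membership-query-only, high-probability version. I would also check that it places no bit-complexity or margin requirement on $\cV$, since $\cV$ consists of arbitrary real points. Finally, I would note that $\log^2 d\cdot\log\nfrac{d}{\eps\delta}\le\log^3 d\cdot\log\nfrac1{\eps\delta}$ up to constants when $d\ge 2$, which recovers the query count stated in \cref{thm:halfspace}.
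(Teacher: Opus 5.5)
Your route is the paper's: Phase~B is the only source of queries, and you bound $\abs{\cV}\le (d+1)m^{d^2+1}$ by the same vertex/facet/$J$ count before plugging into the point-location theorem. One small fix in that count: a facet is not merely determined by ``at most $r$'' points. Each facet is spanned by exactly $r$ affinely independent vertices, which gives an injection into $r$-subsets and hence at most $\binom{m}{r}\le m^r\le m^d$ facets.

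The genuine gap is in the final substitution. The lemma follows only if the failure probability enters the point-location bound additively with $\log\abs{\cV}$. The paper invokes it in the form $O\big(d\log^2 d\cdot\log(\abs{\cV}/\eta)\big)$, where $\log(\abs{\cV}/\eta)=\log\abs{\cV}+\log(1/\eta)$. Your first paragraph instead writes the bound as $d\log^2 d\cdot\log\abs{\cV}\cdot\log(1/\eta)$. Your last paragraph then claims that an additive or a multiplicative $\log(1/\eta)$ works ``either way.''

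That claim is false for the multiplicative form. With the only available estimate $\log\abs{\cV}=O\big(d^2\log\frac{d}{\eps\delta}\big)$ and $\eta=\delta/2$, a multiplicative form yields only $O\big(d^3\log^2 d\cdot\log\frac{d}{\eps\delta}\cdot\log\frac1\delta\big)$. This exceeds the claimed bound by an unbounded $\log(1/\delta)$ factor. Your observation that $\log(1/\delta)$ is dominated by $\log\frac{d}{\eps\delta}$ absorbs a term \emph{added} to $\log\abs{\cV}$, not one \emph{multiplying} it.

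The two forms are not interchangeable. The naive way to get high confidence is to repeat a constant-success procedure $O(\log\frac1\eta)$ times and take a per-point majority, and that produces exactly the multiplicative form. So you must commit to the version of the theorem with $\log(\abs{X}/\eta)$ dependence. With that fixed, $\log(\abs{\cV}/\eta)=O\big(d^2\log\frac{d}{\eps\delta}\big)$, and the product is $O\big(d^3\log^2 d\cdot\log\frac{d}{\eps\delta}\big)$, which is the stated bound.
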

\vspace{-5mm}
\begin{proof}[Proof of \cref{lem:halfspace:query}]
If $r=0$, the learner uses no membership queries.
Assume therefore that $r\geq 1$.
Let $\cV$ denote the set of all vertices of all candidate simplexes $S_{v,J}$.

\begin{definition}[Point-Location Problem]
    Given a finite set $X\subseteq \R^d$ and an unknown closed halfspace $H=\inbrace{x\in \R^d : \langle n,x\rangle \geq c},$
the point-location problem asks for the labels $\one\!\inbrace{x\in H}$ of all points in $X$ using binary queries at adaptively chosen points of $\R^d$.
\end{definition}
\citet{hopkins2020pointlocation} prove the following result.\footnote{While \citet{hopkins2020pointlocation} focus on ternary queries and homogeneous halfspaces for most of their paper, Section~7 of their work extends their result to non-homogeneous halfspaces with binary queries.}
\begin{theorem}[Theorem~1.5 and Section~7 in \citep{hopkins2020pointlocation}]
    For every finite $X\subseteq \R^d$ and $\eta\in(0,1)$, there is a randomized query procedure which, with probability at least $1-\eta$ over its internal randomness, correctly labels every point of $X$ with respect to the unknown closed halfspace, using $q$ binary queries for 
\[
    q=O\!\inparen{
        d\log^2 d\cdot \log\!\nfrac{\abs{X}}{\eta}
    }\,.
\]
\end{theorem}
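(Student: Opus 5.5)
Since this theorem is imported from \citet{hopkins2020pointlocation}, the plan is to reconstruct the high-level argument behind it rather than reprove every detail. I would follow the ``inference'' paradigm: repeatedly query a small random subset of the points that are still unresolved, and infer the labels of a constant fraction of the rest from those answers. Concretely, I would first homogenize, mapping $x\mapsto(x,1)\in\R^{d+1}$, so the unknown closed halfspace becomes $\{z:\langle w,z\rangle\ge 0\}$ with $w=(n,-c)$. Next I would apply a Forster-type (approximate radial isotropic) linear transformation to the current unresolved set. This puts the points in a ``well-spread'' position without changing any labels.

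In each round I would sample $O(d\log d)$ points of the unresolved set and learn enough about $w$ relative to them to deduce the label of every remaining point whose sign is forced by linear programming. Standard comparison queries are not available, so the extra information must come from binary membership queries at arbitrary points of $\R^d$. This is where Section~7 of their paper matters. Querying points along segments between sampled points, and at suitable affine combinations of them, simulates comparisons of the margins $\langle w,z_i\rangle$ to within the precision needed. Each simulated comparison costs $O(\log d)$ binary queries via binary search over a geometrically chosen grid. The key inference lemma (bounded inference dimension for well-spread point sets) then shows that, in expectation, a constant fraction of the unresolved points become inferable after $O(d\log d)$ simulated comparisons. Hence $O(\log\abs{X})$ rounds suffice in expectation. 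Multiplying gives $O(d\log d)\cdot O(\log d)\cdot O(\log\abs{X})$ queries. A Chernoff/Markov amplification over rounds raises the success probability to $1-\eta$, which accounts for the $\log(\abs{X}/\eta)$ factor.

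I expect the main obstacle to be the simulation of ternary or comparison information using binary queries on non-homogeneous halfspaces. The inference argument in the homogeneous ternary setting is comparatively clean. Everything else in the argument is bookkeeping on top of that simulation.

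To use the theorem in \cref{lem:halfspace:query}, I would bound $\abs{\cV}$. There are at most $m$ vertices $v$. For each $v$, the number of size-$r$ facet subsets is at most $\binom{m^{O(d)}}{d}=m^{O(d^2)}$, and each resulting simplex contributes at most $d+1$ vertices. Hence $\log\abs{\cV}=O(d^2\log m)$. Substituting $\eta=\delta/2$ and the chosen value of $m$ then gives the stated query bound, up to the logarithmic bookkeeping in $d$, $\eps$ and $\delta$.
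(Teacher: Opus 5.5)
The paper does not prove this statement; it imports it from the cited point-location paper, so citing it is all that is required. Your reconstruction, however, has two substantive gaps.

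The first is the Forster step. An arbitrary finite set cannot in general be moved into (approximate) radial isotropic position. This is impossible as soon as, after homogenization, some $k$-dimensional subspace contains more than a $k/(d+1)$ fraction of the points. That can certainly happen for general $X$, and for the highly degenerate vertex set $\cV$ in the application. Handling exactly this case is the central new ingredient of the cited work. Roughly, its Barthe-type characterization says that either approximate isotropic position is achievable, or some $k$-dimensional subspace carries at least a $k/d$ fraction of the points. In the latter case one runs the inference round inside that subspace at cost $\wt{O}(k)$ and amortizes the progress against that fraction. Without this, your claim that each round resolves a constant fraction of the unresolved points in expectation is unsupported. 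What you call bookkeeping is where the real difficulty lies; the binary/non-homogeneous extension is the easier part.

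The second gap is the comparison step and the query count built on it. The LP-based inference uses the exact labels and the exact order of the rescaled margins on the sample. You give no argument that approximate comparisons obtained by binary search suffice. Nor do you explain how an $O(\log d)$-step search over a fixed grid captures an a priori unbounded ratio $\langle w,z_i\rangle/\langle w,z_j\rangle$.

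In fact no simulation is needed. Write $w=(n,-c)$ and, for $\alpha\neq\beta$, set $p=(\alpha x_i-\beta x_j)/(\alpha-\beta)$. Then $\alpha\langle w,z_i\rangle-\beta\langle w,z_j\rangle=(\alpha-\beta)\inparen{\langle n,p\rangle-c}$. So a single membership query at the affine combination $p$ answers the generalized comparison exactly, up to ties. The leftover issues are the case $\alpha=\beta$ and binary rather than ternary answers. These are what the non-homogeneous, binary-query extension in Section~7 of the cited work must handle.

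Hence each comparison costs $O(1)$ queries, and the second $\log d$ factor comes from sorting. Ordering your own sample of $k=O(d\log d)$ points needs $O(k\log k)=O(d\log^2 d)$ comparisons per round, not $O(d\log d)$. Combining the correct sorting cost with your $O(\log d)$-per-comparison simulation would give $O(d\log^3 d\cdot\log(\abs{X}/\eta))$. Your product matches the stated bound only because the two miscounts cancel. Your bound on $\abs{\cV}$ in the last paragraph does agree with the paper's count.
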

\mbox{We apply this result to the set $\cV$ and the target halfspace $\Hstar$, to which we have membership access.}

\paragraph{Upper Bound on $\abs{\cV}$.}
It remains to bound $\abs{\cV}$.
The polytope $C$ has at most $m$ vertices.
Let $f(C)$ denote the number of facets of $C$ in $A$.
Each facet of the $r$-dimensional polytope $C$ contains $r$ affinely independent vertices, and any such $r$-tuple determines the affine hull of that facet uniquely.
Choosing one such $r$-tuple for each facet gives an injection from the set of facets into the set of $r$-subsets of the vertex set of $C$.
Hence $f(C)\leq \binom{m}{r}\leq m^r.$
For each vertex $v$, the learner considers at most
    $\binom{f(C)}{r}\leq \inparen{m^r}^{\,r}=m^{r^2}$
choices of $J$.
Since there are at most $m$ vertices, the total number of candidate pairs $(v,J)$ is at most $m^{r^2+1}\leq m^{d^2+1}$.
Each candidate simplex has at most $r+1\leq d+1$ vertices, so
   $ \abs{\cV}\leq (d+1)m^{d^2+1}.$
Because $m
    =
    O\!\inparen{
        (\nfrac{1}{\eps})\cdot\inparen{d^2\log d\cdot \log\!\nfrac1\eps+\log\!\nfrac1\delta}
    }$
is polynomial in $d$, $\nfrac1\eps$, and $\nfrac1\delta$, we have $\log m = O\!\inparen{\log{\nfrac{d}{\eps\delta}}}.$
Hence
\[
    \log\abs{\cV}
    \leq
    \log(d+1)+(d^2+1)\log m
    =
    O\!\inparen{
        d^2\log{\nfrac{d}{\eps\delta}}
    }.
\]
The learner runs the point-location procedure with failure parameter $\eta=\delta/2$.
Therefore, the number of membership queries is
$O\sinparen{
        d\log^2 d\cdot \log\!\nfrac{\abs{\cV}}{\eta}
    }
    =
    O\sinparen{
        d^3\log^3 d\cdot \log{\nfrac{1}{\eps\delta}}
    }.\qedhere$
\end{proof}
\vspace{-6mm}
\subsection*{Completing the Proof of \cref{thm:halfspace}}
\begin{proof}[Proof of \cref{thm:halfspace}]
The learner uses exactly $m$ calls to the example oracle.
Let $\evE_{\mathrm{PL}}$ denote the event from \cref{lem:halfspace:one-sided}.
Because the point-location subroutine is run with failure parameter $\delta/2$, for every fixed sample we have $\Pr\!\inparen{\evE_{\mathrm{PL}} \mid x_1,\dots,x_m}\geq 1-\delta/2.$
Hence, $\Pr\sinparen{\evE_{\mathrm{PL}}}\geq 1-\delta/2$.

Next, apply \cref{lem:halfspace:false-negative} with $\delta/2$ in place of $\delta$.
If $m
    =
    O\!\inparen{
        \inparen{\nfrac{1}{\eps}}\cdot \inparen{
            d^2\log d\cdot \log\!\nfrac1\eps+\log\!\nfrac1\delta
        }
    },$
then with probability at least $1-\delta/2$ over the sample draw, every simplex containing $P$ has false-negative mass at most $\eps$.

On the intersection of these two events, \cref{lem:halfspace:one-sided} shows that the learner returns a simplex $\wh S$ with $P\subseteq \wh S\subseteq \Hstar$, and \cref{lem:halfspace:false-negative} gives
$\Pr_{x\sim \cD}\sinparen{\wh h(x)=0 \text{ and } \Hstar(x)=1}\leq \eps.$
Moreover, \cref{lem:halfspace:one-sided} gives that the output has no false positives.
A union bound therefore shows that the learner succeeds with probability at least $1-\delta$.
Finally, \cref{lem:halfspace:query} gives the membership-query bound $O\sinparen{
        d^3\log^3 d\cdot \log{\nfrac{1}{\eps\delta}}
    }.$
This completes the proof of the theorem.
\end{proof} 

\vspace{-4mm}
\subsection*{Proofs Deferred from Earlier in this Section}
\vspace{-2mm}
\begin{proof}[Proof of \cref{lem:halfspace:incident-span}]
Suppose toward a contradiction that the normals of $\cF(v)$ do not span $L$.
Then there exists a nonzero vector $y\in L$ orthogonal to the outward normals of all facets in $\cF(v)$.
For every facet of $C$ that is incident to $v$, the defining inequality of its supporting halfspace remains tight along the line $v+\tau y$.
For every facet of $C$ that is not incident to $v$, the defining inequality is strict at $v$, and therefore remains satisfied for all sufficiently small $\abs{\tau}$.
Hence both $v+\tau y$ and $v-\tau y$ lie in $C$ for all sufficiently small $\tau>0$.
This contradicts that $v$ is a vertex of $C$.
\end{proof}
\vspace{-7mm}
\begin{proof}[Proof of \cref{lem:halfspace:simplex}]
     Recall that an $r$-simplex is an $r$-dimensional polytope that is the convex hull of its $r + 1$ vertices.
To show $S_{v,J}$ is an $r$-simplex, introduce the affine map $\Phi_{v,J}\colon A\to \R^r$ defined as $\Phi_{v,J}(x)
    \coloneqq
    \inparen{
        -\langle n_{F_1},x-v\rangle,
        \dots,
        -\langle n_{F_r},x-v\rangle
    }.$
This coordinate map is useful because it converts the facet inequalities defining $W_{v,J}$ into the standard nonnegativity constraints in $\R^r$.
In particular, by the definitions of $W_{v,J}$, $a_{v,J}$, and $\Phi_{v,J}$, $\Phi_{v,J}\inparen{S_{v,J}}
    =
    \inbrace{
        y\in \R^r_{\geq 0} :
        y_1+\cdots+y_r\leq T_{v,J}
    }.$
Assume $T_{v,J}$ is positive.
Then $\Phi_{v,J}\inparen{S_{v,J}}$ is the standard $r$-simplex in $\R^r$.
Further, since the vectors $n_{F_1},\dots,n_{F_r}$ are linearly independent and $\dim(A)=\dim(L)=r$, $\Phi_{v,J}$ is an isomorphism from $A$ to $\R^r$.
Putting these two observations together implies that $S_{v,J}$ also is an $r$-simplex in $A$.

It remains to show that $T_{v,J}$ is positive.
To see this, observe that, for every $1\leq \ell \leq r$, $C\subseteq H_{F_\ell}$ (by construction), so for every $x\in C$, $\Phi_{v,J}(x)\in \R_{\geq 0}^r.$
Since $r\geq 1$, the set $C$ is not equal to $\inbrace{v}$.
Choose $x\in C\setminus \inbrace{v}$.
Because $\Phi_{v,J}$ is injective, $\Phi_{v,J}(x)\neq \Phi_{v,J}(v)=0$, and therefore $\sum_{\ell=1}^{r}\Phi_{v,J}(x)_\ell
    =
    \langle a_{v,J},x-v\rangle
    >
    0.$
Thus, $T_{v,J}$ (see \eqref{eq:halfspace:definitionOfT}) is positive.
\end{proof} 

\vspace{-4mm}
\subsection{Proof of \cref{thm:booleanHalfspace} (Halfspaces Are Not Learnable on the Boolean Cube)}
    \label{sec:proofof:thm:booleanHalfspace}
    In this section, we prove \cref{thm:booleanHalfspace}, which we restate below.
    \thmbooleanhalfspace*
    \vspace{-3mm}
\begin{proof}[Proof of \cref{thm:booleanHalfspace}]
For every $d$, the class $\hyH^{\rm H}_d$ contains the constant-one hypothesis.
Further, for each $v\in \zo^d$, it contains a hypothesis that is positive on every point of $\zo^d\setminus\inbrace{v}$ and negative on $v$ alone.
Indeed, if we define $D_v(x)
    \coloneqq
    \sum_{i:v_i=0} x_i + \sum_{i:v_i=1} \inparen{1-x_i},$
then $D_v$ is an affine function of $x$, $D_v(v)=0$, and $D_v(x)\geq 1$ for every $x\in \zo^d\setminus\inbrace{v}$.
Hence the halfspace $h_v(x)
    \coloneqq
    \one\!\inbrace{D_v(x)\geq \nfrac12}$
belongs to $\hyH^{\rm H}_d$ and excludes exactly the point $v$.
Therefore, for every $A\subseteq \zo^d$, $A
    =
    \bigcap_{v\in \zo^d\setminus A} \supp(h_v),$
so the intersection closure of $\hyH^{\rm H}_d$ contains every subset of $\zo^d$.
Thus Item~2 of \cref{thm:implications} applies, and $\hyH^{\rm H}$ is not learnable in Valiant's model.
\end{proof}

\newpage

\newpage

\newpage

\newpage
\printbibliography
\newpage

\appendix

\section{Additional Results}
    \label{sec:appendix:additional}

\subsection{Learning Intersections of $s$ Halfspaces}

In this section, we show that intersections of $s$ halfspaces are learnable in Valiant's model.
\begin{theorem}[Intersections of Halfspaces Are Learnable in Valiant's Model]
\label{thm:s-halfspaces}
Fix $d,s\geq 1$.
The class of intersections of $s$ halfspaces in $\R^d$ is learnable in the extension of Valiant's model to $\R^d$ (\cref{def:valiant:general})
with sample complexity $m(\eps,\delta)$ and query complexity $q(\eps,\delta)$ for
\[
    m(\eps,\delta)
    =
    O\!\inparen{
        \frac{
            s d^2 \log\!\inparen{s d}\cdot \log\!\nfrac1\eps
            +
            \log\!\nfrac1\delta
        }{\eps}
    }
    \qquadand
    q(\eps,\delta)=(2es)^d\,m^{s(d^2+1)}\,.
\]
\end{theorem}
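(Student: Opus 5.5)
We generalize the hull-simplex learner of \cref{thm:halfspace}. Instead of one safe simplex per target, we build one \emph{intersection of at most $s$ simplexes} that contains the sample hull and lies inside the target.

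Fix a target $K^\star=\bigcap_{j=1}^s H_j$ with $H_j=\inbrace{x:\langle n_j,x\rangle\geq c_j}$, draw $m$ samples $P$, and set $C=\conv(P)\subseteq K^\star$. Since $C\subseteq H_j$ for each $j$, \cref{lem:halfspace:safe-candidate} applied to $H_j$ in place of $\Hstar$ gives a vertex $v_j$ and a facet set $J_j\subseteq\cF(v_j)$ with $C\subseteq S_{v_j,J_j}\subseteq H_j$. Hence
\[
    C\subseteq \wh K\coloneqq \bigcap_{j=1}^s S_{v_j,J_j}\subseteq K^\star .
\]
The learner enumerates all $s$-tuples of candidate pairs $(v,J)$ and tests whether $\bigcap_j S_{v_j,J_j}\subseteq K^\star$. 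Because $K^\star$ is convex, this test succeeds exactly when every vertex of the polytope $\bigcap_j S_{v_j,J_j}$ is positive. So for each tuple we query all vertices of the intersection and output the first tuple that passes; if none passes, we output the all-zero hypothesis.

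Correctness follows from three facts.
\begin{itemize}
    \item One-sidedness: the output passed the vertex test, so it lies in $K^\star$ by convexity.
    \item Existence: the safe tuple above passes, so the learner never falls back to the all-zero hypothesis.
    \item Generalization: every candidate output contains $P$ and is an intersection of at most $s$ simplexes, i.e.\ of at most $s(d+1)$ halfspaces. That class has VC dimension $O(sd^2\log(sd))$ by the $k$-fold intersection bound of \citet{blumer1989learnability}. The zero-empirical-miss argument of \cref{lem:halfspace:false-negative} then yields the stated $m(\eps,\delta)$.
\end{itemize}
Exact vertex queries are deterministic, so no confidence is lost in the query phase.

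The query count is a counting argument. As in \cref{lem:halfspace:query}, there are at most $m^{d^2+1}$ candidate pairs, so at most $m^{s(d^2+1)}$ tuples. An intersection of $s$ simplexes in $\R^d$ is cut out by at most $s(d+1)$ halfspaces. By the upper bound theorem (or simply $\binom{s(d+1)}{d}\le (es(d+1)/d)^d\le(2es)^d$), it has at most $(2es)^d$ vertices. The total is at most $(2es)^d m^{s(d^2+1)}$ queries. The degenerate case $\dim A<d$ is handled exactly as before, working inside $A$.

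The main obstacle is the vertex-count step. We must make sure the $(2es)^d$ bound holds uniformly, including for lower-dimensional or unbounded faces. Unboundedness is not an issue because each $S_{v,J}$ is bounded. We must also confirm that querying vertices of the intersection, rather than of the individual simplexes, is what certifies containment: an individual $S_{v_j,J_j}$ may leave $K^\star$, since it is only guaranteed to lie in its own $H_j$.
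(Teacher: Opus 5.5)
Your proposal is correct and follows essentially the same route as the paper. Both apply \cref{lem:halfspace:safe-candidate} to each $H_j$ to get a safe simplex, query every vertex of each intersection over all $s$-tuples of candidate simplices, bound the vertex count by $\binom{s(r+1)}{r}\le(2es)^d$ inside $A$, and finish with a zero-empirical-miss VC argument for intersections of $O(sd)$ halfspaces.

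One small slip: when $1\le \dim A<d$, a lower-dimensional simplex is not an intersection of $d+1$ halfspaces in $\R^d$. The paper uses $2d+1$ halfspaces, and $d+2$ would suffice. The generalization step must therefore use this count over a fixed ambient class, not anything defined inside the sample-dependent $A$; the $O(sd^2\log(sd))$ bound is unaffected.
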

The proof builds directly on the geometric machinery developed for the single-halfspace case.
The reader may find it helpful to review the proof of \cref{thm:halfspace} before proceeding.
At a high level, in this proof, we reuse the candidate simplices $S_{v,J}$ from the proof of \cref{thm:halfspace} and intersect $s$ of them, one for each halfspace defining the target.

\begin{proof}[Proof of \cref{thm:s-halfspaces}]
Fix $\eps,\delta\in(0,1)$, a target set
\[
    P^\star
    =
    \bigcap_{t=1}^{s} H_t^\star,
    \qquad
    H_t^\star
    =
    \inbrace{
        x\in \R^d :
        \langle n_t^\star,x\rangle \geq c_t^\star
    },
\]
and a distribution $\cD$ compatible with $P^\star$ (\cref{def:compatible}).
We allow repetitions among the halfspaces $H_t^\star$, so this representation exists even when $P^\star$ is defined by fewer than $s$ irredundant halfspaces.
As in the proof of \cref{thm:halfspace}, draw $m$ independent examples $x_1,\dots,x_m\sim \cD$ (with $m$ to be specified), and define the sample set $P\coloneqq \inbrace{x_1,\dots,x_m}$, its convex hull $C\coloneqq \conv(P)$, the affine hull $A\coloneqq \aff(C)$, the linear space $L\coloneqq \inbrace{x-y:x,y\in A}$, and the intrinsic dimension $r\coloneqq \dim(L)$.

If $r=0$, then $C$ is a singleton and the learner returns $\widehat h(x)\coloneqq \one\!\inbrace{x\in C}$.
We henceforth assume $r\geq 1$.
We reuse the family of candidate simplices from \cref{thm:halfspace}'s proof:
\[
    \mathcal{S}(C)
    \coloneqq
    \inbrace{
        S_{v,J} :
        v \text{ is a vertex of } C,\;
        J\subseteq \cF(v),\;
        \abs{J}=r,\;
        \text{the normals in } J \text{ are linearly independent}
    }.
\]
Recall from \cref{lem:halfspace:contains-hull} that every $S_{v,J}\in\mathcal{S}(C)$ contains $C$, and from \cref{lem:halfspace:simplex} that each $S_{v,J}$ is an $r$-simplex in $A$.
For each ordered $s$-tuple $\sigma
    =
    \inparen{S_1,\dots,S_s}
    \in
    \mathcal{S}(C)^s,$
define the candidate polytope
\[
    P_\sigma
    \coloneqq
    \bigcap_{t=1}^{s} S_t.
\]
Since each $S_t\supseteq C$, every candidate satisfies $P_\sigma\supseteq C$.
\begin{lemma}[A Safe Candidate Polytope Exists]
\label{lem:s-halfspaces:safe-candidate}
There exists $\sigma^\star\in \mathcal{S}(C)^s$ such that $C\subseteq P_{\sigma^\star}\subseteq P^\star.$
\end{lemma}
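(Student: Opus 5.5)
The plan is to apply \cref{lem:halfspace:safe-candidate} once for each halfspace $H_t^\star$ in the representation of $P^\star$, and then intersect the resulting simplices. Fix $t\in\{1,\dots,s\}$. Since $\cD$ is compatible with $P^\star\subseteq H_t^\star$, every sample point lies in $H_t^\star$, so $C=\conv(P)\subseteq H_t^\star$. The proof of \cref{lem:halfspace:safe-candidate} used only two facts about the target: it is a closed halfspace $\{x:\langle n^\star,x\rangle\geq c^\star\}$ with $n^\star\neq 0$, and it contains $C$. In particular, it used that the vertex $v^\star$ lies in $\Hstar$. It did not use the distribution or anything else about the sample. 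I therefore rerun that argument with $\Hstar$ replaced by $H_t^\star$. This yields a vertex $v_t$ of $C$ and a size-$r$ set $J_t\subseteq\cF(v_t)$ with linearly independent normals such that $S_{v_t,J_t}\subseteq W_{v_t,J_t}\subseteq H_t^\star$. By construction $S_{v_t,J_t}\in\mathcal S(C)$.

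One detail needs checking. If $\Pi_L(n_t^\star)=0$, then $\langle n_t^\star,\cdot\rangle$ is constant on $A$, and since $C\subseteq H_t^\star$ we get $A\subseteq H_t^\star$. In that case any element of $\mathcal S(C)$ works, because every candidate lies in $A$. This set is nonempty: take any vertex and use \cref{lem:halfspace:incident-span}. The general argument already covers this case, since the zero vector is a trivial conic combination, so $J_0=\emptyset$ and we extend it to $J^\star$. I would simply note this.

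Next, set $\sigma^\star\coloneqq(S_{v_1,J_1},\dots,S_{v_s,J_s})\in\mathcal S(C)^s$. Then
\[
P_{\sigma^\star}=\bigcap_{t=1}^s S_{v_t,J_t}\subseteq\bigcap_{t=1}^s H_t^\star=P^\star .
\]
For the lower inclusion, each $S_{v_t,J_t}\supseteq C$ by \cref{lem:halfspace:contains-hull}, so $C\subseteq P_{\sigma^\star}$.

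The main obstacle is confirming that \cref{lem:halfspace:safe-candidate} really transfers to each $H_t^\star$ separately. The concern is that its proof might have used more about $\Hstar$, such as the realizability of $\cD$ with respect to it, rather than only $C\subseteq\Hstar$. On inspection the proof uses only $v^\star\in C\subseteq\Hstar$ and the normal-cone and Carath\'eodory facts about $C$, so the transfer is immediate. Repeated halfspaces in the representation cause no issue, because we simply repeat the corresponding simplex.
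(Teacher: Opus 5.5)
Your proposal is correct and follows the paper's proof: you apply the single-halfspace safe-candidate lemma to each $H_t^\star$, which needs only $C\subseteq H_t^\star$, intersect the resulting simplices to get $P_{\sigma^\star}\subseteq P^\star$, and obtain $C\subseteq P_{\sigma^\star}$ because every candidate contains $C$. Your extra check of the degenerate case $\Pi_L(n_t^\star)=0$ is harmless, and as you note, the general argument already covers it.
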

\begin{proof}[Proof of \cref{lem:s-halfspaces:safe-candidate}]
For each $t\in\insquare{s}$, the distribution $\cD$ is compatible with $H_t^\star$ (since $P^\star\subseteq H_t^\star$).
By \cref{lem:halfspace:safe-candidate}, there exists $S_t^\star\in \mathcal{S}(C)$ with $S_t^\star\subseteq H_t^\star$.
Setting $\sigma^\star\coloneqq \inparen{S_1^\star,\dots,S_s^\star}$, we have
\[
    P_{\sigma^\star}
    =
    \bigcap_{t=1}^{s} S_t^\star
    \subseteq
    \bigcap_{t=1}^{s} H_t^\star
    =
    P^\star.
\]
Since each $S_t^\star\supseteq C$, we also have $P_{\sigma^\star}\supseteq C$.
\end{proof} 

\begin{lemma}[Candidate Polytopes Have Few Vertices]
\label{lem:s-halfspaces:vertex-bound}
Every candidate polytope $P_\sigma$ has at most $\binom{s(r+1)}{r}$
vertices.
\end{lemma}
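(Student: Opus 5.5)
Each $S_t$ is an $r$-simplex in $A$ (\cref{lem:halfspace:simplex}). So within the $r$-dimensional affine space $A$, it is cut out by exactly $r+1$ closed halfspaces. Concretely, these are the $r$ facet halfspaces $H_F$ for $F\in J$ together with the closing halfspace $\inbrace{x\in A:\langle a_{v,J},x-v\rangle\leq T_{v,J}}$. Hence $P_\sigma=\bigcap_{t=1}^s S_t$ is the intersection of at most $s(r+1)$ closed halfspaces of $A$. I would work throughout in affine coordinates on $A\cong\R^r$, for instance via the isomorphism $\Phi_{v,J}$ or any fixed affine chart. The question then reduces to bounding the vertices of a polyhedron in $\R^r$ given by $N\coloneqq s(r+1)$ linear inequalities.

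Next I would check that $P_\sigma$ is a bounded, full-dimensional polytope in $A$, so that it is the convex hull of its vertices. Boundedness holds because $P_\sigma\subseteq S_1$, which is a simplex. Full-dimensionality (within $A$) is inherited from $C\subseteq P_\sigma$, which is not strictly needed for the count but makes ``vertex'' unambiguous. Then I would invoke the standard characterization of vertices of a polyhedron $\inbrace{y\in\R^r: \langle b_i,y\rangle\leq c_i,\ i\in[N]}$. A point $y$ of the polyhedron is a vertex if and only if the set of constraints active (tight) at $y$ contains $r$ constraints with linearly independent normals. In that case $y$ is the unique solution of the corresponding $r\times r$ linear system.

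Given this, the counting is an injection. For each vertex, choose one set $I\subseteq[N]$ of $r$ active constraints with linearly independent normals. Distinct vertices cannot share the same $I$, because $I$ determines the vertex as the unique solution of $\langle b_i,y\rangle=c_i$ for $i\in I$. This yields an injection from vertices into $r$-subsets of $[N]$, giving at most $\binom{s(r+1)}{r}$ vertices.

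The only potential subtlety, rather than a real obstacle, is duplicate constraints across different $S_t$ (the same facet halfspace can appear in several simplices), which could make some of the $N$ inequalities coincide. This only helps: the injection is into index subsets of a list of length $N$, repetitions allowed, so the bound is unaffected. I would also note the case $r\geq1$ is assumed, so $r$-subsets make sense. I expect the main care point to be stating the vertex characterization correctly inside the affine subspace $A$ rather than $\R^d$, which the coordinate chart handles.
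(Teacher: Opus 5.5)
Your proposal is correct and is essentially the paper's argument. You write $P_\sigma$ as an intersection of at most $s(r+1)$ halfspaces of $A$, assign to each vertex $r$ active constraints with linearly independent normals that determine it uniquely, and get an injection into $r$-subsets; your remarks on boundedness and duplicate constraints are harmless refinements of the same proof.
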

\begin{proof}
    Fix $\sigma=\inparen{S_1,\dots,S_s}\in \mathcal{S}(C)^s$.
    Each $S_t$ is an $r$-simplex in $A$ (by \cref{lem:halfspace:simplex}), so it has exactly $r+1$ facets.
    Therefore $P_\sigma=\bigcap_{t=1}^{s} S_t$ is an $r$-dimensional polytope in $A$ cut out by at most $s(r+1)$ facet-defining inequalities (it is $r$-dimensional because $C\subseteq P_\sigma\subseteq A$ and $\dim(C)=r$).
    At each vertex $u$ of $P_\sigma$, choose $r$ active facet inequalities whose normals are linearly independent.
    The corresponding $r$ hyperplanes meet in the unique point $u$.
    Hence distinct vertices can be assigned distinct $r$-subsets of the at most $s(r+1)$ facets, so the number of vertices is at most $\binom{s(r+1)}{r}$.
\end{proof} 

\paragraph{Learning Algorithm.}
If $r=0$, return the hypothesis $\widehat h(x)=\one\!\inbrace{x\in C}$.
Otherwise, enumerate all ordered $s$-tuples $\sigma\in \mathcal{S}(C)^s.$
For each $\sigma$, compute $P_\sigma$, query the membership oracle on every vertex of $P_\sigma$, and return the first candidate all of whose vertices receive label $1$.
Write $\widehat P$ for the returned candidate and $\widehat h(x)\coloneqq \one\!\sinbrace{x\in \widehat P}$
for the output hypothesis.

\begin{lemma}[One-Sided Error]
\label{lem:s-halfspaces:one-sided}
The learner always halts, and its output hypothesis $\widehat h$ has no false positives.
Moreover, $P\subseteq C\subseteq \widehat P.$
\end{lemma}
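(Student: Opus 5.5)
The plan is to mirror the proof of \cref{lem:halfspace:one-sided}, with the point-location step replaced by exhaustive vertex queries. Since membership queries are answered exactly, no failure event is needed here, so the claim should hold deterministically for every sample.

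First, I will handle the degenerate case $r=0$. Here $C$ is a single sampled point, so $\widehat P = C \subseteq P^\star$ by compatibility of $\cD$ with $P^\star$. Thus there are no false positives, and $P = C = \widehat P$.

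Next, take $r \geq 1$. Halting reduces to two facts. The enumeration is finite, since $C$ has finitely many vertices and facets, so $\mathcal{S}(C)$ is finite. And each $P_\sigma$ has finitely many vertices (at most $\binom{s(r+1)}{r}$ by \cref{lem:s-halfspaces:vertex-bound}), each computable from the finitely many defining inequalities. The learner must also actually return some candidate rather than falling through. For this I invoke \cref{lem:s-halfspaces:safe-candidate}: the tuple $\sigma^\star$ satisfies $P_{\sigma^\star} \subseteq P^\star$, so every vertex of $P_{\sigma^\star}$ lies in $P^\star$ and receives label $1$. Hence the scan reaches a success no later than $\sigma^\star$ in the enumeration order, and the returned $\widehat P$ is well defined.

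For one-sided error, I use the fact that every vertex of $\widehat P$ was queried and labeled $1$, i.e., lies in $P^\star$. The step needing care is that $\widehat P$ is the convex hull of its vertices. It is a bounded polyhedron, being contained in the simplex $S_1$ (\cref{lem:halfspace:simplex}), and it is nonempty, since it contains $C$. A nonempty bounded polyhedron is a polytope and hence the convex hull of its vertices. Since $P^\star$ is an intersection of halfspaces and therefore convex, $\widehat P \subseteq P^\star$, which gives no false positives. Finally, every $S_{v,J} \supseteq C$ by \cref{lem:halfspace:contains-hull}, so $\widehat P \supseteq C \supseteq P$.

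I expect the main, if minor, obstacle to be this polytope/vertex argument, which is needed to justify that checking vertices alone suffices. The rest follows directly from the earlier lemmas.
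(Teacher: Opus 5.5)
Your proof is correct and follows essentially the same route as the paper. The safe candidate from \cref{lem:s-halfspaces:safe-candidate} guarantees the scan returns something, and the vertex labels together with convexity of $P^\star$ give $\widehat P\subseteq P^\star$, while $S_{v,J}\supseteq C$ gives $P\subseteq C\subseteq \widehat P$; your additional details (the $r=0$ case, finiteness of the enumeration, and that $\widehat P$ is a nonempty bounded polyhedron, hence the convex hull of its vertices) just make explicit what the paper leaves implicit.
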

\begin{proof}
By \cref{lem:s-halfspaces:safe-candidate}, there exists $\sigma^\star\in \mathcal{S}(C)^s$ with $P_{\sigma^\star}\subseteq P^\star$.
All vertices of $P_{\sigma^\star}$ lie in $P^\star$ and therefore receive label $1$, so the learner eventually halts.
Now let $\widehat P$ be the returned candidate.
Every vertex of $\widehat P$ received label $1$ and hence lies in $P^\star$.
Since $P^\star$ is convex and $\widehat P$ is the convex hull of its vertices, $\widehat P\subseteq P^\star$, and so $\widehat h$ has no false positives.
Finally, $P\subseteq C\subseteq \widehat P$ because every candidate polytope contains $C$.
\end{proof} 
To control the false-negative error, we bound the VC dimension of the class of sets the learner can output. Define
\[
    \hyT_{d,s}
    \coloneqq
    \inbrace{
        T_1\cap \cdots \cap T_s :
        T_1,\dots,T_s \text{ are affine simplices of dimension at most } d
    }.
\]
Every output $\widehat P$ belongs to $\hyT_{d,s}$.

\begin{lemma}[The Output Class Has Small VC Dimension]
\label{lem:s-halfspaces:vc}
If $\hyG_{d,s}
    \coloneqq
    \inbrace{
        x\mapsto \one\!\inbrace{x\notin T} :
        T\in \hyT_{d,s}
    },$
then $\vc(\hyG_{d,s})
    =
    \vc(\hyT_{d,s})
    =
    O\!\inparen{s d^2\log\!\inparen{s d}}.$
\end{lemma}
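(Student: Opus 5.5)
The equality $\vc(\hyG_{d,s})=\vc(\hyT_{d,s})$ is immediate: complementing every set in a class preserves which labelings of a finite set are realized, only swapping $0$ and $1$. So it suffices to bound $\vc(\hyT_{d,s})$. The plan is to treat $\hyT_{d,s}$ as a subclass of $k$-fold intersections of halfspaces and apply the standard bound on the VC dimension of $k$-fold intersections, as \cref{lem:halfspace:vc} does for a single simplex.

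Concretely, I will show that every affine simplex $T$ of dimension $r\leq d$ in $\R^d$ is an intersection of at most $2d+1$ closed halfspaces of $\R^d$. The simplex is $r+1\leq d+1$ halfspaces intersected with its affine hull $A$. If $r<d$, write $A$ as the intersection of $d-r$ hyperplanes, and write each hyperplane as the intersection of two closed halfspaces. Lifting each facet inequality from $A$ to $\R^d$ by extending the linear functional gives the remaining halfspaces. In total we use at most $(r+1)+2(d-r)\leq 2d+1$ halfspaces. Consequently every set in $\hyT_{d,s}$ is an intersection of at most $k\coloneqq s(2d+1)$ halfspaces in $\R^d$. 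Padding with the whole space $\R^d$, which is a degenerate halfspace, allows exactly $k$ halfspaces.

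Halfspaces in $\R^d$ have VC dimension $d+1$. The bound of \citet{blumer1989learnability} on $k$-fold intersections of a class of VC dimension $D$ is $O(Dk\log k)$. With $D=d+1$ and $k=s(2d+1)$ this gives
\[
\vc(\hyT_{d,s})=O\!\inparen{(d+1)\,s(2d+1)\log(s(2d+1))}=O\!\inparen{s d^2\log(sd)},
\]
as claimed. Alternatively, one can run the Sauer--Shelah counting argument directly. On $n$ points, halfspaces induce at most $(en/(d+1))^{d+1}$ labelings, so $k$-fold intersections induce at most $(en/(d+1))^{(d+1)k}$. This is less than $2^n$ once $n\geq C\,(d+1)k\log((d+1)k)$, which yields the same order.

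The main obstacle is bookkeeping around degenerate (lower-dimensional) simplices, which is what forces the affine-hull halfspaces. I expect these to cost only a constant factor, so the bound stays $O(sd^2\log(sd))$. The rest is routine once the representation as intersections of $O(sd)$ halfspaces is in place.
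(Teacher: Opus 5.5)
Your proof is correct and matches the paper's argument: reduce to $\vc(\hyT_{d,s})$ by complementation, write each simplex as an intersection of at most $2d+1$ halfspaces, and apply the standard $O(Dk\log k)$ bound for $k$-fold intersections with $D=d+1$ and $k=s(2d+1)$. Your explicit count of $(r+1)+2(d-r)\le 2d+1$ halfspaces for lower-dimensional simplices justifies a step the paper only asserts.
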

\begin{proof}
Complementation preserves VC dimension, so it suffices to bound $\vc(\hyT_{d,s})$.
Each simplex in $\R^d$ can be written as the intersection of at most $2d+1$ halfspaces, so every $T\in \hyT_{d,s}$ is the intersection of at most $s(2d+1)$ halfspaces.
By the standard VC bound for intersections of halfspaces in $\R^d$,
\[
    \vc(\hyT_{d,s})
    =
    O\!\inparen{
        d\cdot s(2d+1)\cdot \log\!\inparen{s(2d+1)}
    }
    =
    O\!\inparen{
        s d^2\log\!\inparen{s d}
    }.\qedhere
\]
\end{proof}

\begin{lemma}[False-Negative Error]
\label{lem:s-halfspaces:false-negative}
If $m
    =
    O\!\inparen{
        (\nfrac{
            1
        }{\eps})\cdot \inparen{s d^2 \log\!\inparen{s d}\cdot \log\!\nfrac1\eps
            +
            \log\!\nfrac1\delta}
    },$
then with probability at least $1-\delta$, $\Pr_{x\sim \cD}\inparen{x\notin \widehat P}\leq \eps.$
Consequently, $\Pr_{x\sim \cD}\inparen{\widehat h(x)=0 \text{ and } P^\star(x)=1}\leq \eps.$
\end{lemma}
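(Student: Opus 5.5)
This mirrors the proof of \cref{lem:halfspace:false-negative}, with \cref{lem:s-halfspaces:vc} in place of \cref{lem:halfspace:vc}. The plan is to apply the standard zero-empirical-error (realizable) VC bound to the complement class $\hyG_{d,s}$. With $\vc(\hyG_{d,s})=O\!\inparen{s d^2\log(sd)}$, a sample of size
\[
    m=O\!\inparen{\frac{\vc(\hyG_{d,s})\log\nfrac1\eps+\log\nfrac1\delta}{\eps}}
\]
guarantees the following. With probability at least $1-\delta$ over $x_1,\dots,x_m\sim\cD$, every $g\in\hyG_{d,s}$ whose empirical mean on the sample is zero has $\E_{x\sim\cD}[g(x)]\leq\eps$. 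Equivalently, every $T\in\hyT_{d,s}$ containing $P$ satisfies $\Pr_{x\sim\cD}(x\notin T)\leq\eps$. Crucially, this event depends only on the sample and not on which candidate the learner later selects, so the data-dependent choice of $\widehat P$ costs nothing.

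The next step is to apply this uniform statement to $T=\widehat P$. We need two facts about $\widehat P$. The first is $P\subseteq\widehat P$, which is \cref{lem:s-halfspaces:one-sided}. The second is $\widehat P\in\hyT_{d,s}$. For this, each $S_t\in\mathcal S(C)$ is an $r$-simplex in $A$ with $r\leq d$ by \cref{lem:halfspace:simplex}, so it is an affine simplex of dimension at most $d$ in $\R^d$. Hence $\widehat P=P_\sigma=\bigcap_{t=1}^{s}S_t$ lies in $\hyT_{d,s}$. In the degenerate case $r=0$, the output $C$ is a single point, which is a $0$-simplex intersected with itself $s$ times, so it also lies in $\hyT_{d,s}$.

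On the good event, it follows that $\Pr_{x\sim\cD}(x\notin\widehat P)\leq\eps$. For the consequence, compatibility of $\cD$ with $P^\star$ gives $\supp(\cD)\subseteq P^\star$. Therefore $\Pr_{x\sim\cD}(\widehat h(x)=0\text{ and }P^\star(x)=1)=\Pr_{x\sim\cD}(x\notin\widehat P)\leq\eps$.

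The only point needing care is that lower-dimensional simplices in $A$ are indeed members of $\hyT_{d,s}$. This holds because $\hyT_{d,s}$ is defined with simplices of dimension at most $d$, and the bound in \cref{lem:s-halfspaces:vc} applies to that class as stated. Beyond this, the argument is routine, and there is no real obstacle.
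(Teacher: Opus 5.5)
Your proposal is correct and follows the paper's proof. Like the paper, you apply the realizable-case VC bound to $\hyG_{d,s}$ via \cref{lem:s-halfspaces:vc}, use $P\subseteq\widehat P$ from \cref{lem:s-halfspaces:one-sided}, and conclude via compatibility of $\cD$ with $P^\star$. You also make explicit two points the paper leaves implicit: that the good event is uniform over $\hyT_{d,s}$, so the data-dependent choice of $\widehat P$ is harmless, and that $\widehat P\in\hyT_{d,s}$, including in the case $r=0$.
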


\begin{proof}
By \cref{lem:s-halfspaces:one-sided}, $P\subseteq \widehat P$, so the function $g_{\widehat P}(x)\coloneqq \one\!\sinbrace{x\notin \widehat P}\in \hyG_{d,s}$ has empirical mean zero on the sample.
By \cref{lem:s-halfspaces:vc} and the standard VC bound for zero empirical error, if
\[
    m
    =
    O\!\inparen{
        \frac{
            \vc(\hyG_{d,s})\log\!\nfrac1\eps
            +
            \log\!\nfrac1\delta
        }{\eps}
    }
    =
    O\!\inparen{
        \frac{
            s d^2 \log\!\inparen{s d}\cdot \log\!\nfrac1\eps
            +
            \log\!\nfrac1\delta
        }{\eps}
    },
\]
then with probability at least $1-\delta$, $\Pr_{x\sim \cD}\inparen{x\notin \widehat P}\leq \eps.$
Since $\widehat P\subseteq P^\star$ (\cref{lem:s-halfspaces:one-sided}) and $\cD$ is supported on $P^\star$,
\[
    \Pr_{x\sim \cD}\inparen{\widehat h(x)=0 \text{ and } P^\star(x)=1}
    =
    \Pr_{x\sim \cD}\inparen{x\notin \widehat P}
    \leq
    \eps.\qedhere
\]
\end{proof}

\begin{lemma}[Query Complexity]
\label{lem:s-halfspaces:query}
The learner uses at most $(2es)^d\,m^{s(d^2+1)}$ membership queries.
\end{lemma}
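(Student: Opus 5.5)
The plan is to count candidate tuples and the vertices queried per tuple. The learner enumerates ordered $s$-tuples $\sigma\in\mathcal{S}(C)^s$. For each one it queries every vertex of $P_\sigma$. So the total number of membership queries is at most
\[
    \abs{\mathcal{S}(C)}^{s}\cdot \max_\sigma \#\mathrm{vertices}(P_\sigma).
\]
In the degenerate case $r=0$ no queries are made, so I assume $r\geq 1$.

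\textbf{Bounding $\abs{\mathcal{S}(C)}$.} I reuse the counting from the proof of \cref{lem:halfspace:query}. The polytope $C$ has at most $m$ vertices. Every facet of $C$ (within $A$) is determined by an $r$-subset of vertices, so $f(C)\leq m^r$. For a fixed vertex $v$, the number of admissible $J\subseteq\cF(v)$ is at most $\binom{f(C)}{r}\leq m^{r^2}$. Hence
\[
    \abs{\mathcal{S}(C)}\leq m^{r^2+1}\leq m^{d^2+1},
\]
and therefore
\[
    \abs{\mathcal{S}(C)^s}\leq m^{s(d^2+1)}.
\]

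\textbf{Bounding vertices per candidate.} By \cref{lem:s-halfspaces:vertex-bound}, each $P_\sigma$ has at most $\binom{s(r+1)}{r}$ vertices. Using $\binom{n}{k}\leq (en/k)^k$ gives
\[
    \binom{s(r+1)}{r}\leq \inparen{\frac{es(r+1)}{r}}^{r}\leq (2es)^r\leq (2es)^d,
\]
since $(r+1)/r\leq 2$ for $r\geq 1$ and $2es\geq 1$. Multiplying the two bounds yields $(2es)^d\,m^{s(d^2+1)}$.

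\textbf{Main obstacle.} The only delicate point is whether it suffices to count queries over all tuples, since the learner stops at the first accepted candidate. Because we are proving an upper bound, the worst case of querying every candidate is valid, and halting is already guaranteed by \cref{lem:s-halfspaces:one-sided}.

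A secondary point is making the facet-counting injection rigorous. Each facet of the $r$-dimensional polytope $C$ contains $r$ affinely independent vertices, and these determine the facet's affine hull. This argument is already given in the single-halfspace proof, so I would cite it rather than repeat it.
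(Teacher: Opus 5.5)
Your proposal is correct and follows the paper's proof essentially line by line. You reuse the single-halfspace count $\abs{\mathcal{S}(C)}\leq m^{d^2+1}$, invoke \cref{lem:s-halfspaces:vertex-bound} with the estimate $\binom{s(r+1)}{r}\leq (2es)^r\leq (2es)^d$, and multiply over all $s$-tuples, exactly as the paper does. Your explicit notes that counting every tuple is a valid worst case and that $2es\geq 1$ are points the paper leaves implicit.
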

\begin{proof}
If $r=0$, no membership queries are made.
Assume $r\geq 1$.
By the counting in the proof of \cref{lem:halfspace:query}, $\abs{\mathcal{S}(C)}\leq m^{d^2+1}$, so the number of $s$-tuples is at most $m^{s(d^2+1)}$.
By \cref{lem:s-halfspaces:vertex-bound}, each $P_\sigma$ has at most $\binom{s(r+1)}{r}$ vertices.
Using $\binom{n}{k}\leq \inparen{\frac{en}{k}}^k$,
\[
\binom{s(r+1)}{r}\leq \inparen{\frac{es(r+1)}{r}}^r \leq (2es)^r \leq (2es)^d,
\]
since $r\geq 1$ implies $(r+1)/r\leq 2$.
The learner queries at most $(2es)^d$ vertices per candidate across at most $m^{s(d^2+1)}$ candidates, giving at most $(2es)^d\,m^{s(d^2+1)}$ membership queries in total.
\end{proof} 
The theorem follows by combining \cref{lem:s-halfspaces:one-sided,lem:s-halfspaces:false-negative,lem:s-halfspaces:query}. 
\end{proof}

\subsection{Learnability in Valiant's Model Is Closed under Unions} 
In this section, we show that learnability is closed under unions. 
\begin{theorem}[Learnability Is Closed under Unions]
\label{thm:valiant:unionClosedness}
Let $\hyH=\inparen{\hyH_1,\hyH_2,\dots}$ and $\hyG=\inparen{\hyG_1,\hyG_2,\dots}$
be sequences of concept classes that are learnable in Valiant's model (\cref{def:valiant}).
Then their union $\hyH\cup \hyG
    \coloneqq
    \inparen{\hyH_1\cup \hyG_1,\hyH_2\cup \hyG_2,\dots}$
is also learnable in Valiant's model.
More precisely, if $\hyH_d$ and $\hyG_d$ admit query compression schemes of size $q_{\hyH}(d)$ and $q_{\hyG}(d)$ respectively, then there is a learner for $\hyH\cup \hyG$ that, for every $d,\eps,\delta$, uses at most 
\[
    O\!\inparen{
        \frac{
            d\inparen{q_{\hyH}(d)+q_{\hyG}(d)}
            +
            \log\nfrac{1}{\delta}
        }{\eps}
    } \text{examples}
    \qquadand
    q_{\hyH}(d)+q_{\hyG}(d)
    \text{ membership queries}\,.
\]
\end{theorem}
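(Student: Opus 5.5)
The plan is to go through the compression characterization (\cref{thm:characterization,thm:valiant:learningFromCompression}). I will build a query compression scheme for every realizable sample $S$ of the union class $\hyF_d\coloneqq\hyH_d\cup\hyG_d$, of size $q_{\hyH}(d)+q_{\hyG}(d)$, and then invoke \cref{thm:valiant:learningFromCompression}. With the constant rate $\kappa_d\equiv q_{\hyH}(d)+q_{\hyG}(d)$, that theorem gives exactly the stated bounds: $O\bigl((d(q_{\hyH}+q_{\hyG})+\log\nfrac1\delta)/\eps\bigr)$ examples and $q_{\hyH}+q_{\hyG}$ queries. Learnability of $\hyH$ and $\hyG$ supplies polynomial $q_{\hyH},q_{\hyG}$ via \cref{thm:characterization}, so the qualitative statement follows as well.

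The construction is as follows. Fix $S$ with $\V_{\hyF}(S)\neq\emptyset$. Write $\V_{\hyF}(S)=\V_{\hyH}(S)\cup\V_{\hyG}(S)$; at least one of these is nonempty.
\begin{itemize}
\item If $\V_{\hyH}(S)\neq\emptyset$, let $\Sigma^{\hyH}_S$ be a scheme for $S$ relative to $\hyH_d$.
\item If $\V_{\hyH}(S)=\emptyset$, let $\Sigma^{\hyH}_S$ be a strategy that makes the trivial choice of querying all of $S$ when $|S|\le q_{\hyH}(d)$. Otherwise it makes no queries; the bookkeeping for that case is handled below.
\end{itemize}
Define $\Sigma^{\hyG}_S$ in the same way. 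The combined strategy $\Sigma_S$ runs $\Sigma^{\hyH}_S$ first and then $\Sigma^{\hyG}_S$, regardless of the answers. Its depth is at most $q_{\hyH}(d)+q_{\hyG}(d)$.

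For correctness, take a response transcript $r=r_1\circ r_2$ realized by some $f\in\V_{\hyF}(S)$. I need every $h\in\V_{\hyF}(\Sigma_S,r)$ to contain $S$. Split on whether $h\in\hyH_d$ or $h\in\hyG_d$; say $h\in\hyH_d$. Then $h$ is consistent with the $\hyH$-phase transcript $r_1$. If $r_1$ is realizable with respect to $\V_{\hyH}(S)$, the $\hyH$-scheme guarantee forces $S\subseteq\supp(h)$.

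The main obstacle is the case where $r_1$ was produced by $f\in\hyG_d$ and no hypothesis of $\V_{\hyH}(S)$ produces $r_1$. A hypothesis $h\in\hyH_d\setminus\V_{\hyH}(S)$ might then be consistent with everything while missing a point of $S$. To fix this I would strengthen the $\hyH$-scheme into one that certifies $S$ against \emph{all} of $\hyH_d$: on every branch, either the transcript certifies $S$, or it is inconsistent with every hypothesis in $\V_{\hyH}(S)$ and already exhibits a point of $S$ answered $0$ (which rules out all of $\V_{\hyF}(S)$ and so cannot occur for realizable $r$), or no $h\in\hyH_d$ consistent with it contains $S$, in which case it still must rule out every $h\in\hyH_d$ that misses part of $S$. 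I would try to obtain this by tracing the learner-to-compression proof (\cref{thm:valiant:weak-compression}) with uniform success over \emph{all} of $\hyH_d$ (\cref{thm:valiant:uniformSuccess}). Its guarantee $\supp(\wh g)\subseteq\bigcap_{h\in\V}\supp(h)$ applies to any consistent $h\in\hyH_d$, but coverage of $S$ is only promised when the true target lies in $\V_{\hyH}(S)$. So the fallback is to argue that on a $\hyG$-generated branch, any surviving $h\in\hyH_d$ is consistent with queried points of $S$. If that fails, I would concatenate with a short extra phase that queries the points witnessing the failure.
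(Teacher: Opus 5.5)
\textbf{There is a genuine gap, and it sits exactly where you flagged it.} The union scheme of size $q_{\hyH}(d)+q_{\hyG}(d)$ is never constructed. The case $\V_{\hyH}(S)=\emptyset$, $\abs{S}>q_{\hyH}(d)$ is deferred to ``bookkeeping below'' that never appears. The suggested repairs do not supply the scheme either:
\begin{itemize}
\item Uniform success over $\hyH_d$ gives no coverage of $S$ when the target lies in $\hyG_d$, as you note yourself.
\item An extra phase querying ``witnesses'' comes with no length bound, and it spends queries on top of the $q_{\hyH}(d)+q_{\hyG}(d)$ already used.
\end{itemize}

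In fact the intermediate claim is false, so no repair can work at this size. Take $V\subseteq\zo^d$ with $\abs{V}=n$ (e.g.\ $n=d$), $\hyH_d=\inbrace{h_v : v\in V}$ with $h_v(x)=\one\inbrace{x\neq v}$, and $\hyG_d=\inbrace{g}$ with $g\equiv 1$.
\begin{itemize}
\item Clearly $q_{\hyG}(d)=0$.
\item $q_{\hyH}(d)=\lfloor n/2\rfloor$ suffices. For $\hyH$-realizable $S$, a realizable transcript certifies $S$ if and only if it contains a $0$ answer or has queried all of $S\cap V$. So one queries the smaller of $S\cap V$ and $V\setminus S$, stopping at the first $0$.
\item For the union and $S=V$, the only hypothesis containing $V$ is $g$, so every realizable answer is $1$. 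Each $h_v$ with $v$ unqueried survives while missing $v\in S$.
\end{itemize}
Hence every union scheme for $V$ has depth at least $n>q_{\hyH}(d)+q_{\hyG}(d)$. The route ``compress the union, then apply \cref{thm:valiant:learningFromCompression}'' therefore cannot yield the stated query bound.

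\textbf{The missing idea} is that the learner need not certify $S$ through the union's version space at all. It knows $S$ is positive and can use it as a test.
\begin{itemize}
\item For each $\mathcal K\in\inbrace{\hyH,\hyG}$, the paper runs that class's own scheme $\Sigma_{S,\mathcal K}$ against $h^\star$ (depth $0$ if $\V_{\mathcal K}(S)=\emptyset$). It takes the closure $D_{S,\mathcal K}$ of the \emph{class-specific} version space $\V_{\mathcal K}(\Sigma_{S,\mathcal K},r_{S,\mathcal K})$.
\item If $h^\star\in\mathcal K$, then $S\subseteq D_{S,\mathcal K}\subseteq\supp(h^\star)$. So a closure not containing $S$ proves $h^\star\notin\mathcal K$, and it is replaced by $\zo^d$.
\item The output $C_S$ is the intersection of the two resulting sets, and it satisfies $S\subseteq C_S\subseteq\supp(h^\star)$.
\end{itemize}
Generalization then does not go through $\operatorname{Cl}_k(\hyH_d\cup\hyG_d)$. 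It goes through the finite class $\inbrace{A\cap B: A\in\operatorname{Cl}_{q_{\hyH}(d)}(\hyH_d)\cup\inbrace{\zo^d},\ B\in\operatorname{Cl}_{q_{\hyG}(d)}(\hyG_d)\cup\inbrace{\zo^d}}$. By \cref{lem:valiant:kQueryClosureClass:VC}, its log-size is $O(d(q_{\hyH}(d)+q_{\hyG}(d)))$. The same realizable uniform-convergence step as in \cref{thm:valiant:learningFromCompression} then finishes the proof.

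On the example above, this learner asks no queries for $S=V$ and outputs $\zo^d$. A compression scheme for the union exists only a posteriori, through \cref{thm:characterization}, and with a larger size.
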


\paragraph{Setup.}
For $\mathcal K\in\inbrace{\hyH,\hyG}$ and a transcript $\inparen{q,r}$, write
$\V_{\mathcal K}\!\inparen{q,r}
    \coloneqq
    \inbrace{f\in \mathcal K : f(q)=r},$
and similarly for $\V_{\mathcal K}\!\inparen{S}$ and $\V_{\mathcal K}\!\inparen{\sigma,r}$.
Fix a finite set $S\subseteq \zo^d$ and a target $h^\star\in \hyH\cup \hyG$.
For each $\mathcal K\in\inbrace{\hyH,\hyG}$, if $\V_{\mathcal K}\!\inparen{S}\neq\emptyset$, let $\Sigma_{S,\mathcal K}$ be a query compression scheme for $S$ of size at most $k_{\mathcal K}$, as guaranteed by \cref{thm:characterization}; otherwise, let $\Sigma_{S,\mathcal K}$ be an arbitrary depth-$0$ strategy.
Run $\Sigma_{S,\mathcal K}$ against the membership oracle for $h^\star$ and let $r_{S,\mathcal K}$ denote the realized transcript.
Define
\[
    D_{S,\mathcal K}
    \coloneqq
    \begin{cases}
        \displaystyle\bigcap_{f\in \V_{\mathcal K}\!\inparen{\Sigma_{S,\mathcal K},r_{S,\mathcal K}}}\supp(f)
        & \text{if } \V_{\mathcal K}\!\inparen{\Sigma_{S,\mathcal K},r_{S,\mathcal K}}\neq\emptyset,\\[6pt]
        \zo^d & \text{otherwise}.
    \end{cases}
\]
First, we run the query compression scheme for each of the two classes $\hyH$ and $\hyG$ separately, and show that the resulting version-space closure either certifies the sample or rules out that class as containing the target.
\begin{lemma}
\label{lem:valiant:unionClosure:classSpecific}
For $\mathcal K\in\inbrace{\hyH,\hyG}$, the following hold:
\begin{enumerate}[itemsep=0pt]
    \item if $h^\star\in \mathcal K$, then $S\subseteq D_{S,\mathcal K}\subseteq \supp\!\inparen{h^\star}$;
    \item if $S\not\subseteq D_{S,\mathcal K}$, then $h^\star\notin \mathcal K$.
\end{enumerate}
\end{lemma}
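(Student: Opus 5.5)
The plan is to verify the two items by unwinding the definitions of $D_{S,\mathcal K}$ and of a query compression scheme (\cref{def:query-compression}). Here $S$ is the realizable positive sample, so $S\subseteq \supp(h^\star)$. Throughout I fix $\mathcal K\in\inbrace{\hyH,\hyG}$ and write $\V_{\mathcal K}\coloneqq\V_{\mathcal K}\!\inparen{\Sigma_{S,\mathcal K},r_{S,\mathcal K}}$.

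For Item~1, assume $h^\star\in\mathcal K$. Then $h^\star\in \V_{\mathcal K}(S)$, so this version space is nonempty and $\Sigma_{S,\mathcal K}$ is a genuine query compression scheme for $S$ with respect to $\mathcal K$. The realized transcript $r_{S,\mathcal K}$ is produced by $h^\star$, which lies in $\V_{\mathcal K}(S)$. Hence $r_{S,\mathcal K}$ is realizable with respect to $\V_{\mathcal K}(S)$ in the sense of \cref{def:query-compression}.

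Moreover, $h^\star$ is consistent with its own transcript, so $h^\star\in\V_{\mathcal K}$. In particular $\V_{\mathcal K}\neq\emptyset$, and $D_{S,\mathcal K}$ is the intersection of supports over $\V_{\mathcal K}$. Since $h^\star$ is one of the intersected hypotheses, $D_{S,\mathcal K}\subseteq\supp(h^\star)$. The compression guarantee applied to the realizable transcript $r_{S,\mathcal K}$ gives $S\subseteq\bigcap_{f\in\V_{\mathcal K}}\supp(f)=D_{S,\mathcal K}$. This proves Item~1.

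Item~2 is the contrapositive of the first inclusion in Item~1. If $h^\star\in\mathcal K$, then $S\subseteq D_{S,\mathcal K}$. So $S\not\subseteq D_{S,\mathcal K}$ forces $h^\star\notin\mathcal K$.

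The only point needing care, and what I expect to be the main (mild) obstacle, is checking that the scheme is actually invoked under its hypotheses. The guarantee in \cref{def:query-compression} applies only to transcripts realizable by hypotheses in $\V_{\mathcal K}(S)$. It is not enough that the transcript is realizable by some element of $\hyH\cup\hyG$. The argument above handles this by using $h^\star$ itself as the realizing hypothesis, which requires $h^\star\in\mathcal K$.

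When $h^\star\notin\mathcal K$, nothing is claimed except through the contrapositive. The degenerate conventions (a depth-$0$ strategy when $\V_{\mathcal K}(S)=\emptyset$, and $D_{S,\mathcal K}=\zo^d$ when the realized version space is empty) are therefore never used in the argument. They only ensure that $D_{S,\mathcal K}$ is well defined.
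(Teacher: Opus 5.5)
Your proof is correct and follows the paper's argument essentially verbatim. Membership $h^\star\in\V_{\mathcal K}(S)$ makes the realized transcript realizable, so the compression guarantee gives $S\subseteq D_{S,\mathcal K}$; membership $h^\star\in\V_{\mathcal K}(\Sigma_{S,\mathcal K},r_{S,\mathcal K})$ gives $D_{S,\mathcal K}\subseteq\supp(h^\star)$; and Item~2 is the contrapositive. Your remark that this version space is nonempty, so the $\zo^d$ convention never applies, is a detail the paper leaves implicit.
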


\begin{proof}
Suppose $h^\star\in \mathcal K$.
Every point in $S$ is positive for $h^\star$, so $h^\star\in \V_{\mathcal K}\!\inparen{S}$.
Hence $\Sigma_{S,\mathcal K}$ is a valid query compression scheme for $S$, and the realized transcript $r_{S,\mathcal K}$ is realizable with respect to $\V_{\mathcal K}\!\inparen{S}$.
The defining property of query compression gives $S\subseteq D_{S,\mathcal K}$.
Moreover, since $h^\star$ itself generated the transcript $r_{S,\mathcal K}$, we have $h^\star\in \V_{\mathcal K}\!\inparen{\Sigma_{S,\mathcal K},r_{S,\mathcal K}}$, and therefore $D_{S,\mathcal K}\subseteq \supp\!\inparen{h^\star}$.
This proves the first item; the second is its contrapositive.
\end{proof}
Next, we discard any closure that fails to contain the sample and take the intersection of the survivors, and show that the resulting set $C_S$ still satisfies $S\subseteq C_S\subseteq \supp\!\inparen{h^\star}$.
\cref{lem:valiant:unionClosure:classSpecific} provides exactly the dichotomy we need: for each class, the corresponding closure either certifies the sample or certifies that the target does not belong to that class.
We now combine the two candidates.
For $\mathcal K\in\inbrace{\hyH,\hyG}$, set
\[
    C_{S,\mathcal K}
    \coloneqq
    \begin{cases}
        D_{S,\mathcal K} & \text{if } S\subseteq D_{S,\mathcal K},\\
        \zo^d & \text{otherwise},
    \end{cases}
\]
and define $C_S\coloneqq C_{S,\hyH}\cap C_{S,\hyG}$.

\begin{lemma}
\label{lem:valiant:unionClosure:combine}
The set $C_S$ satisfies $S\subseteq C_S\subseteq \supp\!\inparen{h^\star}$.
\end{lemma}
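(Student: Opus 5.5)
We verify the two inclusions separately, using only the dichotomy in \cref{lem:valiant:unionClosure:classSpecific} and the definition of $C_{S,\mathcal K}$.

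\emph{Lower inclusion $S\subseteq C_S$.} This holds by construction. For each $\mathcal K\in\inbrace{\hyH,\hyG}$, either $S\subseteq D_{S,\mathcal K}$, in which case $C_{S,\mathcal K}=D_{S,\mathcal K}\supseteq S$, or else $C_{S,\mathcal K}=\zo^d\supseteq S$. Hence $S\subseteq C_{S,\hyH}\cap C_{S,\hyG}=C_S$.

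\emph{Upper inclusion $C_S\subseteq\supp(h^\star)$.} Since $h^\star\in\hyH\cup\hyG$, fix $\mathcal K\in\inbrace{\hyH,\hyG}$ with $h^\star\in\mathcal K$.
\begin{itemize}
    \item By item~1 of \cref{lem:valiant:unionClosure:classSpecific}, $S\subseteq D_{S,\mathcal K}\subseteq\supp(h^\star)$.
    \item The first inclusion puts us in the first case of the definition of $C_{S,\mathcal K}$, so $C_{S,\mathcal K}=D_{S,\mathcal K}\subseteq\supp(h^\star)$.
    \item Intersecting with the other factor can only shrink the set, so $C_S\subseteq C_{S,\mathcal K}\subseteq\supp(h^\star)$.
\end{itemize}

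\emph{The one point needing care.} One might worry about the class $\mathcal K'$ not containing $h^\star$: its closure $D_{S,\mathcal K'}$ could be arbitrary, even outside $\supp(h^\star)$. This does not matter, because the upper bound only uses the factor $C_{S,\mathcal K}$ for the class that does contain $h^\star$, and the extra factor is only intersected in. Likewise, the case where $\V_{\mathcal K'}(\Sigma_{S,\mathcal K'},r_{S,\mathcal K'})=\emptyset$ is harmless, since then $D_{S,\mathcal K'}=\zo^d$.

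Note that item~2 of the lemma is not needed for this argument. The whole proof is a short case analysis with no real obstacle.
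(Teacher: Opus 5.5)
Your proposal is correct and follows essentially the same approach as the paper: fix the class containing $h^\star$, use item~1 of \cref{lem:valiant:unionClosure:classSpecific} to get $C_{S,\mathcal K}=D_{S,\mathcal K}\subseteq\supp(h^\star)$, and note that the other factor always contains $S$ by the definition of $C_{S,\mathcal K'}$. The only cosmetic difference is that you obtain $S\subseteq C_{S,\mathcal K}$ directly from the definition for both classes, whereas the paper uses item~1 for the class containing $h^\star$.
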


\begin{proof}
Without loss of generality, suppose $h^\star\in \hyH$.
By \cref{lem:valiant:unionClosure:classSpecific}, $S\subseteq D_{S,\hyH}\subseteq \supp\!\inparen{h^\star}$, so $C_{S,\hyH}=D_{S,\hyH}$.
For the $\hyG$-side, either $S\subseteq D_{S,\hyG}$, in which case $C_{S,\hyG}=D_{S,\hyG}$, or $S\not\subseteq D_{S,\hyG}$, in which case $C_{S,\hyG}=\zo^d$.
In either case, $S\subseteq C_{S,\hyG}$.
Combining gives
\[
    S\subseteq C_{S,\hyH}\cap C_{S,\hyG}=C_S
    \qquad\text{and}\qquad
    C_S\subseteq C_{S,\hyH}\subseteq \supp\!\inparen{h^\star}.\qedhere
\]
\end{proof}
Finally, we transfer this empirical guarantee to the population level via a finite-class uniform convergence argument.
\cref{lem:valiant:unionClosure:combine} shows that after running both class-specific procedures and discarding any candidate that fails to contain the sample, the intersection $C_S$ is a one-sided consistent hypothesis with zero empirical false-negative error.
It remains to show that this guarantee transfers to the population level.

\begin{proof}[Proof of \cref{thm:valiant:unionClosedness}]
Fix $\eps,\delta\in(0,1)$, a target $h^\star\in \hyH\cup \hyG$, and a distribution $\cD$ compatible with $h^\star$.
Draw an ordered sample $x_1,\dots,x_n\sim \cD$ with
\[
    n
    \geq
    \Omega\!\inparen{
        \frac{
            d\inparen{k_{\hyH}+k_{\hyG}}
            +
            \log\nfrac{1}{\delta}
        }{\eps}
    },
\]
and let $S\coloneqq \inbrace{x_1,\dots,x_n}$ be the underlying set of distinct sample points.
Construct $D_{S,\hyH},D_{S,\hyG},C_{S,\hyH},C_{S,\hyG}$, and $C_S$ as above, and output the indicator of $C_S$.
By construction, the learner uses exactly $n$ examples and at most $k_{\hyH}+k_{\hyG}$ membership queries.
By \cref{lem:valiant:unionClosure:combine}, the output satisfies $S\subseteq C_S\subseteq \supp\!\inparen{h^\star}$, so it has no false positives and zero empirical false-negative error.

It remains to bound the population false-negative rate.
Consider the finite class
\[
    \cC
    \coloneqq
    \inbrace{
        A\cap B
        :
        A\in \operatorname{Cl}_{k_{\hyH}}\!\inparen{\hyH_d}\cup \inbrace{\zo^d},
        \;
        B\in \operatorname{Cl}_{k_{\hyG}}\!\inparen{\hyG_d}\cup \inbrace{\zo^d}
    }.
\]
By construction, $C_S\in \cC$, and \cref{lem:valiant:kQueryClosureClass:VC} gives
$\log\abs{\cC}
    \leq
    O\!\inparen{d\inparen{k_{\hyH}+k_{\hyG}}}.$
The rest of the proof is identical to the final step in the proof of \cref{thm:valiant:learningFromCompression}.
Applying the finite-class realizable-case uniform convergence argument to
$\inbrace{
        C\triangle \supp\!\inparen{h^\star}
        :
        C\in \cC
    }$
shows that, with probability at least $1-\delta$,
$\cD\!\inparen{C_S\triangle \supp\!\inparen{h^\star}}
    \leq
    \eps.$
Since $C_S\subseteq \supp\!\inparen{h^\star}$ and $\supp\!\inparen{\cD}\subseteq \supp\!\inparen{h^\star}$, this implies
$ \Pr_{x\sim \cD}\inparen{C_S(x)=0 \text{~and~} h^\star(x)=1}\leq \eps.$
The output therefore satisfies the guarantees in \cref{def:valiant} with probability at least $1-\delta$.
\end{proof}
 
    \vspace{-5mm}
    \subsection{A Finite-VC Class whose 1-Point Closure has Infinite VC Dimension}
        In this section, we prove that when the domain is infinite, the VC dimension of the $k$-query closure class (\cref{def:valiant:kQueryClosureClass}) can be much larger than that of the base class itself, even for $k=1$.

        For each nonempty subfamily $\cG\subseteq \hyH$ and for a point $p\in \cX$, define
    \[
        \closure\!\inparen{\cG}
        \coloneqq
        \bigcap\nolimits_{h\in \cG}\supp(h)
        \qquadand
        \V\!\inparen{p,\hyH}
        \coloneqq
        \inbrace{h\in \hyH : p\in \supp(h)}.
    \]
We prove the following result.
\begin{theorem}
    \label{thm:onePointClosureInfiniteVC}
    There exists a countably infinite domain $\cX$ and a hypothesis class $\hyH\subseteq \zo^\cX$ such that $\vc\!\inparen{\hyH}=2$, but the $1$-point closure class $\operatorname{Cl}_1(\hyH)
        \coloneqq
        \inbrace{\closure\!\inparen{\V\!\inparen{p,\hyH}} : p\in\cX}$ (\cref{def:valiant:kQueryClosureClass}) has infinite VC dimension.
\end{theorem}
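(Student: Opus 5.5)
The plan is to give an explicit construction. The domain has two parts: a countable set of \emph{ground points} $\inbrace{a_1,a_2,\dots}$, which will be shattered by the $1$-point closures, and one \emph{index point} $p_T$ for every finite set $T\subseteq\N$. Since there are countably many finite subsets of $\N$, the domain
\[
    \cX\coloneqq\inbrace{a_i:i\in\N}\cup\inbrace{p_T:T\subseteq\N\text{ finite}}
\]
is countably infinite. For every finite $T$ and every $j\in\N\setminus T$, the class $\hyH$ contains the hypothesis $h_{T,j}$ with
\[
    \supp(h_{T,j})=\inbrace{p_T}\cup\inbrace{a_i:i\neq j}.
\]
Thus each hypothesis contains exactly one index point and misses exactly one ground point.

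\textbf{Step 1: the closures shatter.} Querying the point $p_T$ positively leaves the version space $\V(p_T,\hyH)=\inbrace{h_{T,j}:j\notin T}$. Its closure is
\[
    \closure\!\inparen{\V(p_T,\hyH)}
    =\inbrace{p_T}\cup\bigcap_{j\notin T}\inbrace{a_i:i\neq j}
    =\inbrace{p_T}\cup\inbrace{a_i:i\in T}.
\]
The middle equality uses that $\N\setminus T$ is infinite, so the version space is nonempty, and every $a_i$ with $i\notin T$ is missed by $h_{T,i}$. Now fix $n$ and the set $\inbrace{a_1,\dots,a_n}$. For each $T\subseteq[n]$, the closure at $p_T$ traces exactly $T$ on this set. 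Hence every finite set of ground points is shattered by $\operatorname{Cl}_1(\hyH)$, and $\vc\!\inparen{\operatorname{Cl}_1(\hyH)}=\infty$.

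\textbf{Step 2: $\vc(\hyH)\le 2$.} I will show that no $3$-point set is shattered, using the two structural facts above; I expect this case analysis to be the main (though mild) obstacle.
\begin{itemize}
    \item If the set contains two index points, the labeling that is $1$ on both is impossible, since each hypothesis contains only one index point.
    \item If the set consists of three ground points, the all-zero labeling is impossible, since each hypothesis misses only one ground point.
    \item If the set is $\inbrace{p_T,a_i,a_k}$, the labeling $(1,0,0)$ is impossible, since it would require missing two ground points.
\end{itemize}
These cases exhaust all $3$-point sets.

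\textbf{Step 3: $\vc(\hyH)\ge 2$.} Pick $T$ and a ground point $a_i$ with $i\notin T$; I claim $\inbrace{p_T,a_i}$ is shattered. The four labelings are realized as follows.
\begin{itemize}
    \item $(1,1)$: take $h_{T,j}$ with $j\notin T\cup\inbrace{i}$.
    \item $(1,0)$: take $h_{T,i}$.
    \item $(0,1)$: take $h_{S,j}$ with $S\neq T$ and $j\neq i$.
    \item $(0,0)$: take $h_{S,i}$ with $S\neq T$ and $i\notin S$.
\end{itemize}
Together with Step~2 this gives $\vc(\hyH)=2$, and combined with Step~1 it proves the theorem.
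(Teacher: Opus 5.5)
Your proposal is correct and follows the paper's proof essentially verbatim: your $a_i$, $p_T$, $h_{T,j}$ are exactly the paper's $q_i$, $r_A$, $h_{A,i}$, and your closure computation and shattering of $\{a_1,\dots,a_n\}$ match the paper's argument. The only differences are a slightly finer case split for $\vc(\hyH)\le 2$ (the paper simply notes that any $3$-point set contains two index points or two ground points) and a different but equally valid shattered pair for $\vc(\hyH)\ge 2$.
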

\begin{proof}
Let 
\[
    Q\coloneqq \inbrace{q_i : i\in \N}
    \qquad\text{and}\qquad
    R\coloneqq \inbrace{r_A : A\subseteq \N \text{ finite}},
\]
and set $\cX\coloneqq Q\cup R$.
Since the family of finite subsets of $\N$ is countable, both $Q$ and $R$ are countable, and therefore $\cX$ is countably infinite.
For every finite set $A\subseteq \N$ and every $i\in \N\setminus A$, define a hypothesis $h_{A,i}\colon \cX\to \zo$ by
\[
    h_{A,i}(r_B)
    \coloneqq
    \one\!\inbrace{B=A}
    \qquad\text{and}\qquad
    h_{A,i}(q_j)
    \coloneqq
    \one\!\inbrace{j\neq i}.
\]
Let
\[
    \hyH
    \coloneqq
    \inbrace{h_{A,i} : A\subseteq \N \text{ finite and } i\in \N\setminus A}.
\]
We first show that $\vc\!\inparen{\hyH}=2$.
\begin{claim}
    It holds that $\vc\!\inparen{\hyH}=2$.
\end{claim}
\begin{proof}
    The set $\inbrace{r_{\emptyset},q_1}$ is shattered, since
$h_{\emptyset,2}, h_{\emptyset,1}, h_{\inbrace{1},2}, h_{\inbrace{2},1}$
realize the four labelings $(1,1)$, $(1,0)$, $(0,1)$, and $(0,0)$ on $\inbrace{r_{\emptyset},q_1}$, respectively.
Thus $\vc\!\inparen{\hyH}\geq 2$.
Conversely, no $3$-point subset of $\cX$ is shattered by $\hyH$.
Indeed, if $T\subseteq \cX$ has size $3$, then $T$ contains either at least two points of $R$ or at least two points of $Q$.
In the first case, no hypothesis in $\hyH$ labels both of those $R$-points by $1$, since every $h_{A,i}$ is positive on exactly one point of $R$.
In the second case, no hypothesis in $\hyH$ labels both of those $Q$-points by $0$, since every $h_{A,i}$ is negative on exactly one point of $Q$.
So $T$ is not shattered, and therefore $\vc\!\inparen{\hyH}\leq 2$.
Hence $\vc\!\inparen{\hyH}=2$.
\end{proof}
Now fix a finite set $A\subseteq \N$.
We claim that
\[
    \closure\!\inparen{\V\!\inparen{r_A,\hyH}}
    =
    \inbrace{r_A}\cup \inbrace{q_i : i\in A}.
\]
Indeed, $\V\!\inparen{r_A,\hyH}=\inbrace{h_{A,i} : i\in \N\setminus A}$, since a hypothesis contains $r_A$ if and only if its first index is exactly $A$.
Every hypothesis in this family contains $r_A$, so $r_A\in \closure\!\inparen{\V\!\inparen{r_A,\hyH}}$.
If $j\in A$, then every $h_{A,i}$ with $i\in \N\setminus A$ satisfies $i\neq j$, and hence $h_{A,i}(q_j)=1$; thus $q_j\in \closure\!\inparen{\V\!\inparen{r_A,\hyH}}$.
If $j\notin A$, then $h_{A,j}\in \V\!\inparen{r_A,\hyH}$ and $h_{A,j}(q_j)=0$, so $q_j\notin \closure\!\inparen{\V\!\inparen{r_A,\hyH}}$.
Finally, if $B\neq A$, then every hypothesis in $\V\!\inparen{r_A,\hyH}$ labels $r_B$ by $0$, so $r_B\notin \closure\!\inparen{\V\!\inparen{r_A,\hyH}}$.
This proves the claim.

To conclude, let $T_n\coloneqq \inbrace{q_1,\dots,q_n}$.
We show that $T_n$ is shattered by $\operatorname{Cl}_1(\hyH)$ for every $n\in \N$.
Fix any subset $B\subseteq \inbrace{1,\dots,n}$.
Since $B$ is finite, the point $r_B$ belongs to $\cX$, and by the previous paragraph,
\[
    \closure\!\inparen{\V\!\inparen{r_B,\hyH}}
    =
    \inbrace{r_B}\cup \inbrace{q_i : i\in B}.
\]
Therefore
\[
    \closure\!\inparen{\V\!\inparen{r_B,\hyH}}\cap T_n
    =
    \inbrace{q_i : i\in B}.
\]
Thus every subset of $T_n$ is realized by some member of $\operatorname{Cl}_1(\hyH)$, so $T_n$ is shattered by $\operatorname{Cl}_1(\hyH)$.
Since this holds for every $n$, we conclude that $\vc\!\inparen{\operatorname{Cl}_1(\hyH)}=\infty$.
\end{proof} 
\vspace{-7mm}

\section{Proof of \cref{thm:valiant:boosting} (Boosting; Deferred from \cref{sec:proofof:thm:characterization})}
        \label{sec:proofof:thm:valiant:boosting}
        In this section, we prove \cref{thm:valiant:boosting}.
        \mbox{Toward this, we first prove the following intermediate result:}
        \vspace{-7mm}
\begin{lemma}[Confidence Amplification]
    \label{lem:valiant:confidenceBoost:boolean}
    Suppose $\hyH$ is learnable in Valiant's model (\cref{def:valiant}), and let
    $m_{\mathrm W}\inparen{d,\eps,\delta}$ denote the sample/query complexity of some learner witnessing this.
    Fix
    \[
        m
        \coloneqq
        m_{\mathrm W}\!\inparen{d,\nfrac{1}{18},\nfrac13}.
        \yesnum\label{eq:valiant:weakComplexity}
    \]
	    Then for every $\eta\in (0,\nfrac12]$, there is a learner $\cL_{\eta}$, s.t., for every target $\hstar\in\hyH_d$ and every distribution $\mu$ compatible with $\hstar$,
    with probability at least $1-\eta$ the learner $\cL_{\eta}$ outputs a hypothesis satisfying the Valiant conditions with error parameter $\nfrac13$, while using
        $O\!\inparen{m\log\nfrac1\eta}$
    example-oracle calls and membership queries.
\end{lemma}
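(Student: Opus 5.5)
The plan is to amplify confidence by running the weak learner many times independently and taking a majority vote. Let $\cL_{\mathrm W}$ be the learner witnessing learnability, run with parameters $(d,\nfrac1{18},\nfrac13)$, so that each run uses at most $m$ example-oracle calls and $m$ membership queries. The learner $\cL_\eta$ will perform $k\coloneqq \lceil C\log\nfrac1\eta\rceil$ independent runs of $\cL_{\mathrm W}$, for an absolute constant $C$. Each run uses fresh draws from $\mu$ and fresh internal randomness, and makes membership queries to the same oracle for $\hstar$. Call the outputs $h_1,\dots,h_k$. The final output is the strict majority vote $\wh h(x)\coloneqq \one\{\abs{\{i: h_i(x)=1\}}>k/2\}$. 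The total cost is $km=O(m\log\nfrac1\eta)$, as required.

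\textbf{Good runs via Chernoff.} Call run $i$ \emph{good} if $h_i$ satisfies the Valiant conditions with error $\nfrac1{18}$ for $(\hstar,\mu)$, that is, $\supp(h_i)\subseteq\supp(\hstar)$ and $\mu(\supp(\hstar)\setminus\supp(h_i))\le\nfrac1{18}$. Let $G$ be the set of good runs. The runs are independent and each is good with probability at least $\nfrac23$. By a Chernoff bound, $\Pr(\abs{G}<\tfrac35k)\le e^{-\Omega(k)}\le\eta$ once $C$ is large enough. From here on I condition on the event $\abs{G}\ge\tfrac35 k$.

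\textbf{No false positives.} Fix any $x\notin\supp(\hstar)$. No good hypothesis labels $x$ positive, so at most $k-\abs{G}\le\tfrac25k<\tfrac k2$ hypotheses do. Hence $\wh h(x)=0$, and $\supp(\wh h)\subseteq\supp(\hstar)$.

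\textbf{Coverage.} This is the step I expect to need the most care, since it is where the constants must fit. Suppose $x\in\supp(\hstar)$ and $\wh h(x)=0$. Then at least $k/2$ of the hypotheses reject $x$. At most $k-\abs{G}$ of those can be bad, so at least $\abs{G}-k/2$ good runs reject $x$. Let $Z(x)$ be the number of good runs rejecting $x$. Then
\[
\E_{x\sim\mu}[Z(x)]=\sum_{i\in G}\mu(\supp(\hstar)\setminus\supp(h_i))\le \frac{\abs{G}}{18}.
\]
By Markov's inequality,
\[
\mu\{x:\wh h(x)=0\}\le\Pr_\mu\!\left(Z\ge\abs{G}-\tfrac k2\right)\le\frac{\abs{G}}{18(\abs{G}-k/2)}.
\]
The right-hand side is decreasing in $\abs{G}$. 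At $\abs{G}=\tfrac35k$ it equals $\frac{3/5}{18\cdot(1/10)}=\frac13$. So on the event $\abs{G}\ge\tfrac35k$, the false-negative mass is at most $\nfrac13$.

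Combining the two conclusions, with probability at least $1-\eta$ the output $\wh h$ satisfies the Valiant conditions with error parameter $\nfrac13$. This is exactly why the weak learner is invoked at error level $\nfrac1{18}$: that choice is what makes the Markov bound land at $\nfrac13$.
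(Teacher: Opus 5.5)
Your proposal is correct and is essentially the paper's proof: $k=O(\log\nfrac1\eta)$ independent runs at parameters $(\nfrac1{18},\nfrac13)$, a majority vote, a Chernoff bound giving at least $\tfrac35k$ good runs, and the same averaging (Markov) step, since the paper's observation that $\abs{G}-\tfrac k2\geq \abs{G}/6$ when $\abs{G}\geq\tfrac35 k$ is exactly your bound $\frac{\abs{G}}{18(\abs{G}-k/2)}\leq\frac13$. The only cosmetic difference is that you use a strict majority while the paper thresholds at $\sum_j h_j(x)\geq k/2$, and this changes nothing in either the no-false-positive or the coverage argument.
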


\begin{proof}
	    Fix $d\in\N$, $\eta\in(0,\nfrac12]$, a target $\hstar\in\hyH_d$, and a distribution $\mu$ compatible with $\hstar$.
    Let $\cL_0$ be a learner witnessing the learnability of $\hyH$ in Valiant's model with sample/query complexity $m_{\mathrm W}$.
    Thus, when run with parameters $\inparen{d,\nfrac{1}{18},\nfrac13}$, the learner $\cL_0$ uses at most $m$ example-oracle calls and at most $m$ membership queries, and with probability at least $\nfrac23$ outputs a hypothesis with no false positives and false-negative mass at most $\nfrac{1}{18}$ under $\mu$.
    Let
    \[
        k
        \coloneqq
        C\log\frac1\eta,
    \]
    where $C>0$ is a sufficiently large absolute constant.
    Run $\cL_0$ independently $k$ times on fresh data and let $h_1,\dots,h_k$ be the resulting hypotheses.
    Output their majority:
        $h(x)
        \coloneqq
        \one\!\inbrace{\sum_{j=1}^k h_j(x)\geq \nfrac{k}{2}}.$
    Call run $j$ \emph{successful} if: 
    \[
        \text{$h_j$ has no false positives}
        \qquadand
        \Pr_{x\sim\mu}\inparen{h_j(x)=0}\leq \frac{1}{18}.
        \tag{Successful run}
    \]
    Each run is successful with probability at least $\nfrac23$, and these events are independent across $j$.
    Hence, by a Chernoff bound, with probability at least $1-\eta$ at least $\nfrac{3k}{5}$ runs are successful.
    Let $\evE$ denote this event.

    \paragraph{No false positives.}
    Assume $\evE$ holds and fix any $x\in\zo^d$ with $\hstar(x)=0$.
    Every successful $h_j$ satisfies $h_j(x)=0$, so at most the unsuccessful runs can output $1$ on $x$.
    Since there are at most $\nfrac{2k}{5}$ unsuccessful runs, fewer than half of the $h_j$'s output $1$ on $x$.
    Therefore $h(x)=0$.
    Thus $h$ has no false positives.

    \paragraph{Upper bound on false negatives.}
    Assume $\evE$ and let $G\subseteq \insquare{k}$ be the set of successful indices.
    Then $\abs{G}\geq \nfrac{3k}{5}$.
    Fix any $x\in\zo^d$ with $h(x)=0$.
    At least $\nfrac{k}{2}$ of the values $h_j(x)$ are $0$.
    Even if every unsuccessful run outputs $0$, at least
    $\frac{k}{2}-\inparen{k-\abs{G}}
        =
        \abs{G}-\frac{k}{2}
        \geq
        \frac{k}{10}$
	    successful runs must also output $0$.
	    Since $\abs{G}\geq \nfrac{3k}{5}$, this quantity is also at least $\nfrac{\abs{G}}{6}$.
	    Therefore, $\one\!\inbrace{h(x)=0}
	        \leq
	        6\cdot \frac{1}{\abs{G}}\sum_{j\in G} \one\!\inbrace{h_j(x)=0}.$
    Taking expectation over $x\sim\mu$ gives
    \[
        \Pr_{x\sim\mu}\inparen{h(x)=0}
        \leq
        6\cdot \frac{1}{\abs{G}}\sum_{j\in G} \Pr_{x\sim\mu}\inparen{h_j(x)=0}
        \leq
        6\cdot \frac{1}{18}
        =
        \frac13.
    \] 
    The learner makes $k$ independent runs of $\cL_0$, each using at most $m$ example-oracle calls and at most $m$ membership queries.
    Hence the total sample/query complexity is
    $O\!\inparen{m\log\nfrac1\eta}$
    for each type of oracle.
    By the preceding no-false-positive and false-negative arguments, on the event $\evE$ the output $h$ satisfies the Valiant conditions with error parameter $\nfrac13$.
    This suffices to complete the proof since $\Pr\inparen{\evE}\geq 1-\eta$.
\end{proof}

\begin{proof}[Proof of \cref{thm:valiant:boosting}]
	    Fix $\delta\in(0,\nfrac12]$.
	    Fix $d\in\N$, $\eps\in (0,1)$, a target $\hstar\in\hyH_d$, and a compatible distribution $\cD$.
    Recall that $m
        \coloneqq
        m_{\mathrm W}\!\inparen{d,\nfrac{1}{18},\nfrac13}$ from \eqref{eq:valiant:weakComplexity}.
	    By \cref{lem:valiant:confidenceBoost:boolean}, for every $\eta\in(0,\nfrac12]$ there is a learner $\cB_{\eta}$ such that on every compatible distribution $\mu$,
    with probability at least $1-\eta$, the learner $\cB_{\eta}$ outputs a hypothesis $g$ satisfying the Valiant conditions with error parameter $\nfrac13$, while using $O\!\inparen{m\log\nfrac1\eta}$
    example-oracle calls and at most the same number of membership queries.
    By the structural observation preceding this section, we may assume that $\cB_{\eta}$ first draws all its example samples and only then makes its membership queries.
    Let
    \[
        T\coloneqq \left\lceil \log_3\nfrac1\eps\right\rceil
        \qquadand
        \tau_s\coloneqq 3^{-s}
        \quad \text{for } s\in\inbrace{0,1,\dots,T},
    \]
    and define weights
    \[
        w_s\coloneqq \frac{1}{\tau_s}=3^s
        \qquadand 
        W\coloneqq \sum_{s=0}^T w_s.
    \]
    Then $W=O\!\inparen{\nfrac1\eps}$.
    For each scale $s\in\inbrace{0,1,\dots,T}$, let
    \[
        \eta_s\coloneqq \frac{\delta w_s}{100W}.
    \]
    Since $\sum_{s=0}^T w_s=W$, we have $\sum_{s=0}^T \eta_s\leq \frac{\delta}{100}.$
    We now describe the learner.
    Initialize $H_0\equiv 0$.
    For each scale $s=0,1,\dots,T$, given $H_s$, let
    \[
        p_s\coloneqq \Pr_{x\sim\cD}\inparen{H_s(x)=0}.
    \]
    Because $\cD$ is compatible with $\hstar$, this is exactly the $\cD$-mass of the current residual positive region.

    \paragraph{Testing the residual mass.}
    Draw $A_s\coloneqq C_1\frac{\log{\nfrac{1}{\eta_s}}}{\tau_s}$
    fresh samples from $\cD$, where $C_1>0$ is a sufficiently large absolute constant.
    Let $\widehat{p}_s$ be the fraction of these samples on which $H_s(x)=0$.
    By a Chernoff bound, with probability at least $1-\eta_s$ the following two implications hold simultaneously:
    \begin{align*}
        p_s\geq \tau_s
        &\Longrightarrow
        \widehat{p}_s\geq \frac{2\tau_s}{3}
        \qquadand
        p_s\leq \frac{\tau_s}{3}
        \Longrightarrow
        \widehat{p}_s<\frac{2\tau_s}{3}.
        \yesnum\label{eq:valiant:boosting:residual-test}
    \end{align*}
    If $\widehat{p}_s<\nfrac{2\tau_s}{3}$, set $H_{s+1}\coloneqq H_s$ and continue to the next scale.

    \paragraph{Simulating the residual oracle.}
    Otherwise, draw fresh samples from $\cD$ until either
    $O\!\inparen{m\log\nfrac1{\eta_s}}$
    of them satisfy $H_s(x)=0$, or until a budget of
    $B_s
        \coloneqq
        C_2\frac{m\log{\nfrac{1}{\eta_s}}+\log{\nfrac{1}{\eta_s}}}{\tau_s}$
    original example-oracle calls has been exhausted, where $C_2>0$ is a sufficiently large absolute constant.
    If the budget is exhausted first, declare failure.
    Otherwise, let $S_s$ be the first
    $O\!\inparen{m\log\nfrac1{\eta_s}}$
    retained points.
    These are i.i.d. samples from the conditional distribution of $\cD$ given $H_s(x)=0$.
	    Moreover, on the residual-test event in \eqref{eq:valiant:boosting:residual-test}, whenever we simulate the residual oracle we have $p_s>\tau_s/3$; hence, conditioned on the past, by another Chernoff bound and for $C_2$ large enough, the probability that the residual-oracle simulation fails is at most $\eta_s$.

    \paragraph{Learning on the residual distribution.}
    Run $\cB_{\eta_s}$ on the simulated residual examples $S_s$ and the membership oracle for $\hstar$, and let $h_s$ be the output.
    Set
    \[
        H_{s+1}(x)\coloneqq H_s(x)\vee h_s(x).
    \] 
    Let $\evE$ be the event that every residual-mass test, every residual-oracle simulation, and every call to $\cB_{\eta_s}$ succeeds.
    By a union bound,
        $\Pr\inparen{\evE}
        \geq
        1-3\sum_{s=0}^T \eta_s
        \geq
        1-\delta,$
    after adjusting the absolute constants.

    \paragraph{An Invariant.}
    On $\evE$, for every $s\in\inbrace{0,1,\dots,T+1}$ the hypothesis $H_s$ has no false positives.
    Moreover, for every $s\in\inbrace{1,2,\dots,T+1}$,
    \[
        p_s\leq \tau_{s-1}.
        \yesnum\label{eq:valiant:boosting:invariant}
    \]
    We prove this by induction on $s$.
    The statement for $s=0$ is trivial.
    Assume it holds for some $s\in\inbrace{0,1,\dots,T}$.
    If the residual-mass test skips scale $s$, then on $\evE$ either $p_s\leq \tau_s/3$ or $p_s\inparen{\tau_s/3,\tau_s}$.
    In either case, $p_{s+1}=p_s\leq \tau_s,$
    and $H_{s+1}=H_s$ still has no false positives.

    Suppose instead that we simulate the residual oracle.
    On $\evE$, the residual-test guarantee \eqref{eq:valiant:boosting:residual-test} guarantees that $p_s>\tau_s/3$.
    Let $\mu_s$ denote $\cD$ conditioned on $H_s(x)=0$.
    Since $\cD$ is compatible with $\hstar$ and $H_s$ has no false positives, the distribution $\mu_s$ is also compatible with $\hstar$.
    Because the call to $\cB_{\eta_s}$ succeeds on $\evE$, the hypothesis $h_s$ has no false positives and
    $\Pr_{x\sim\mu_s}\inparen{h_s(x)=0}\leq \frac13.$
    Therefore $H_{s+1}=H_s\vee h_s$ has no false positives, and
    \[
        p_{s+1}
        =
        \Pr_{x\sim\cD}\inparen{H_s(x)=0 \text{ and } h_s(x)=0}
        =
        p_s\Pr_{x\sim\mu_s}\inparen{h_s(x)=0}
        \leq
        \frac{p_s}{3}.
    \]
    If $s=0$, then $p_0\leq 1$, so
    $p_1\leq \frac13=\tau_0.$
    If $s\geq 1$, then the induction hypothesis gives $p_s\leq \tau_{s-1}$, and hence
    $ p_{s+1}
        \leq
        \frac{\tau_{s-1}}{3}
        =
        \tau_s.$
    This completes the induction.

    \paragraph{Correctness.}
    By the no-false-positive invariant and \eqref{eq:valiant:boosting:invariant}, on $\evE$ the final hypothesis $H_{T+1}$ has no false positives and
    \[
        \Pr_{x\sim\cD}\inparen{H_{T+1}(x)=0}
        =
        p_{T+1}
        \leq
        \tau_T
        =
        3^{-T}
        \leq
        \eps.
    \]
    Since $\cD$ is compatible with $\hstar$, this is exactly the required false-negative guarantee.

    \paragraph{Membership-query complexity.}
    Each invocation of $\cB_{\eta_s}$ uses
        $O\!\inparen{m\log\nfrac1{\eta_s}}$
    membership queries.
    Hence the total number of membership queries is at most
        $O\!\inparen{m\sum_{s=0}^T \log\nfrac1{\eta_s}}.$
    Since $\eta_s\geq \delta/(100W)$ for every $s$, this is
        $m\cdot O\!\inparen{T\log\nfrac{100W}{\delta}}.$
    Now $T=O\!\inparen{\log\nfrac1\eps}$ and $W=O\!\inparen{\nfrac1\eps}$, so
    \[
        T\log\frac{100W}{\delta}
        =
        O\!\inparen{\log^2\frac1\eps+\log\frac1\eps\log\frac1\delta}
        =
        O\!\inparen{\frac{1}{\eps}\log\frac1\delta},
    \]
	    because $\delta\leq\nfrac12$ implies $\log\nfrac1\delta=\Omega(1)$ and $\log\nfrac1\eps\leq \nfrac1\eps$.
    Therefore the membership-query complexity is $m\cdot O\!\inparen{(\nfrac{1}{\eps})\log\nfrac1\delta}.$
        
    \paragraph{Example-oracle complexity.}
    At scale $s$, the residual-mass test uses
        $A_s
        =
        O\!\inparen{(\nfrac{1}{\tau_s})\, \log{\nfrac{1}{\eta_s}}}$
    samples from the original example oracle.
    The residual-oracle simulation uses at most
        $B_s
        =
        O\!\inparen{(\nfrac{m}{\tau_s})\, \log{\nfrac{1}{\eta_s}}}$
    additional samples from the original example oracle.
    Since $m\geq 1$, the total number of original example-oracle calls is
        $O\!\inparen{m\sum_{s=0}^T (\nfrac{1}{\tau_s})\cdot \log{\nfrac{1}{\eta_s}}}.$
    Now
    \[
        \sum_{s=0}^T \frac{\log{\nfrac{1}{\eta_s}}}{\tau_s}
        =
        \sum_{s=0}^T w_s\log\frac{100W}{\delta w_s}
        =
        W\log\frac1\delta
        +
        \sum_{s=0}^T w_s\log\frac{100W}{w_s}.
    \]
    Also, $\frac{W}{w_s}
	        =
	        \sum_{j=0}^{T} 3^{j-s}
	        =
	        O\!\inparen{3^{T-s}+1},$
    so
    \[
        \sum_{s=0}^T w_s\log\frac{100W}{w_s}
        =
        O\!\inparen{\sum_{s=0}^T 3^s\inparen{T-s+1}}
        =
        O\!\inparen{W}.
    \]
	    Because $\delta\leq\nfrac12$, this is $O\!\inparen{W\log\nfrac1\delta}.$
    Therefore,
    \[
        \sum_{s=0}^T \frac{\log{\nfrac{1}{\eta_s}}}{\tau_s}
        =
        O\!\inparen{W\log\frac1\delta}
        =
        O\!\inparen{\frac{1}{\eps}\log\frac1\delta}.
    \]
    It follows that the total number of original example-oracle calls is $O\!\inparen{(\nfrac{m}{\eps})\log\nfrac1\delta}.$
    Combining this with the membership-query complexity proves the claimed sample/query complexity.
\end{proof} 

\end{document}